\newcommand*{\eg}{\emph{e.g.}\@\xspace}
\newcommand*{\ie}{\emph{i.e.}\@\xspace}
\newcommand*{\etal}{\emph{et al.}\@\xspace}
\newcommand{\argmin}{\operatornamewithlimits{argmin}}
\def\myP{{\Omega}}
\def\myN{\mathcal{N}}
\def\myS{\mathcal{L}}
\newcommand{\eref}[1]{Eq.~(\ref{eq:#1})}
\newcommand{\tref}[1]{Table~\ref{tab:#1}}
\newcommand{\wrt}{\textit{w.r.t.}\@\xspace}
\newtheorem{lemma}{Lemma}
\definecolor{mossgreen}{HTML}{308014}
\def\capskip{\vskip -1.1mm}
\begin{document}
\title{Continuous 3D Label Stereo Matching using \\Local Expansion Moves}

\author{Tatsunori~Taniai,
	Yasuyuki~Matsushita,
	Yoichi~Sato,
	and~Takeshi~Naemura,
	\IEEEcompsocitemizethanks{\IEEEcompsocthanksitem T. Taniai, T. Naemura, and Y. Sato are with the University of Tokyo, Japan.
		\IEEEcompsocthanksitem Y. Matsushita is with Osaka University, Japan.}
	\thanks{Report submitted Mar. 28, 2016; revised Jul. 5 and Oct. 17, 2017.}}

\markboth{Technical Report 2017. Taniai \MakeLowercase{\etal}: Continuous 3D Label Stereo Matching using Local Expansion Moves.}{}

\IEEEtitleabstractindextext{%
	\begin{abstract}
		We present an accurate stereo matching method using \emph{local expansion moves} based on graph cuts. 
		This new move-making scheme is used to efficiently infer per-pixel 3D plane labels on a pairwise Markov random field (MRF) that effectively combines recently proposed slanted patch matching and curvature regularization terms.
		The local expansion moves are presented as many $\alpha$-expansions defined for small grid regions.
		The local expansion moves extend traditional expansion moves by two ways: localization and spatial propagation. By localization, we use different candidate $\alpha$-labels according to the locations of local $\alpha$-expansions. By spatial propagation, we design our local $\alpha$-expansions to propagate currently assigned labels for nearby regions.
		With this localization and spatial propagation, our method can efficiently infer MRF models with a continuous label space using  randomized search.
		Our method has several advantages over previous approaches that are based on fusion moves or belief propagation; it produces \emph{submodular moves} deriving a \emph{subproblem optimality}; 
		it helps find good, smooth, piecewise linear disparity maps; it is suitable for parallelization; it can use cost-volume filtering techniques for accelerating the matching cost computations.
		Even using a simple pairwise MRF, our method is shown to have best performance in  the Middlebury stereo benchmark V2 and V3.
	\end{abstract}
	\begin{IEEEkeywords}
		Stereo Vision, 3D Reconstruction, Graph Cuts, Markov Random Fields, Discrete-Continuous Optimization.
\end{IEEEkeywords}}

\maketitle
\IEEEdisplaynontitleabstractindextext
\IEEEpeerreviewmaketitle

\IEEEraisesectionheading{\section{Introduction}\label{sec:introduction}}
\IEEEPARstart{S}{tereo} vision often struggles with a bias toward reconstructing fronto-parallel surfaces, which can stem from matching cost, smoothness regularization, or even inference~\cite{Bleyer11,Woodford09}.

Segment-based stereo~\cite{Birchfield99} that represents disparity maps by disparity plane segments, could ease the bias issue but recovered surfaces are constrained to be piecewise planar.
Recently, two breakthroughs have been independently made that overcome the fronto-parallel bias while surpassing the limitation of segment-based stereo.


One breakthrough is matching cost using slanted patch matching~\cite{Bleyer11}.
In this approach, the disparity $d_p$ of each pixel $p$ is over-parameterized by a local disparity plane
\begin{equation}
d_p = a_pu + b_pv + c_p \label{eq:disparity_plane}
\end{equation}
defined on the image domain $(u, v)$, and the triplet $(a_p, b_p, c_p)$ is estimated for each pixel $p$ instead of directly estimating $d_p$.
The matching window is then slanted according to Eq.~(\ref{eq:disparity_plane}),  which produces linearly-varying disparities within the window and thereby  measures patch dissimilarity accurately even with large matching windows.

Another key invention is curvature regularization~\cite{Olsson13,Li10} by tangent plane labels.
Unlike second-order smoothness~\cite{Woodford09} that forms a higher-order term, this curvature regularization is nicely represented by pairwise terms like conventional (fronto-parallel) linear~\cite{Ishikawa03} and truncated linear models, and can handle smooth surfaces beyond planes.

Given slanted patch matching~\cite{Bleyer11} and {tangent-based} curvature regularization~\cite{Olsson13}, the use of 3D plane labels allows us to establish a new stereo model using a pairwise Markov random field (MRF)~\cite{Geman84} that is free from the fronto-parallel bias. 
However, while stereo with standard 1D discrete disparity labels~\cite{LiangWang11,Kolmogorov01,Boykov01}
can be directly solved by discrete optimizers such as graph cuts (GC)~\cite{Kolmogorov04,Boykov04} and belief propagation (BP)~\cite{Yedidia00,Felzenszwalb04},
such approaches cannot be directly used for continuous 3D labels due to the huge or infinite label space $(a, b, c) \in \mathbb{R}^3$.

\begin{figure}[!t]
	\centering
	\begin{minipage}{0.23\linewidth}
		\begin{center}
			\includegraphics[width=\linewidth]{./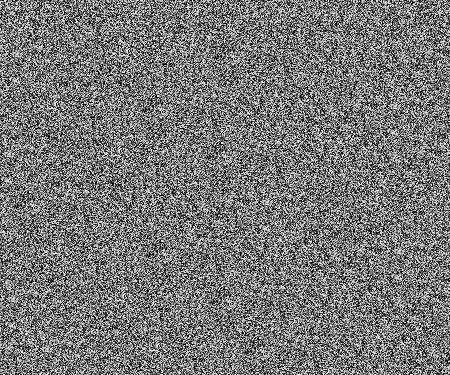}
		\end{center}
	\end{minipage}
	\hfill
	\begin{minipage}{0.23\linewidth}
		\begin{center}
			\includegraphics[width=\linewidth]{./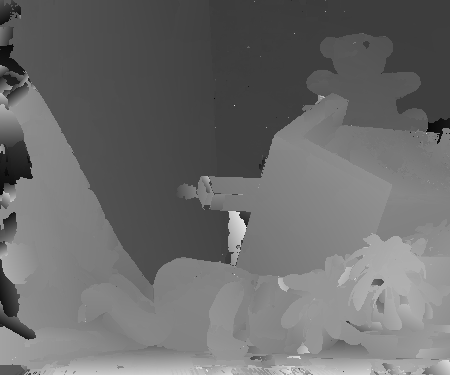}
		\end{center}
	\end{minipage}
	\hfill
	\begin{minipage}{0.23\linewidth}
		\begin{center}
			\includegraphics[width=\linewidth]{./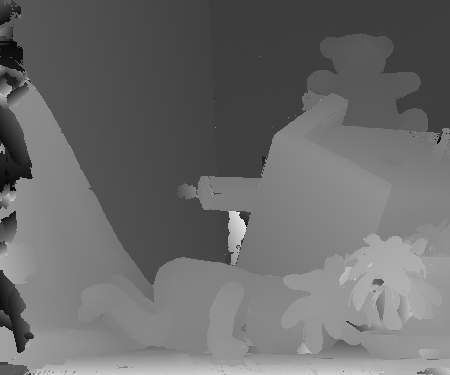}
		\end{center}
	\end{minipage}
	\hfill
	\begin{minipage}{0.23\linewidth}
		\begin{center}
			\includegraphics[width=\linewidth]{./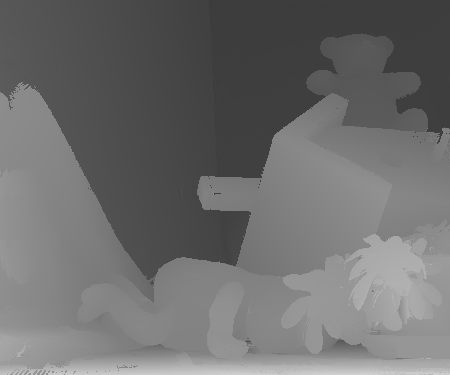}
		\end{center}
	\end{minipage}
	\vskip 0.5mm
	\begin{minipage}{0.23\linewidth}
		\begin{center}
			\includegraphics[width=\linewidth]{./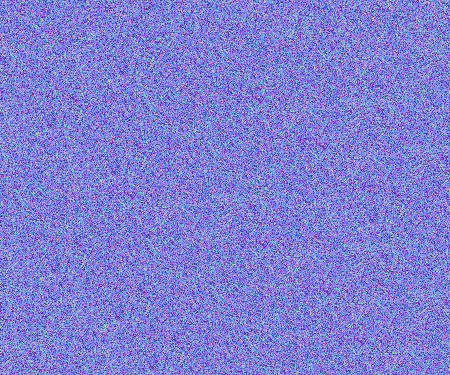}
		\end{center}
	\end{minipage}
	\hfill
	\begin{minipage}{0.23\linewidth}
		\begin{center}
			\includegraphics[width=\linewidth]{./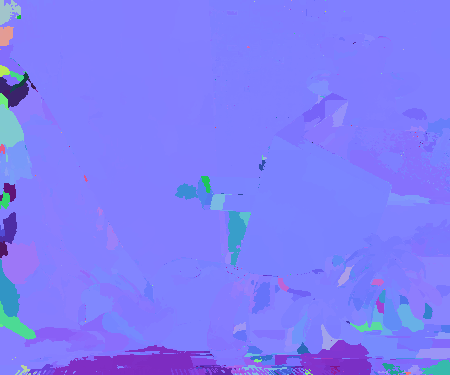}
		\end{center}
	\end{minipage}
	\hfill
	\begin{minipage}{0.23\linewidth}
		\begin{center}
			\includegraphics[width=\linewidth]{./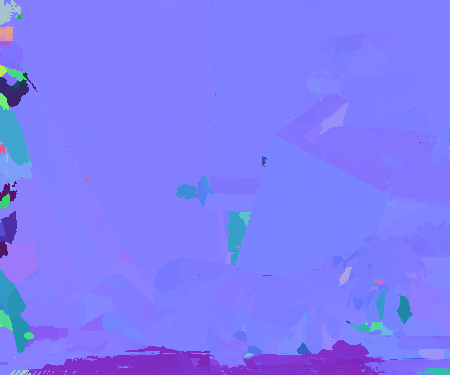}
		\end{center}
	\end{minipage}
	\hfill
	\begin{minipage}{0.23\linewidth}
		\begin{center}
			\includegraphics[width=\linewidth]{./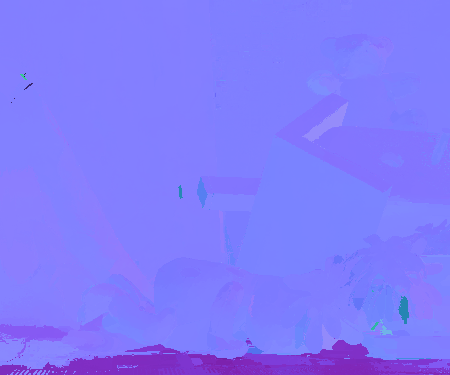}
		\end{center}
	\end{minipage}
	\vskip 0.5mm
	\begin{minipage}{0.23\linewidth}
		\begin{center}
			\includegraphics[width=\linewidth]{./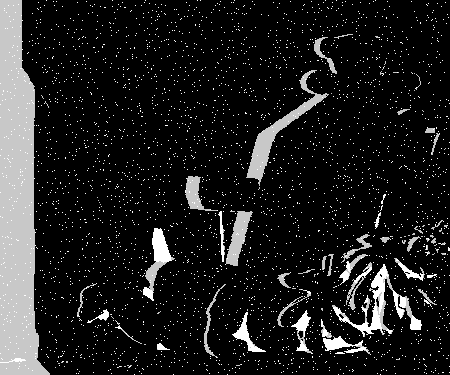}
		\end{center}
	\end{minipage}
	\hfill
	\begin{minipage}{0.23\linewidth}
		\begin{center}
			\includegraphics[width=\hsize]{./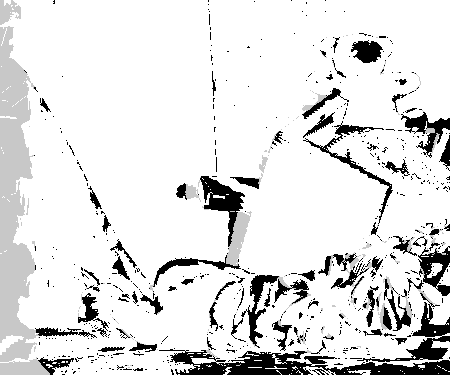}
		\end{center}
	\end{minipage}
	\hfill
	\begin{minipage}{0.23\linewidth}
		\begin{center}
			\includegraphics[width=\hsize]{./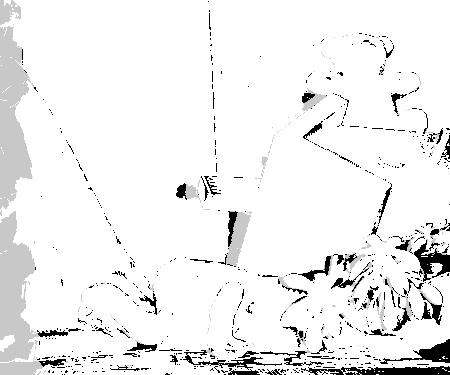}
		\end{center}
	\end{minipage}
	\hfill
	\begin{minipage}{0.23\linewidth}
		\begin{center}
			\includegraphics[width=\hsize]{./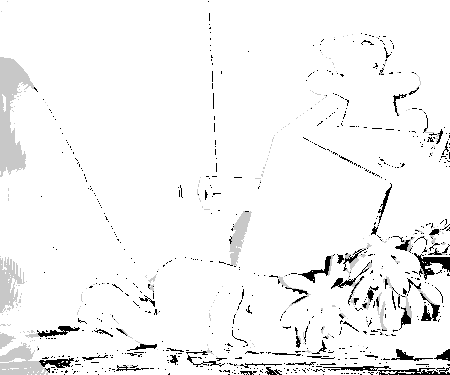}
		\end{center}
	\end{minipage}
	\vskip 0.5mm
	\begin{minipage}{0.23\linewidth}
		\begin{center}
			{\footnotesize Random init.}
		\end{center}
	\end{minipage}
	\hfill
	\begin{minipage}{0.23\linewidth}
		\begin{center}
			{\footnotesize $1$ iteration}
		\end{center}
	\end{minipage}
	\hfill
	\begin{minipage}{0.23\linewidth}
		\begin{center}
			{\footnotesize $2$ iterations}
		\end{center}
	\end{minipage}
	\hfill
	\begin{minipage}{0.23\linewidth}
		\begin{center}
			{\footnotesize Post-processing}
		\end{center}
	\end{minipage}
	\caption{{Evolution of our stereo matching estimates.} From top to bottom, we show disparity maps, normal maps of disparity planes, and error maps with $0.5$ pixel threshold where ground truth is given.
		In our framework, we start with random disparities that are represented by per-pixel 3D planes, \ie, disparities (top) and normals (middle).
		We then iteratively apply our local expansion moves using GC (middles) to update and propagate local disparity planes.
		Finally, the resulting disparity map is further refined at a post-processing stage using left-right consistency check and weighted median filtering (rightmost).}
	\label{fig:process}
\end{figure}

{To efficiently infer 3D labels, recent successful methods~\cite{Bleyer11,Besse12,Lu13,Heise13} use \emph{PatchMatch}~\cite{Barnes09,Barnes10}, an inference algorithm using spatial label propagation. In PatchMatch, each pixel is updated in raster-scan order and its refined label is propagated to next pixels as their candidate labels.}
Further in~\cite{Besse12}, this sequential algorithm is combined with BP yielding an efficient optimizer PMBP for pairwise MRFs.
In terms of MRF optimization, however, BP is considered a \emph{sequential optimizer}, which improves each variable individually keeping others conditioned at the current state.
In contrast, GC improves all variables simultaneously by accounting for interactions across variables, and this global property helps optimization avoid bad local minimums~\cite{Szeliski08,Woodford09}. 
In order to take advantage of this and efficiently infer 3D planes by GC, it is important to use spatial propagation.
Nevertheless, incorporating such spatial propagation into GC-based optimization is not straightforward, because 
{inference using GC rather processes \emph{all nodes simultaneously}}, not \emph{one-by-one sequentially} like PatchMatch and BP.

In this paper, we introduce a new move making scheme, \emph{local expansion moves}, that enables spatial propagation in GC optimization. 
The local expansion moves are presented as many $\alpha$-expansions~\cite{Boykov01} defined for a small extent of regions at different locations. 
Each of this small or local $\alpha$-expansion tries to improve the current labels in its local region in an energy minimization manner using GC.
Here, those current labels are allowed to move to a candidate label $\alpha$, which is given uniquely to each local $\alpha$-expansion in order to achieve efficient label searching. At the same time, this procedure is designed to propagate a current label in a local region for nearby pixels.
For natural scenes that often exhibit locally planar structures, the joint use of local expansion moves and GC has a useful property. It allows multiple pixels in a local region to be assigned the same disparity plane \emph{by a single min-cut} in order to find a smooth solution. Being able to simultaneously update multiple variables also helps to avoid being trapped at a bad local minimum.

For continuous MRF inference, fusion-based approaches using GC~\cite{Lempitsky10} are often employed with some heuristics to generate disparity map {proposals~\cite{Woodford09,Olsson13,Olsson14}}.
Our work bridges between apparently different inference approaches of PatchMatch and GC, and can take advantage of those heuristics used in both PatchMatch and fusion based approaches leading to higher efficiency and accuracy.

The advantages of our method are as follows.
1) Our local expansion move method produces \emph{submodular moves} that guarantee the optimal labeling at each min-cut (subproblem optimal), which in contrast is not guaranteed in general fusion moves~\cite{Lempitsky10}.
2) This optimality and spatial propagation allow randomized search, rather than employ external methods to generate plausible initial proposals as done in fusion approaches~\cite{Lempitsky10,Woodford09}, which may limit the possible solutions.
3) Our method achieves greater accuracy than BP~\cite{Besse12} thanks to the good properties of GC and local expansion moves.
4) Unlike other PatchMatch based methods~\cite{Bleyer11,Besse12,Heise13}, our method can incorporate the fast cost filtering technique of \cite{Lu13}. In this manner, we can efficiently reduce the computation complexity of unary terms from $O(|W|)$ to approximately $O(1)$, removing dependency from support window size $|W|$. 
5) Unlike PMBP~\cite{Besse12}, our method is well suited for parallelization using both CPU and GPU.\footnote{Although BP is usually parallelizable, PMBP differs from BP's standard settings in that it defines label space \emph{dynamically} for each pixel and \emph{propagates} it; both make parallelization indeed non-trivial.} With multiple CPU cores, each of our local $\alpha$-expansions (\ie, min-cut computations) can be individually performed in parallel. With a GPU implementation,  computations of unary terms can be more efficiently performed in a parallel manner for further acceleration. 

This paper is an extended version of our conference paper~\cite{Taniai14}.
The extensions are summarized as follows.
We add theoretical verifications on the preferability of our method for piecewise planar scenes in Sec.~\ref{sec:slanted_patch_matching}, and also on the subproblem optimality of our algorithm  in Sec.~\ref{sec:localexp}.
Furthermore, the efficiency of our algorithm is improved by two ways;
In Sec.~\ref{sec:localexp} we show the parallelizablity of our local expansion move algorithm;
In Sec.~\ref{sec:fastimp} we show that the fast cost filtering technique of \cite{Lu13} can be used in our method.
The effectiveness of both extensions is thoroughly evaluated in the experiments, and we show that even a CPU implementation of the proposed method achieves about 2.1x faster running times than our previous GPU implementation~\cite{Taniai14}, with comparable or even greater accuracy.
Finally, we add experimental evaluations in Sec.~\ref{sec:experiments} showing that our method outperforms a fusion-based approach~\cite{Olsson13} and all the state-of-the-art methods registered in the latest Middlebury benchmark V3. 
{Our code is publicly available online (\url{https://github.com/t-taniai/LocalExpStereo}).}

\section{Related work}
\subsection{MRF stereo methods}
MRF stereo methods can be categorized into thee approaches: discrete stereo, segment-based stereo, and continuous stereo.

Discrete stereo~\cite{LiangWang11,Kolmogorov01,Boykov01} formulates stereo matching as a discrete multi-labeling problem, where each pixel is individually assigned one of pre-defined discrete disparity values. For this problem, many powerful discrete optimizers, such as BP~\cite{Yedidia00,Felzenszwalb04}, TRW~\cite{Kolmogorov06}, and GC~\cite{Kolmogorov04,Boykov04}, can be directly used. Successful results are shown using GC with expansion moves~\cite{Boykov01,Szeliski08}. In expansion moves, the multi-labeling problem is reduced to a sequence of binary-labeling problems, each of which can be exactly solved by GC, if only pairwise potentials $\psi$ meet the following submodularity of expansion moves~\cite{Kolmogorov07,Boykov01}:
\begin{equation}
\psi(\alpha, \alpha) + \psi(\beta, \gamma) \le \psi(\beta, \alpha) + \psi(\alpha, \gamma). \label{eq:axregularity}
\end{equation}

Segment-based stereo~\cite{Birchfield99,Tao01,Hong04,Wang08} assigns a 3D disparity plane for each of over-segmented image regions.
Although this approach yields continuous-valued disparities, it strictly limits the reconstruction to a piecewise planar representation and  is subject to the quality of initial segmentation.
More recent methods alleviate these limitations by using a complex layered MRF~\cite{Bleyer10}, multi-scale segmentation~\cite{Li16}, or jointly estimating segmentation~\cite{Yamaguchi12}.  

The last group, to which our method belongs, is continuous stereo~\cite{Woodford09,Bleyer11,Besse12,Olsson13,Lu13,Heise13}, where each pixel is assigned a distinct continuous disparity value.
Some methods~\cite{Woodford09,Olsson13} use fusion moves~\cite{Lempitsky10}, an operation that combines two solutions to make better one (binary fusion) by solving a non-submodular binary-labeling problem using QPBO-GC~\cite{Kolmogorov07}.
In this approach, a number of continuous-valued disparity maps (or so-called \emph{proposals} in the literature~\cite{Lempitsky10}) are first generated by other external methods (\eg, segment-based stereo~\cite{Woodford09}), which are then combined as a sequence of binary fusions.
Our method differs from this fusion-based approach in that we use spatial propagation and randomization search during inference, by which we only require a randomized initial solution instead of those generated by external methods. Also, binary energies produced in our method are always submodular allowing exact inference via GC (subproblem optimal). More importantly, we rather provide an efficient inference mechanism that is conceptually orthogonal to proposal generation. Thus, conventional proposal generation schemes used in fusion {approaches~\cite{Woodford09,Olsson13,Olsson14}} can be incorporated in our method for further improvements.
A stereo method by Bleyer~\etal~\cite{Bleyer11} proposes accurate photo-consistency measures using 3D disparity planes
that are inferred by PatchMatch~\cite{Barnes09,Barnes10}. 
Heise~\etal~\cite{Heise13} incorporate Huber regularization into \cite{Bleyer11} using convex optimization.
Besse~\etal~\cite{Besse12} point out a close relationship between PatchMatch and BP and present a unified method called PatchMatch BP (PMBP) for pairwise continuous MRFs.
PMBP is probably the closest approach to ours in spirit, but we use GC instead of BP for the inference. Therefore, our method is able to take advantage of better convergence of GC~\cite{Szeliski08} for achieving greater accuracy. In addition, our method allows efficient parallel computation of unary matching costs and even min-cut operations. 

\subsection{Cost-volume filtering}
Patch-based stereo methods often use cost-volume filtering for fast implementations.
Generally, computing a matching cost $C$ for a patch requires $O(|W|)$ of computation, where $|W|$ is the size of the patch. However, given a cost-volume slice $\rho_d(p)$ that represents pixelwise raw matching costs \hbox{$\|I(p) - I'(p-d)\|$} for a certain disparity label~$d$, the patch-based matching costs can be efficiently computed by applying a filtering to the cost map as \hbox{$C_d(p)=\sum_{q}  \omega_{pq} \rho_d(q)$}. 
Here, the filter kernel $\omega_{pq}$ represents the matching window at $p$. If we use a constant-time filtering, each matching cost $C_d(p)$ is efficiently computed in $O(1)$.

The box filtering can achieve $O(1)$ by using integral image {but such simple filtering flattens} object boundaries. For this boundary issue, Yoon and Kweon~\cite{Yoon05} propose an adaptive support-window technique that uses the joint bilateral filtering~\cite{Petschnigg04} for cost filtering. Although this adaptive window technique successfully deals with the boundary issue~\cite{Hosni12}, it involves $O(|W|)$ of computation because of the complexity of the bilateral filtering.
Recently, He~\etal~\cite{He13} propose a constant-time edge-aware filtering named the \emph{guided image filtering}. This filtering is employed in a cost-volume filtering method of \cite{Rhemann11,Hosni13}, achieving both edge-awareness and $O(1)$ of matching-cost computation.

In principle, stereo methods using PatchMatch inference~\cite{Bleyer11,Besse12,Heise13} cannot take advantage of the cost filtering acceleration, since  in those methods the candidate disparity labels are given {dynamically} to each pixel and we cannot make a consistent-label cost-volume slice $\rho_d(p)$. 
To this issue, Lu~\etal~\cite{Lu13} extend PatchMatch~\cite{Bleyer11} to use superpixels as a unit of cost calculations. In their method, called PatchMatch filter (PMF), fast cost-volume filtering of \cite{Rhemann11,Hosni13} is applied in small subregions and that approximately achieves $O(1)$ of complexity. 
We will show in Sec.~\ref{sec:fastimp} that their subregion filtering technique can be effectively incorporated into our method, and we achieve greater accuracy than their local stereo method~\cite{Lu13}.

\section{Proposed method}
\label{sec:proposed}
This section describes our proposed method. Given two input images $I_L$ and $I_R$, our purpose is to estimate the disparities of both images.

In Sec.~\ref{sec:slanted_patch_matching}, we first discuss a geometric property of our model. 
We then define our energy function in Sec.~\ref{sec:formulation}, and describe the fundamental idea of our optimizing strategy and its properties in Sec.~\ref{sec:localexp}.
The whole optimization procedure is presented in Sec.~\ref{sec:optimization}, and we further discuss a fast implementation in Sec.~\ref{sec:fastimp}.


\subsection{Geometric interpretation to slanted patch matching}
\label{sec:slanted_patch_matching}
Slanted patch matching~\cite{Bleyer11} by Eq.~(\ref{eq:disparity_plane})  
implicitly assumes that the true disparity maps are approximately piecewise linear.
While it is not discussed in \cite{Bleyer11},   linear disparity is exactly related to motion by \emph{planar surfaces}~\cite{Birchfield99,Olsson13}.
Formally, if there exists a plane in the 3D world coordinates $(x, y, z) \in \mathbb{R}^3$ 
\begin{equation}
a'_p x + b'_p y + c'_p z = h'_p \label{eq:geometry_plane}
\end{equation}
parameterized by $(a'_p,b'_p,c'_p,h'_p)$, then motion due to this geometric plane is represented by a disparity plane
\begin{equation}
d(u,v) = \frac{B}{h'_p}\left({a'_p}u + {b'_p}v + fc'_p \right), \label{eq:disp_conversion}
\end{equation}
where $B$ and $f$ are the baseline and focal length of the stereo cameras.\footnote{Derived easily by using the perspective projection $x=uz/f$ and $y=vz/f$, and the depth-disparity relation $z=Bf/d$ in rectified stereo.}
This piecewise planar assumption is reasonable as many natural scenes approximately exhibit locally planar surfaces. 
As we will discuss later, this fact also supports our method design as our model and inference both softly prefer piecewise linear disparity maps while having the ability to express arbitrary curved surfaces.

\subsection{Formulation}
\label{sec:formulation}
We use the 3D plane label formulation to take advantage of the powerful slanted patch matching~\cite{Bleyer11} and curvature regularization~\cite{Olsson13}. 
Here, each pixel $p$'s disparity $d_p$ is over-parameterized by a 3D plane \hbox{$d_p = a_p u + b_p v + c_p$}.
Therefore, we seek a mapping \hbox{$f_p = f(p) : \myP \to \myS$} that assigns a disparity plane \hbox{$f_p = (a_p, b_p, c_p) \in \myS$} for every pixel $p$ in the left and right images.
Following conventional MRF stereo methods~\cite{Olsson13,LiangWang11,Kolmogorov01,Boykov01},
we estimate $f$ by minimizing the following energy function  based on a pairwise MRF.
\begin{equation}
E(f) = \sum_{p \in \myP} \phi_p(f_p) + \lambda \sum_{(p, q) \in \myN} \psi_{pq}(f_p, f_q). \label{eq:mrf}
\end{equation}
The first term, called the \emph{data term} or \emph{unary term}, measures the photo-consistency between matching pixels. The disparity plane $f_p$ defines a warp from a pixel $p$ in one image to its correspondence in the other image. 
The second term is called the \emph{smoothness term} or \emph{pairwise term}, which penalizes discontinuity of disparities between neighboring pixel pairs $(p, q) \in \myN$. We define these terms {below}.

\subsubsection*{Data term}
To measure photo-consistencies, we use the slanted patch matching term that has been recently proposed by~\cite{Bleyer11}. 
The data term of $p$ in the left image is  defined as
\begin{eqnarray}
\phi_p (f_p) = \sum_{s \in W_p} \omega_{ps} \, \rho ( s |f_p). \label{eq:unary_term}
\end{eqnarray}
Here, $W_p$ is a square window centered at $p$. The weight  $\omega_{ps}$ implements the adaptive support window proposed in~\cite{Yoon05}. For this we replace the bilateral filter weight used in \cite{Yoon05,Bleyer11} with more advanced guided image filtering~\cite{He13} as below.
\begin{equation}
\omega_{ps} = \frac{1}{|W'|^2} \sum_{k:(p,s)\in W'_k} \!\! \left(1 + (I_p - \mu_k)^T(\Sigma_k + e)^{-1}(I_s - \mu_k) \right) \label{eq:gf}
\end{equation}
Here, $I_p = I_L(p)/255$ is a normalized color vector, $\mu_k$ and $\Sigma_k$ are the mean and co-variance matrix of $I_p$ in a local regression window $W'_k$, and $e$ is an identity matrix with a small positive coefficient for avoiding over-fitting.
We will discuss how to efficiently compute this filtering in Sec.~\ref{sec:fastimp}.

Given a disparity plane $f_p=(a_p,b_p,c_p)$,
the function $\rho(s|f_p)$ in Eq.~(\ref{eq:unary_term}) measures the pixel dissimilarity between a support pixel $s=(s_u, s_v)$ in the window $W_p$ and its matching point in the right image
\begin{equation}
s' = s  -  (a_p s_u + b_p s_v + c_p ,0)
\label{eq:warp}
\end{equation}
as
\begin{eqnarray}
\rho(s|f_p) = (1-\alpha) \, \min(\| I_L(s) -I_R(s')  \|_1 , \tau_\text{col}) \notag \\
+ \, \alpha \, \min(| \nabla_x I_L(s) - \nabla_x  I_R(s')  |, \tau_\text{grad}). 
\label{eq:photofunc}
\end{eqnarray}
Here, $\nabla_x I$ represents the $x$-component of the gray-value gradient of image $I$, and $\alpha$ is a factor that balances the weights of color and gradient terms. 
The two terms are truncated by $\tau_\text{col}$ and $\tau_\text{grad}$ to increase the robustness for occluded regions. 
We use linear interpolation for $I_R(s')$, and a sobel filter kernel of $[-0.5 \;\; 0 \;\; 0.5]$ for $\nabla_x$.
When the data term is defined on the right image, we swap $I_L$ and $I_R$ in Eqs.~(\ref{eq:gf}) and (\ref{eq:photofunc}), and add the disparity value in \eref{warp}.

\subsubsection*{Smoothness term}
For the smoothness term, we use a curvature-based, second-order smoothness regularization term~\cite{Olsson13} defined as
\begin{align}
\psi_{pq}(f_p,f_q)=\max\left( w_{pq}, \epsilon \right) \, \min ( \bar{\psi}_{pq}(f_p, f_q),\tau_\text{dis} ).  \label{eq:smooth_phi}
\end{align}
Here, $w_{pq}$ is a contrast-sensitive weight defined as
\begin{equation}
w_{pq} = e^{- \| I_L(p) - I_L(q) \|_1 / \gamma},
\label{eq:smweight}
\end{equation}
where $\gamma$ is a user-defined parameter. The $\epsilon$ is a small constant value that gives a lower bound to the weight $\omega_{pq}$ to increase the robustness to image noise. The function $\bar{\psi}_{pq}(f_p, f_q)$ penalizes the discontinuity between $f_p$ and $f_q$ in terms of disparity as
\begin{align}
\bar{\psi}_{pq}(f_p, f_q) = | d_p(f_p)\!-\!d_p(f_q)| + |d_q(f_q)\!-\!d_q(f_p)|, \label{eq:smooth}
\end{align}
where \hbox{$d_p(f_q) = a_q p_u + b_q p_v + c_q$}.
The first term in Eq.~(\ref{eq:smooth}) measures the difference between $f_p$ and $f_q$ by their disparity values at $p$, and the second term is defined similarly at $q$. 
We visualize $\bar{\psi}_{pq}(f_p, f_q)$ as red arrows in Fig.~\ref{fig:smoothness}~(a).
The $\bar{\psi}_{pq}(f_p, f_q)$ is truncated at $\tau_\text{dis}$ to allow sharp jumps in disparity at depth edges.

\begin{figure}[h]
	\centering
	\includegraphics[width=0.45\linewidth]{./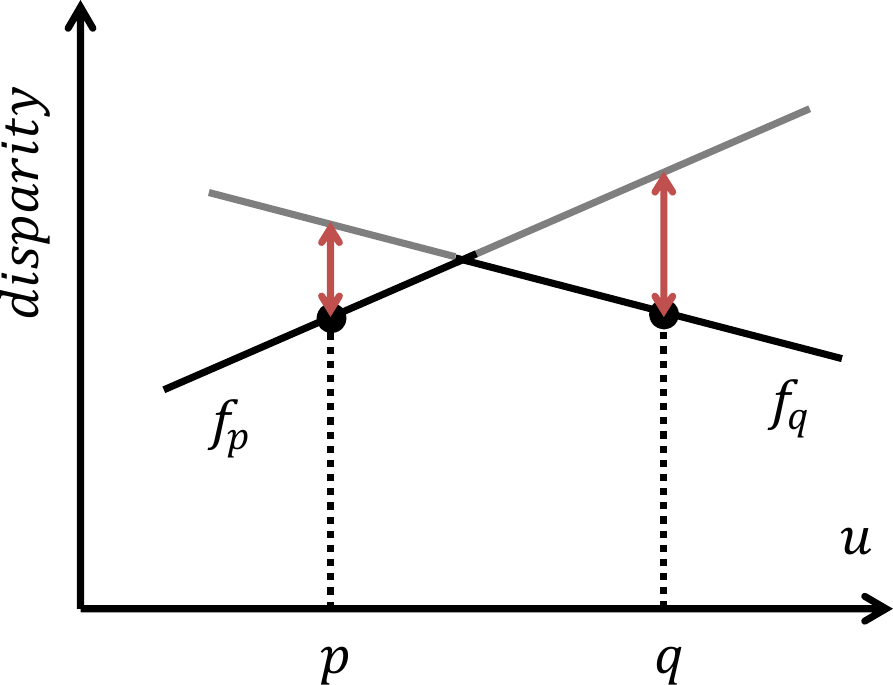}\hfil
	\includegraphics[width=0.45\linewidth]{./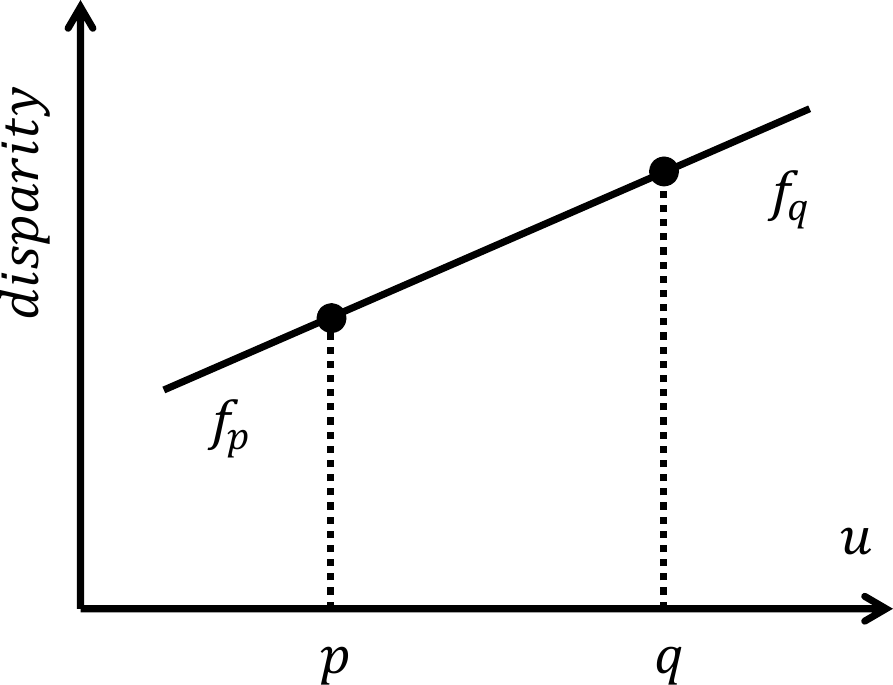}
	\\
	\begin{minipage}{0.45\linewidth}\begin{center}{(a) $f_p \ne f_q$}\end{center}\end{minipage}\hfil
	\begin{minipage}{0.45\linewidth}\begin{center}{(b) $f_p = f_q$}\end{center}\end{minipage}
	\caption{Illustration of the smoothness term proposed in~\cite{Olsson13}. (a) The smoothness term penalizes the deviations of neighboring disparity planes shown as red arrows. (b) When neighboring pixels are assigned the same disparity plane, it gives no penalty; thus, it enforces second order smoothness for the disparity maps.}
	\label{fig:smoothness}
\end{figure}

Notice that $\bar{\psi}_{pq}(f_p, f_q) = 2|c_p-c_q|$ when $a=b=0$ is forced;
therefore, the smoothness term ${\psi}_{pq}(f_p, f_q)$ naturally extends the traditional truncated linear model~\cite{Boykov01}, although the latter has a fronto-parallel bias and should be avoided~\cite{Woodford09,Olsson13}.
Also, as shown in Fig.~\ref{fig:smoothness}~(b), this term becomes zero when $f_p=f_q$. This enforces piecewise linear disparity maps and so piecewise planar object surfaces. Furthermore, this term satisfies the following property for taking advantage of GC.

\begin{figure*}[!t]
	\centering
	\includegraphics[width=\textwidth]{./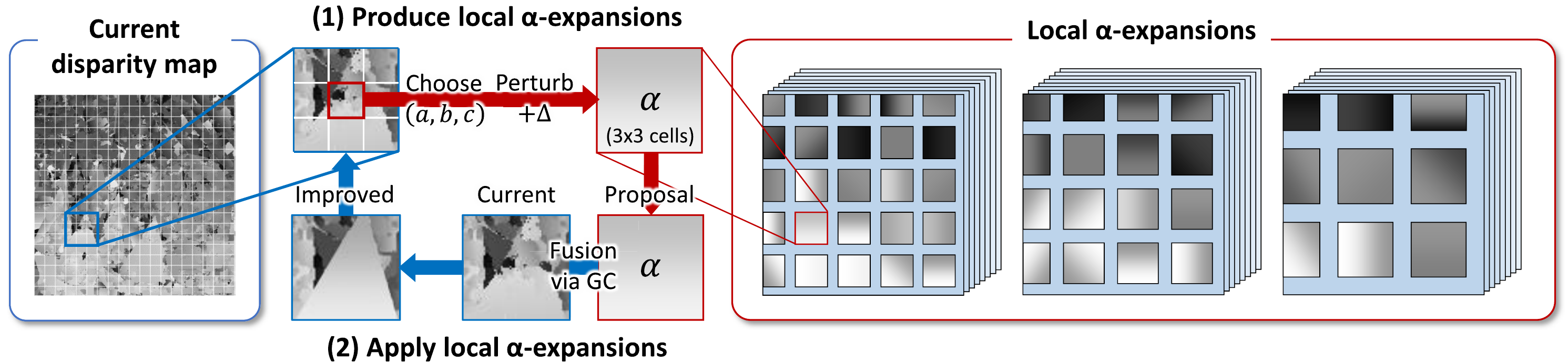}
	\vskip -2mm
	\caption{Illustration of the proposed local expansion moves.
		The local expansion moves consist of many small $\alpha$-expansions (or \textit{local $\alpha$-expansions}), which are defined using grid structures such shown in the left figure.
		These local $\alpha$-expansions are defined at each grid-cell and applied for $3\times 3$ neighborhood cells (or \textit{expansion regions}).
		In the middle part, we illustrate how each of local $\alpha$-expansions works.
		(1) The candidate label $\alpha$ (\ie, $\alpha=(a,b,c)$ representing a disparity plane $d=au+bv+c$) is produced by randomly choosing and perturbing one of the currently assigned labels in its center cell. (2) The current labels in the expansion region are updated by $\alpha$ in an energy minimization manner using GC. Consequently, a current label in the center cell can be propagated for its surrounding cells.
		In the right part, local $\alpha$-expansions are visualized as small patches on stacked layers with three different sizes of grid structures. As shown here, using local $\alpha$-expansions we can localize the scopes of label searching by their locations.
		Each layer represents a group of mutually-disjoint local $\alpha$-expansions, which are  performed individually in a parallel manner.
	}
	\label{fig:locallabel}
\end{figure*}

\begin{lemma}
	\label{lem:subm_expansion}
	The term ${\psi}_{pq}(f_p, f_q)$ in Eq.~(\ref{eq:smooth_phi}) satisfies the submodularity of expansion moves in \eref{axregularity}.
\end{lemma}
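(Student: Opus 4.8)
The plan is to verify the inequality \eref{axregularity} directly from the definition \eref{smooth_phi}, exploiting the structure $\psi_{pq} = \max(w_{pq},\epsilon)\cdot\min(\bar\psi_{pq},\tau_\text{dis})$. Since the leading factor $\max(w_{pq},\epsilon)$ is a nonnegative constant independent of the labels, it suffices to prove that the label-dependent part $g(f_p,f_q) := \min(\bar\psi_{pq}(f_p,f_q),\tau_\text{dis})$ satisfies \eref{axregularity}; multiplying through by the nonnegative constant preserves the inequality. So the goal reduces to showing, for all labels $\alpha,\beta,\gamma$,
\begin{equation}
g(\alpha,\alpha) + g(\beta,\gamma) \le g(\beta,\alpha) + g(\alpha,\gamma). \notag
\end{equation}
Note $g(\alpha,\alpha)=\min(0,\tau_\text{dis})=0$ since $\bar\psi_{pq}(\alpha,\alpha)=0$, so the claim simplifies to the triangle-type inequality $g(\beta,\gamma) \le g(\beta,\alpha) + g(\alpha,\gamma)$.

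The key observation is that $\bar\psi_{pq}$ itself is a metric-like quantity. Writing $d_p(f) = a p_u + b p_v + c$ for a label $f=(a,b,c)$, the map $f \mapsto (d_p(f), d_q(f)) \in \mathbb{R}^2$ is linear, and \eref{smooth} says $\bar\psi_{pq}(f_p,f_q) = |d_p(f_p)-d_p(f_q)| + |d_q(f_q)-d_q(f_p)|$, which is exactly the $\ell_1$ distance between the images of $f_p$ and $f_q$ under this linear map. Hence $\bar\psi_{pq}$ satisfies the triangle inequality in each argument: $\bar\psi_{pq}(\beta,\gamma) \le \bar\psi_{pq}(\beta,\alpha) + \bar\psi_{pq}(\alpha,\gamma)$. (It is symmetric and vanishes on the diagonal, though only the triangle inequality is needed here.) The remaining ingredient is that truncation by a constant preserves this: if $h$ satisfies the triangle inequality and is nonnegative, then $\min(h,\tau)$ does too — this is the standard fact that the truncated metric is a metric. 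I would state this as a one-line lemma or simply inline the short case check: either $h(\beta,\gamma)\le\tau$, in which case $\min(h(\beta,\gamma),\tau)=h(\beta,\gamma)\le h(\beta,\alpha)+h(\alpha,\gamma)$ and each summand on the right dominates its truncation; or $h(\beta,\gamma)>\tau$, in which case the left side is $\tau$, and since $h(\beta,\gamma)\le h(\beta,\alpha)+h(\alpha,\gamma)$ at least one of $h(\beta,\alpha), h(\alpha,\gamma)$ exceeds $\tau/2$... — actually the cleaner argument: $\tau = \min(h(\beta,\gamma),\tau) \le \min(h(\beta,\alpha)+h(\alpha,\gamma),\tau) \le \min(h(\beta,\alpha),\tau) + \min(h(\alpha,\gamma),\tau)$, where the last step uses monotonicity of $\min(\cdot,\tau)$ together with subadditivity of $t\mapsto\min(t,\tau)$ on nonnegative reals.

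Putting the pieces together: $g = \min(\bar\psi_{pq},\tau_\text{dis})$ inherits the triangle inequality from $\bar\psi_{pq}$, hence $g(\beta,\gamma)\le g(\beta,\alpha)+g(\alpha,\gamma) = g(\beta,\alpha)+g(\alpha,\alpha)+g(\alpha,\gamma) - g(\alpha,\alpha)$, and since $g(\alpha,\alpha)=0$ this is precisely \eref{axregularity} for $g$; scaling by $\max(w_{pq},\epsilon)\ge 0$ gives it for $\psi_{pq}$. I do not anticipate a genuine obstacle here — the statement is essentially the well-known fact that truncated metrics give expansion-submodular pairwise terms \cite{Boykov01,Kolmogorov07}, and the only thing to check specific to this paper is that $\bar\psi_{pq}$ as defined in \eref{smooth} is indeed (a pullback of) a metric, which follows from its expression as an $\ell_1$ distance of linear images of the labels. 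The mildly fiddly part is just writing the truncation/subadditivity step cleanly; I would keep it to the two-line $\min$-chain above rather than a case analysis.
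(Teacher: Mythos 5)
Your proof is correct and follows essentially the same route as the paper's Appendix~A: establish the triangle inequality for $\bar{\psi}_{pq}$ by viewing it as an $\ell_1$ distance of the labels' disparity values at $p$ and $q$ (the paper inserts $d_p(\alpha)$ and $d_q(\alpha)$ explicitly, which is the same computation), then push the inequality through the truncation via the chain $\min(\bar{\psi}_{pq}(\beta,\gamma),\tau)\le\min(\bar{\psi}_{pq}(\beta,\alpha)+\bar{\psi}_{pq}(\alpha,\gamma),\tau)\le\min(\bar{\psi}_{pq}(\beta,\alpha),\tau)+\min(\bar{\psi}_{pq}(\alpha,\gamma),\tau)$, and finally scale by the nonnegative weight $\max(w_{pq},\epsilon)$. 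Your cleaner $\min$-chain is exactly the paper's truncation step, so no substantive difference remains.
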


\begin{proof}
	See \cite{Olsson13} and also Appendix~A.
\end{proof}

\subsection{Local expansion moves}
\label{sec:localexp}
In this section, we describe the fundamental idea of our method, local expansion moves, as the main contribution of this paper.
We first briefly review the original expansion move algorithm~\cite{Boykov01},
and then describe how we extend it for efficiently optimizing continuous MRFs.

The expansion move  algorithm~\cite{Boykov01} is a discrete optimization method for pairwise MRFs of Eq.~(\ref{eq:mrf}), which iteratively solves a sequence of the following binary labeling problems
\begin{equation}
f^{(t+1)} = \argmin_{f'} E(f' \, | \, f'_p\in \{f^{(t)}_p, \alpha\}) \label{eq:expansion}
\end{equation}
for all possible candidate labels ${}^\forall \alpha \in \myS$.
Here, the binary variable $f'_p$ for each pixel $p$ is assigned either its current label $f^{(t)}_p$ or a candidate label $\alpha$.
If all the pairwise terms in  $E(f)$ meet the condition of Eq.~(\ref{eq:axregularity}),
then the binary energies $E(f')$ in Eq.~(\ref{eq:expansion}) are submodular and this minimization can thus be exactly solved via GC~\cite{Boykov01} (subproblem optimal).
Here, it is guaranteed that the energy does not increase: $E(f^{(t+1)}) \le E(f^{(t)})$.
However, the label space $\myS$ in our setting is a three dimensional continuous space $(a,b,c)$; therefore, such an exhaustive approach cannot be employed.

Our local expansion moves extend traditional expansion moves by two ways; \textit{localization} and \textit{spatial propagation}. By localization, we use different candidate labels $\alpha$ depending on the locations of pixels $p$ in Eq.~(\ref{eq:expansion}), rather than using the same $\alpha$ label for all pixels. This is reasonable because the distributions of disparities should be different from location to location, therefore the selection of candidate labels $\alpha$ should be accordingly different.
By spatial propagation, we incorporate label propagation similar to the PatchMatch inference~\cite{Barnes09,Barnes10,Bleyer11} into GC optimization, and propagate currently assigned labels to nearby pixels via GC.
The assumption behind this propagation is that, if a good label is assigned to a pixel, this label is likely a good estimate for nearby pixels as well.
The localization and spatial propagation together make it possible for us to use a powerful randomized search scheme, where we no longer need to produce initial solution proposals as usually done in the fusion based approach~\cite{Lempitsky10}.
Below we provide the detailed descriptions of our algorithm.

\subsubsection*{Local $\alpha$-expansions for spatial propagation}
We first define a grid structure that divides the image domain $\myP$ into grid regions ${C_{ij}  \subset \myP}$, which are indexed by 2D integer coordinates $(i, j) \in \mathbb{Z}^2$.
We refer to each of these grid regions as a \textit{cell}. 
We in this paper simply assume regular square cells, {but this could be extended to use superpixels~\cite{Hur17,Li17pmsc,Taniai16}}.
At a high level, the size of cells balances between the level of localization and the range of spatial propagation.
Smaller sizes of cells can achieve finer localization but result in shorter ranges of spatial propagation.
Later in Sec~\ref{sec:optimization} we introduce different sizes of multiple grid structures for balancing these two factors well, but for now let us focus on using one grid structure.

Given a grid structure, we define a \textit{local $\alpha$-expansion} at each cell $(i, j)$,  which we specifically denote as \textit{$\alpha_{ij}$-expansion}.
We further define two types of regions for each $\alpha_{ij}$-expansion: its \textit{center region} $C_{ij}$ and \textit{expansion region}
\begin{equation}
R_{ij} = C_{ij} \cup \left\lbrace  \bigcup_{(m,n) \in \myN(i, j)} C_{mn} \right\rbrace,
\end{equation}
\ie, $3 \times 3$ cells consisting of the center region $C_{ij}$ and its eight neighbor cells.

In the middle part of Fig.~\ref{fig:locallabel}, we focus on an expansion region and illustrate how an $\alpha_{ij}$-expansion works.
We first randomly select a pixel $r$ from the center region $C_{ij}$, and take its currently assigned label as $(a, b, c) = f_r$.
We then make a candidate label $\alpha_{ij}$ by perturbing this current label as $\alpha_{ij} = (a, b, c) + \Delta$. Finally, we update the current labels of pixels $p$ in the expansion region $R_{ij}$, by choosing either  their current labels $f_p$ or the candidate label $\alpha_{ij}$.
Here, similarly to Eq.~(\ref{eq:expansion}), we update the partial labeling by minimizing  $E(f')$ with binary variables: $f'_p\in \{f_p, \alpha_{ij}\}$ for \hbox{$p \in R_{ij}$}, and $f'_p = f_p$ for \hbox{$p \notin R_{ij}$}. Consequently, we obtain an improved solution as its minimizer with a lower or equal energy.

Notice that making the expansion region $R_{ij}$ larger than the label selection region $C_{ij}$ is the key idea for achieving spatial propagation. We can see this since, in an $\alpha_{ij}$-expansion without perturbation ($\Delta=\textbf{0}$), a current label $f_p$ in the center region $C_{ij}$ can be propagated for its nearby pixels in $R_{ij}$ as the candidate label $\alpha_{ij}$.

\begin{algorithm}[t]
	\small
	\SetKwInOut{Input}{input}
	\SetKwInOut{Output}{output}
	\caption[Iterative $\alpha_{ij}$-expansion]{\textsc{Iterative $\alpha_{ij}$-expansion}}         
	\label{alg2}   
	\Input{current $f$, target cell $(i, j)$, perturbation size $|\Delta'|$}
	\Output{updated $f$ (only labels $f_p$ at $p\in R_{ij}$ are updated)}
	\Repeat(\textcolor{mossgreen}{\textbf{ expansion proposer (propagation):}}){$K_\text{prop}$ times}{ 
		$\alpha_{ij} \gets f_r$ with randomly chosen $r \in C_{ij}$ \;
		$f \gets \argmin E(f' \, | \, f'_p\in \{f_p, \alpha_{ij}\}, p\in R_{ij})$  \;
	}
	\Repeat(\textcolor{mossgreen}{\textbf{ RANSAC proposer (optional):}}){$K_\text{RANS}$ times}{ 
		$\alpha_{ij} \gets \text{RANSAC}(f, C_{ij})$\;
		$f \gets \argmin E(f' \, | \, f'_p\in \{f_p, \alpha_{ij}\}, p\in R_{ij})$  \;
	}
	\Repeat(\textcolor{mossgreen}{\textbf{ randomization proposer (refinement):}}){$K_\text{rand}$ times (or $|\Delta'|$ is sufficiently small)}{
		$\alpha_{ij} \gets f_r$ with randomly chosen $r \in C_{ij}$ \;
		$\alpha_{ij} \gets \alpha_{ij} + \Delta'$ \;
		$f \gets \argmin E(f' \, | \, f'_p\in \{f_p, \alpha_{ij}\}, p\in R_{ij})$ \;
		$|\Delta'| \gets |\Delta'|/2$ \;
	}
\end{algorithm}

We use this $\alpha_{ij}$-expansion iteratively as shown in Algorithm~\ref{alg2}. Similarly to the PatchMatch algorithm~\cite{Barnes09}, this iterative algorithm has propagation (lines 1--4) and refinement (lines 9--14) steps. In the propagation step, we apply $\alpha_{ij}$-expansions without perturbation to propagate labels from $C_{ij}$ for $R_{ij}$. In the refinement step, we apply $\alpha_{ij}$-expansions with an exponentially decreasing perturbation-size to refine the labels. Here, propagation and refinement can be seen as heuristic proposers of a candidate label $\alpha_{ij}$ that is expectedly a good estimate for the region $C_{ij}$ or even its neighborhood $R_{ij}$. Our local expansion move method is conceptually orthogonal to any such heuristics for generating $\alpha_{ij}$. As such an example, we in lines 5--8 use a RANSAC proposer by following conventional proposal generation schemes used in fusion-based methods~\cite{Woodford09,Olsson13}. Here,  we generate $\alpha_{ij}$ by fitting a plane to the current disparity map $f$ in the region $C_{ij}$ using LO-RANSAC~\cite{ChumMK03}. 
We perform this iterative $\alpha_{ij}$-expansion at every cell $(i,j)$. 

This local $\alpha$-expansion method has the following useful properties.
\textbf{Piecewise linearity}: It helps to obtain smooth solutions.
In each $\alpha_{ij}$-expansion, multiple pixels in the expansion region $R_{ij}$ are allowed to move-at-once to the same candidate label $\alpha_{ij}$ at one binary-energy minimization, which contrasts to BP that updates only one pixel at once.
Since a label represents a disparity plane here, our method helps to obtain piecewise linear disparity maps and thus piecewise planar surfaces as we have discussed in Sec~\ref{sec:slanted_patch_matching}.
\textbf{Cost filtering acceleration}: We can accelerate the computation of matching costs $\phi_p(f_p)$ in Eq.~(\ref{eq:unary_term}) by using cost-volume filtering techniques.
We discuss this more in Sec~\ref{sec:fastimp}.
\textbf{Optimality and parallelizability}: With our energy formulation, it is guaranteed that each binary-energy minimization in Algorithm~\ref{alg2} can be optimally solved via GC. In addition, we can efficiently perform many $\alpha_{ij}$-expansions in a parallel manner. We discuss these matters in the following sections.

\begin{figure*}
	\centering
	\includegraphics[height=5cm]{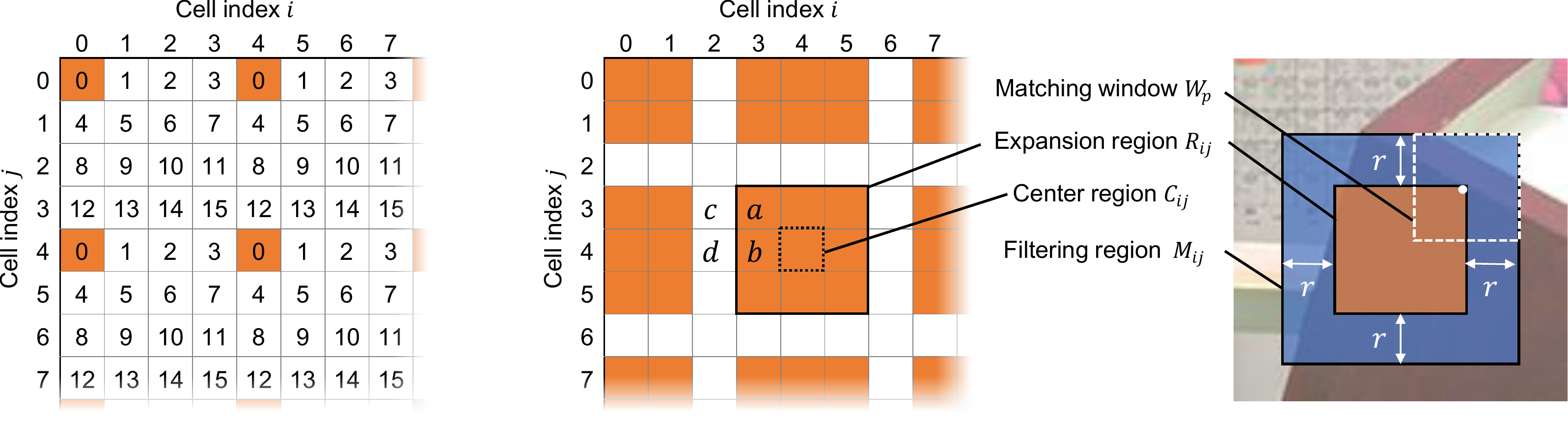}\\
	\vskip -2mm
	\begin{minipage}{\linewidth}
		\begin{minipage}{0.01\linewidth}\mbox{}\end{minipage}
		\begin{minipage}{0.28\linewidth}
			\caption{Group index for $\alpha_{ij}$-expansions. We perform $\alpha_{ij}$-expansions in the same group in parallel.}
			\label{fig:grouping}
		\end{minipage}
		\begin{minipage}{0.05\linewidth}\mbox{}\end{minipage}
		\begin{minipage}{0.28\linewidth}
			\caption{Expansion regions of mutually disjoint $\alpha_{ij}$-expansions (group index $k = 0$). We leave white gaps between neighbors.}
			\label{fig:disjoint}
		\end{minipage}
		\begin{minipage}{0.1\linewidth}\mbox{}\end{minipage}
		\begin{minipage}{0.25\linewidth}
			\caption{Filtering region $M_{ij}$. The margin width $r$ corresponds with the radius of the matching window $W_p$. }
			\label{fig:filtering}
		\end{minipage}
	\end{minipage}
\end{figure*}

\subsubsection*{Mutually-disjoint local $\alpha$-expansions}
\label{sec:disjoint}
While the previous section shows how each local $\alpha$-expansion behaves, here we discuss the scheduling of local $\alpha$-expansions.
We need a proper scheduling, because local $\alpha$-expansions cannot be simultaneously performed due to the overlapping expansion regions.

To efficiently perform local $\alpha$-expansions, we divide them into groups such that the local $\alpha$-expansions in each group are mutually disjoint.
Specifically, we assign each $\alpha_{ij}$-expansion a group index $k$ given by
\begin{equation}
k = 4 \, (j \text{ mod } 4) + (i \text{ mod } 4), \label{eq:groupindex}
\end{equation}
and perform the iterative $\alpha_{ij}$-expansions in one group and another.
As illustrated in Fig.~\ref{fig:grouping}, this grouping rule picks $\alpha_{ij}$-expansions at every four vertical and horizontal cells into the same group, and it amounts to 16 groups of mutually-disjoint local $\alpha$-expansions.
We visualize a group of disjoint local $\alpha$-expansions as orange regions in Fig.~\ref{fig:disjoint},  and also as a single layer of stacks in the right part of Fig.~\ref{fig:locallabel} with three different grid structures.

Notice that in each group, we leave \textit{gaps} of one cell width between neighboring local $\alpha$-expansions.
These gaps are to guarantee submodularity and independence for local $\alpha$-expansions.
By the submodularity, we can show that our local $\alpha$-expansions always produce submodular energies and can thus be optimally solved via GC. Because of this submodularity, we can use a standard GC algorithm~\cite{Boykov04} instead of an expensive QPBO-GC algorithm~\cite{Kolmogorov07}, which is usually required in the fusion based approach.
By the independence, we can show that the local $\alpha$-expansions in the same group do not interfere with each other.
Hence, we can perform them simultaneously in a parallel manner.
The parallelization of GC algorithms are of interest in computer vision~\cite{Liu10,Strandmark10}. Our scheme is simple and can directly use existing GC implementations.
The proof of this submodularity and independence is presented in the next section.

\subsubsection*{Submodularity and independence}
\label{sec:submodularity}
To formally address the  submodularity and independence of local $\alpha$-expansions, we discuss it using the form of fusion moves~\cite{Lempitsky10}.
Let us assume a current solution $f$ and a group of mutually-disjoint $\alpha_{ij}$-expansions to be applied.
Simultaneously applying these $\alpha_{ij}$-expansions is equivalent to the following fusion-energy minimization:
\begin{equation}
f^{*} = \argmin E(f' \, | \, f'_p\in \{f_p, g_p\}), \label{eq:fusion}
\end{equation}
where the proposal solution $g$ is set to $g_p = \alpha_{ij}$ if $p$ belongs to any of expansion regions $R_{ij}$ in this group; otherwise, $p$ is in gaps so we assign $\phi_p(g_p)$ an infinite unary cost for forcing $f'_p = f_p$. This proposal disparity map $g$ is visualized as a single layer of stacks in the right part of Fig.~\ref{fig:locallabel}.
We prove the following lemmas:

\begin{lemma}
	{Submodularity}: the binary energies in Eq.~(\ref{eq:fusion}) are submodular, \ie, all the pairwise interactions in Eq.~(\ref{eq:fusion}) meet the following submodularity of fusion moves~\cite{Lempitsky10,Kolmogorov07}:
	\begin{equation}
	\psi_{pq}(g_p, g_q) + \psi_{pq}(f_p, f_q) \le \psi_{pq}(f_p, g_q) + \psi_{pq}(g_p, f_q). \label{eq:sub_fusion}
	\end{equation}
\end{lemma}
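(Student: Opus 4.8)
The goal is to verify the fusion-move submodularity inequality \eqref{eq:sub_fusion} for every neighboring pair $(p,q)\in\myN$, given that the proposal $g$ comes from a group of mutually-disjoint $\alpha_{ij}$-expansions (with one-cell gaps) and that the current solution is $f$. I would first reduce to a small number of cases according to where $p$ and $q$ sit relative to the grouping, and then in each case either invoke Lemma~\ref{lem:subm_expansion} directly or dispose of the inequality trivially.

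First I would partition the pixel set into the gap pixels (where $\phi_p(g_p)=\infty$ forces $f'_p=f_p$) and the pixels lying in some expansion region $R_{ij}$ of the group. Since the $R_{ij}$ in a group are mutually disjoint and separated by gaps of one cell width, for a neighboring pair $(p,q)$ there are only three possibilities: (i) at least one of $p,q$ is a gap pixel; (ii) both $p$ and $q$ lie in the \emph{same} expansion region $R_{ij}$; (iii) $p$ and $q$ lie in two \emph{different} expansion regions $R_{ij}$ and $R_{mn}$ of the group. I would argue case (iii) cannot occur for a neighboring pair: because of the one-cell gaps, any two distinct expansion regions in a group are separated by at least one full cell, hence no edge of $\myN$ crosses between them — this is exactly the "independence" that the gaps are designed to guarantee, and it is the one geometric fact I would state carefully (it depends on $\myN$ being a standard 4- or 8-connected neighborhood whose range is smaller than the cell width).

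For case (i), if $p$ is a gap pixel then $g_p = f_p$ (effectively), so the inequality \eqref{eq:sub_fusion} becomes $\psi_{pq}(f_p,g_q)+\psi_{pq}(f_p,f_q)\le\psi_{pq}(f_p,g_q)+\psi_{pq}(f_p,f_q)$, an equality; symmetrically if $q$ is a gap pixel. For case (ii), both $p$ and $q$ lie in the same $R_{ij}$, so $g_p=g_q=\alpha_{ij}$ is a single common label $\alpha$. Then \eqref{eq:sub_fusion} reads
\begin{equation}
\psi_{pq}(\alpha,\alpha)+\psi_{pq}(f_p,f_q)\le\psi_{pq}(f_p,\alpha)+\psi_{pq}(\alpha,f_q),\notag
\end{equation}
which is precisely the submodularity-of-expansion-moves condition \eqref{eq:axregularity} with $\beta=f_p$, $\gamma=f_q$. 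This holds for our $\psi_{pq}$ by Lemma~\ref{lem:subm_expansion}. Combining the three cases, every pairwise interaction in \eqref{eq:fusion} satisfies \eqref{eq:sub_fusion}, so the fusion energy is submodular.

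**Main obstacle.** The only nonroutine point is case (iii): rigorously showing that no neighboring edge straddles two distinct expansion regions of the same group. This requires pinning down the grouping rule \eqref{eq:groupindex} (every fourth cell in each direction) against the definition of $R_{ij}$ as the $3\times3$ block of cells, and checking that consecutive same-group expansion regions are separated by exactly one cell — enough to block any edge in the (at most 8-connected) neighborhood $\myN$, provided cell side length is at least the neighborhood radius. Everything else is either an equality (gap pixels) or a direct citation of the already-established expansion-move submodularity (same-region pixels).
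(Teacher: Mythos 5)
Your proof is correct and follows essentially the same case analysis as the paper's own (footnoted) outline: pairs lying within a single expansion region reduce to the expansion-move submodularity condition of Lemma~\ref{lem:subm_expansion}, while pairs touching a gap are harmless because the gap variable is forced (the paper phrases this as the pairwise term becoming unary or constant under the infinite unary cost, rather than setting $g_p=f_p$ outright, but the effect is identical). Your explicit verification that no neighboring pair straddles two distinct expansion regions of the same group is exactly the geometric fact the paper defers to its separate Independence lemma, and you handle it correctly.
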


\begin{proof}
	Omitted.\footnote{We outline proof. For $(p,q)$ inside an expansion region such as $(a,b)$ in Fig.~\ref{fig:disjoint},  Eq.~(\ref{eq:sub_fusion}) is relaxed to Eq.~(\ref{eq:axregularity}). For the other cases, \eg,  $(a,c)$ or $(c,d)$, pairwise terms  $\psi_{ac}(f'_a,f'_c)$ and  $\psi_{cd}(f'_c,f'_d)$ become unary or constant terms. Therefore, when implementing an $\alpha_{ij}$-expansion using min-cut in $R_{ij}$, we create nodes for ${}^\forall p \in R_{ij}$ and add $\psi_{ac}(f'_a,f_c)$ as unary potentials of nodes $a$ at the inner edge of $R_{ij}$. See also \cite{Boykov01} for the conversion of  $\psi_{ab}(f'_a,f'_b)$ into edge capacities under expansion moves.
	}
\end{proof}


\begin{lemma}
	{Independence}: the assignments to $f'_p$ and $f'_q$ do not influence each other, if $p$ and $q$ are in different expansion regions.
\end{lemma}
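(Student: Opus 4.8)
The plan is to show that the optimization problem in Eq.~(\ref{eq:fusion}) decomposes into independent subproblems, one per expansion region $R_{ij}$ in the group, so that the choice of binary values $f'_p$ for $p \in R_{ij}$ can be made without reference to the values chosen in any other $R_{mn}$. The crux is that the objective $E(f')$ in Eq.~(\ref{eq:fusion}), viewed as a function of the binary variables, contains no pairwise term that couples a node in one expansion region to a node in a different expansion region, and no pairwise term that couples a region node to a ``free'' (gap) variable that is itself coupled to another region.

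First I would recall the structure of the neighborhood system $\myN$: it is the standard grid connectivity, so $\psi_{pq}$ is nonzero only when $p$ and $q$ are adjacent pixels. Then I would invoke the gap construction from Sec.~\ref{sec:disjoint}: by the grouping rule of Eq.~(\ref{eq:groupindex}), any two distinct expansion regions $R_{ij}$ and $R_{mn}$ in the same group are separated by at least one full cell of width, hence there is \emph{no} neighboring pair $(p,q)\in\myN$ with $p \in R_{ij}$ and $q \in R_{mn}$. Consequently $E(f')$ splits as a sum over expansion regions of terms $E_{ij}(f'|_{R_{ij}})$ — each collecting the unary potentials $\phi_p(f'_p)$ for $p\in R_{ij}$ and the pairwise potentials $\psi_{pq}(f'_p,f'_q)$ for edges internal to $R_{ij}$ — plus a term involving only gap pixels (which is frozen to $f'_p=f_p$ by the infinite unary cost, and where the boundary pairwise terms $\psi_{pq}(f'_p,f_q)$ with $p$ in a region and $q$ in a gap reduce to extra unary potentials on $p$, as noted in the footnote to the submodularity lemma). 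Since these $E_{ij}$ share no variables, $\argmin$ of the whole energy is the concatenation of the independent $\argmin$s of the $E_{ij}$, which is precisely the claimed independence: the assignment to $f'_p$ (for $p\in R_{ij}$) is determined entirely within $R_{ij}$ and is unaffected by the assignment to $f'_q$ for $q$ in a different region.

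The main obstacle, or rather the only point that needs care, is verifying that the one-cell gap really does preclude any cross-region edge — i.e. checking the arithmetic of Eq.~(\ref{eq:groupindex}) together with the $3\times 3$ shape of $R_{ij}$: a local $\alpha$-expansion at $(i,j)$ occupies cells with indices in $\{i-1,i,i+1\}\times\{j-1,j+1\}$, the next one in the same group sits at $(i+4,j)$ (or $(i,j+4)$), occupying $\{i+3,i+4,i+5\}\times\{j-1,j+1\}$, so the columns $\{i+1,i+2,i+3\}$ (resp. rows) leave column $i+2$ entirely outside both regions, giving the separating buffer. Once that is established, the decomposition of $E$ is immediate from the additive form of Eq.~(\ref{eq:mrf}) and the locality of $\myN$, and the independence statement follows. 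I would present this as: (1) locality of $\myN$; (2) the gap guarantees separation of distinct same-group expansion regions by a non-region cell; (3) hence $E$ decomposes additively over regions (plus a frozen gap term, with boundary pairwise terms absorbed into region unaries); (4) therefore the minimizing assignments on distinct $R_{ij}$ are mutually independent.
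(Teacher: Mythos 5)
Your proof is correct and follows essentially the same route as the paper's: both arguments rest on the observation that the one-cell gaps, combined with the locality of $\myN$ and the freezing of gap pixels to their current labels, reduce every pairwise term touching a gap to a unary or constant term, so no chain of valid interactions can connect distinct expansion regions. You merely make explicit the resulting additive decomposition of $E(f')$ and the buffer arithmetic that the paper's sketch leaves implicit (aside from the harmless typo $\{j-1,j+1\}$ where you mean $\{j-1,j,j+1\}$).
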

\begin{proof}
	The $f'_p$ and $f'_q$ have interactions if and only if there exists a chain of valid pairwise interactions $C = \{\psi(f'_{s_0},f'_{s_1}), \cdots, \psi(f'_{s_{n-1}},f'_{s_n})\}$ connecting \mbox{$p = s_0$} and {$q = s_n$}.
	But there is no such chain because $C$ inevitably contains constant or unary terms at a gap ($\psi_{cd}$ and $\psi_{ac}$ in Fig.~\ref{fig:disjoint}). 
\end{proof}

\subsection{Optimization procedure}
\label{sec:optimization}

\begin{algorithm}[t]
	\small
	\definecolor{mossgreen}{HTML}{308014}
	\caption[Optimization procedure]{\textsc{Optimization procedure}}         
	\label{alg1}   
	Define three levels of grid structures: $H\!=\!\{h_1, h_2, h_3\}$\;
	Initialize the current solution $f$ randomly\;
	Initialize the perturbation size $|\Delta|$\;
	\Repeat{convergence}{ 
		\ForEach{grid structure $h \in H$}{
			\ForEach{disjoint group $k = 0, 1,..., 15$}{
				
				\ForEach(\textcolor{mossgreen}{[in parallel]}){cell $(i,j)$ in the group $k$}{
					Do iterative $\alpha_{ij}$ expansion $(f, (i,j), |\Delta|)$\;
				}
			}
		}
		$|\Delta| \gets |\Delta|/2$\;
	}
	Do post processing\;
\end{algorithm}

Using the local expansion moves shown in the previous section, we present the overall optimization procedure in this section and summarize it in Algorithm~\ref{alg1}.

This algorithm begins with defining grid structures. Here, we use thee different sizes of grid structures for better balancing localization and spatial propagation.

At line 2 of Algorithm~\ref{alg1}, the solution $f$ is randomly initialized.
To evenly sample the allowed solution space, we take the initialization strategy described in~\cite{Bleyer11}. 
Specifically, for each \hbox{$f_p = (a_p, b_p, c_p)$} we select a random disparity $z_0$ in the allowed disparity range $[0, {\rm dispmax}]$. Then,  a random unit vector \hbox{$n\!=\!(n_x, n_y, n_z)$} and $z_0$  are converted to the plane representation by \hbox{$a_p = -n_x/n_z$}, \hbox{$b_p = -n_y/n_z$}, and \hbox{$c_p = -(n_x p_u + n_y p_v + n_z z_0)/n_z$}.

In the main loop through  lines 4--13, we select one grid level $h$ from the pre-defined grid structures (line 5), and apply the iterative $\alpha_{ij}$ expansions of Algorithm~\ref{alg2} for each cell of the selected structure $h$ (lines 6--10). As discussed in Sec.~\ref{sec:localexp}, we perform the iterative $\alpha_{ij}$ expansions by dividing into disjoint groups defined by Eq.~(\ref{eq:groupindex}). Because of this grouping, the $\alpha_{ij}$ expansions in the loop at lines 7--9 are mutually independent and can be performed in parallel.

The perturbation at line 9 of Algorithm~\ref{alg2} is implemented as described in~\cite{Bleyer11}. Namely, each candidate label $\alpha_{ij} = (a,b,c)$ is converted to the form of a disparity $d$ and normal vector $n$. We then add a random disparity \hbox{$\Delta_{d}' \in [-r_d, r_d]$} and a random  vector $\Delta_{n}'$ of size  $\|\Delta_{n}'\|_2=r_n$  to them, respectively, to obtain $d'$ and $n'$. Finally, $d'$ and $n'/| n'|$ are converted to the plane representation $\alpha_{ij} \gets (a',b',c')$ to obtain a perturbed candidate label. The values $r_d$ and $r_n$ define an allowed change of disparity planes.
We initialize them by setting \hbox{$r_d \gets {\rm dispmax}/2$} and \hbox{$r_n \gets 1$} at line 3 of Algorithm~\ref{alg1}, and update them by \hbox{$r_d \gets r_d/2$} and \hbox{$r_n \gets r_n/2$} at line~12 of Algorithm~\ref{alg1} and line~13 of Algorithm~\ref{alg2}. 

Finally, we perform post-processing using left-right consistency check and weighted median filtering as described in~\cite{Bleyer11} for further improving the results. This step is widely employed in recent methods~\cite{Bleyer11,Besse12,Lu13,Heise13}.

Note that there are mainly two differences between this algorithm and our previous version~\cite{Taniai14}. 
For one thing, we have removed a per-pixel label refinement step of \cite{Taniai14}. This step is required for updating a special label space structure named \emph{locally shared labels} (LSL) used in \cite{Taniai14}, but can be removed in our new algorithm by using local $\alpha$-expansions instead.
For the other, the previous algorithm proceeds rather in a batch cycle; \ie, it produces all local $\alpha$-expansions with all grid structures at once and then applies them to the current solution $f$ in one iteration. 
This leads to
slower convergence than our new algorithm that produces each local $\alpha$-expansion always from the latest current solution~$f$. 

\subsection{Fast implementation}
\label{sec:fastimp}
Our method has two major computation parts: the calculations of matching costs $\phi_p (f_p)$ of Eq.~(\ref{eq:unary_term}), and application of GC in Algorithm~\ref{alg2}. 
In Sec.~\ref{sec:localexp} and Sec.~\ref{sec:optimization}, we have shown that the latter part can be accelerated by performing disjoint local $\alpha$ expansions in parallel. In this section, we discuss the former part.

The calculations of the matching costs $\phi_p (f_p)$ are very expensive, since they require $O(|W|)$ of computation for each term, where $|W|$ is the size of the matching window.
In our previous algorithm~\cite{Taniai14}, we accelerate this part by using GPU.
The use of GPU is reasonable for our method because $\phi_p (f_p)$ of all pixels can be individually computed in parallel during the inference, which contrasts to PMBP~\cite{Besse12} that can only sequentially process each pixel. Still, the computation complexity is $O(|W|)$ and it becomes very inefficient if we only use CPU~\cite{Taniai14}. 

In the following part, we show that we can approximately achieve $O(1)$ of complexity for computing each $\phi_p (f_p)$. 
The key observation here is that, we only need to compute this matching cost $\phi_p (f_p)$ during the $\alpha_{ij}$-expansions in Algorithm~\ref{alg2}, where $\phi_p (f_p)$ is computed as $\phi_p (\alpha_{ij})$ for all $p \in R_{ij}$.
With this consistent-label property, we can effectively incorporate the fast subregion cost-filtering technique used in \cite{Lu13}, by which $\phi_p (\alpha_{ij})$ for all $p \in R_{ij}$ are {efficiently computed at once}.


To more specifically discuss it, we first separate the computation of $\phi_p (f_p)$ into two steps: the calculations of raw matching costs
\begin{equation}
\rho_{f_p}(s)=\rho(s|f_p)  \label{eq:rawcost}
\end{equation}
for the support pixels $s \in W_p$, and the aggregation of the raw matching costs using an edge-aware filter kernel $\omega_{ps}$
\begin{equation}
\phi_{f_p} (p) = \sum_{s \in W_p} \omega_{ps} \, \rho_{f_p}(s). \label{eq:aggregation}
\end{equation}
We also define a filtering region $M_{ij}$ as the joint matching windows in $R_{ij}$ as
\begin{equation}
M_{ij} = \bigcup_{p \in R_{ij}} W_p.
\end{equation}
As shown in Fig.~\ref{fig:filtering}, this $M_{ij}$ is typically a square region margining $R_{ij}$ with $r$ pixels of width around $R_{ij}$, where $r$ is the radius of the matching windows.

In the raw matching part of Eq.~(\ref{eq:rawcost}), the calculations of the raw costs $\rho_{f_p}(s)$, ${}^\forall s \in W_p$ for all $p \in R_{ij}$ generally require $O(|W||R_{ij}|)$ of total computation.
However, with the consistent-label property (\ie, $f_p=\alpha_{ij}$ for all $p \in R_{ij}$), they can be computed at once in $O(|M_{ij}|)$ by computing $\rho_{\alpha_{ij}}(s)$ for ${}^\forall s \in M_{ij}$.
Here, the computation complexity {for} each unary term is $O(|M_{ij}|/|R_{ij}|)$. Therefore, if $|M_{ij}| \simeq |R_{ij}|$, we can approximately achieve $O(|M_{ij}|/|R_{ij}|) \simeq O(1)$~\cite{Lu13}.

Similarly, the cost aggregation part of Eq.~(\ref{eq:aggregation}) can be done in approximately $O(1)$,  if we apply a constant-time edge-aware filtering $\omega_{ps}$ to $\rho_{\alpha_{ij}}(s)$, ${}^\forall s \in M_{ij}$~\cite{Lu13}.
As such an example we have chosen guided image filtering~\cite{He13} for Eq.~(\ref{eq:gf}) but could use more sophisticated filtering~\cite{Lu12}.

Note that as the consistent-label property is the key to being able to use the filtering techniques, this scheme cannot be used in other PatchMatch based methods~\cite{Besse12,Bleyer11,Heise13} except for \cite{Lu13} that uses superpixels as computation units. 

\section{Experiments}
\label{sec:experiments}
In the experiments, we first evaluate our method on the Middlebury benchmark V2 and V3. While the latest V3 benchmark allows us to compare with state-of-the-art methods, we also use the older V2 as there are more related methods~\cite{Bleyer11,Besse12,Heise13,Lu13,Taniai14} registered only in this version.

We further assess the effect of sizes of grid-cells and also the effectiveness of the proposed acceleration schemes in comparison to our previous method~\cite{Taniai14}.
Our method is further compared with the PMBP~\cite{Besse12}, PMF~\cite{Lu13} and Olsson~\etal~\cite{Olsson13} methods that are closely related to our approach. 
{Additional results for these analyses and comparisons on more image pairs are provided in the supplementary material.}

\subsubsection*{Settings}
We use the following settings throughout the experiments. We use a desktop computer with a Core i7 CPU (3.5 GHz $\times$ $4$ physical cores) and NVIDIA GeForce GTX Titan Black GPU. All methods are implemented using C++ and OpenCV.

The parameters of our data term are set as $\{ e, \tau_\text{col}, \tau_\text{grad}, \alpha \} = \{ 0.01^2, 10, 2, 0.9 \}$ following \cite{Lu13,Rhemann11} and \cite{Bleyer11}. The size of matching windows $W_p$ is set to $41\times41$ (\ie, the size of local regression windows $W'_k$ in Eq.~(\ref{eq:gf}) is set to $21\times 21$), which is the same setting with PMBP~\cite{Besse12} and \cite{Bleyer11,Heise13}. 
For the smoothness term, we use $\{\lambda, \tau_\text{dis}, \epsilon, \gamma \} = \{1, 1, 0.01, 10 \}$ and eight neighbors for $\myN$.

For our algorithm, we use three grid structures with cell sizes of $5\times 5$, $15\times 15$, and $25\times 25$ pixels. The iteration numbers $\{K_\text{prop}, K_\text{rand}\}$ in Algorithm~\ref{alg2} are set to  $\{1, 7\}$ for the first grid structure, and $\{2, 0\}$ (only propagation step) for the other two. Except in Sec.~\ref{sec:benchmark_v3} and \ref{sec:olsson} we disable the RANSAC proposer for fair comparisons with related methods or for avoiding cluttered analysis. When enabled, we set $K_\text{RANS} = 1$ for all the grid structures. We iterate the main loop ten times. We use a GC implementation of~\cite{Boykov04}.

We use two variants of our method.
\textbf{LE-GF} uses guided image filtering~\cite{He13} for $w_{ps}$ as proposed in Sec.~\ref{sec:formulation}. 
We only use a CPU implementation for this method. 
\textbf{LE-BF} uses bilateral filtering for $\omega_{ps}$ defined similarly to Eq.~(\ref{eq:smweight}) and $\lambda=20$ so that the energy function corresponds to one we used in~\cite{Taniai14}.
The computation of matching costs is thus as slow as $O(|W|)$. 
In a GPU implementation, we accelerate this calculation by computing each unary term individually in parallel on GPU.
For both LE-BF and LE-GF, we perform disjoint local $\alpha$-expansions  in parallel using four CPU cores.

\subsection{Evaluation on the Middlebury benchmark V2}
\label{sec:benchmark}
We show in~\tref{middlebury} selected rankings on the Middlebury stereo benchmark V2 for $0.5$-pixel accuracy, comparing with other PatchMatch-based methods~\cite{Bleyer11,Besse12,Lu13,Heise13,Taniai14}.
The proposed LE-GF method achieves the current best average rank ($3.9$) and bad-pixel-rate ($5.97\%$) amongst more than $150$ stereo methods including our previous method (GC+LSL)~\cite{Taniai14}. Even without post-processing, our LE-GF method still outperforms the other methods in average rank, despite that all the methods in \tref{middlebury} use the post-processing.
On the other hand, the proposed LE-BF method achieves comparable accuracy with our previous algorithm~\cite{Taniai14}, since both methods optimize the same energy function in very similar ways.

\begin{table*}[!t]
	\centering
	\caption{Middlebury benchmark V2 for $0.5$-pixel accuracy. Our method using the guided image filtering (GF) achieves the current best average rank $3.9$. Error percents for \emph{all} pixels, non-occluded pixels (\emph{nonocc}), and pixels around depth discontinuities (\emph{dis}) are shown. (Snapshot on April 22, 2015)}
	\label{tab:middlebury}
	\includegraphics[width=0.9\textwidth]{./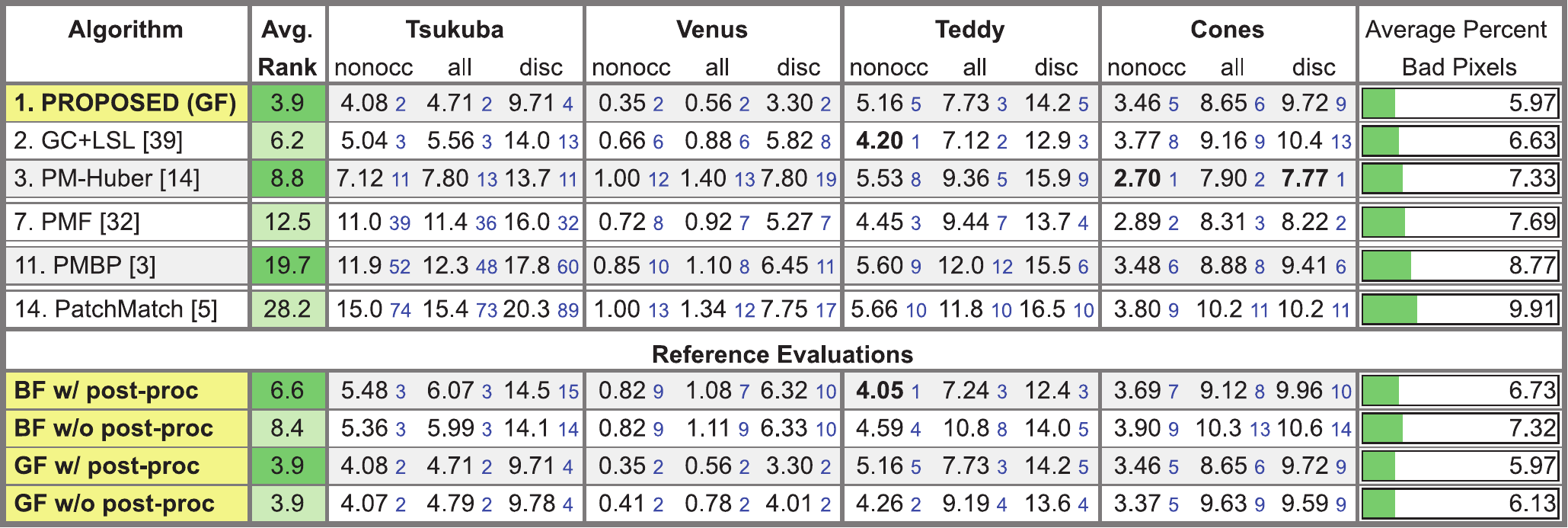}
\end{table*}

\begin{table*}[!t]
	\centering
	\caption{Middlebury benchmark V3 for the \emph{bad 2.0 nonocc} metric. Our method achieves the current best average error rate among 64 existing algorithms. We here list current top methods~\cite{Zbontar16,Li17mst,Li17pmsc,Drouyer17,Kim16} that use the matching costs of MC-CNN-acrt~\cite{Zbontar16} like ours. (Snapshot on July 4, 2017)
	}
	\label{tab:middlebury3}
	\includegraphics[width=0.9\textwidth]{./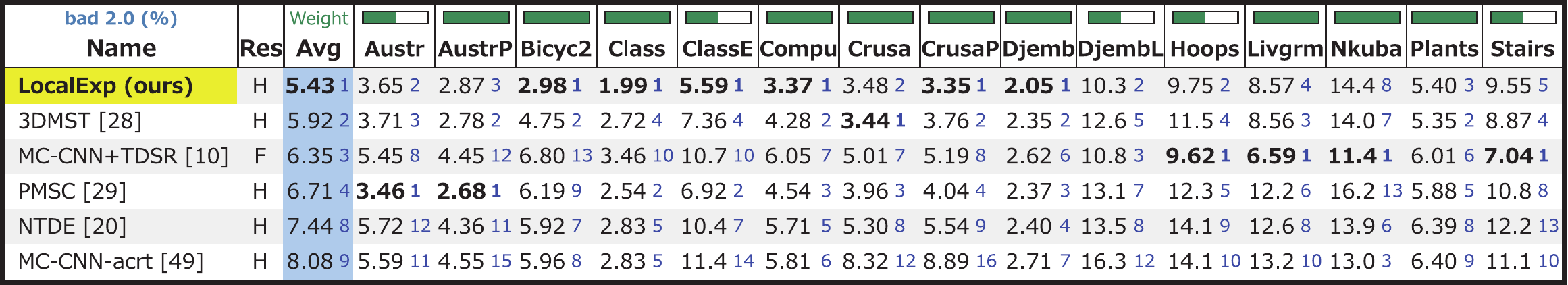}
\end{table*}

\subsection{Evaluation on the Middlebury benchmark V3}
\label{sec:benchmark_v3}
As major differences from the older version, the latest Middlebury benchmark introduces more challenging difficulties such as high-resolution images, different exposure and lighting between an image pair, imperfect rectification, etc. 
Since our model, especially our data term borrowed from \cite{Bleyer11}, does not consider such new difficulties, we here slightly modify our model for adapting it to the latest benchmark.
To this end, we incorporate the state-of-the-art CNN-based matching cost function by Zbontar~and~LeCun~\cite{Zbontar16}, by following manners of current top methods~\cite{Li17mst,Li17pmsc,Drouyer17,Kim16} on the latest benchmark. Here, we only replace our pixelwise raw matching cost function $\rho(\cdot)$ of Eq.~(\ref{eq:photofunc}) with their matching cost function as follows.
\begin{equation}
\rho'(s|f_p) = \min( C_\text{CNN}(s, s') , \tau_\text{CNN}).
\label{eq:mccnn}
\end{equation}
The function $C_\text{CNN}(s, s')$ computes a matching cost between left and right image patches of $11\times 11$-size centered at $s$ and $s'$, respectively, using a  neural network (called MC-CNN-acrt in \cite{Zbontar16}).\footnote{Using the authors' code and pre-trained model, we pre-compute the matching costs $C_\text{CNN}$ at all allowed integer disparities using fronto-parallel patches.
	During the inference, we linearly interpolate the function $C_\text{CNN}(s, s')$ at non-integer coordinates $s'$. MC-CNN learns a patch similarity metric invariantly to the left-right patch distortion. Thus, the bias due to using fronto-parallel patches in $C_\text{CNN}$ is small.
}
In our method we truncate the matching cost values at $\tau_\text{CNN} = 0.5$ and aggregate them for the support window pixels $s \in W_p$  using slanted patch matching of Eq.~(\ref{eq:unary_term}). In addition to the change in Eq.~(\ref{eq:photofunc}), the widths of square cells of three grid structures are changed proportionally to $1\%$, $3\%$, and $9\%$ of the image width. Since only one submission is allowed for the test data by the new benchmark regulation, we here only use our LE-GF method with $\lambda = 0.5$ and $K_\text{RANS} = 1$. We keep all the other parameters (\ie, $\{e,W_p,\tau_\text{dis},\epsilon,\gamma,K_\text{prop},K_\text{rand}\}$) as default.

Table~\ref{tab:middlebury3} shows selected rankings for the \emph{bad 2.0} metric (\ie, percentage of bad pixels by the error threshold of 2.0 pixels at full resolution) for non-occluded regions as the default metric. Our method  outperforms all existing 63 methods on the table not only for the default metric but for all combinations of $\{\text{\emph{bad} } \emph{0.5}, \emph{1.0}, \emph{2.0}, \emph{4.0}\} \times \{\text{\emph{nonocc}}, \text{\emph{all}}\}$ except for \emph{bad 4.0 - all}.
In Table~\ref{tab:training} we analyze effects of our post-processing and RANSAC proposer using training data.
Again, our method is ranked first even without post-processing for the default and all other \emph{bad nonocc} metrics.
Also, the RANSAC proposer reduces errors by one point (see Sec.~\ref{sec:olsson} for more analysis).
The avr and stdev show that our inference is stable by different random initializations.

\subsection{Effect of grid-cell sizes}
To observe the effect of grid-cell sizes, we use the three sizes of grid-cells, $5\times 5$ pixels (denoted as ``S''mall), $15\times 15$ pixels (denoted as ``M''edium), $25\times 25$ pixels (denoted as ``L''arge), in different combinations and assess the performance using the following five different settings;
(S, S, S): the small-size cells for all grid-structures;
(M, M, M): the medium-size cells for all grid-structures; 
(L, L, L): the large-size cells for all grid-structures; 
(S, M, M): the small and medium-size cells for the first and the other two grid-structures, respectively; 
(S, M, L): the small, medium, and large-size cells for the first, second, and third grid-structures, respectively (the default setting for our method described above).
Here, the iteration numbers for the first to third grid-structures are kept as default.
We use the LE-GF method so as to access the effect on cost filtering acceleration as well. We use  $\lambda = 0.5$ but keep the other parameters as default. Using these settings, we observe the performance variations by estimating the disparities of only the left image of the \emph{Reindeer} dataset without post-processing.

The plots in Fig.~\ref{fig:regionlabel_energy}~(a) show the transitions of the energy function values over iterations. 
The plots in Fig.~\ref{fig:regionlabel_energy}~(b) show the temporal transitions of error rates with subpixel accuracy.
As shown, the joint use of multiple cell sizes improves the performance in both energy reduction and accuracy. Although (S, M, M) and (S, M, L) show almost the same energy transitions, the proposed combination (S, M, L) shows faster and better convergence in accuracy.

Figure~\ref{fig:gridsize_image} compares the results of the five settings with error maps for (S, M, M) and (S, M, L).
The use of larger grid-cells helps to obtain smoother disparities, and it is especially effective for occluded regions.

Comparing the running times in Fig.~\ref{fig:regionlabel_energy}~(b), the use of the small-size cells is inefficient due to the increased overhead in cost filtering, whereas the use of the medium and large-size cells achieves almost the same efficiency.

\begin{figure*}[!t]
	\begin{minipage}{0.34\textwidth}
		\scriptsize
		\tabcaption{Effect of our post-processing (PP) and RANSAC proposer (RP). We show {weighted} average bad 2.0 scores of our LE-GF method with 3DMST~\cite{Li17mst} that ranks second following ours. We use 15 training image pairs from Middlebury V3. We also show average (avr) and standard deviation (stdev) by ten sets of different random initializations. See also Sec.~\ref{sec:olsson} for more analysis on the RANSAC proposer.}
		\label{tab:training}
		\begin{center}
			\begin{tabular}{c|cc|cc}
				&  PP	& RP & \scriptsize{nonocc} & \scriptsize{all} \\ 
				\hline 
				& \checkmark	& \checkmark &  \textbf{6.52} & \textbf{12.1} \\ 
				Ours &  & \checkmark &  \textbf{6.65} & 13.6 \\ 
				&  & & 7.72 & 14.6 \\ 
				\hline 
				\multirow{2}{6em}{\centering avr $\pm$ stdev } & \multirow{2}{1em}{\checkmark}	& \multirow{2}{1em}{\checkmark} &  \textbf{6.63} & \textbf{12.3} \\ 
				& 	&  &  $\!\!\!\!\pm$0.12 & $\!\!\pm$0.2 \\ 
				\hline 
				\emph{ref.} 3DMST~\cite{Li17mst} & \checkmark  &  & 7.08 & 12.9 \\ 
		\end{tabular}\end{center}
	\end{minipage}\hfill
	\begin{minipage}{0.65\textwidth}
		\centering
		\begin{minipage}{0.495\textwidth}
			\begin{center}
				\includegraphics[width=\textwidth]{./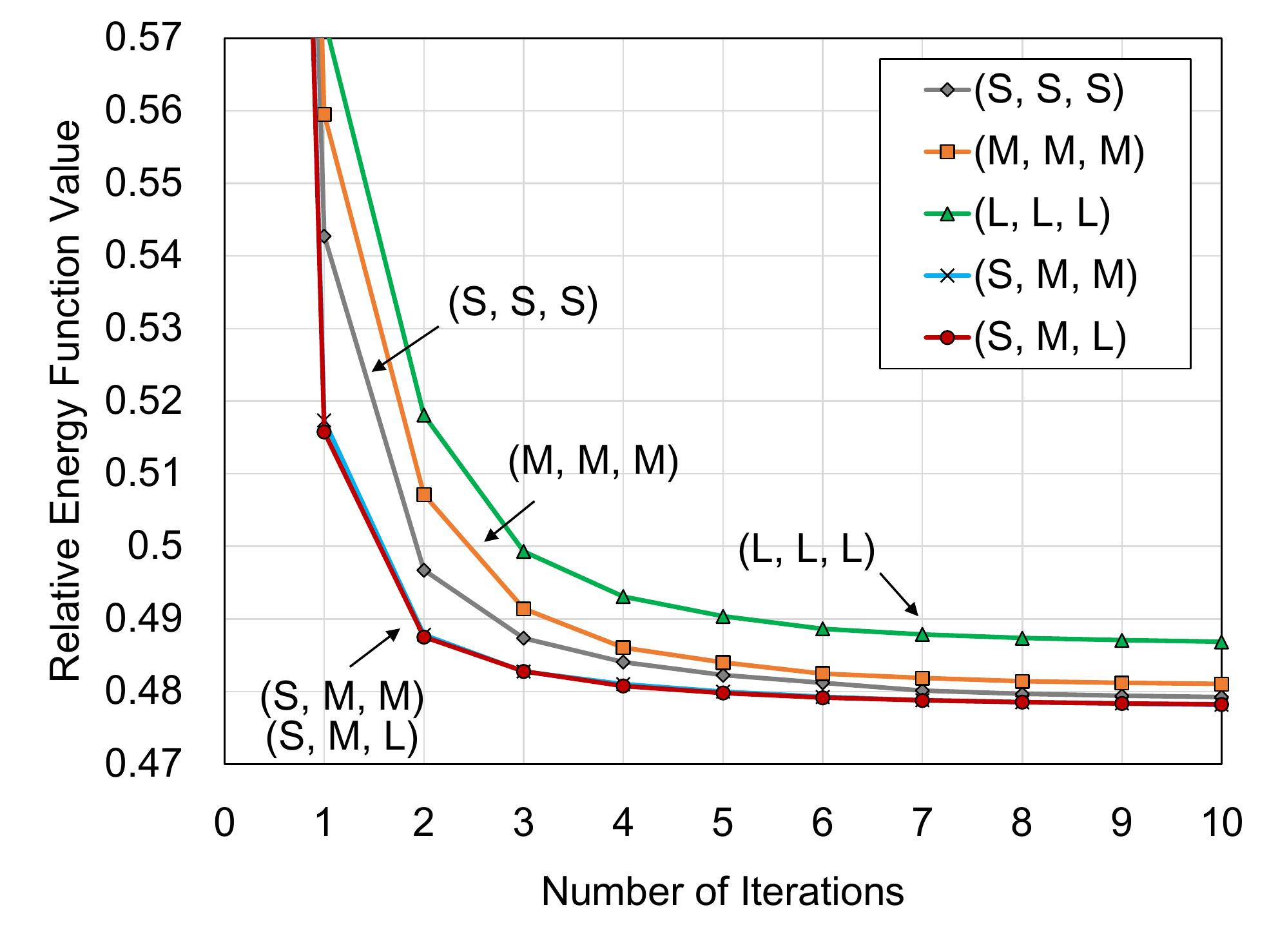}\\
				\capskip
				\small{(a) Energy function values \wrt iterations}
			\end{center}
		\end{minipage}
		\hfil
		\begin{minipage}{0.495\textwidth}
			\begin{center}
				\includegraphics[width=\textwidth]{./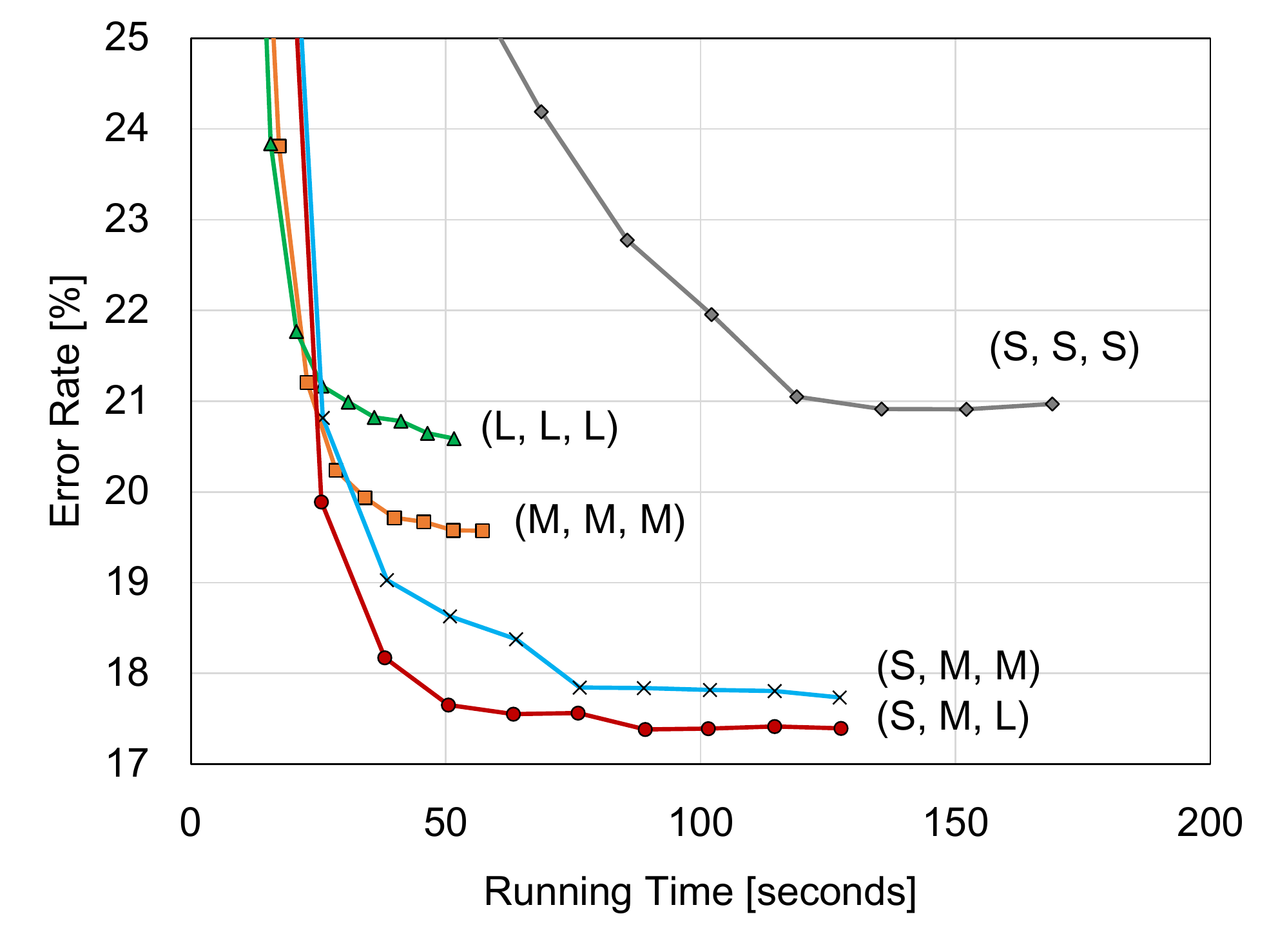}\\
				\capskip
				\small{(b) Temporal error-rate transitions}
			\end{center}
		\end{minipage}
		\caption{Effect of grid-cell sizes. We use LE-GF with different combinations of grid structures. The S, M, and L denote small, medium, and large grid-cells, respectively.
			The joint use of different sizes of grid-cells improves the performance. See also Fig.~\ref{fig:gridsize_image} for visual comparison.}
		\label{fig:regionlabel_energy}
	\end{minipage}
	\vskip 5mm
	\begin{minipage}{0.19\textwidth}
		\begin{center}
			\includegraphics[width=\textwidth]{./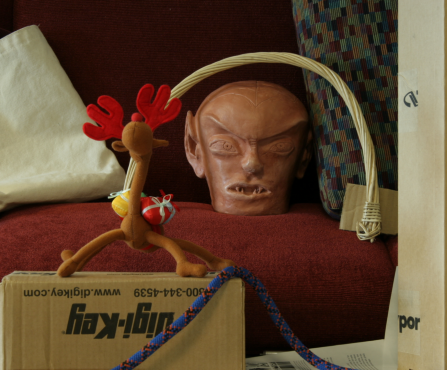}
		\end{center}
	\end{minipage}
	\hfil
	\begin{minipage}{0.19\textwidth}
		\begin{center}
			\includegraphics[width=\textwidth]{./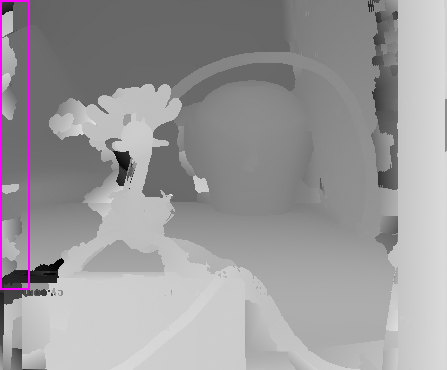}
		\end{center}
	\end{minipage}
	\hfil
	\begin{minipage}{0.19\textwidth}
		\begin{center}
			\includegraphics[width=\textwidth]{./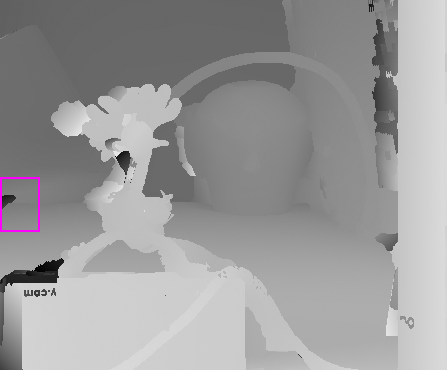}
		\end{center}
	\end{minipage}
	\hfil
	\begin{minipage}{0.19\textwidth}
		\begin{center}
			\includegraphics[width=\textwidth]{./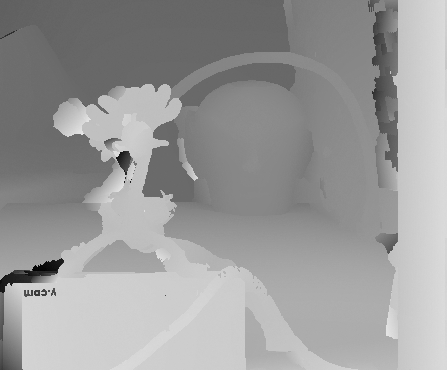}
		\end{center}
	\end{minipage}
	\hfil
	\begin{minipage}{0.19\textwidth}
		\begin{center}
			\includegraphics[width=\textwidth]{./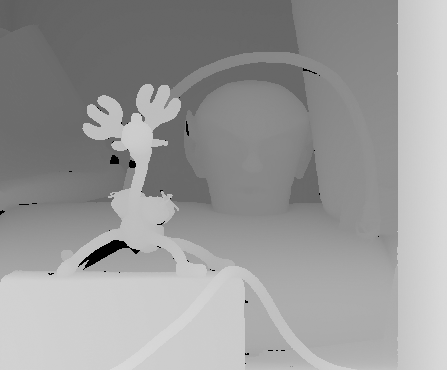}
		\end{center}
	\end{minipage}
	\vskip 1mm
	\begin{minipage}{0.19\textwidth}
		\begin{center}
			\small{\emph{Reindeer}}
		\end{center}
	\end{minipage}
	\hfil
	\begin{minipage}{0.19\textwidth}
		\begin{center}
			\small{(S, S, S)}
		\end{center}
	\end{minipage}
	\hfil
	\begin{minipage}{0.19\textwidth}
		\begin{center}
			\small{(S, M, M)}
		\end{center}
	\end{minipage}
	\hfil
	\begin{minipage}{0.19\textwidth}
		\begin{center}
			\small{\textbf{(S, M, L)}}
		\end{center}
	\end{minipage}
	\hfil
	\begin{minipage}{0.19\textwidth}
		\begin{center}
			\small{Ground truth}
		\end{center}
	\end{minipage}
	\vskip 2mm
	\begin{minipage}{0.19\textwidth}
		\begin{center}
			\includegraphics[width=\textwidth]{./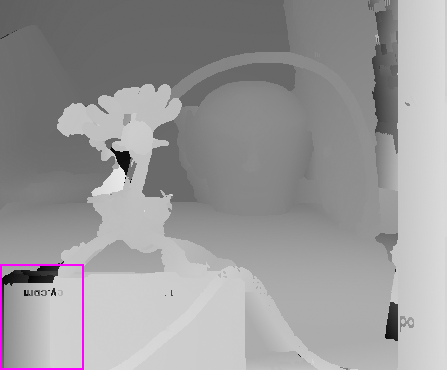}
		\end{center}
	\end{minipage}
	\hfil
	\begin{minipage}{0.19\textwidth}
		\begin{center}
			\includegraphics[width=\textwidth]{./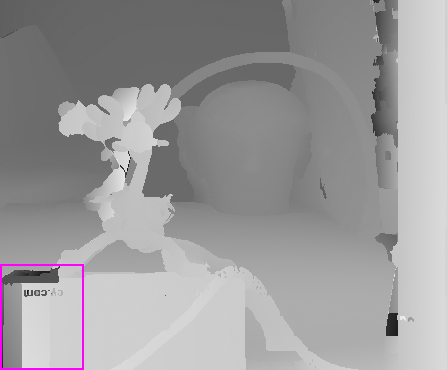}
		\end{center}
	\end{minipage}
	\hfil
	\begin{minipage}{0.19\textwidth}
		\begin{center}
			\includegraphics[width=\textwidth]{./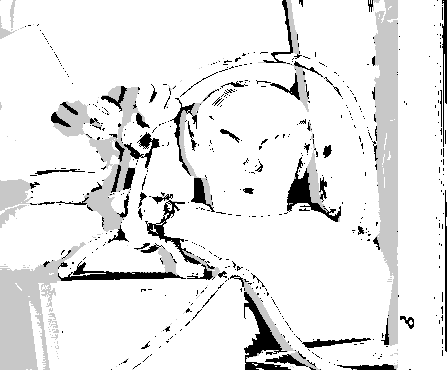}
		\end{center}
	\end{minipage}
	\hfil
	\begin{minipage}{0.19\textwidth}
		\begin{center}
			\includegraphics[width=\textwidth]{./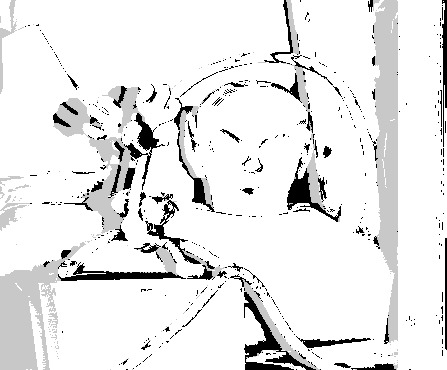}
		\end{center}
	\end{minipage}
	\hfil
	\begin{minipage}{0.19\textwidth}
		\begin{center}
			\includegraphics[width=\textwidth]{./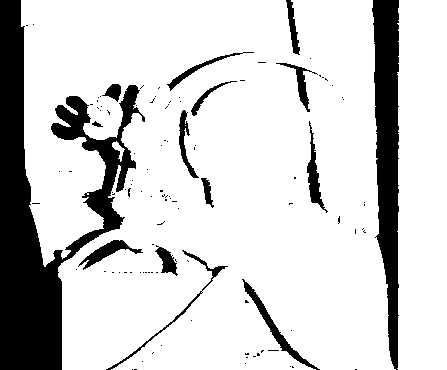}
		\end{center}
	\end{minipage}
	\vskip 1mm
	\begin{minipage}{0.19\textwidth}
		\begin{center}
			\small{(M, M, M)}
		\end{center}
	\end{minipage}
	\hfil
	\begin{minipage}{0.19\textwidth}
		\begin{center}
			\small{(L, L, L)}
		\end{center}
	\end{minipage}
	\hfil
	\begin{minipage}{0.19\textwidth}
		\begin{center}
			\small{Error map of (S, M, M)}
		\end{center}
	\end{minipage}
	\hfil
	\begin{minipage}{0.19\textwidth}
		\begin{center}
			\small{Error map of (S, M, L)}
		\end{center}
	\end{minipage}
	\hfil
	\begin{minipage}{0.19\textwidth}
		\begin{center}
			\small{Visibility map}
		\end{center}
	\end{minipage}
	\caption{Visual effect of grid-cell sizes. The use of larger grid-cells leads to smoother solutions and effective for occluded regions. The proposed combination (S, M, L) well balances localization and spatial propagation and performs best.
		These are all raw results without post-processing.}
	\label{fig:gridsize_image}
\end{figure*}

\def\plotw{0.32\textwidth}
\begin{figure*}[!t]
	\centering
	\begin{minipage}{\plotw}
		\begin{center}
			\includegraphics[width=\textwidth]{./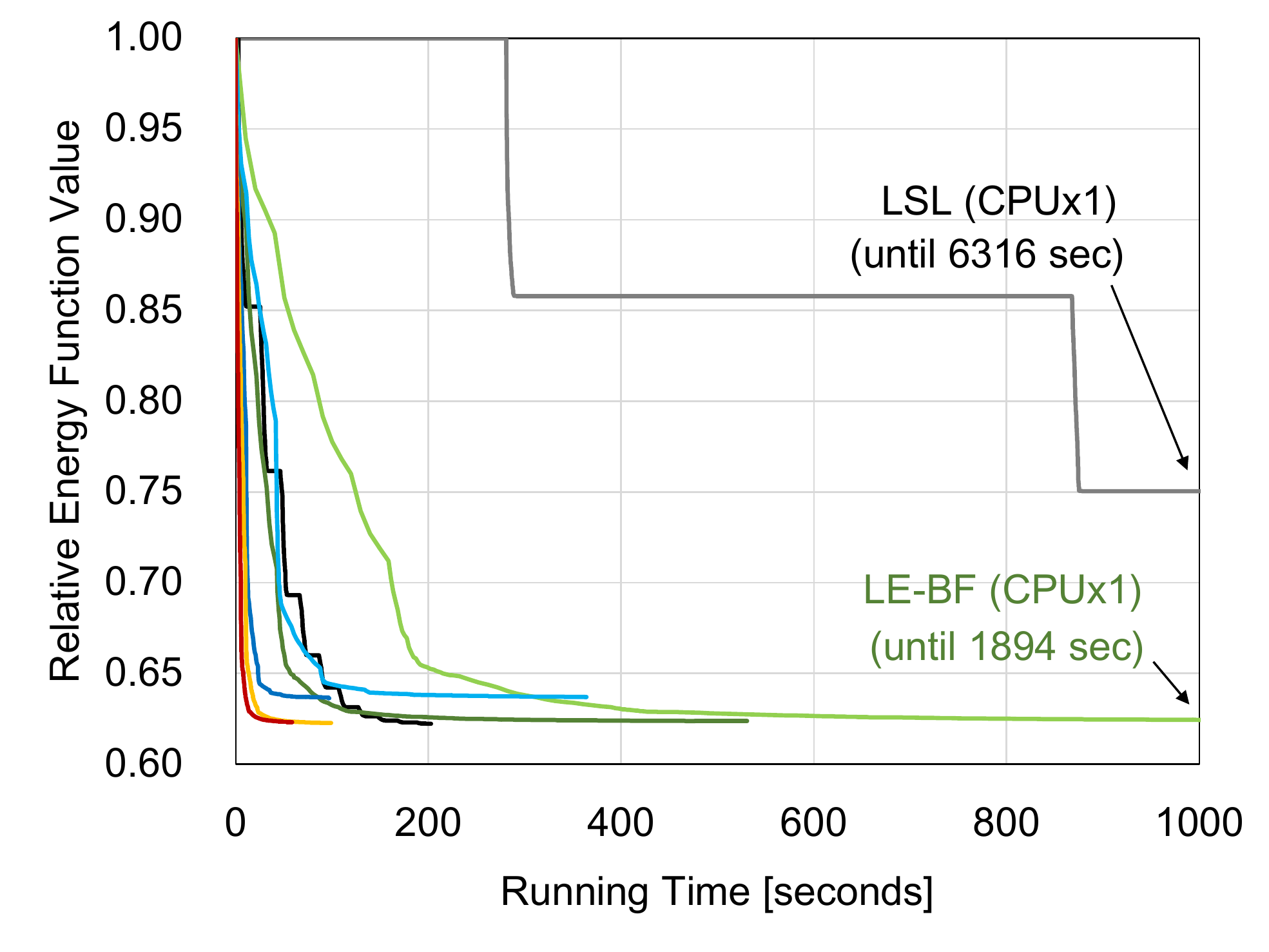}\\
			\capskip
			\small{(a) Time-energy transitions (full)}
		\end{center}
	\end{minipage}
	\hfil
	\begin{minipage}{\plotw}
		\begin{center}
			\includegraphics[width=\textwidth]{./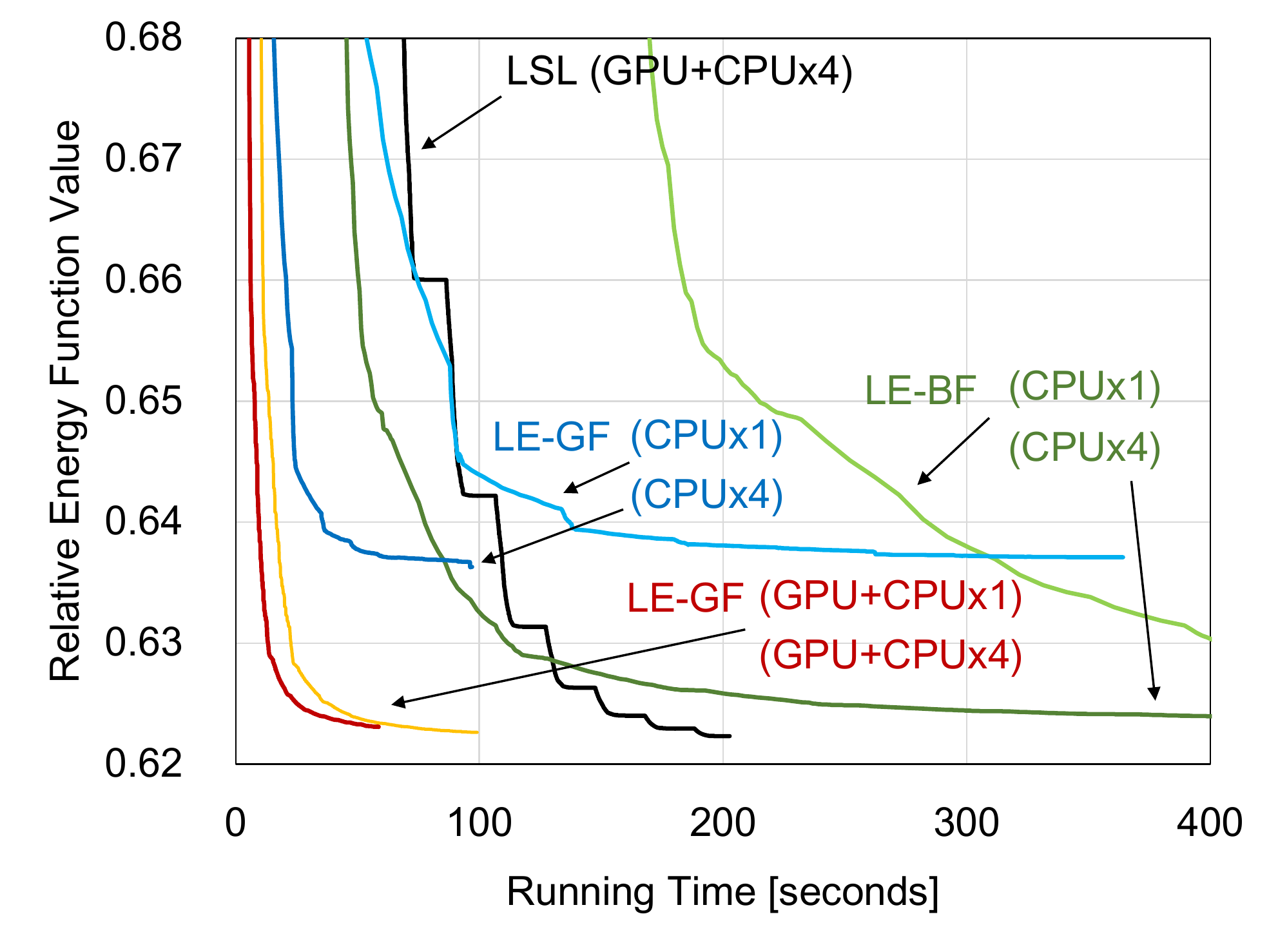}\\
			\capskip
			\small{(b) Time-energy transitions (zoomed)}
		\end{center}
	\end{minipage}
	\hfil
	\begin{minipage}{\plotw}
		\begin{center}
			\includegraphics[width=\textwidth]{./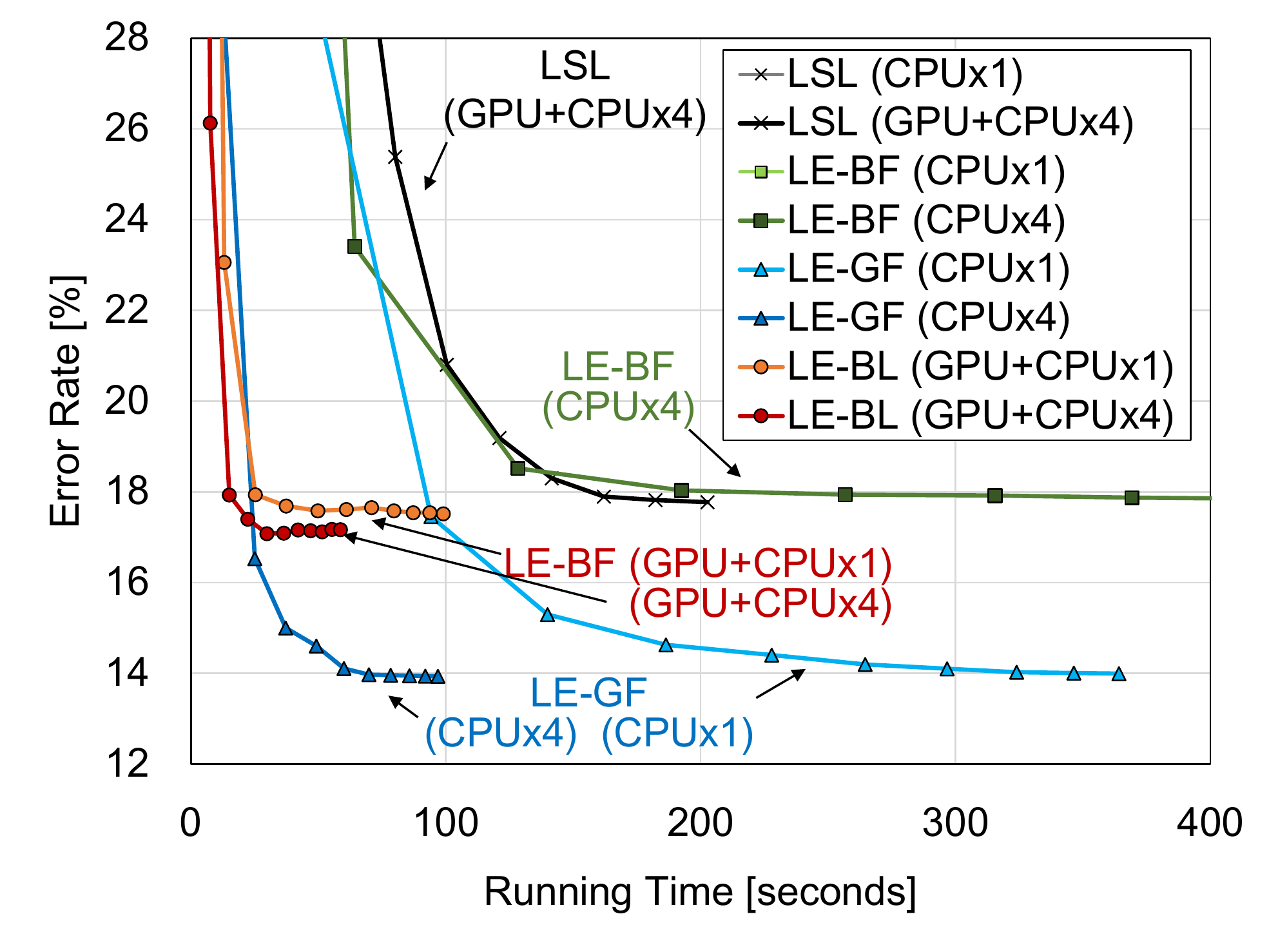}\\
			\capskip
			\small{(c) Error rate transitions}
		\end{center}
	\end{minipage}
	\caption{Efficiency evaluation in comparison to our previous algorithm (LSL) \cite{Taniai14}.
		Accuracies are evaluated for all-regions at each iteration. }
	\label{fig:self_energy}
\end{figure*}

\subsection{Efficiency evaluation in comparison to our previous algorithm~\cite{Taniai14}}
We evaluate the effectiveness of the three acceleration techniques: 1) parallelization of disjoint $\alpha$-expansions, and 2) acceleration of unary cost computation by GPU and 3) by cost filtering.
For this, we compare following six variants of our method: LE-BF and LE-GF using one or four CPU cores (denoted as CPUx1 and CPUx4), and LE-BF using GPU and one or four CPU cores (denoted as GPU+CPUx1 and GPU+CPUx4). Additionally, we compare with our previous algorithm of \cite{Taniai14} with CPU and GPU implementations (denoted as LSL with CPUx1 and GPU+CPUx4). We use the \textit{Rocks1} dataset by estimating the disparities of only the left image without post-processing.
Figures \ref{fig:self_energy}~(a)--(c) show the temporal transitions of the energy values in  the full and zoomed scales, and the subpixel error rates, respectively.
Note that energy values in Fig.~\ref{fig:self_energy} are evaluated by the energy function of LE-BF and LSL for better visualization.

\textbf{Parallelization of local $\alpha$-expansions on CPU}:
Comparing CPUx1 and CPUx4, we observe about 3.5x of speed-up for both LE-BF and LE-GF.
Note that the converged energy values of LE-GF are relatively higher than the others because it optimizes a different energy function. However, if we see Fig.~\ref{fig:self_energy}~(c), it actually finds better solutions in this example.
On the other hand, speed-up for LE-BF from GPU+CPUx1 to GPU+CPUx4 is limited to about 1.7x. This is because the unary cost computation is already parallelized by GPU and this speed-up is accounted for by the parallelization of other parts, \eg, the computation of pairwise terms and min-cut.

\textbf{Parallelization of unary cost computation on GPU}: By comparing GPU+CPUx1 and CPUx1 of LE-BF, we obtain about 19x of speed-up.
This is relatively smaller than 32x of speed-up in our previous LSL method, mainly due to the larger overhead of GPU data transferring. 

\textbf{Fast cost filtering}: By comparing LE-GF and LE-BF methods, we observe about 5.3x speed-up for both CPUx1 and CPUx4 versions. Notably, even a CPU implementation of LE-GF (CPUx4) achieves almost comparable efficiency with a GPU implementation of LE-BF (GPU+CPUx1).

\textbf{Comparison to our previous method~\cite{Taniai14}}:
In comparison to our previous LSL method~\cite{Taniai14}, our LE-BF shows much faster convergence than LSL in the same single-core results (CPUx1). We consider there are mainly three factors contributing to this speed-up; First, the batch-cycle algorithm adopted in \cite{Taniai14} makes its convergence slower; Second, we use cost filtering and compute its first step of Eq.~(\ref{eq:rawcost}) in $O(1)$, while \cite{Taniai14} computes each unary term individually in $O(|W|)$;
Finally, we have removed a per-pixel label refinement step of \cite{Taniai14} which requires additional unary-cost computations.
Similar speed-up can be observed from LSL (GPU+CPUx4) to LE-BF (GPU+CPUx4). 

\subsection{Comparison with PMBP~\cite{Besse12}}
\label{sec:pmbp}
We compare our method with PMBP~\cite{Besse12} that is the closest method to ours. For a fair comparison, we use four neighbors for $\myN$ in \eref{mrf} as the same setting with PMBP. For a comparable smoothness weight with the default setting (eight-neighbor $\myN$), we use $\lambda = 40$ for LE-BF and $\lambda = 2$ for LE-GF, and keep the other parameters as default. For PMBP, we use the same model as ours; the only difference from the original PMBP is the smoothness term, which does not satisfy the submodularity of \eref{axregularity}. In PMBP, it defines $K$ candidate labels for each pixel, for which we set $K = 1$ and $K = 5$ (original paper uses $K=5$). We show the comparison using the \textit{Cones} dataset by estimating the disparities of only the left image without post-processing.

Figures~\ref{fig:pmbp_energy}~(a)--(c) show the temporal transitions of the energy values in the full and zoomed scales, and the subpixel error rates, respectively. We show the performance of our method using its GPU and CPU (one or four CPU cores) implementations.
For PMBP, we also implemented the unary cost computation on GPU, but it became rather slow, due to the overhead of data transfer.
Efficient GPU implementations for PMBP are not available in literature.\footnote{
	GPU-parallelization schemes of BP are not directly applicable due to PMBP's unique settings. The ``jump flooding'' used in the original PatchMatch~\cite{Barnes09} reports $7$x speed-ups by GPU. However, because it propagates candidate labels to distant pixels, it is not applicable to PMBP that must propagate messages to \emph{neighbors}, and is not as efficient as our $19$x, either.}
Therefore, the plots show PMBP results that use a single CPU core. 
Figures~\ref{fig:pmbp_energy}~(a) and (b) show that, even with a single CPU-core implementation, our LE-BF and LE-GF show comparable or even faster convergence than PMBP.
With CPU and GPU parallelization, our methods achieve much faster convergence than PMBP.
Furthermore, our methods reach lower energies with greater accuracies than PMBP at the convergence.
In Fig.~\ref{fig:pmbp_image} we compare the results of our LE-GF method and PMBP with $K = 5$. While PMBP yields noisy disparities, our method finds smoother and better disparities around edges and at occluded regions.

\subsection{Comparison with PMF~\cite{Lu13}}
\label{sec:pmf}
We also compare our LE-GF method with PMF~\cite{Lu13} using the same data term as ours.
For PMF, the number of superpixels $K$ is set to $300$, $500$, and $700$ as used in \cite{Lu13} ($K=500$ is the default in \cite{Lu13}), and we sufficiently iterate 30 times.
Both LE-GF and PMF are run using a single CPU core.

Figures~\ref{fig:pmf_energy}~(a)--(c) show the temporal transitions of the energy values in the full and zoomed scales, and the subpixel error rates, respectively. 
As shown in Figs.~\ref{fig:pmf_energy}~(a) and (b), PMF converges at higher energies than ours, since it cannot explicitly optimize pairwise smoothness terms {because it is a local method}.
Furthermore, although energies are reduced almost monotonically in PMF, the transitions of accuracies are not stable and even degrade after many iterations. This is also because of the lack of explicit smoothness regularizer, and PMF converges at a bad local minimum.
Figure~\ref{fig:pmf_image} compares the results of our method and PMF with $K=500$. Again, our methods find smoother and better disparities  around edges and at occluded regions.

\begin{figure*}[!t]
	\centering
	\begin{minipage}{\plotw}
		\begin{center}
			\includegraphics[width=\textwidth]{./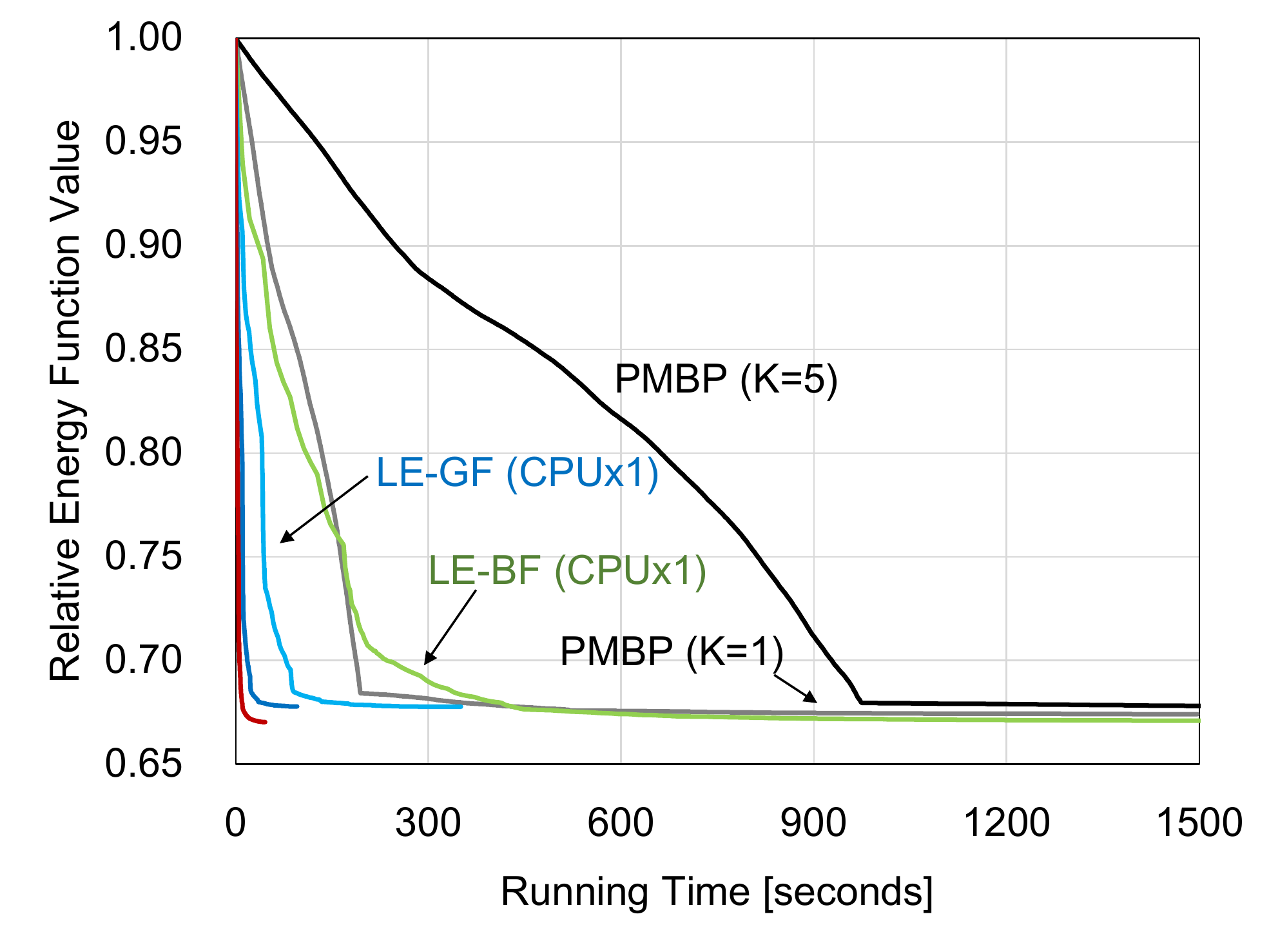}\\
			\capskip
			\small{(a) Time-energy transitions (full)}
		\end{center}
	\end{minipage}
	\hfil
	\begin{minipage}{\plotw}
		\begin{center}
			\includegraphics[width=\textwidth]{./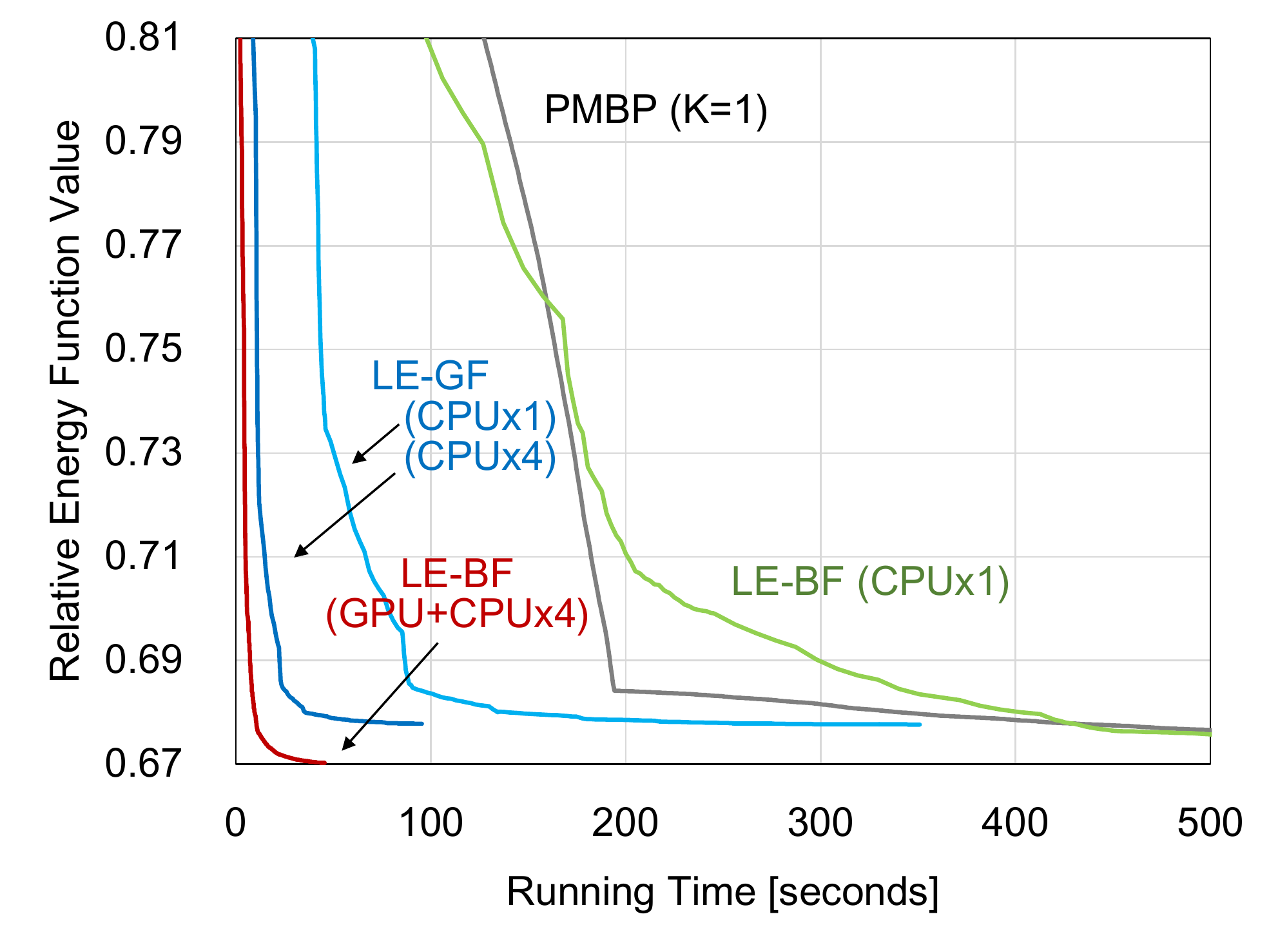}\\
			\capskip
			\small{(b) Time-energy transitions (zoomed)}
		\end{center}
	\end{minipage}
	\hfil
	\begin{minipage}{\plotw}
		\begin{center}
			\includegraphics[width=\textwidth]{./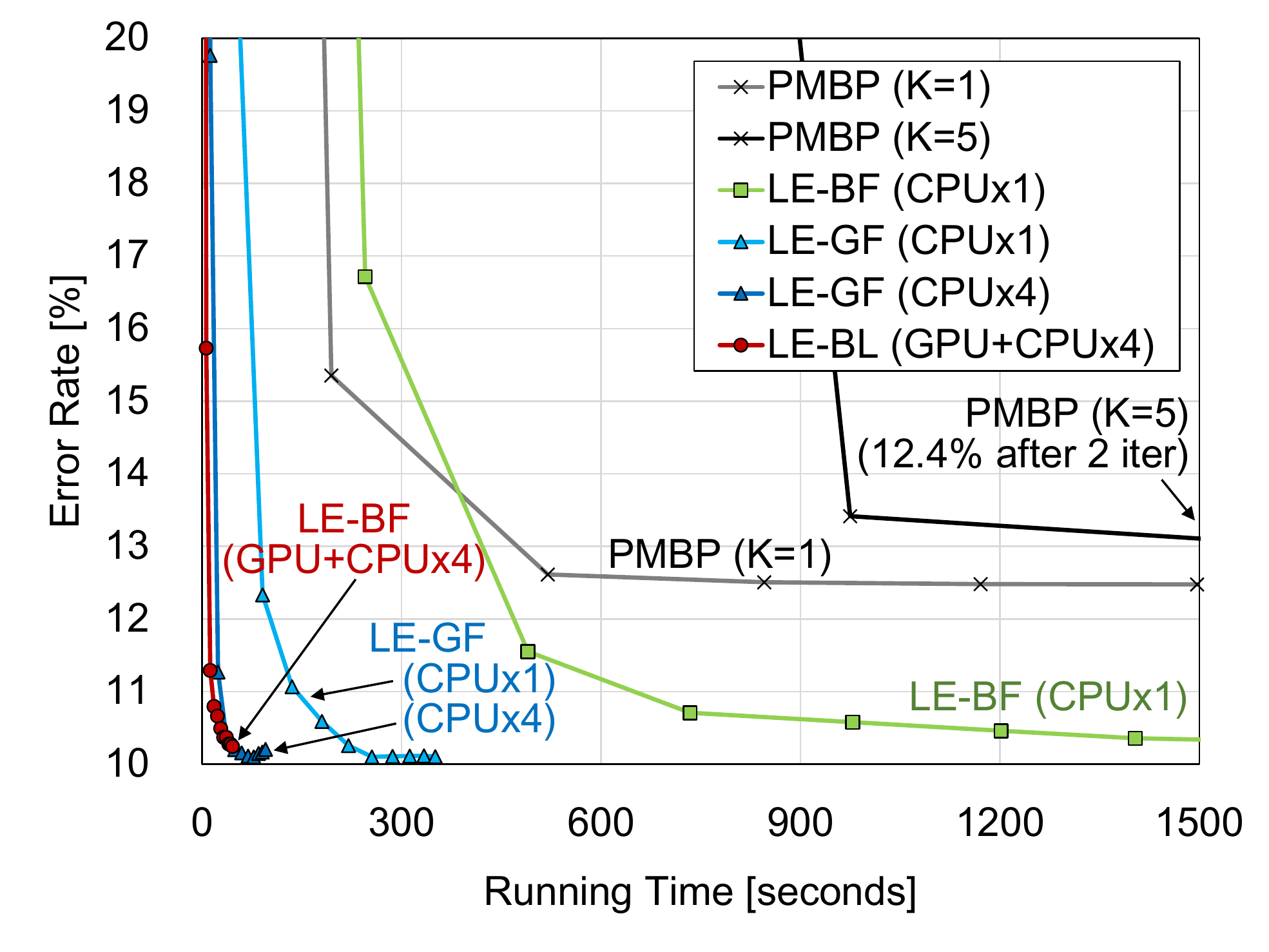}\\
			\capskip
			\small{(c) Error rate transitions}
		\end{center}
	\end{minipage}
	\caption{Efficiency and accuracy comparison with PMBP \cite{Besse12}. Our methods achieve much faster convergence, reaching lower energies and better accuracies at the convergence.
		Accuracies are evaluated for all-regions at each iteration. See also Fig.~\ref{fig:pmbp_image} for visual comparison.}
	\label{fig:pmbp_energy}
	\vskip 5mm
	\begin{minipage}{0.37\textwidth}
		\begin{center}
			\includegraphics[height=3.5cm]{./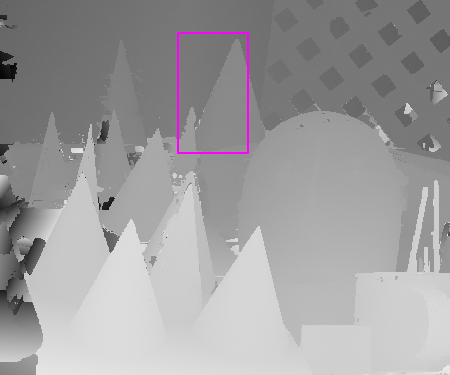} 
			\includegraphics[height=3.5cm, viewport=178 222 248 342, clip]{./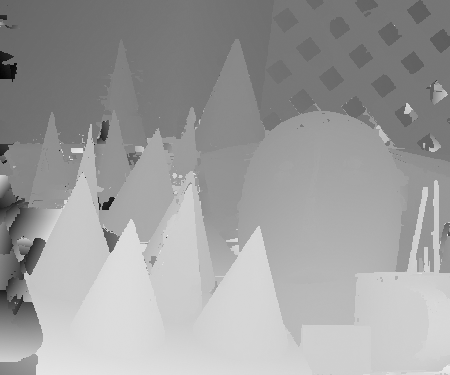}\\
			\small{(a) Our LE-BF}
		\end{center}
	\end{minipage}
	\hfil
	\begin{minipage}{0.37\textwidth}
		\begin{center}
			\includegraphics[height=3.5cm]{./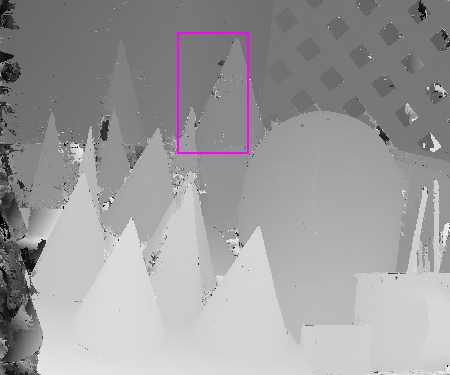} 
			\includegraphics[height=3.5cm, viewport=178 222 248 342, clip]{./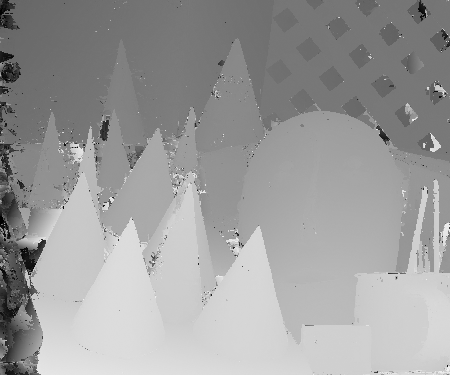}\\
			\small{(b) PMBP~\cite{Besse12}}
		\end{center}
	\end{minipage}
	\hfil
	\begin{minipage}{0.25\textwidth}
		\begin{center}
			\includegraphics[height=3.5cm, viewport=178 222 248 342, clip]{./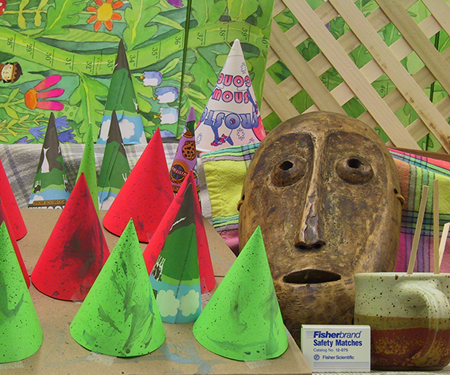} 
			\includegraphics[height=3.5cm, viewport=178 222 248 342, clip]{./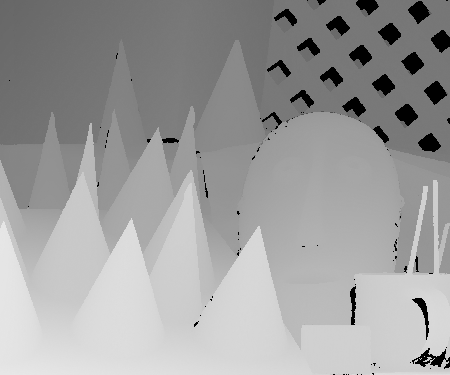}\\
			\small{(c) Ground truth}
		\end{center}
	\end{minipage}
	\caption{Visual comparison with PMBP~\cite{Besse12}. Our method finds smother and better disparities  around edges and at occluded regions.}
	\label{fig:pmbp_image}
\end{figure*}

\begin{figure*}[!t]
	\centering
	\begin{minipage}{\plotw}
		\begin{center}
			\includegraphics[width=\textwidth]{./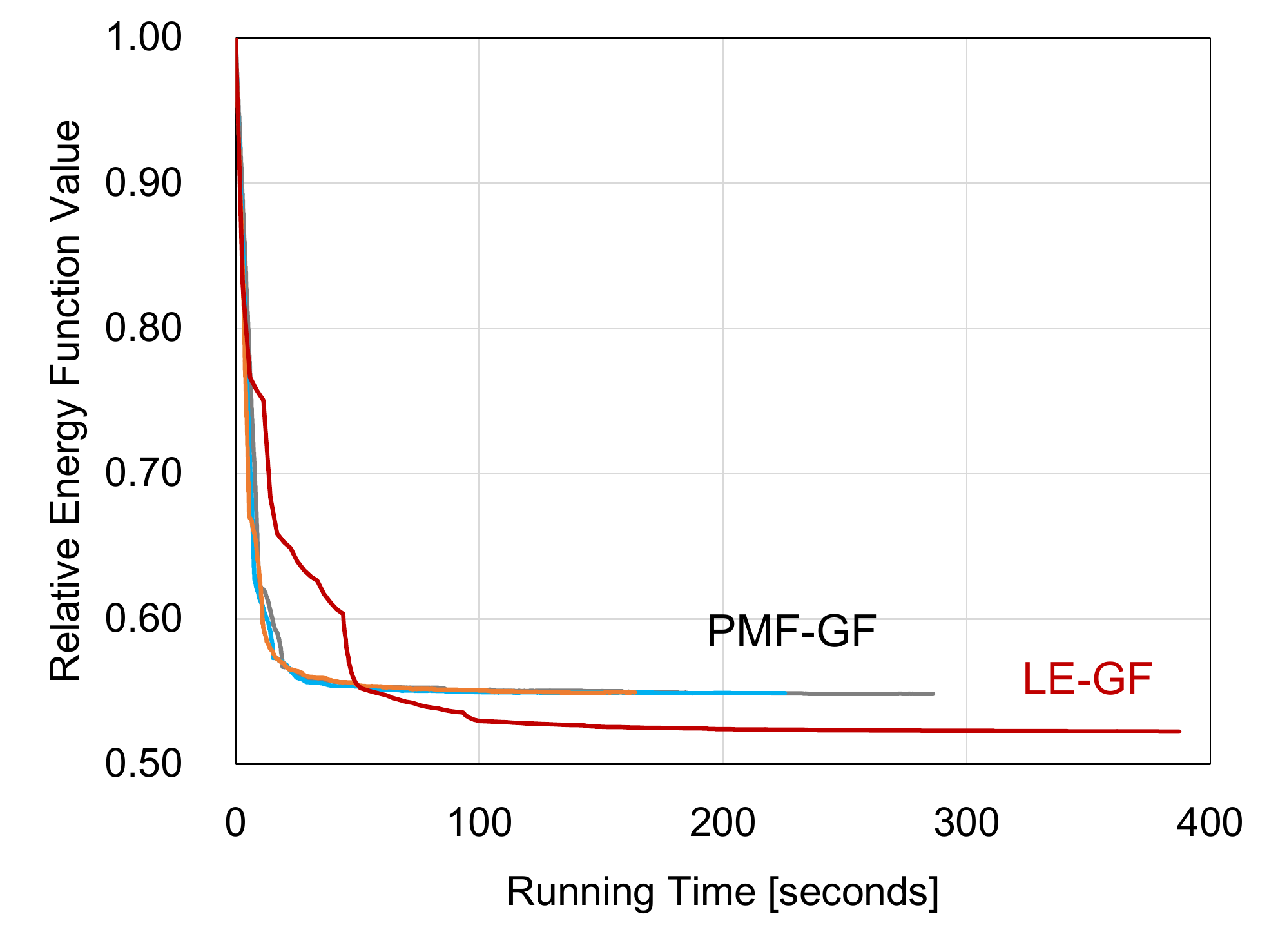}\\
			\capskip
			\small{(a) Time-energy transitions (full)}
		\end{center}
	\end{minipage}
	\hfil
	\begin{minipage}{\plotw}
		\begin{center}
			\includegraphics[width=\textwidth]{./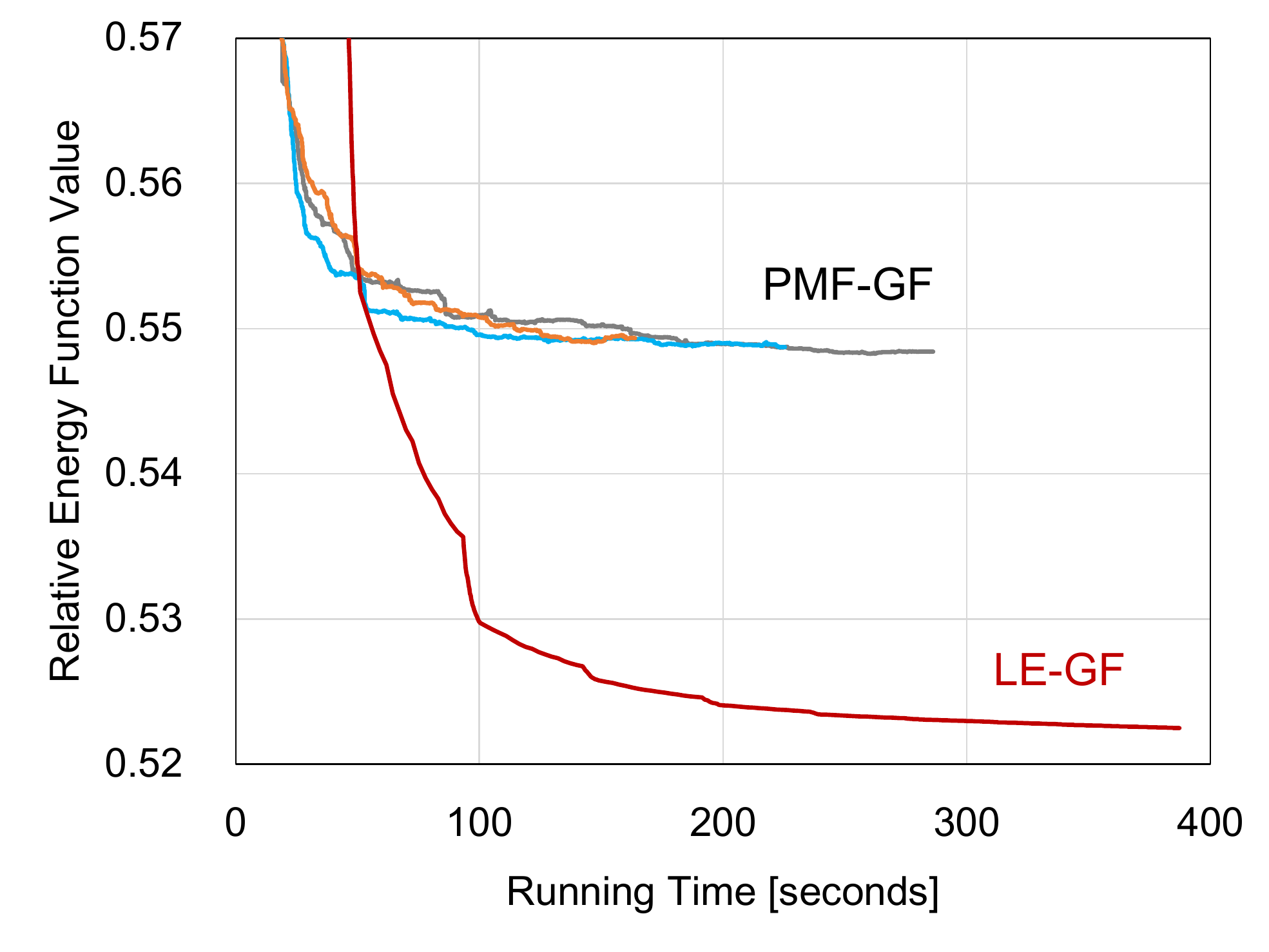}\\
			\capskip
			\small{(b) Time-energy transitions (zoomed)}
		\end{center}
	\end{minipage}
	\hfil
	\begin{minipage}{\plotw}
		\begin{center}
			\includegraphics[width=\textwidth]{./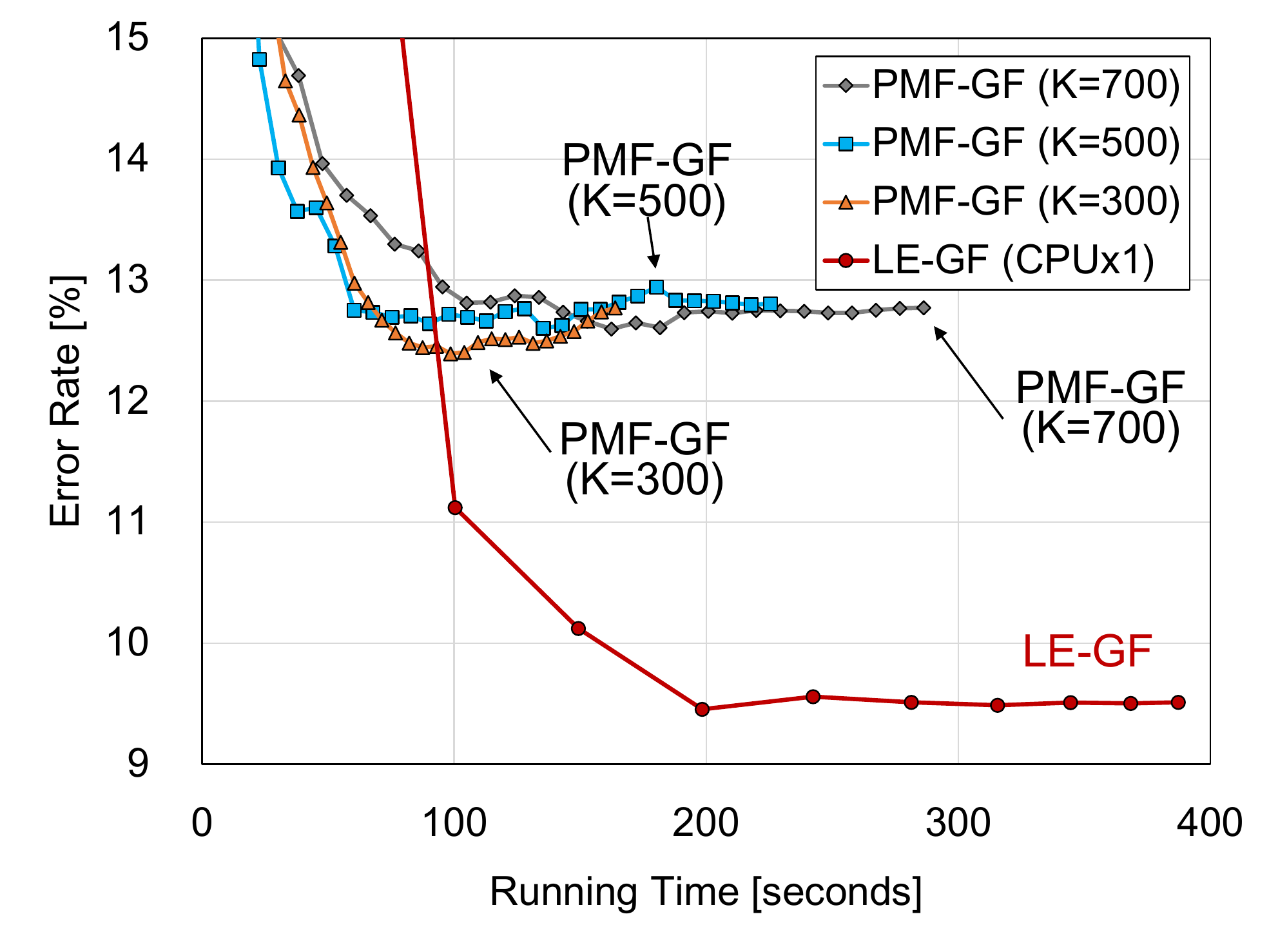}\\
			\capskip
			\small{(c) Error rate transitions}
		\end{center}
	\end{minipage}
	\caption{Efficiency and accuracy comparison with PMF \cite{Lu13}. Our method stably improves the solution and reaches a lower energy with greater accuracy at the convergence.
		Accuracies are evaluated for all-regions at each iteration. See also Fig.~\ref{fig:pmf_image} for visual comparison.}
	\label{fig:pmf_energy}
	\vskip 5mm
	\begin{minipage}{0.37\textwidth}
		\begin{center}
			\includegraphics[height=3.5cm]{./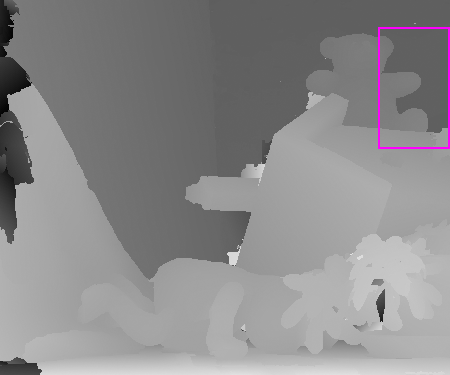} 
			\includegraphics[height=3.5cm, viewport=375 226 450 346, clip]{./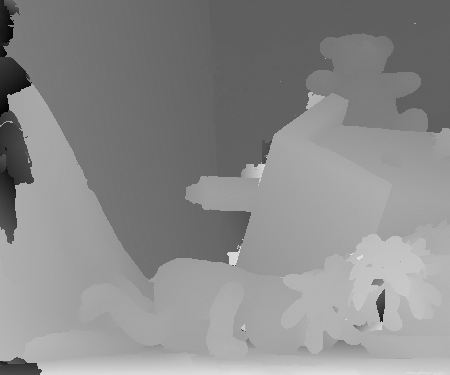}\\
			\small{(a) Our LE-GF}
		\end{center}
	\end{minipage}
	\hfil
	\begin{minipage}{0.37\textwidth}
		\begin{center}
			\includegraphics[height=3.5cm]{./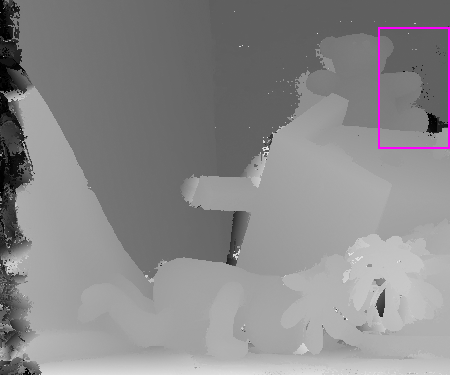} 
			\includegraphics[height=3.5cm, viewport=375 226 450 346, clip]{./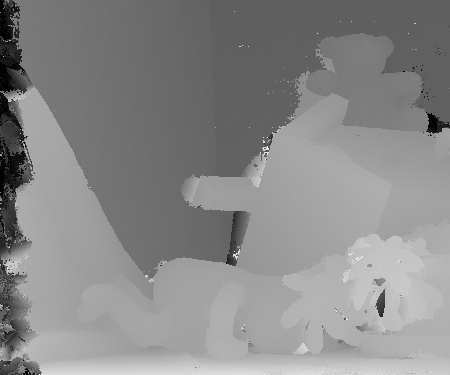}\\
			\small{(b) PMF~\cite{Lu13}}
		\end{center}
	\end{minipage}
	\hfil
	\begin{minipage}{0.25\textwidth}
		\begin{center}
			\includegraphics[height=3.5cm, viewport=375 226 450 346, clip]{./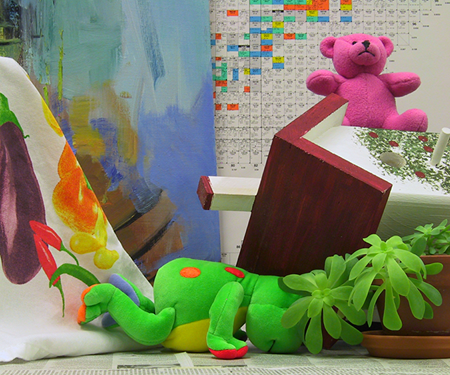} 
			\includegraphics[height=3.5cm, viewport=375 226 450 346, clip]{./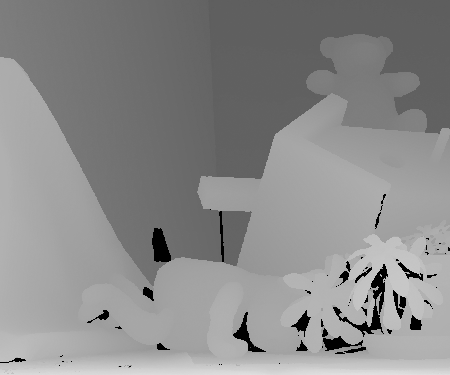}\\
			\small{(c) Ground truth}
		\end{center}
	\end{minipage}
	\caption{Visual comparison with PMF~\cite{Lu13}. With explicit smoothness regularization, our method finds smoother and better disparities  around edges and at occluded regions.}
	\label{fig:pmf_image}
\end{figure*}

\begin{table*}[t]
	\footnotesize
	\newcolumntype{C}[1]{>{\centering}m{#1}}
	\caption{{Accuracy comparison with Olsson~\etal~\cite{Olsson13}. We use our energy function using 3D patch matching for our method and \cite{Olsson13}.} \emph{Teddy} dataset is from Middlebury V2 (Sec.~\ref{sec:benchmark}) and the other three from V3 (Sec.~\ref{sec:benchmark_v3}). Our local expansion move method (LE) outperforms fusion-based optimization of \cite{Olsson13}. While our method has comparably good accuracies regardless of the RANSAC proposer, it is effective for large texture-less planes in \emph{Vintage}. }
	\label{tab:olsson}
	\begin{center}\begin{tabular}{|l||c|c||c|c|c||c|c||c|c||c|c|}
			\hline 
			&  \multicolumn{2}{c||}{Options}	& \multicolumn{3}{c||}{Teddy (from V2)} & \multicolumn{2}{c||}{Adirondack} & \multicolumn{2}{c||}{ArtL} & \multicolumn{2}{c|}{Vintage} \\ 
			{}  & {Reg.} & {Optimization}   & {nonocc} & {all}  & {disc} & {nonocc} & {all} & {nonocc} & {all} & {nonocc} & {all}\\
			\hline 
			\hline 
			Ours ({proposed LE-GF})  & \cite{Olsson13}  & LE+RANSAC & \textbf{3.98} &  9.46 & \textbf{12.8} & \textbf{1.19} & \textbf{4.36} & \textbf{3.97} & \textbf{15.5} &  \textbf{5.65} & \textbf{11.7} \\ 
			\hline 
			Ours w/o RANSAC  & \cite{Olsson13}  & LE & 4.26 & \textbf{9.19}  & 13.6 & \textbf{1.20} & {4.63} & \textbf{3.98} &  \textbf{15.3} & 13.7 & 18.7 \\ 
			\hline
			Ours w/o regularization & None  &  LE+RANSAC & 5.47 & 13.0 & 16.0 & 2.56 & 7.74 & 4.12 &  17.6 & 22.8 & 27.9 \\ 
			\hline 
			Olsson~\etal~\cite{Olsson13} + 3DPM  & \cite{Olsson13}  & Fusion+RANSAC & 5.21 & 9.98 &  15.8 & 2.74 & 5.42 & 4.41 & 15.8 & 5.94 & \textbf{11.8}  \\ 
			\hline 
	\end{tabular}\end{center}
\end{table*}

\begin{figure*}[!t]
	\begin{minipage}[!t]{\linewidth}
		\begin{minipage}[!t]{0.315\linewidth}\centering
			\includegraphics[width=\linewidth]{./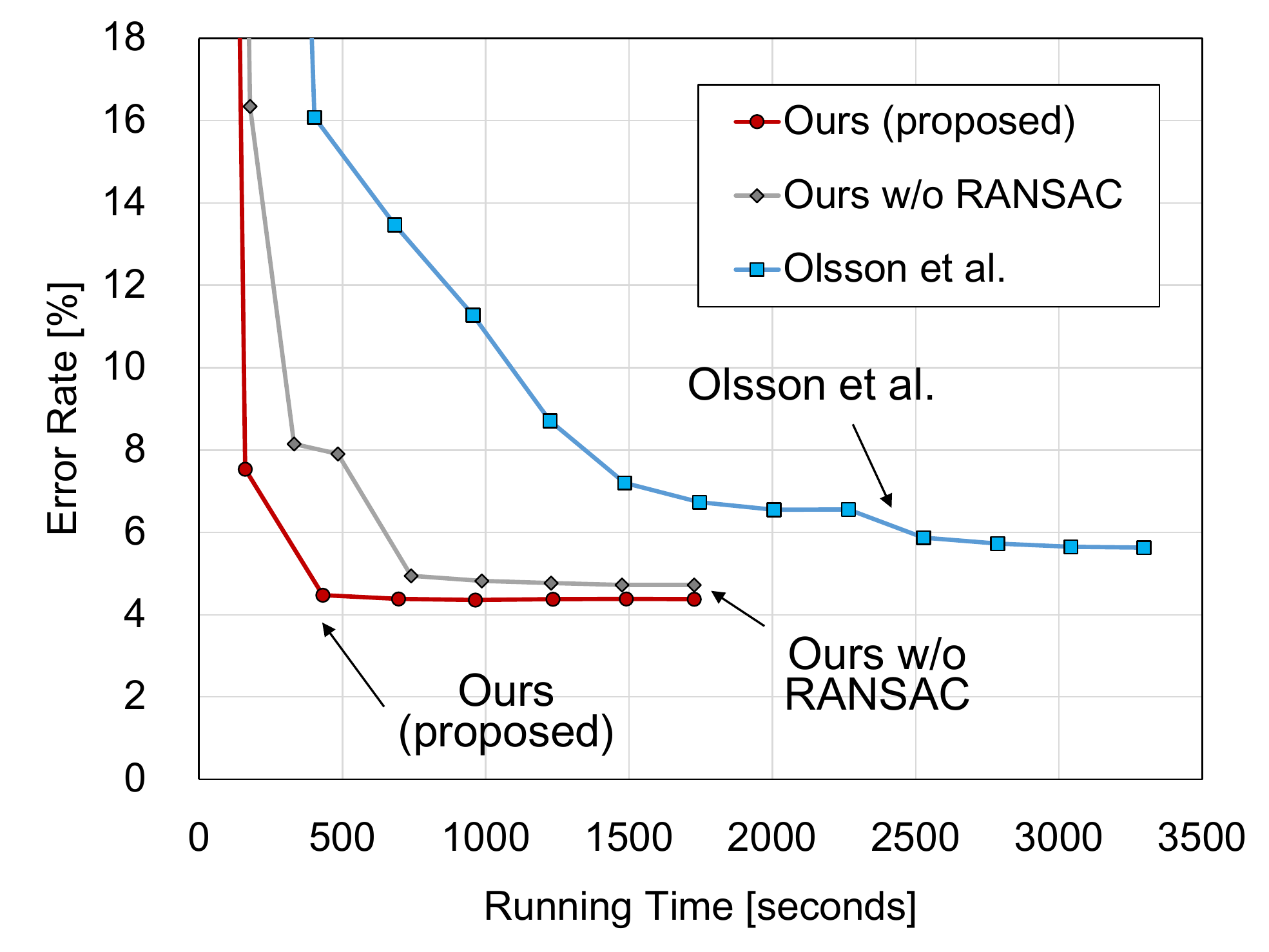}\\\small{(For \emph{Andirondack}, by a single CPU core)}\end{minipage}\hfill
		\begin{minipage}[!t]{0.315\linewidth}\centering
			\begin{minipage}[!t]{0.45\linewidth}\begin{center}\includegraphics[width=\linewidth]{./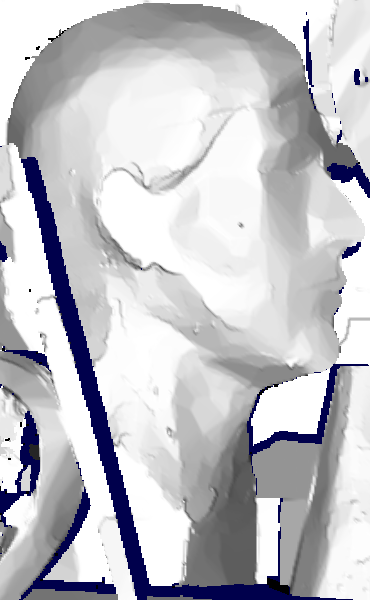}\\\small{Our LE-GF}\end{center}\end{minipage}\hfil
			\begin{minipage}[!t]{0.45\linewidth}\begin{center}\includegraphics[width=\linewidth]{./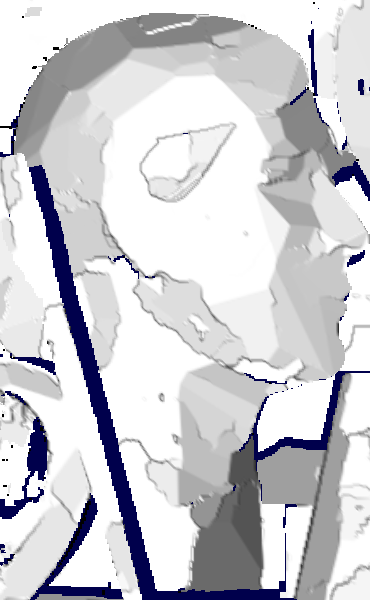}\\\small{Olsson~\etal~\cite{Olsson13}}\end{center}\end{minipage}
		\end{minipage}\hfill
		\begin{minipage}[!t]{0.315\linewidth}%
			\begin{minipage}[!t]{0.49\linewidth}\begin{center}\includegraphics[width=\linewidth]{./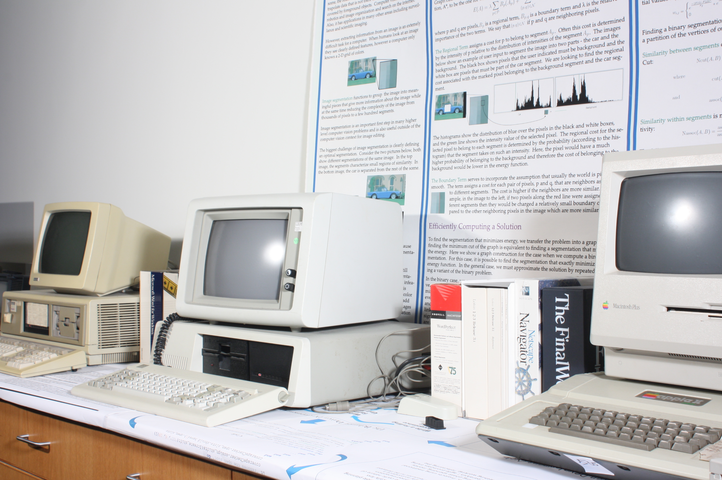}\\\small{\textit{Vintage}}\end{center}\end{minipage}\hfill
			\begin{minipage}[!t]{0.49\linewidth}\begin{center}\includegraphics[width=\linewidth]{./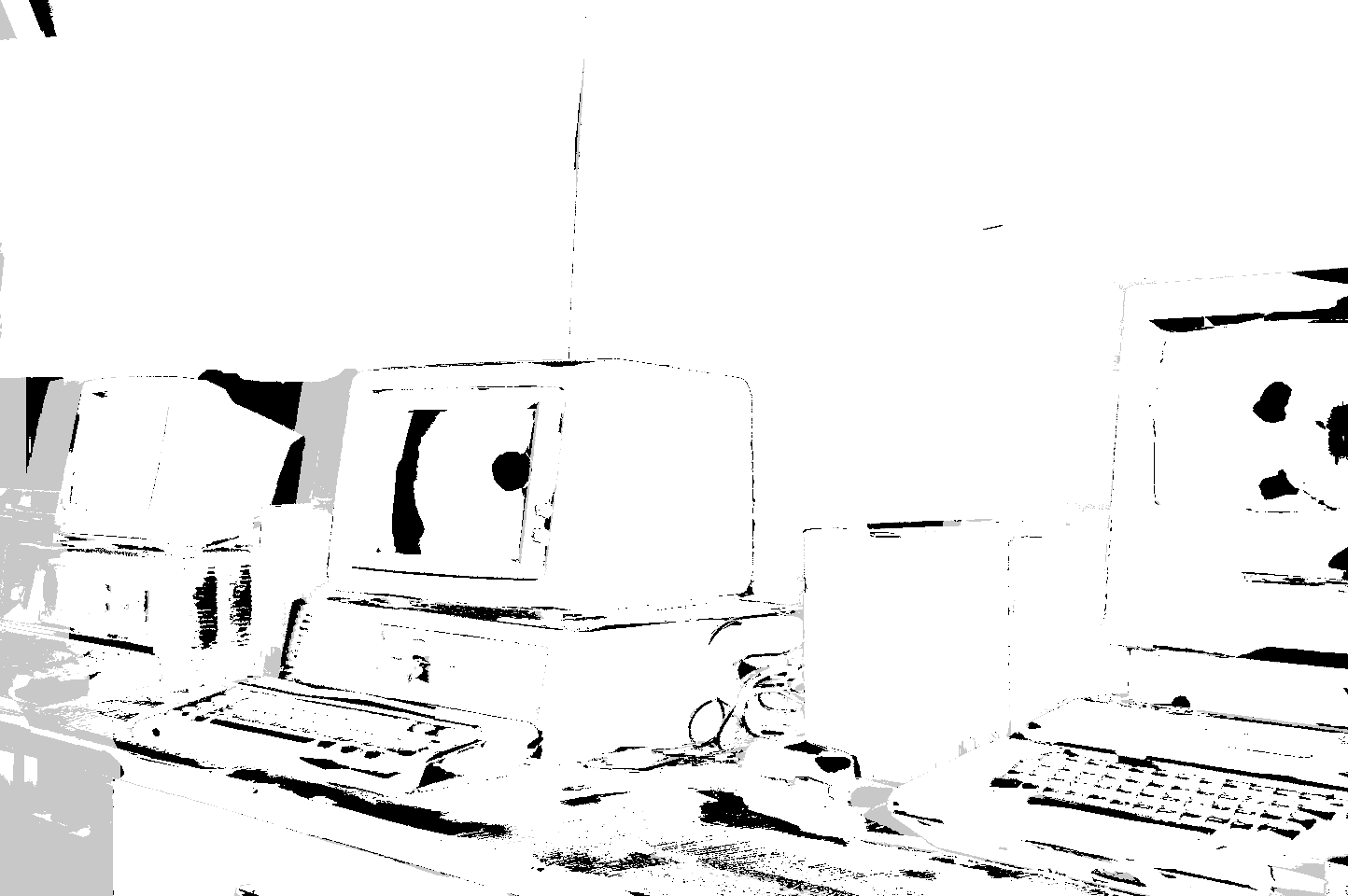}\\\small{Ours (proposed)}\end{center}\end{minipage}\\
			\begin{minipage}[!t]{0.49\linewidth}\begin{center}\includegraphics[width=\linewidth]{./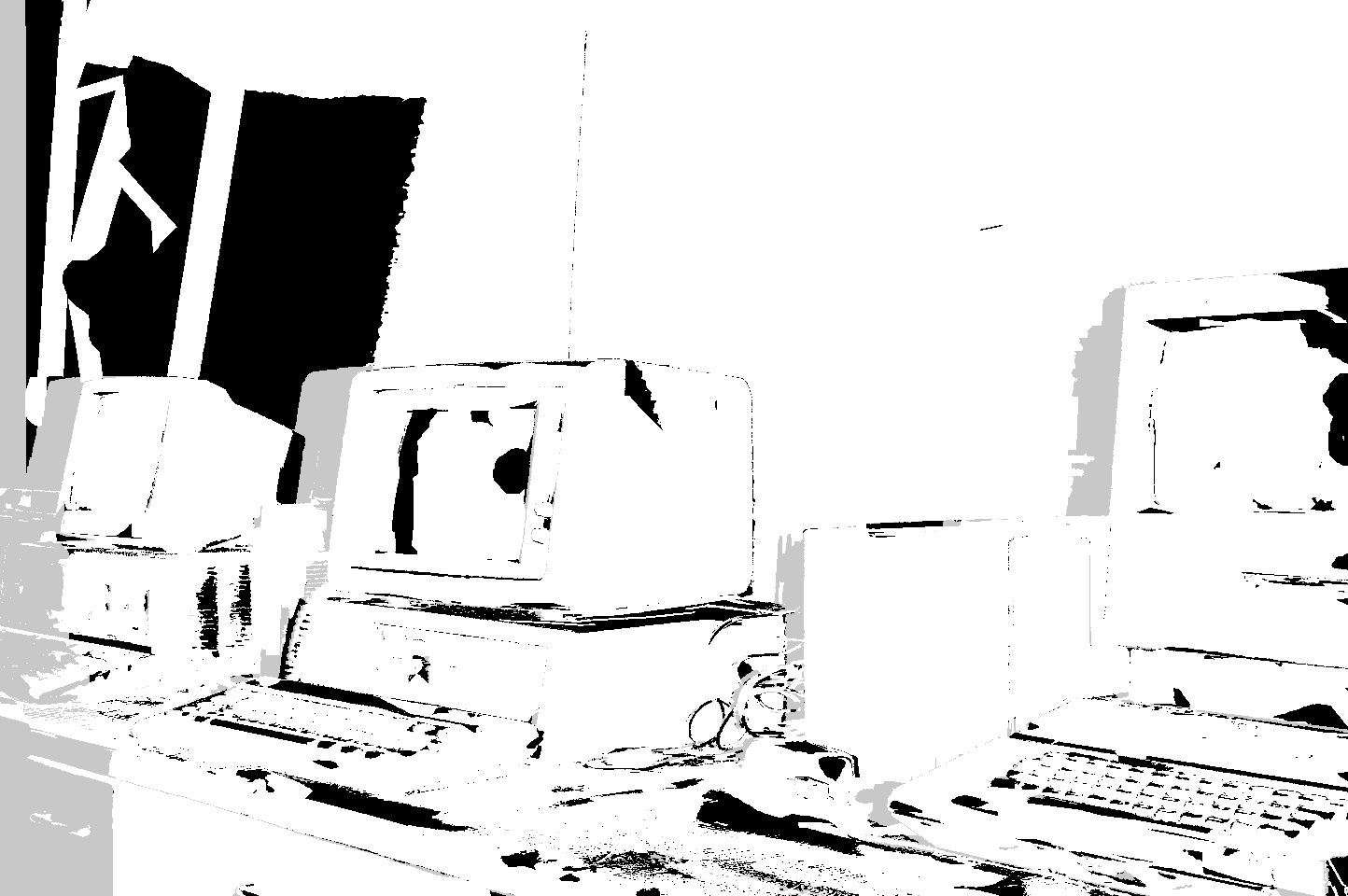}\\\small{Ours~w/o~RANSAC}\end{center}\end{minipage}\hfill
			\begin{minipage}[!t]{0.49\linewidth}\begin{center}\includegraphics[width=\linewidth]{./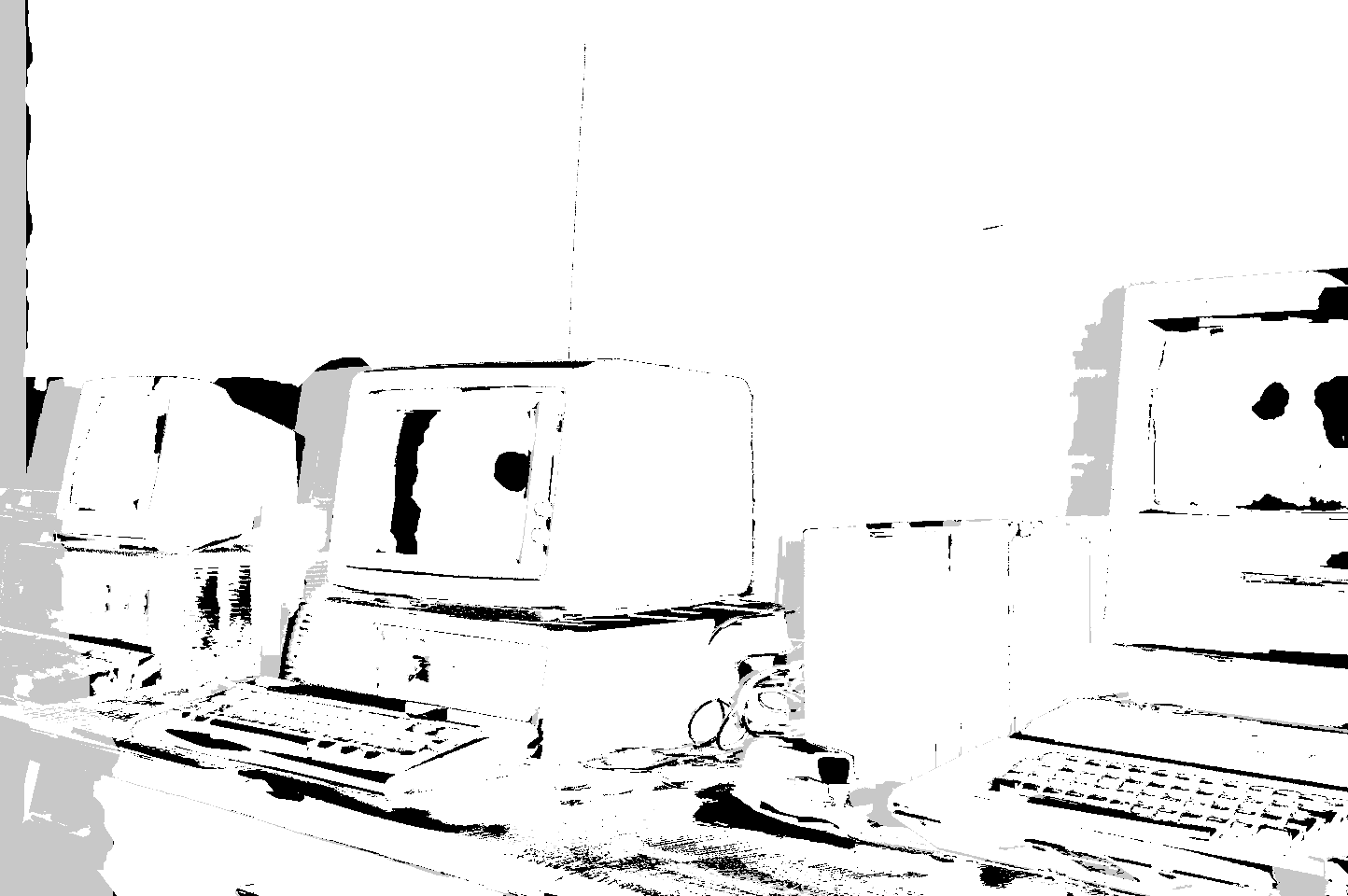}\\\small{Olsson~\etal~\cite{Olsson13}}\end{center}\end{minipage}
		\end{minipage}
	\end{minipage}
	\begin{minipage}[t]{0.315\linewidth}\begin{center}\caption{Efficiency and accuracy comparison with Olsson~\etal~\cite{Olsson13}. Our method is faster than \cite{Olsson13} even without parallelization. The RANSAC proposer  promotes the inference.}\label{fig:olsson_plots}\end{center}\end{minipage}\hfill
	\begin{minipage}[t]{0.315\linewidth}
		\caption{{3D visual comparison with Olsson~\etal~\cite{Olsson13} for \textit{ArtL} dataset. Although we use the same energy function,} the result by \cite{Olsson13} is strongly biased to piecewise planar surfaces.}
		\label{fig:art_3d}
	\end{minipage}\hfill
	\begin{minipage}[t]{0.315\linewidth}
		\caption{Effectiveness of the RANSAC-based plane proposer for large texture-less regions. We show error maps of our method and Olsson~\etal~\cite{Olsson13} for \textit{Vintage} dataset.}
		\label{fig:vintage}
	\end{minipage}
\end{figure*}

\subsection{{Comparison with Olsson~\etal~\cite{Olsson13}}}
\label{sec:olsson}
We compare with a stereo method by Olsson~\etal~\cite{Olsson13}, from which we borrow our smoothness term.
As differences from our method, their data term does not use accurate slanted patch matching 
and they use conventional fusion-based optimization using planar proposals generated by RANSAC-based local plane fitting (see \cite{Olsson13} for details).
{To compare optimization approaches of ours and \cite{Olsson13}, we replace the data term of \cite{Olsson13} with ours so that the two methods use the same energy function as ours.}
We here use {two variants of our LE-GF method that remove the RANSAC proposer and the regularization term}, which are components closely related to \cite{Olsson13}.
We disable post-processing in all methods.

Table~\ref{tab:olsson} shows error rates for \emph{Teddy} from Middlebury V2 and three more datasets from V3,
where the proposed method outperforms \cite{Olsson13} and others.
{As shown in Figs.~\ref{fig:olsson_plots} and \ref{fig:art_3d}, our method has faster convergence and produces qualitatively better smooth surfaces than \cite{Olsson13}. Spatial label propagation not only accelerates inference but also introduces an implicit smoothness effect~\cite{Bleyer11}, which helps our method to find smoother surfaces.}
Our method  mostly performs comparably {well} regardless of the RANSAC proposer, outperforming \cite{Olsson13}.
On the other hand, the RANSAC proposer promotes convergence (see Fig.~\ref{fig:olsson_plots}) and also effectively recovers large texture-less planes in \emph{Vintage} (Fig.~\ref{fig:vintage}).
{Using a more advanced proposer~\cite{Olsson14} may lead to further improvements.}
Also note that, except for \emph{Vintage} in Table~\ref{tab:olsson}, our method performs well for non-occluded regions even without regularization  (as comparable to \cite{Olsson13}).
The slanted patch matching term alone has an implicit second-order smoothness effect~\cite{Bleyer11}, and the regularizer of \cite{Olsson13} further enforces it especially for occluded or texture-less regions. 

\section{Conclusions}
In this paper, we have presented an accurate and efficient stereo matching method for continuous 3D plane label estimation. Unlike previous approaches that use fusion moves~\cite{Lempitsky10,Olsson13,Woodford09}, our method is subproblem optimal and only requires a randomized initial solution. 
By comparing with a recent continuous MRF stereo method, PMBP~\cite{Besse12}, our method has shown an advantage in efficiency and comparable or greater accuracy. The use of a GC-based optimizer makes our method advantageous.

Furthermore, by using the subregion cost-filtering scheme developed in a local stereo method PMF~\cite{Lu13}, we achieve a fast algorithm and greater accuracy than PMF.
As shown in \cite{Lu13}, our method will be even faster by using a more light-weight constant-time filtering such as \cite{Lu12}.

{Follow-up works~\cite{Taniai16,Hur17}} suggest that our optimization strategy can be used for more general correspondence field estimation than stereo. We also believe that occlusion handling schemes using GC~\cite{Kolmogorov01,Wei05} can be incorporated into our framework, which may yield even greater accuracy.

\section*{Acknowledgments}
The authors would like to thank the anonymous reviewers including those of the conference paper.
This work was supported by JSPS KAKENHI Grant Number 14J09001 and Microsoft Research Asia Ph.D. Fellowship.

{
	\bibliographystyle{ieee}
	\bibliography{egbib}
}

\newpage
\clearpage
\onecolumn

\setcounter{figure}{0}
\setcounter{equation}{0}  \renewcommand\thefigure{A\arabic{figure}}
\renewcommand\theequation{A\arabic{equation}}

\makeatletter
\def\@thanks{}
\makeatother

\title{Supplementary Material for \\Continuous 3D Label Stereo Matching using\\ Local Expansion Moves}

\maketitle

\appendices
\section{Proof of Lemma 1}
\label{sec:app_submodularity}
\begin{proof}
	For the completeness, we repeat the original proof given in \cite{Olsson13} with our notations.
	Regarding  $\bar{\psi}_{pq}(f_p, f_q)$,
	\begin{align}
	\bar{\psi}_{pq}(\alpha, \alpha) + \bar{\psi}_{pq}(\beta, \gamma)   &=  \bar{\psi}_{pq}(\beta, \gamma) \notag  \\
	&= | d_p(\beta) - d_p(\gamma)  |  + | d_q(\beta) - d_q(\gamma)  |  \notag \\
	&= | \left(d_p(\beta) - d_p(\alpha)\right) - \left(d_p(\gamma)-d_p(\alpha)\right)  | + \left(d_q(\beta)-d_q(\alpha)\right) - \left(d_q(\gamma)-d_q(\alpha)\right)  | \notag  \\
	&\le |d_p(\beta) - d_p(\alpha)| + |d_p(\gamma)-d_p(\alpha)| + | d_q(\beta)-d_q(\alpha)| + |d_q(\gamma)-d_q(\alpha)  |  \notag \\
	&= \bar{\psi}_{pq}(\beta, \alpha) + \bar{\psi}_{pq}(\alpha, \gamma). \notag 
	\end{align}
	Thus, $\bar{\psi}_{pq}(f_p, f_q)$ satisfies the submodularity of expansion moves. For its truncated function,
	\begin{align}
	\min\left(\bar{\psi}_{pq}(\alpha, \alpha), \tau \right) + \min\left(\bar{\psi}_{pq}(\beta, \gamma), \tau \right) &=  \min\left(\bar{\psi}_{pq}(\beta, \gamma), \tau \right)  \notag \\
	&\le \min\left(\bar{\psi}_{pq}(\beta, \alpha) + \bar{\psi}_{pq}(\alpha, \gamma), \tau \right)  \notag \\
	&\le \min\left(\bar{\psi}_{pq}(\beta, \alpha), \tau \right) + \min\left(\bar{\psi}_{pq}(\alpha, \gamma), \tau \right). \notag 
	\end{align}
	Therefore, ${\psi}_{pq}(f_p, f_q)$ also satisfies the submodularity.
\end{proof}

\section{Additional results}
We here provide additional results for the analyses on effects of grid-cell sizes (Sec.~4.3) and parallelization (Sec.~4.4) as well as the comparisons with the methods of PMBP~\cite{Besse12} (Sec.~4.5), PMF~\cite{Lu13} (Sec.~4.6) and Olsson~\etal~\cite{Olsson13} (Sec.~4.7).
We evaluate performances by showing plots of error rate transitions over running time, where error rates are evaluated by the \emph{bad 2.0} metric for all regions. To provide diverse examples, we use 15 image pairs from the Middlebury V3 training dataset.

Unless noted otherwise, all methods are run on a single CPU core to optimize the same energy function that we use in Sec.~4.2 for the benchmark V3 evaluation. Settings of our method are also the same as we use in Sec.~4.2. Post-processing is disabled in all methods.

\subsubsection*{Effect of grid-cell sizes}
Figure~\ref{figa:grid} shows analyses on effects of grid-cell sizes. Similarly to Sec.~4.3, we compare the proposed grid-cell combination (S, M, L) with other four combinations (L, L, L), (M, M, M), (S, S, S), and (S, M, M). Here, the sizes of ``S''mall, ``M''edium, and ``L''arge grid-cells are proportionally set to $1\%$, $3\%$ and $9\%$ of the image width, respectively. For most of the image pairs, the proposed combination (S, M, L) performs best among the five combinations.

\subsubsection*{Efficiency comparison by parallelization}
Figure~\ref{figa:efficiency} shows analyses on parallelization of our local expansion move algorithm. When our method is performed in parallel on four CPU cores, we observe about 3.8x of average speed-up among the 15 image pairs.

\subsubsection*{Comparison with PMBP~\cite{Besse12}}
Figure~\ref{figa:pmbp} compares our method with PMBP~\cite{Besse12}. Similarly to Sec.~4.4, we compare 
PMBP (using $K=1$ and $5$) with two variants of our method using guided image filtering (GF) and bilateral filtering (BF).
Because PMBP cannot use the cost-map filtering acceleration of GF, we use only BF for PMBP. Therefore, our method using BF optimizes the same energy function with PMBP, while ours using GF optimizes a different function. All methods use four neighbors of pairwise terms as the same setting with PMBP.

As shown in Fig.~\ref{figa:pmbp}, our methods both using GF and BF consistently outperform PMBP in accuracy at convergence, while ours using GF performs best and shows much faster convergence than the others.
Our method can be further accelerated by parallelization as demonstrated in Fig.~\ref{figa:efficiency}.

\subsubsection*{Comparison with PMF~\cite{Lu13}}
Figure~\ref{figa:pmf} compares our method with PMF~\cite{Lu13}. Both methods use the same energy function but PMF can only optimize its data term. With explicit regularization, our method consistently outperforms PMF for all the image pairs.

\subsubsection*{Comparison with Olsson~\etal~\cite{Olsson13}}
Figure~\ref{figa:olsson} compares our method with the method by Olsson~\etal~\cite{Olsson13}. Both methods use the same energy function but optimize it differently. For most of the image pairs, our method is about 6x faster to reach comparable or better accuracies than \cite{Olsson13}.
Our method can be further accelerated by parallelization as demonstrated in Fig.~\ref{figa:efficiency}.

\begin{figure}
	\begin{center}
		\includegraphics[width=0.3\textwidth]{./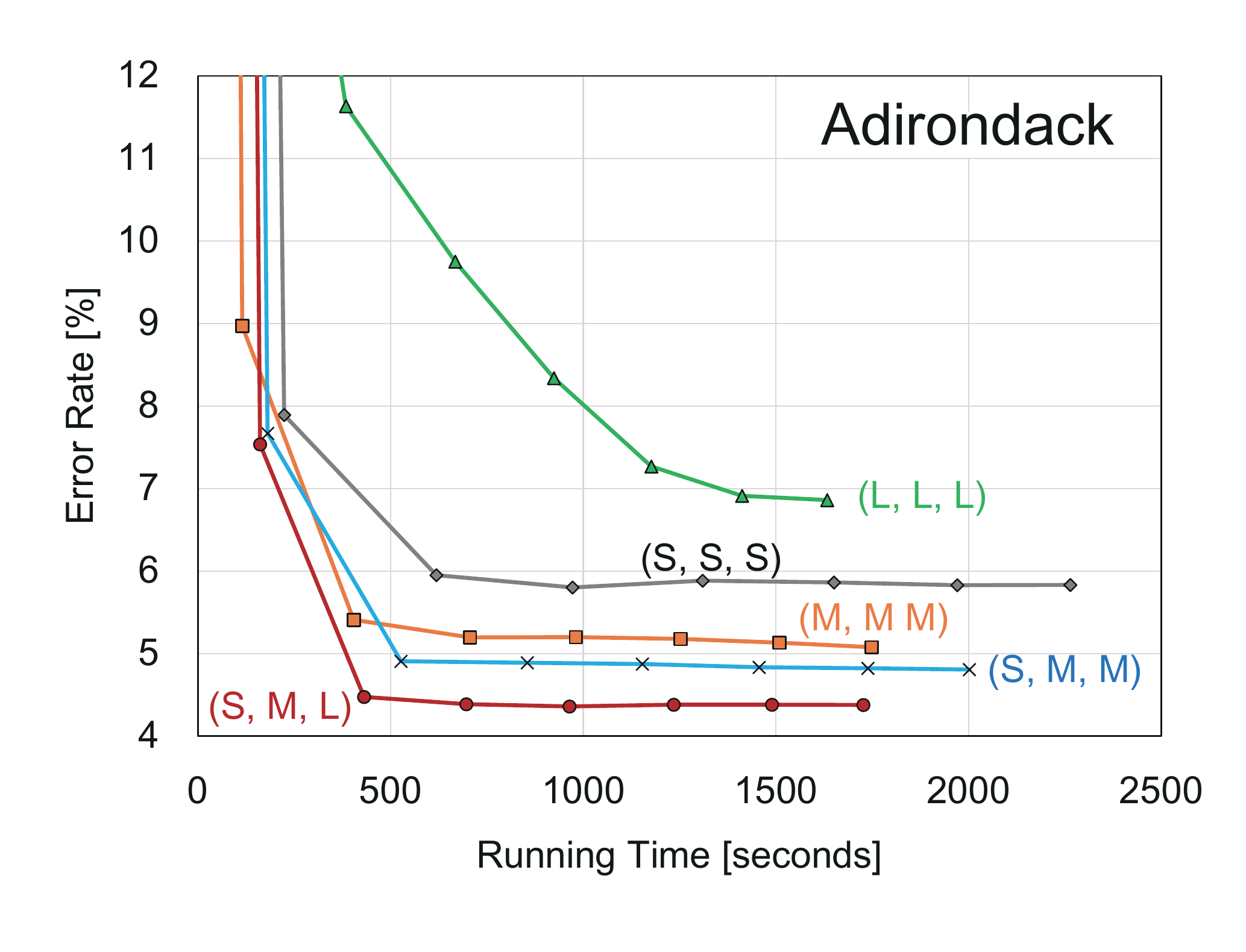}\hfil
		\includegraphics[width=0.3\textwidth]{./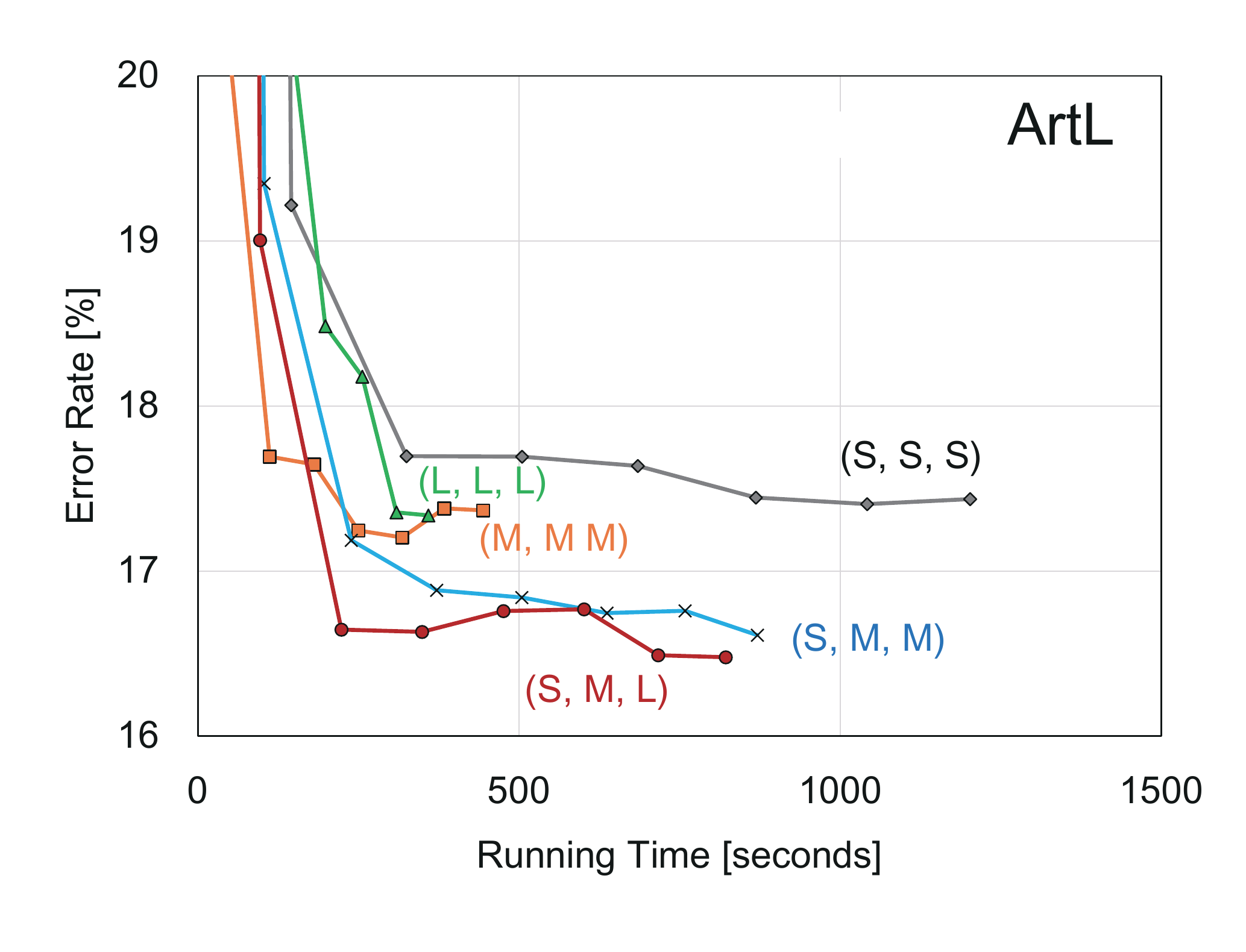}\hfil
		\includegraphics[width=0.3\textwidth]{./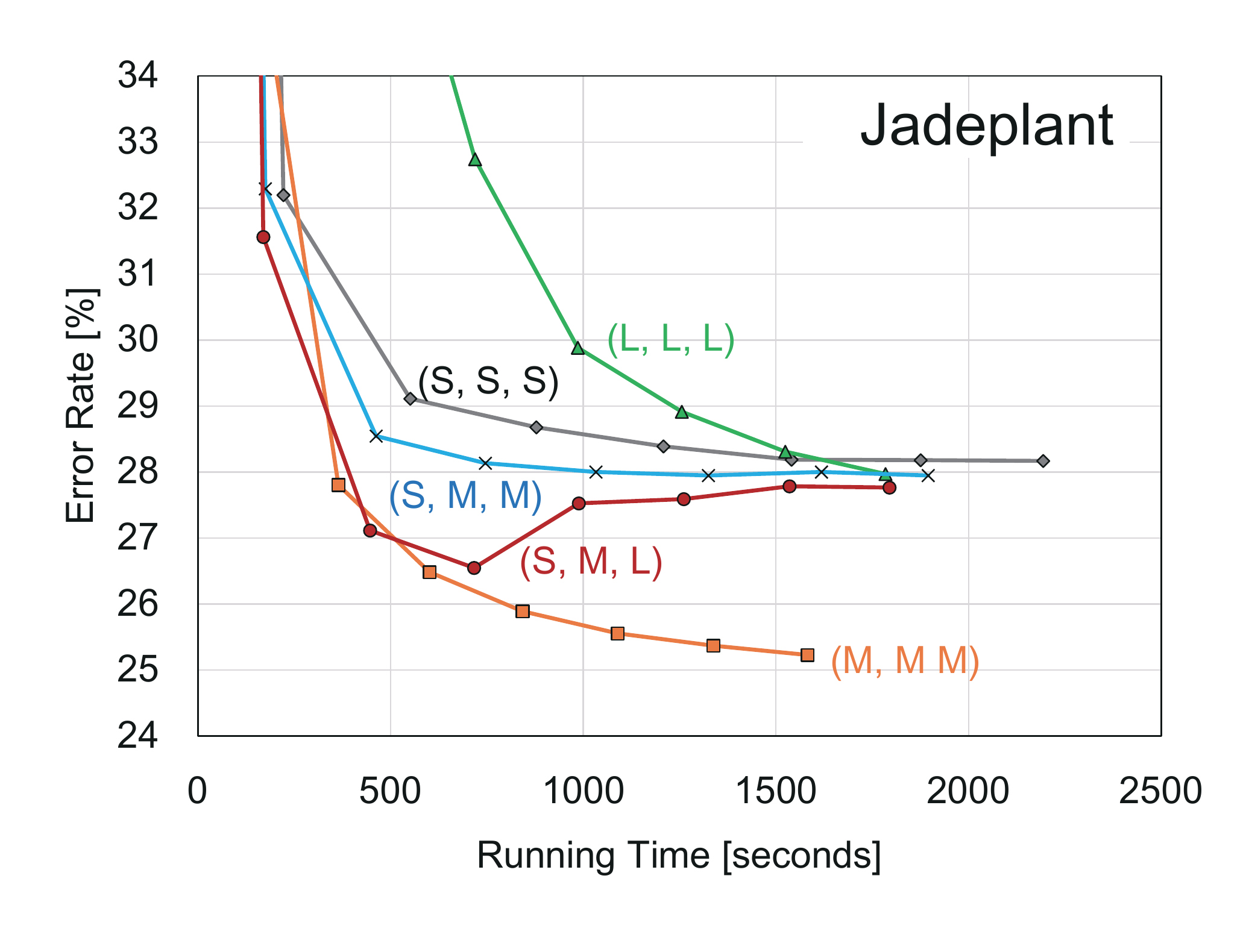}\\
		\includegraphics[width=0.3\textwidth]{./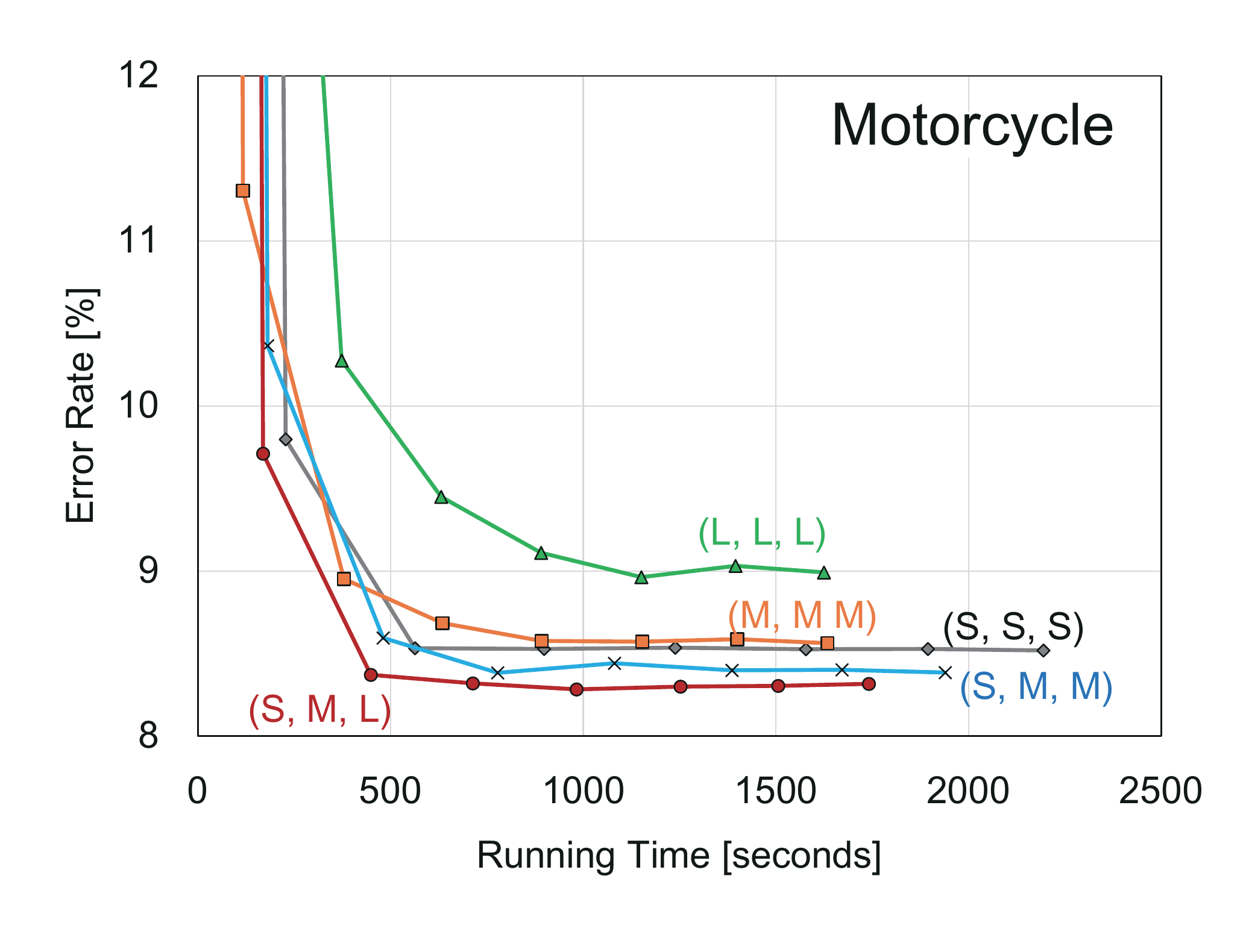}\hfil
		\includegraphics[width=0.3\textwidth]{./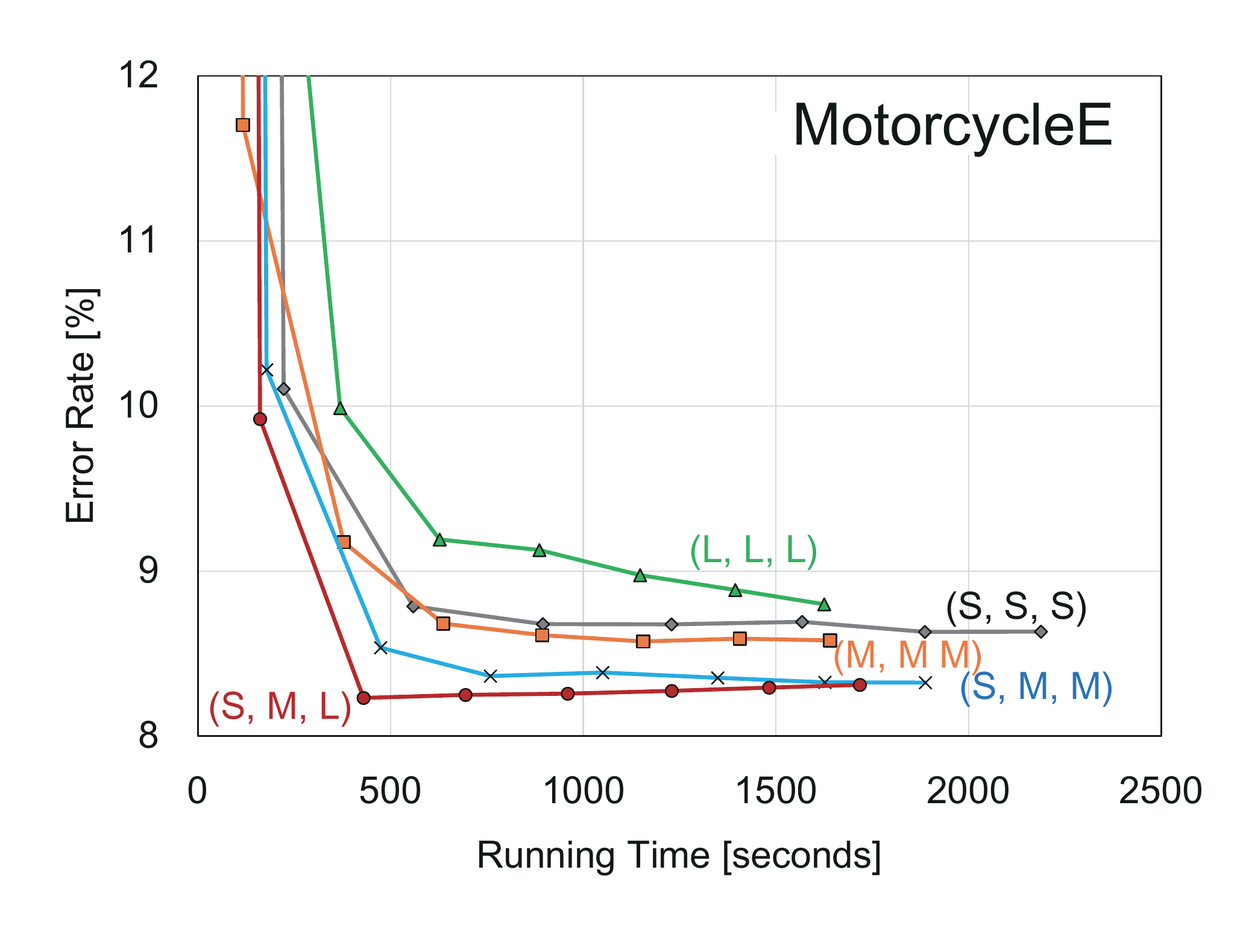}\hfil
		\includegraphics[width=0.3\textwidth]{./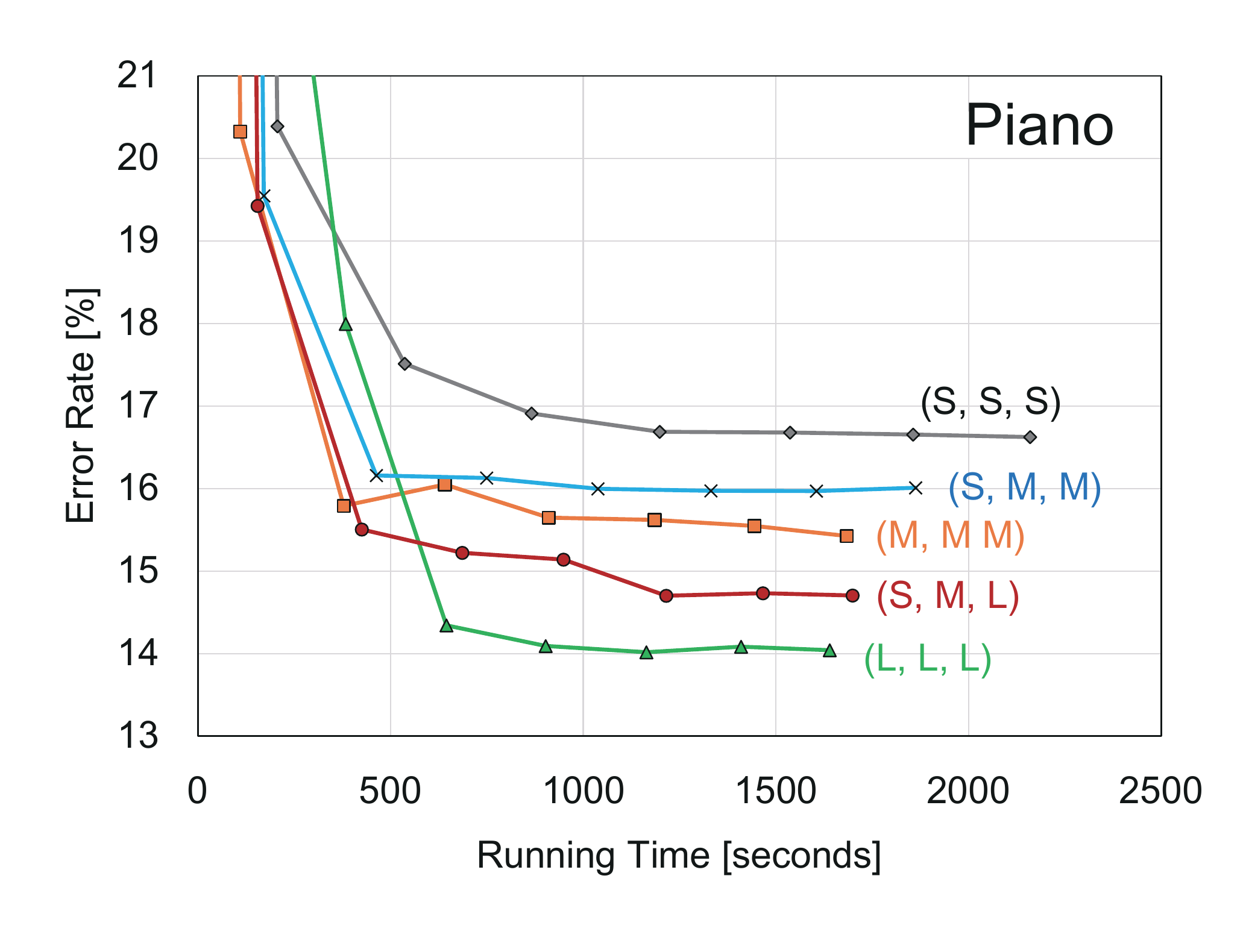}\\
		\includegraphics[width=0.3\textwidth]{./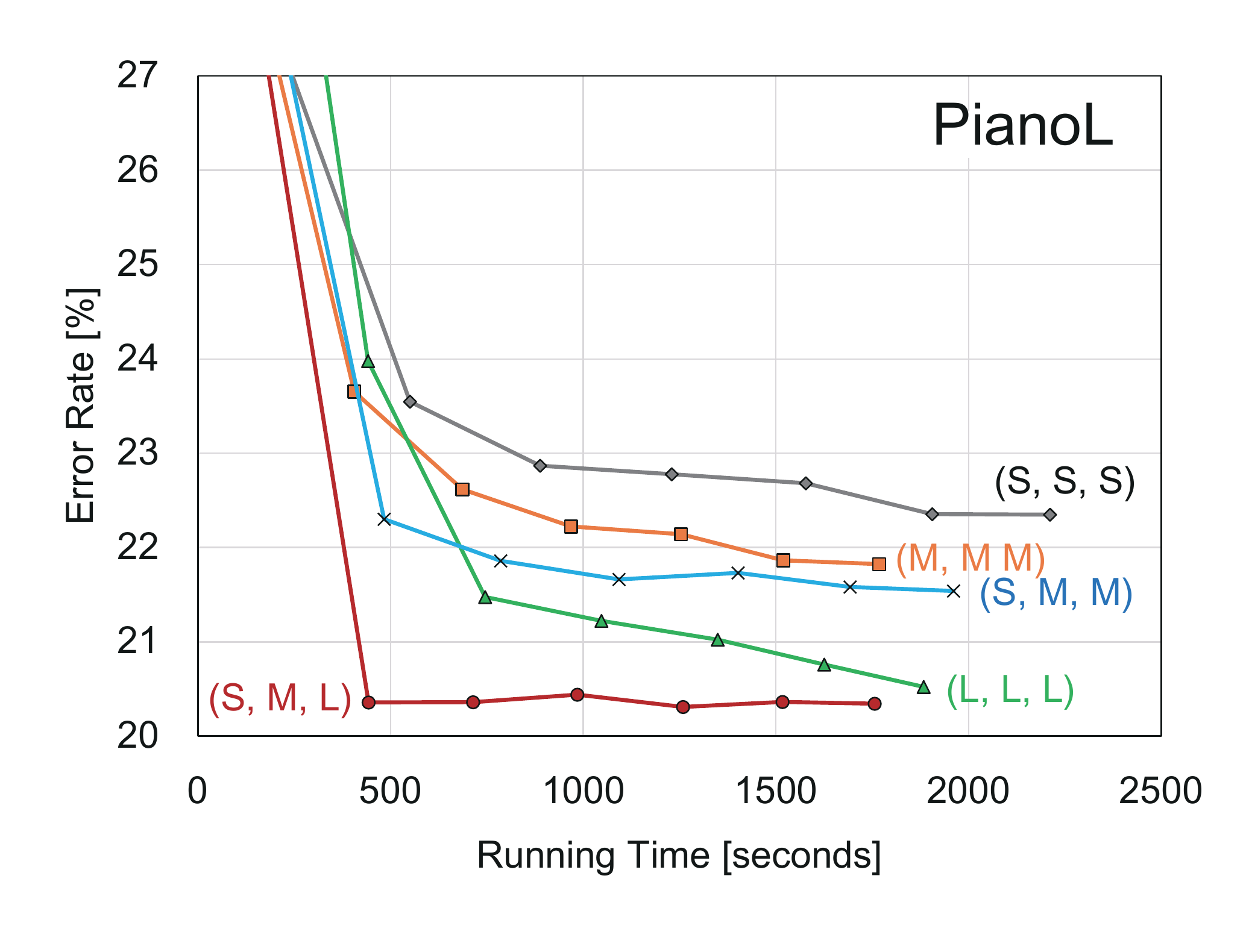}\hfil
		\includegraphics[width=0.3\textwidth]{./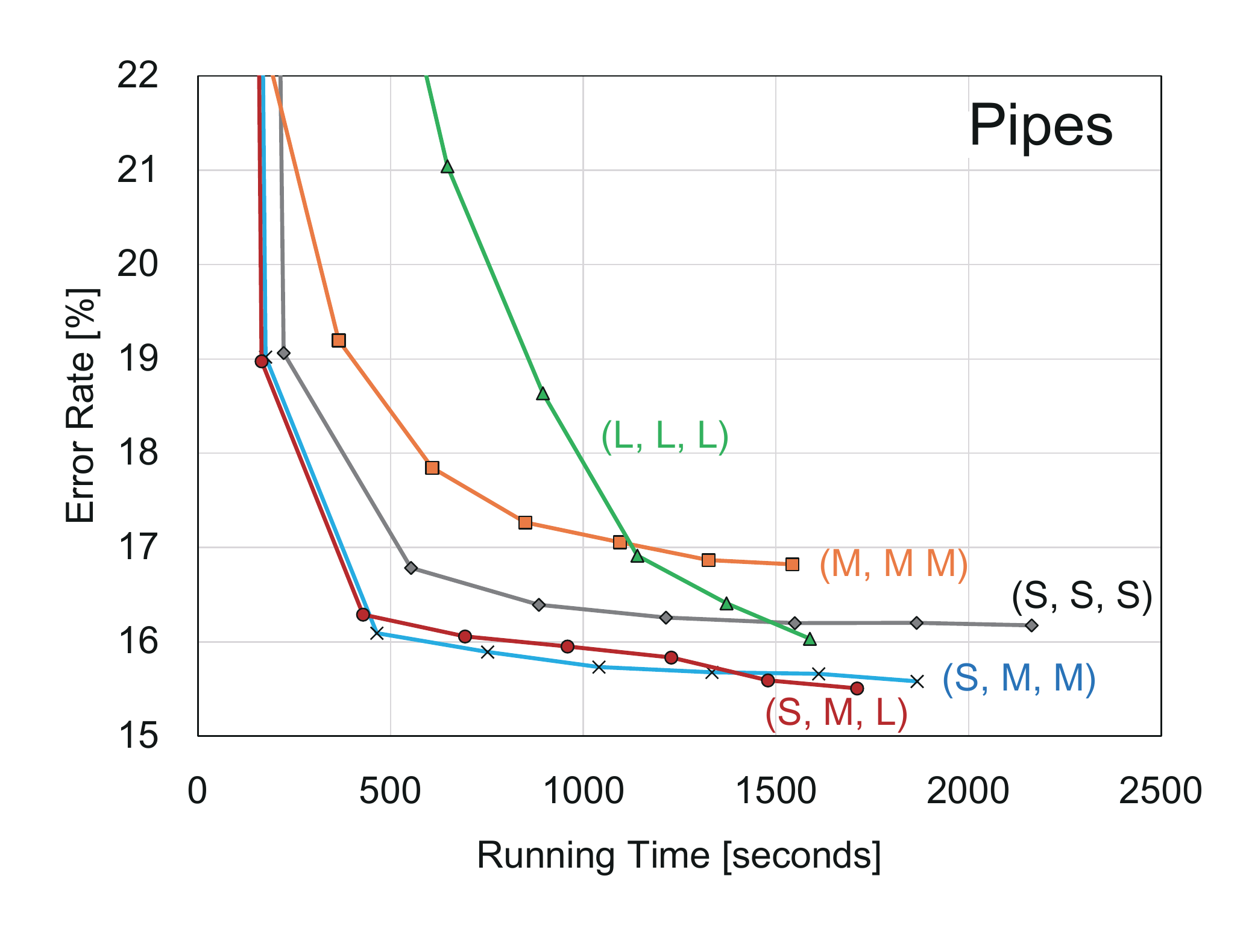}\hfil
		\includegraphics[width=0.3\textwidth]{./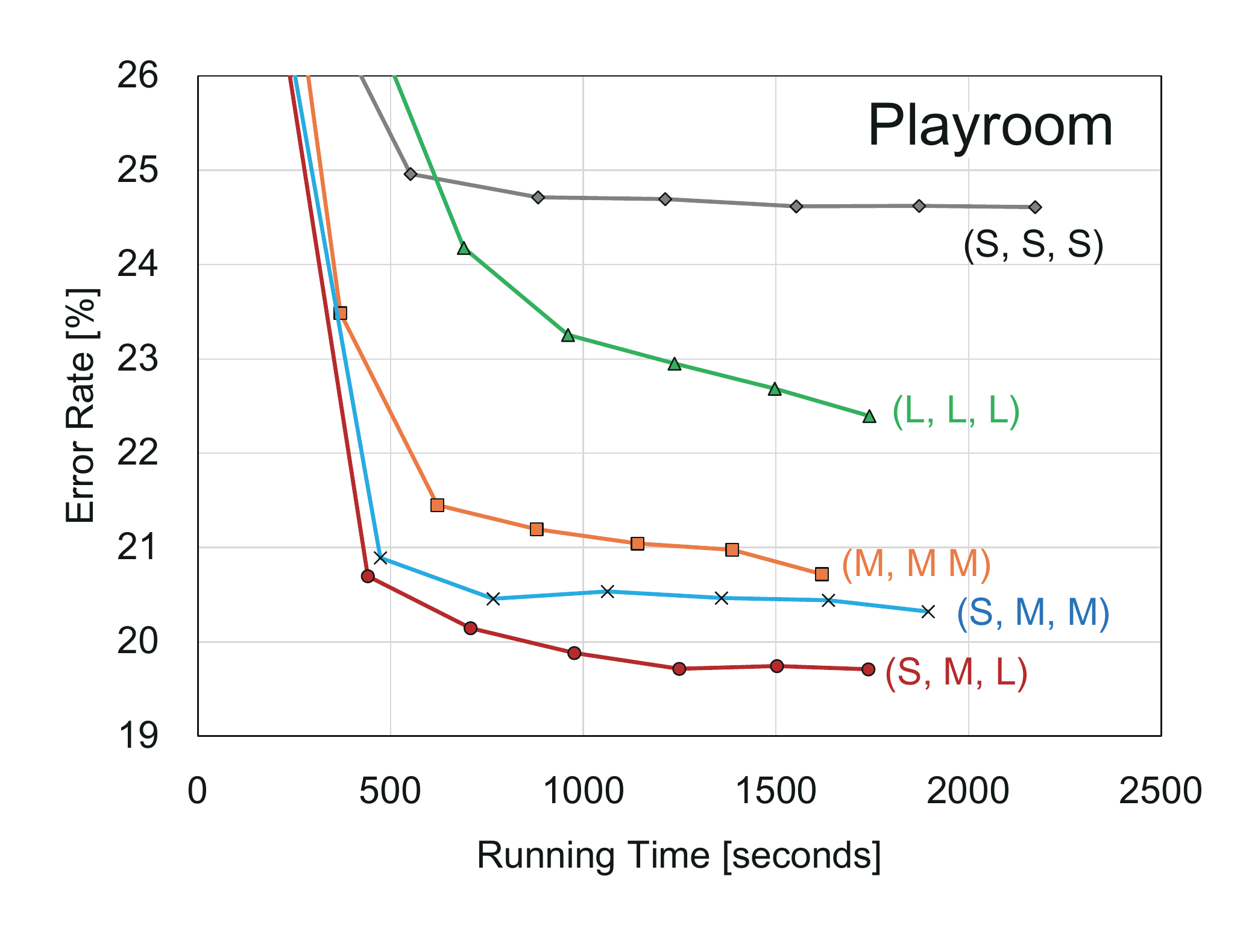}\\
		\includegraphics[width=0.3\textwidth]{./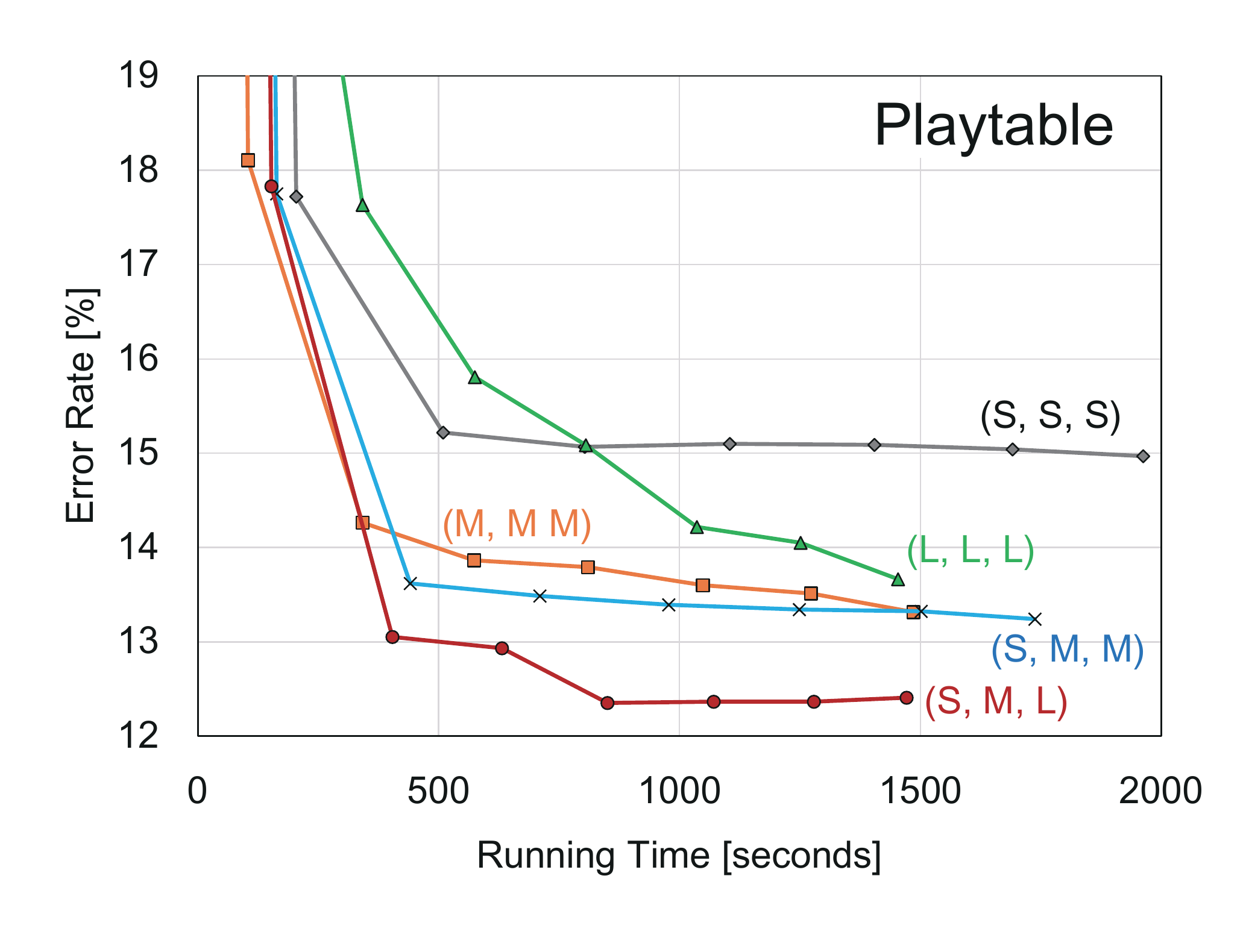}\hfil
		\includegraphics[width=0.3\textwidth]{./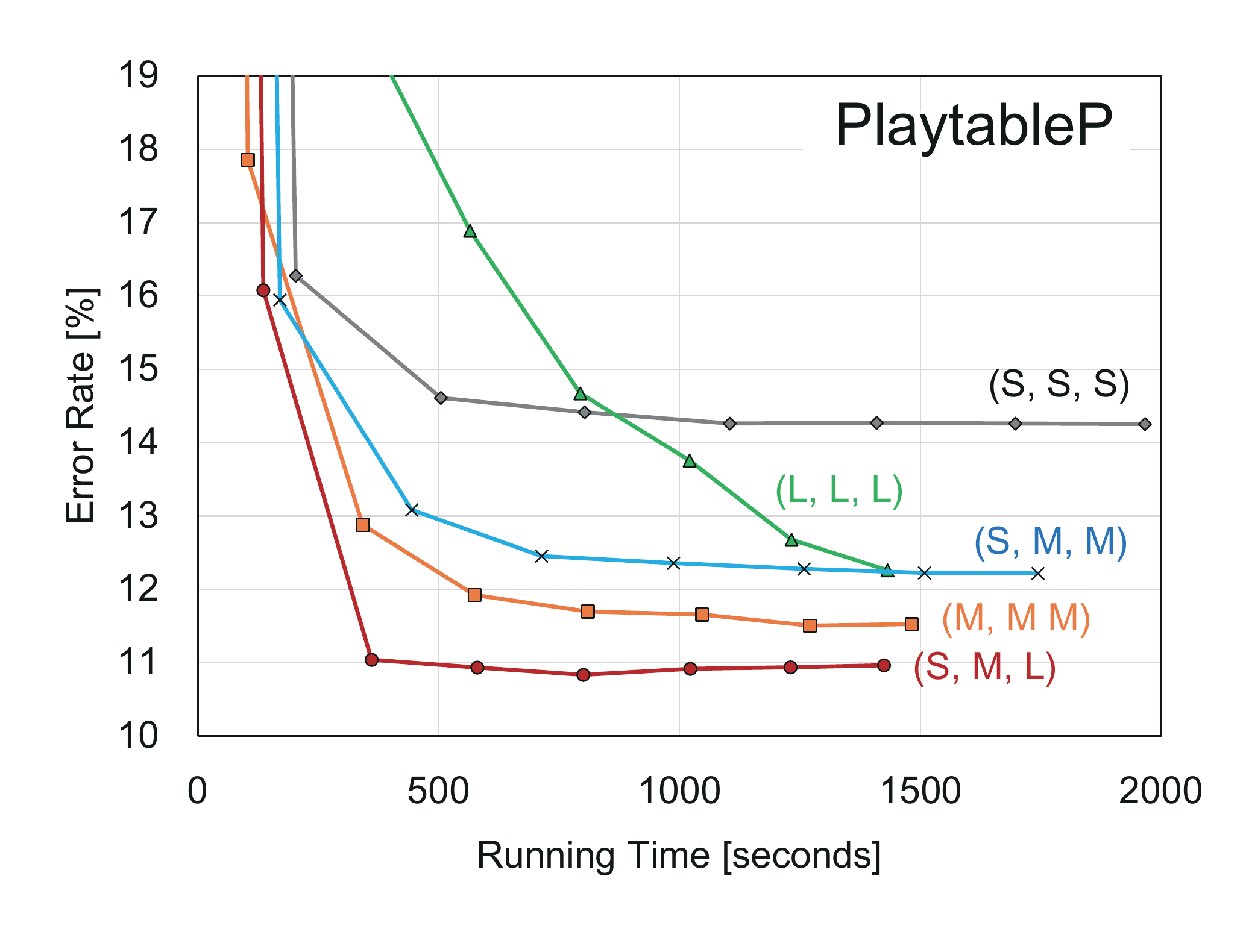}\hfil
		\includegraphics[width=0.3\textwidth]{./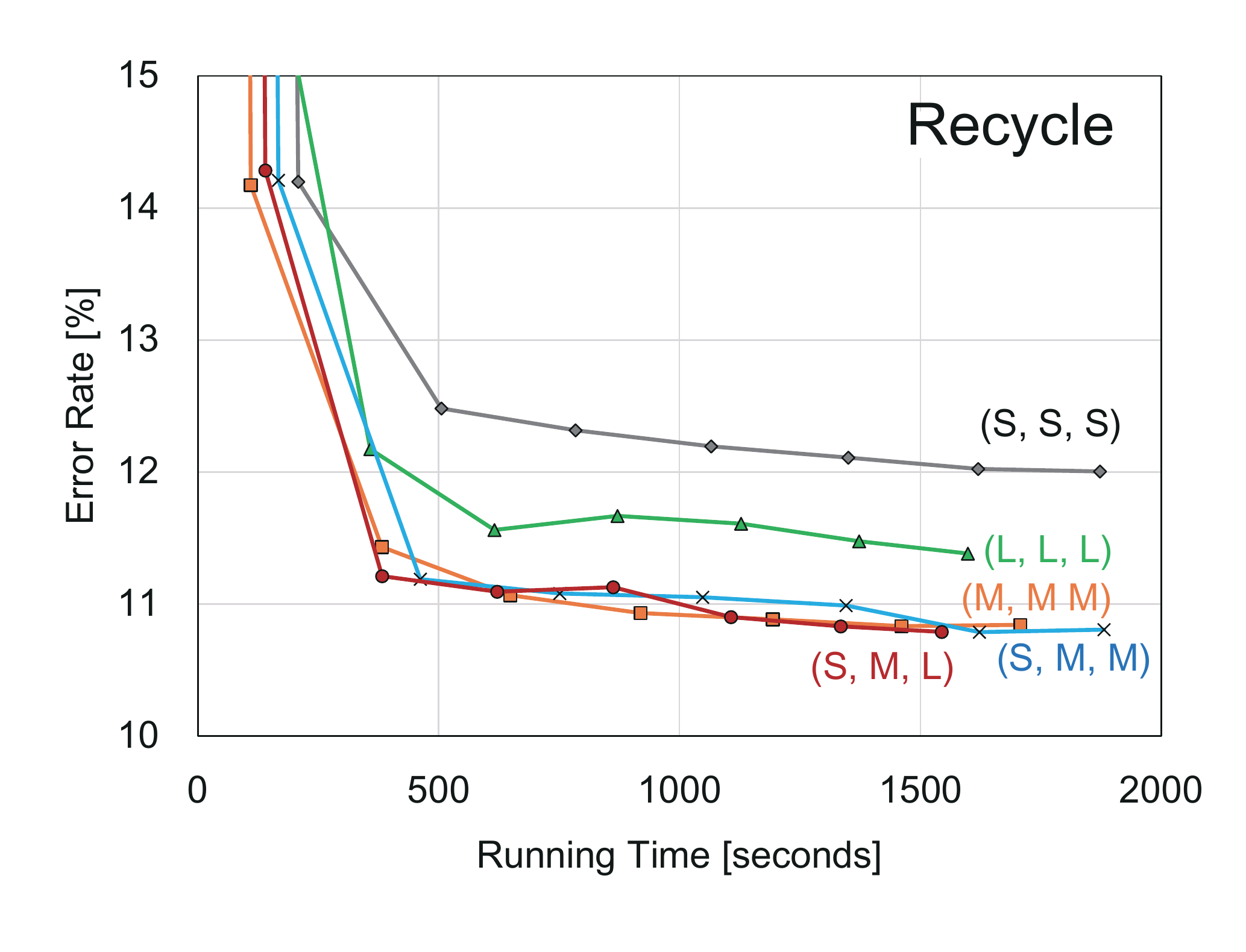}\\
		\includegraphics[width=0.3\textwidth]{./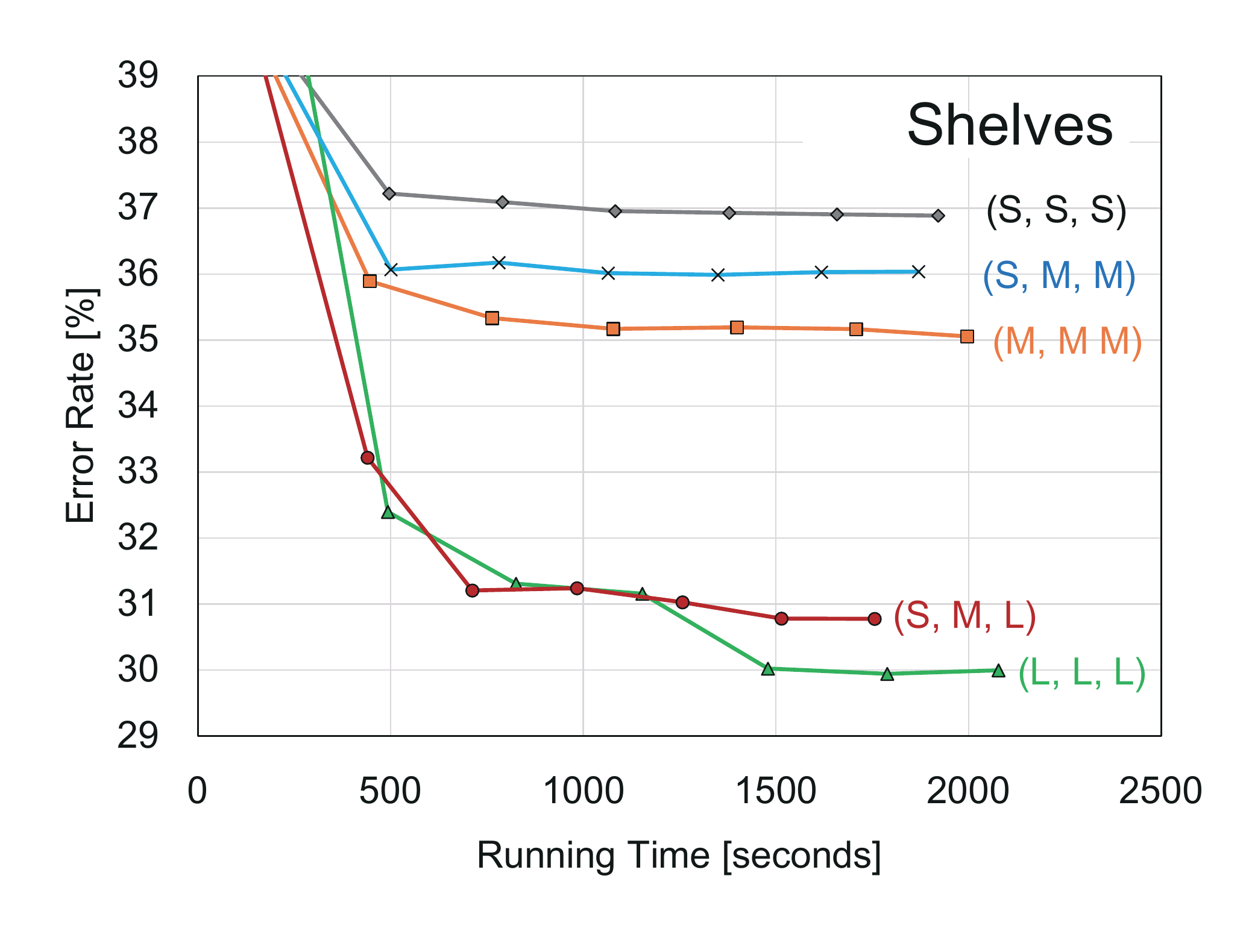}\hfil
		\includegraphics[width=0.3\textwidth]{./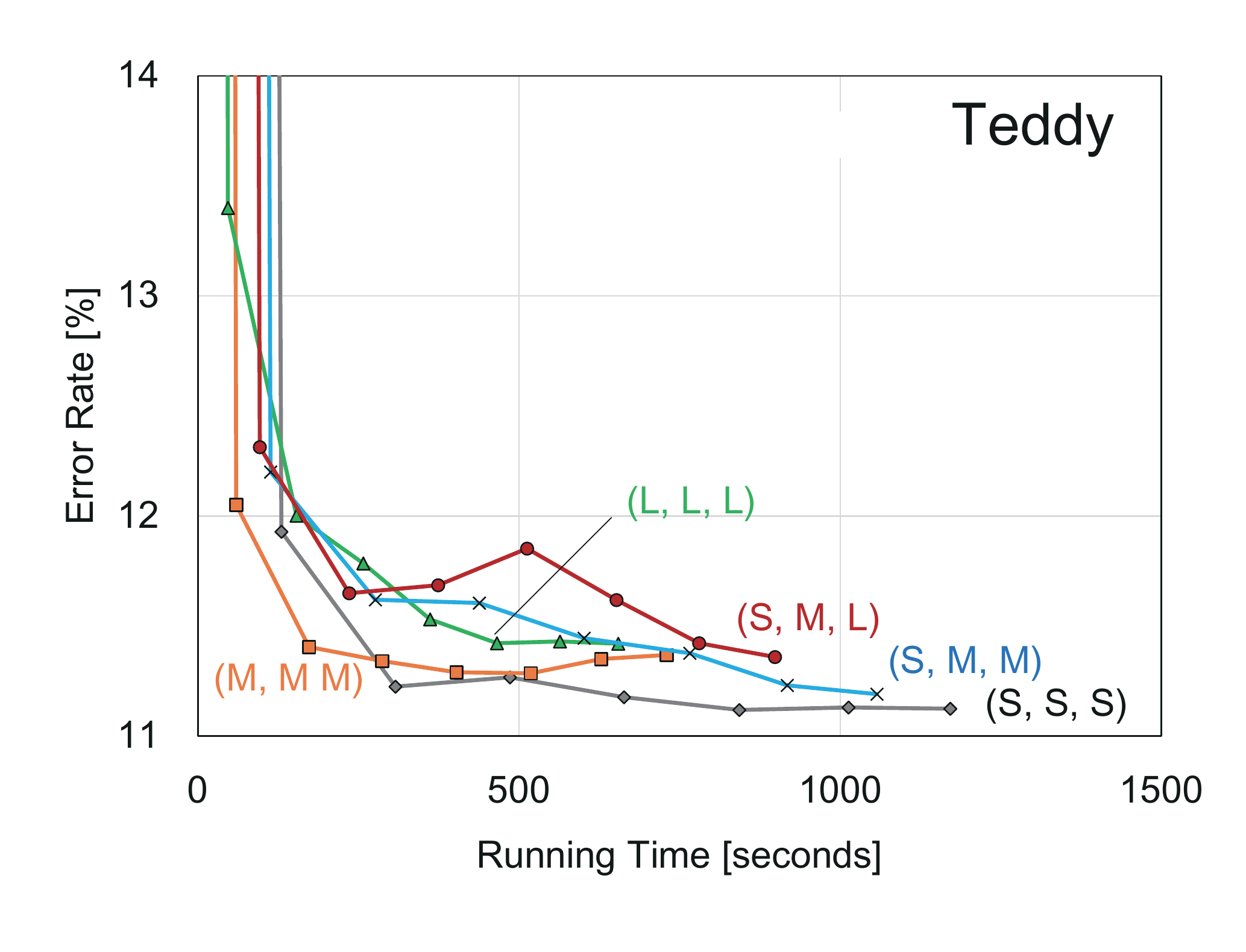}\hfil
		\includegraphics[width=0.3\textwidth]{./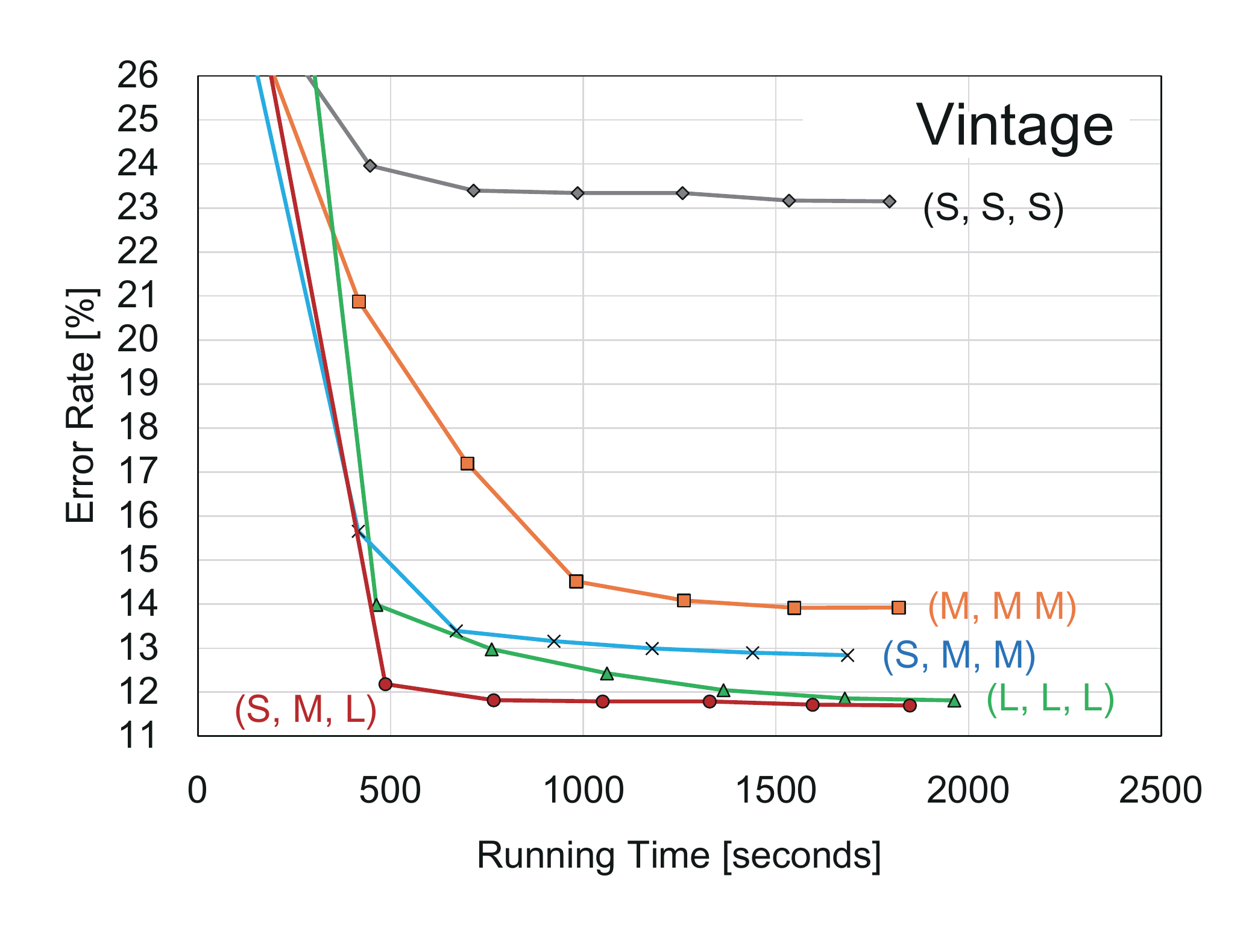}
	\end{center}
	\caption{Convergence comparison for different grid-cell sizes on 15 image pairs from the Middlebury V3 training dataset. Error rates are evaluated by the \emph{bad 2.0} metric for all regions. For most of the image pairs, the proposed grid-cell combination (S, M, L) outperforms the other four combinations (L, L, L), (M, M, M), (S, S, S), and (S, M, M). Here, the sizes of ``S''mall, ``M''edium, and ``L''arge grid-cells are proportionally set to $1\%$, $3\%$ and $9\%$ of the image width, respectively. All methods are run on a single CPU core to optimize the same energy function used in Sec~4.2 without post-processing.
	}
\label{figa:grid}
\end{figure}

\begin{figure}
	\begin{center}
		\includegraphics[width=0.3\textwidth]{./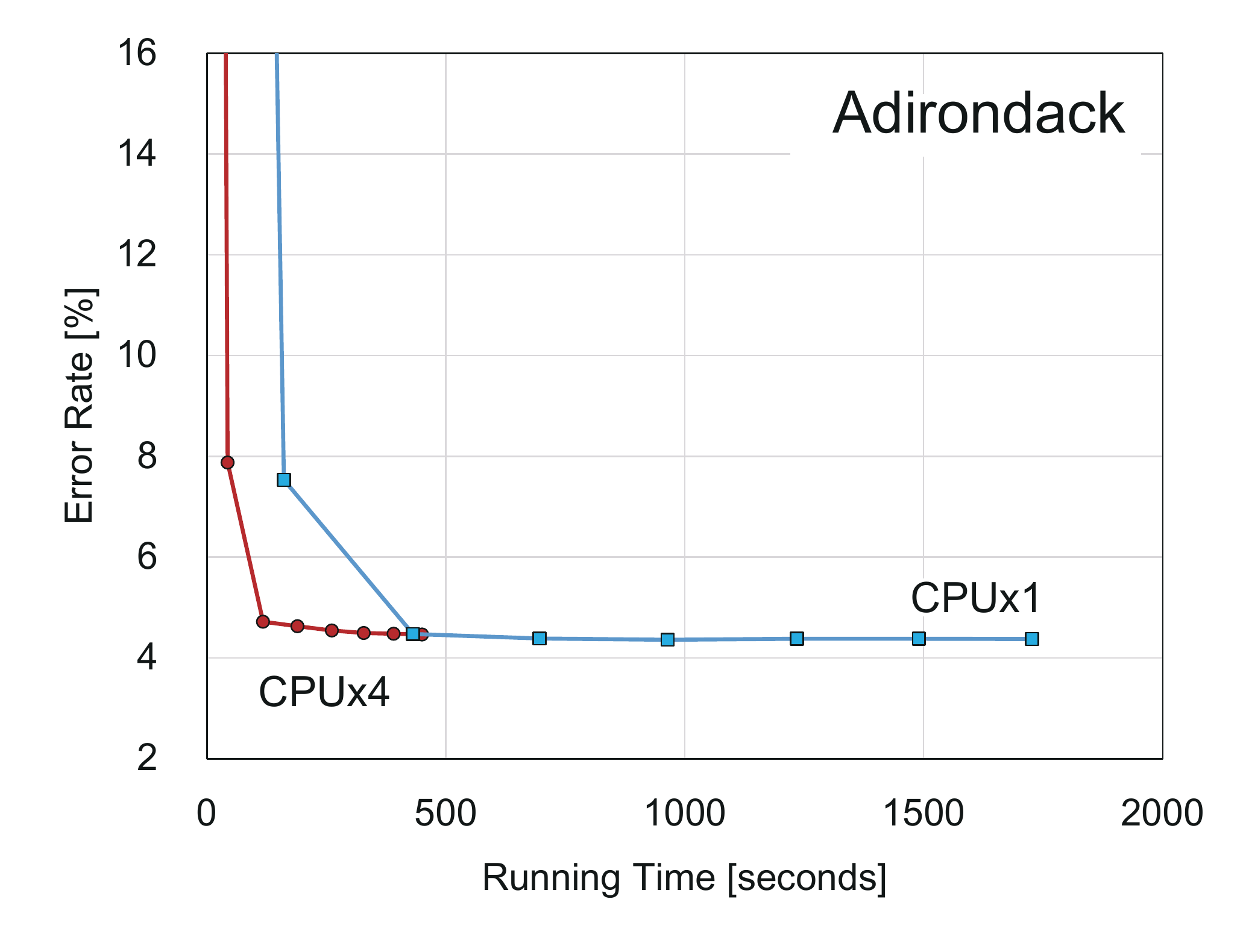}\hfil
		\includegraphics[width=0.3\textwidth]{./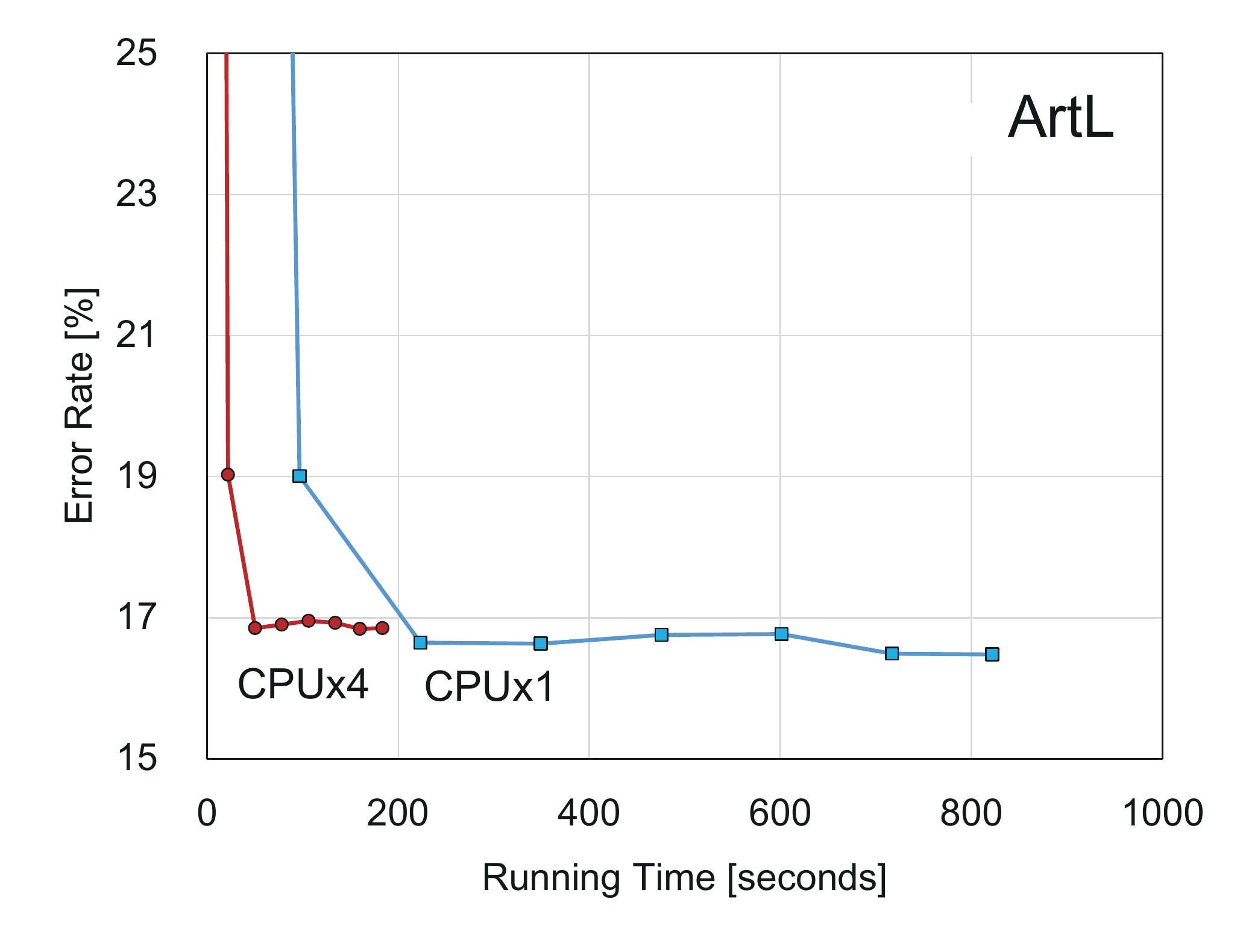}\hfil
		\includegraphics[width=0.3\textwidth]{./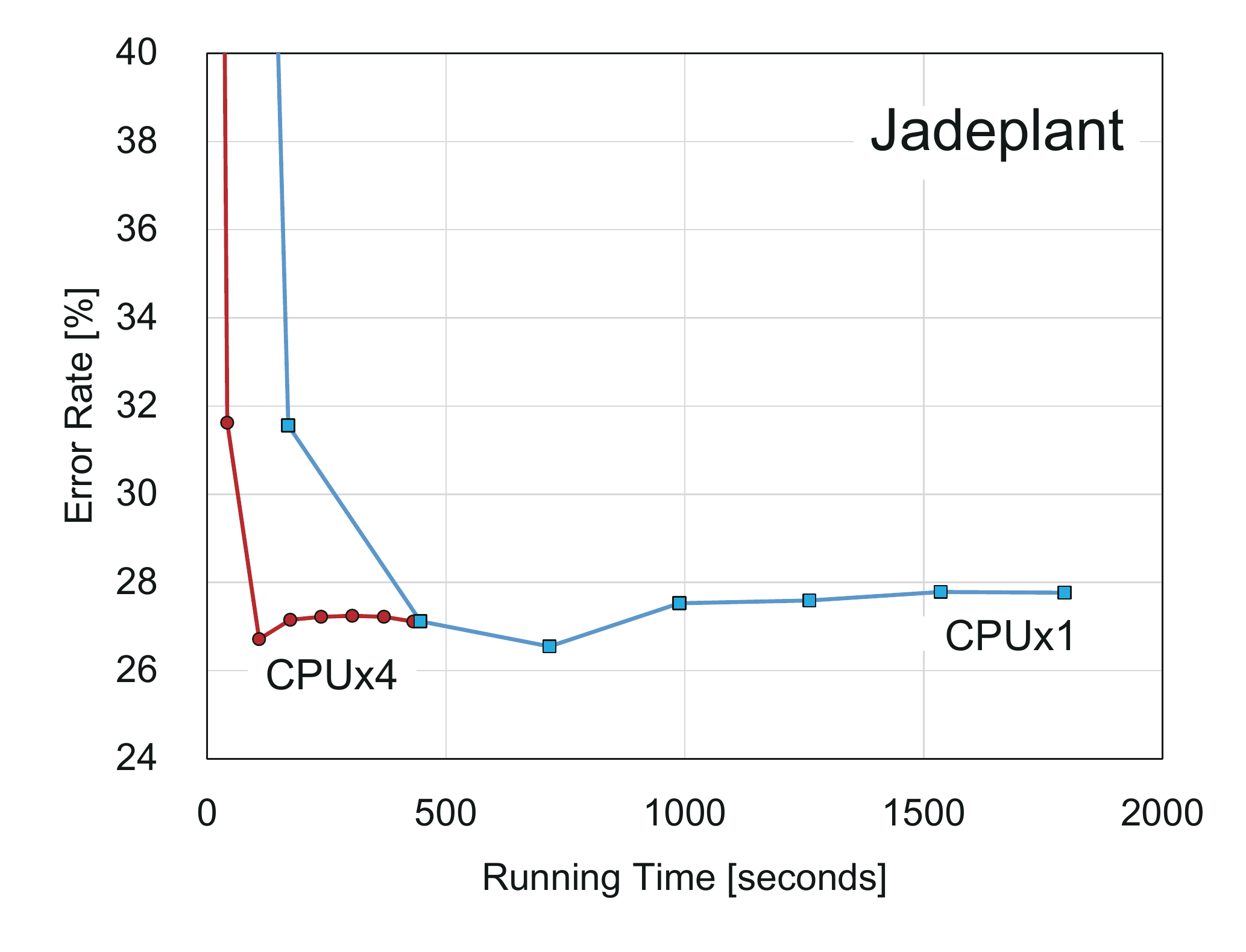}\\
		\includegraphics[width=0.3\textwidth]{./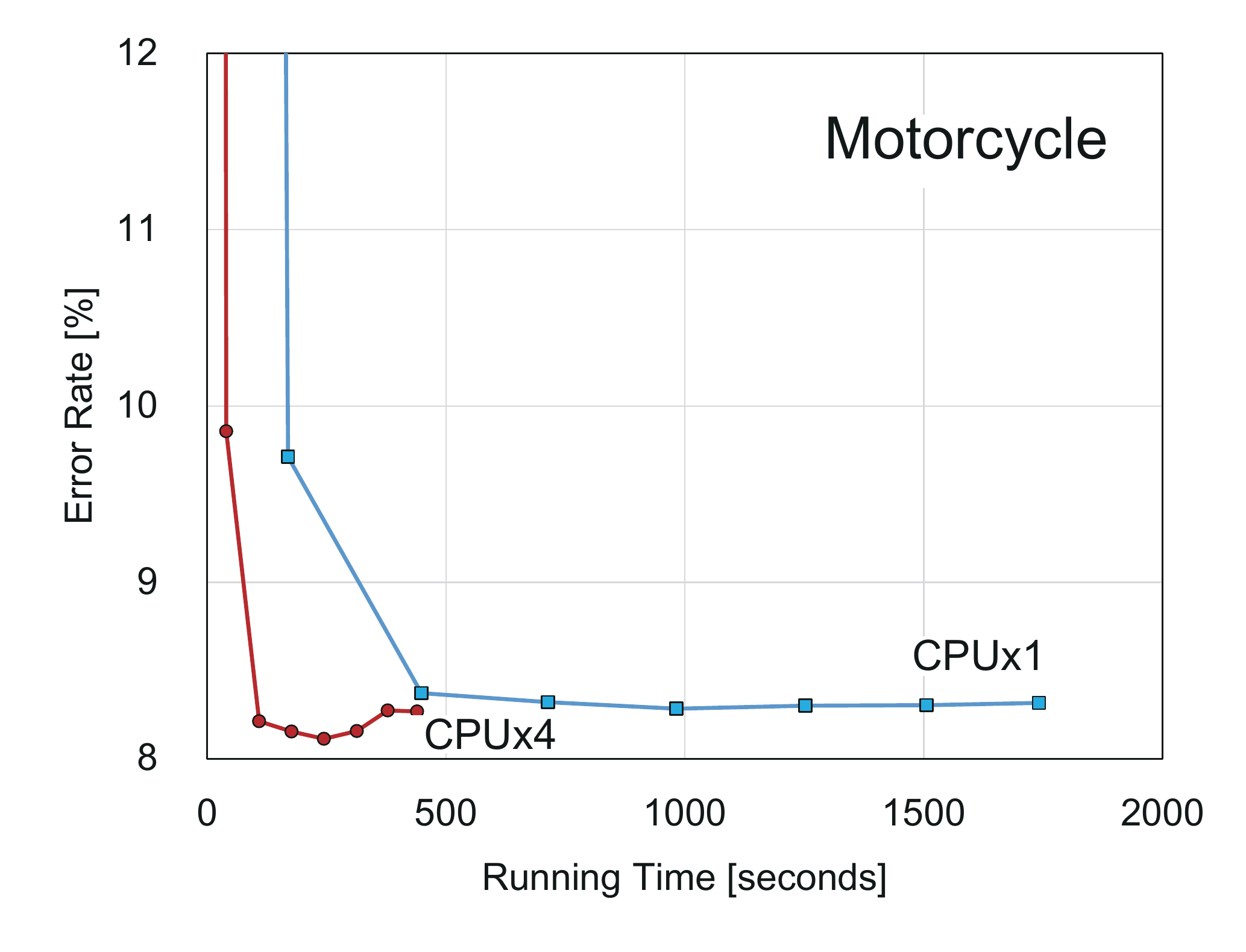}\hfil
		\includegraphics[width=0.3\textwidth]{./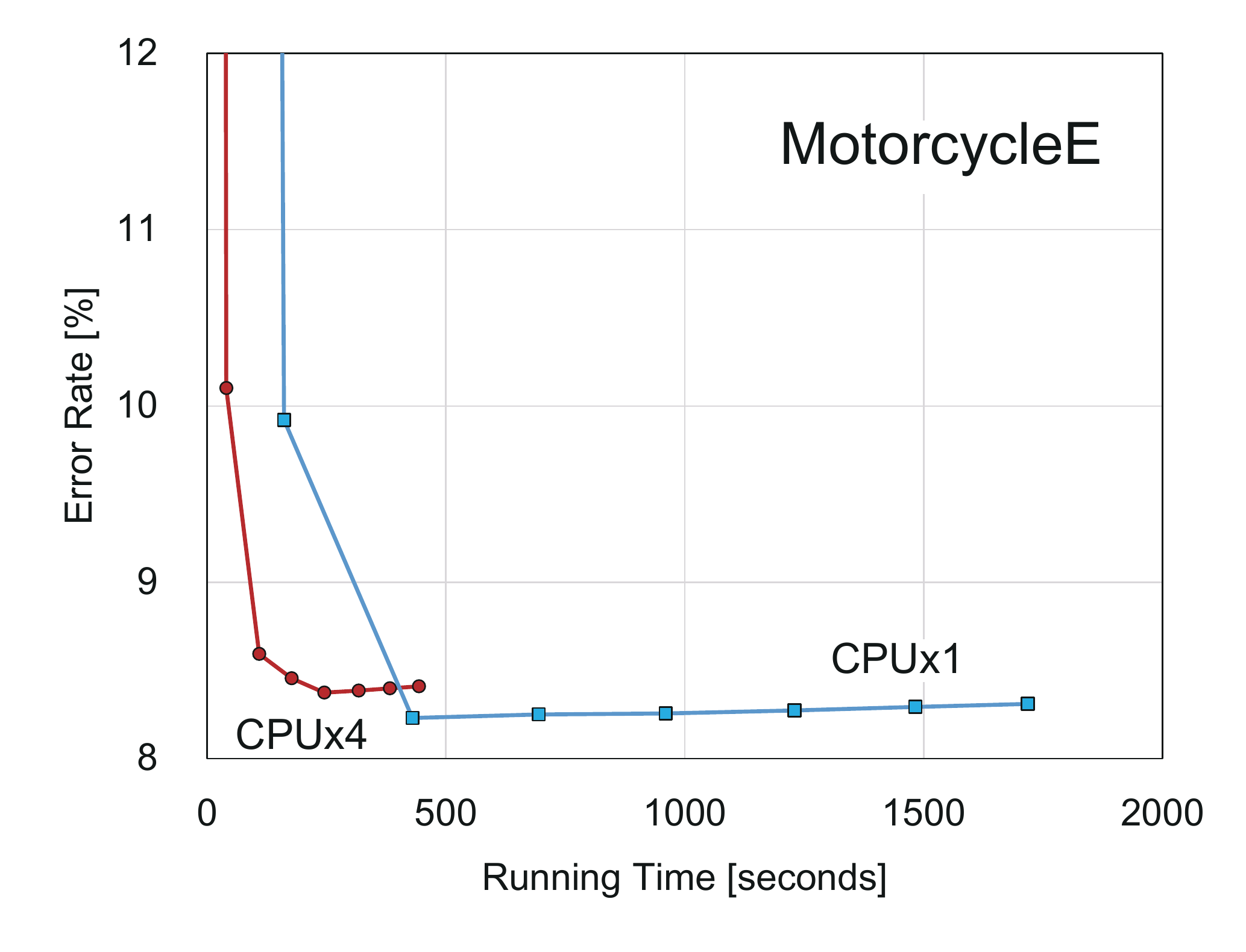}\hfil
		\includegraphics[width=0.3\textwidth]{./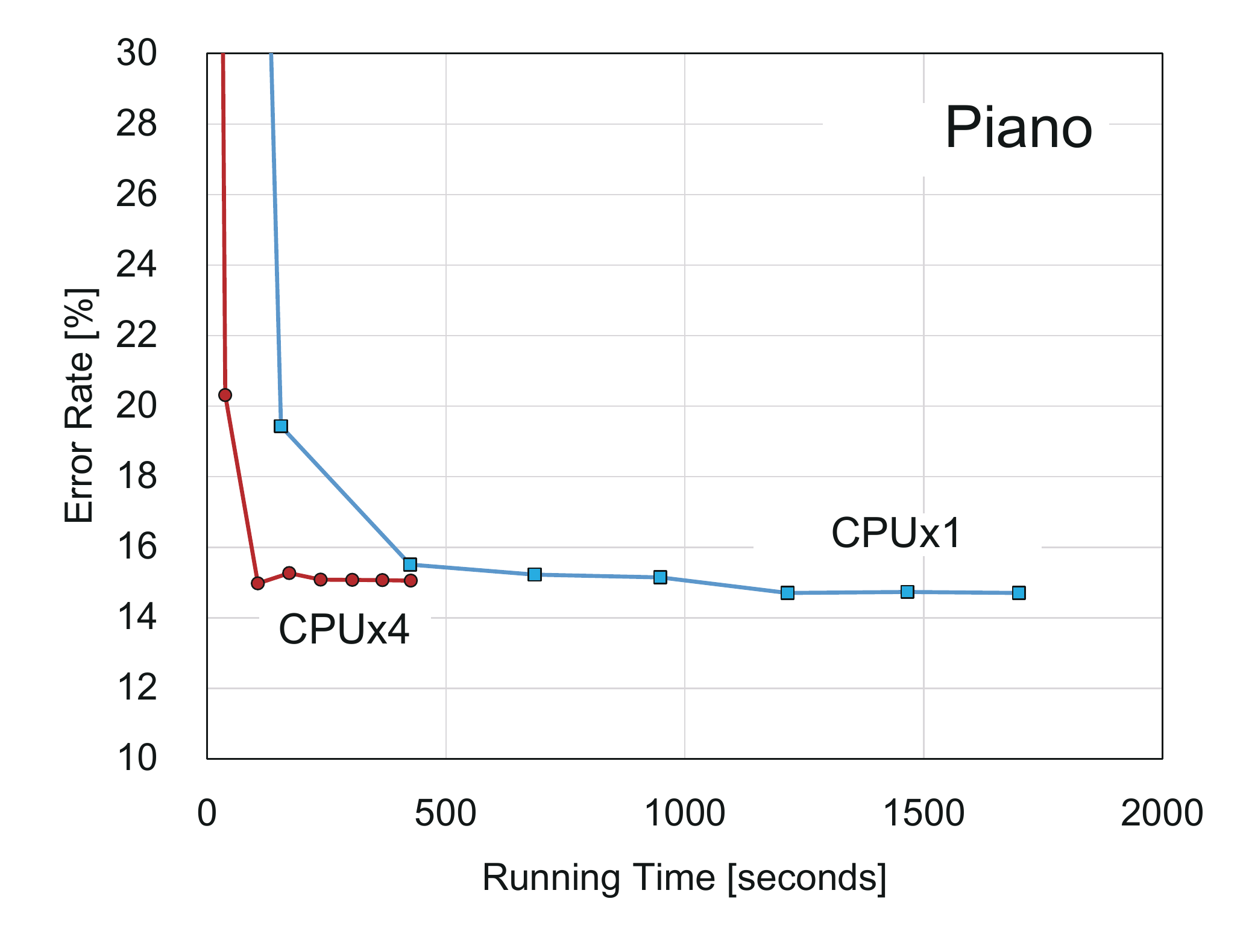}\\
		\includegraphics[width=0.3\textwidth]{./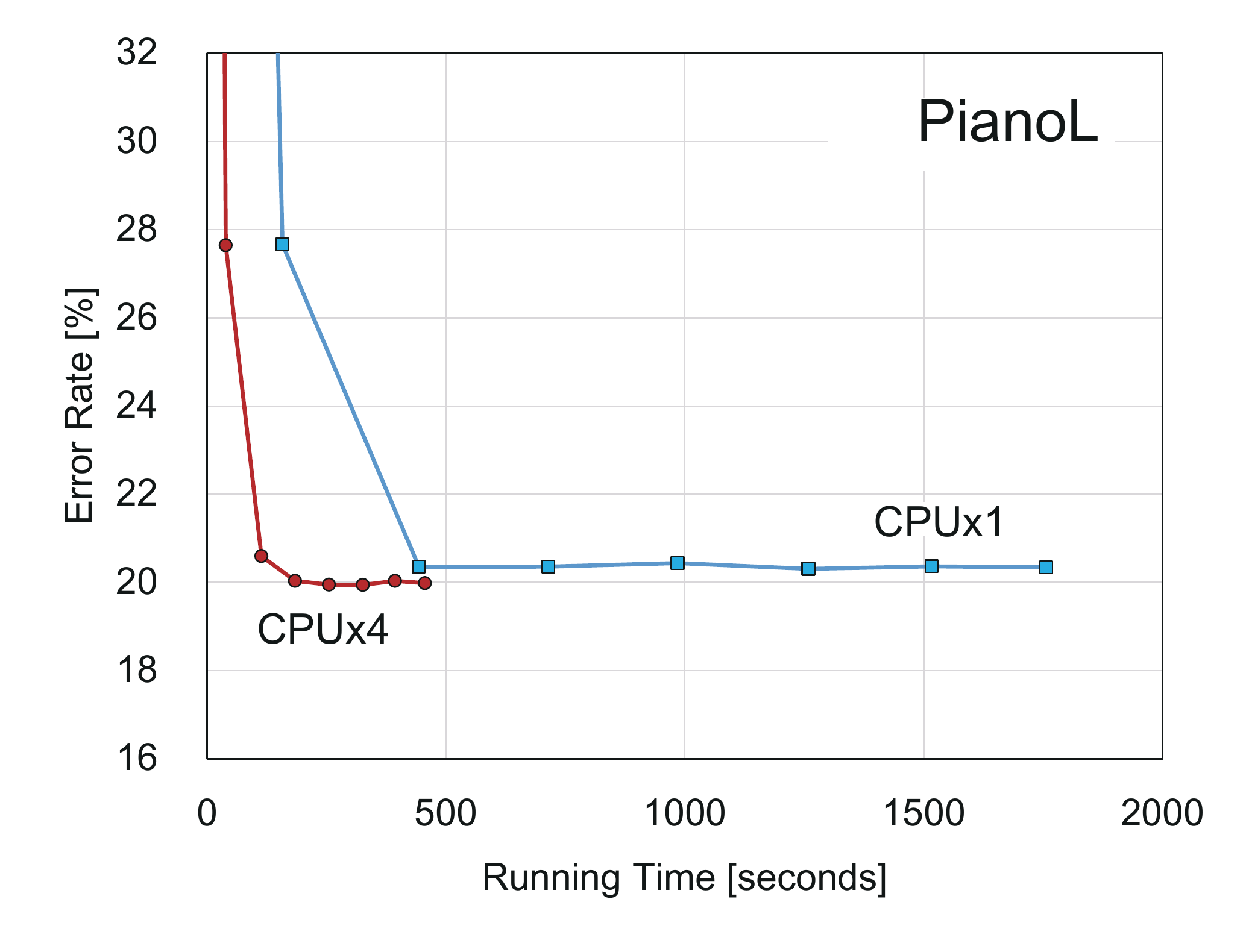}\hfil
		\includegraphics[width=0.3\textwidth]{./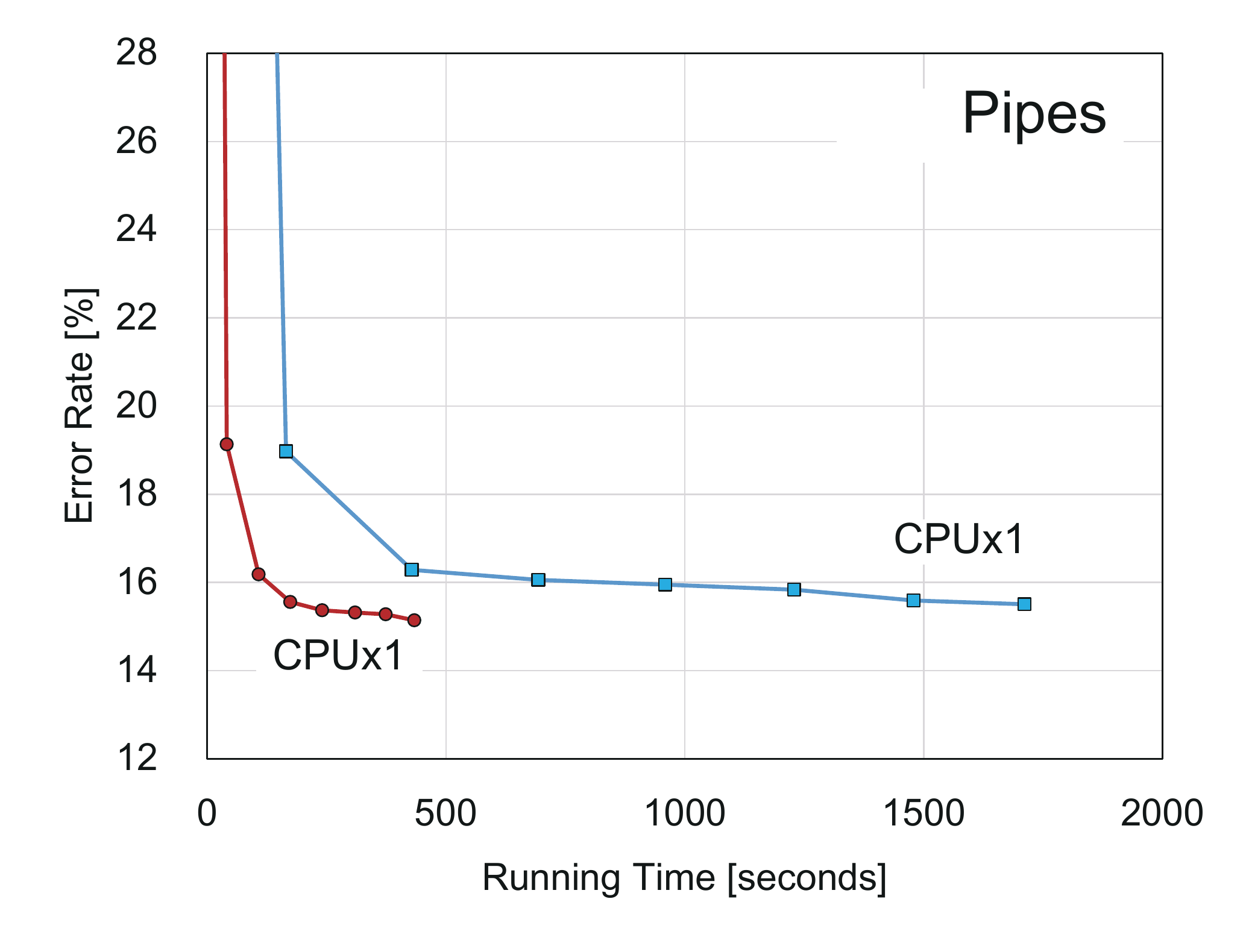}\hfil
		\includegraphics[width=0.3\textwidth]{./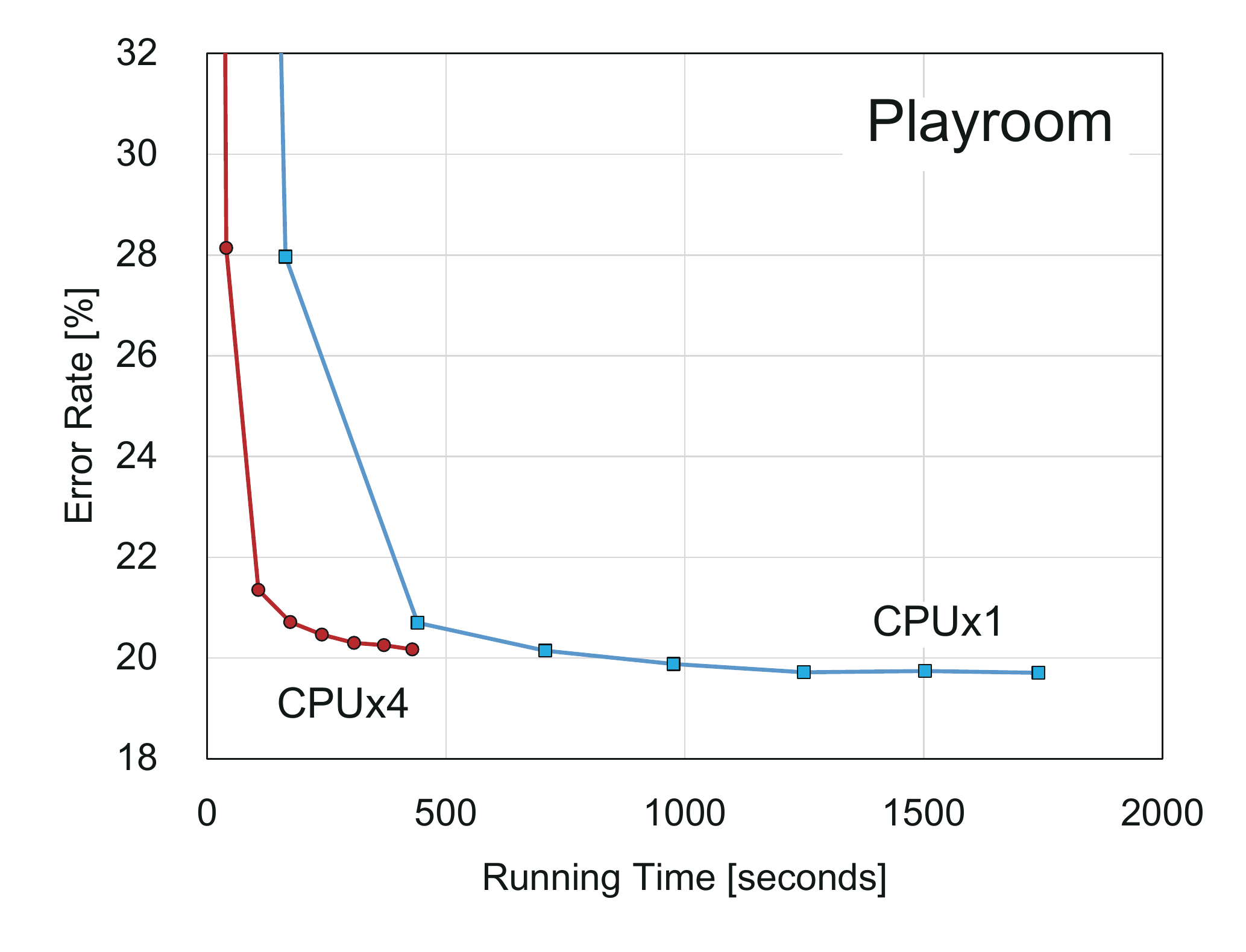}\\
		\includegraphics[width=0.3\textwidth]{./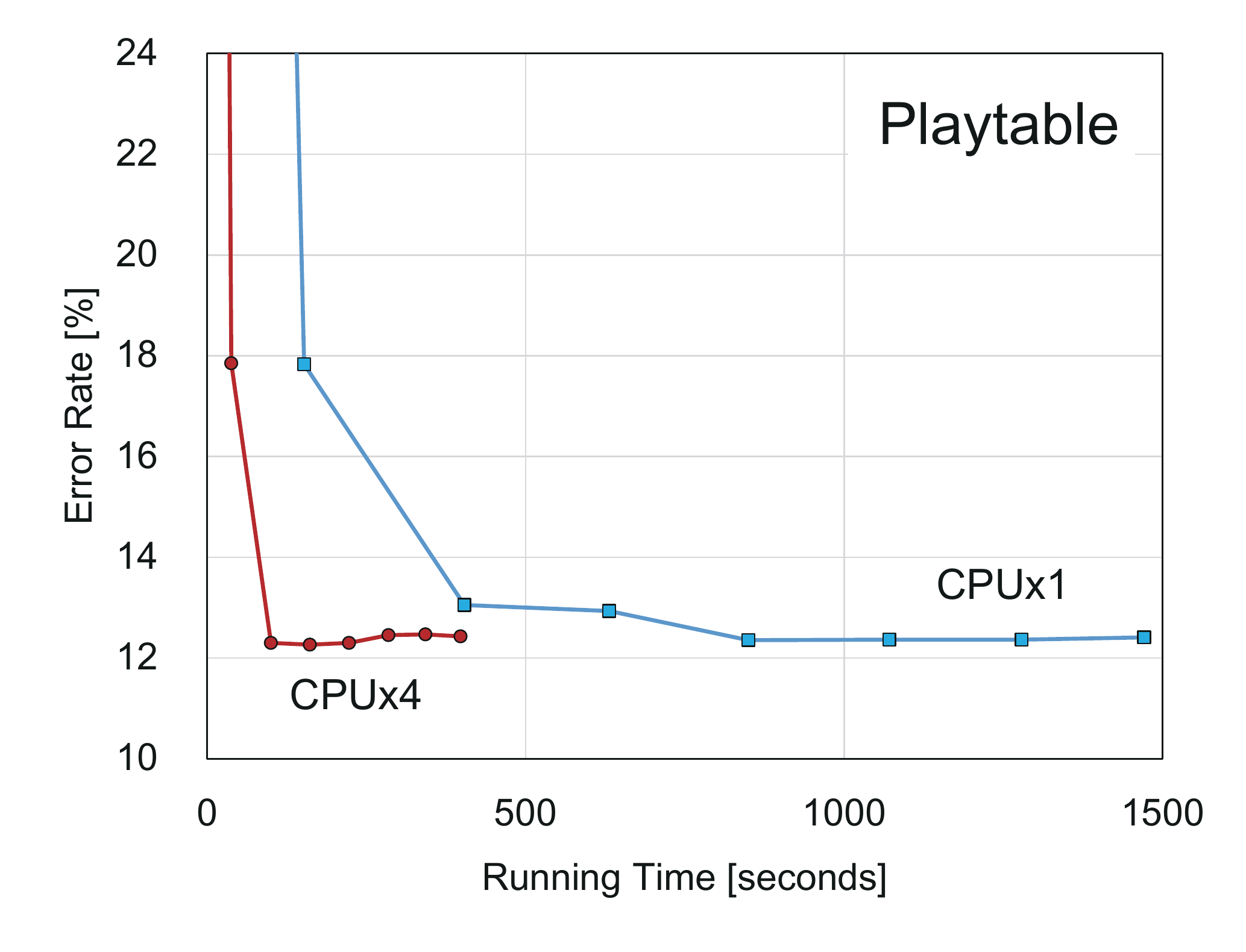}\hfil
		\includegraphics[width=0.3\textwidth]{./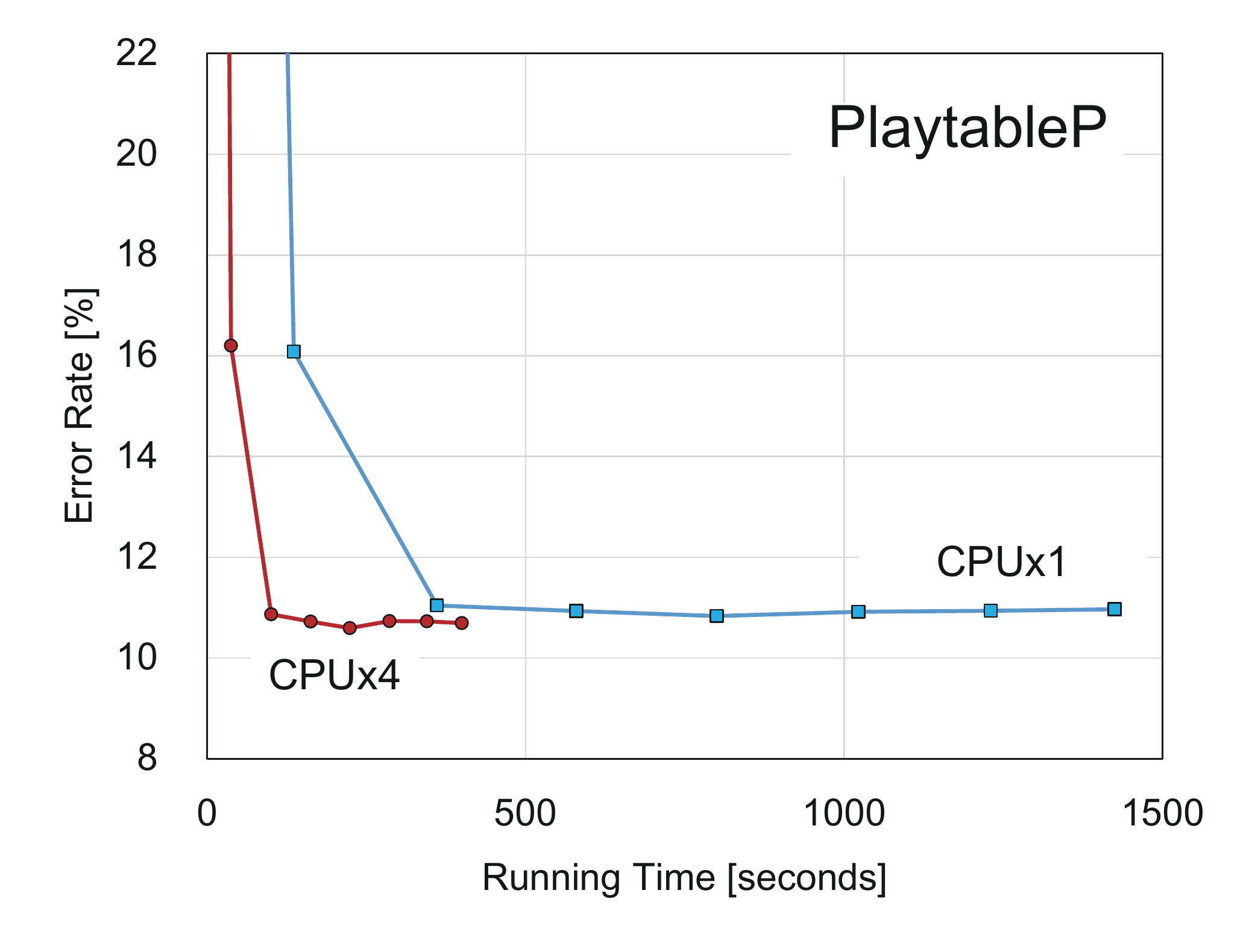}\hfil
		\includegraphics[width=0.3\textwidth]{./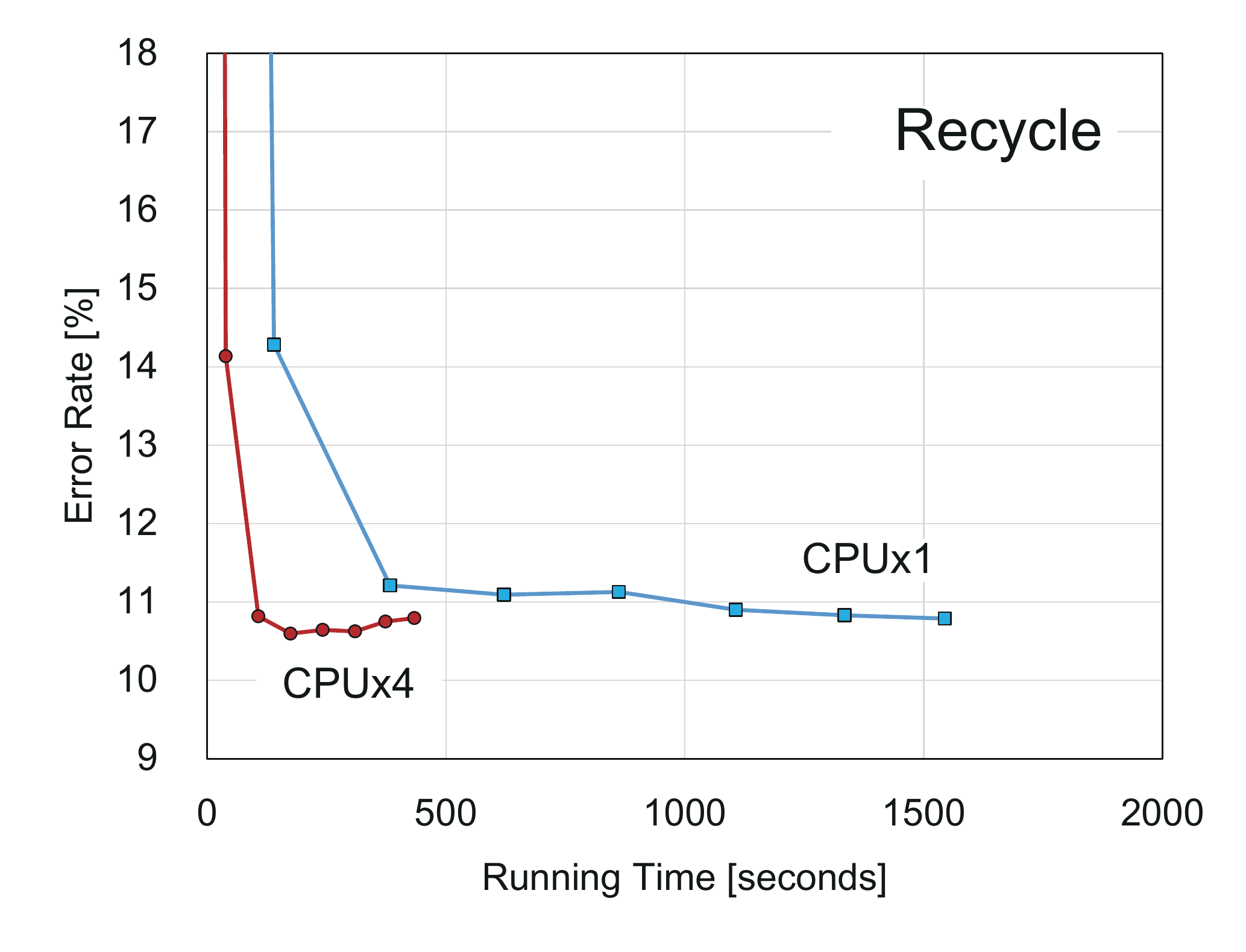}\\
		\includegraphics[width=0.3\textwidth]{./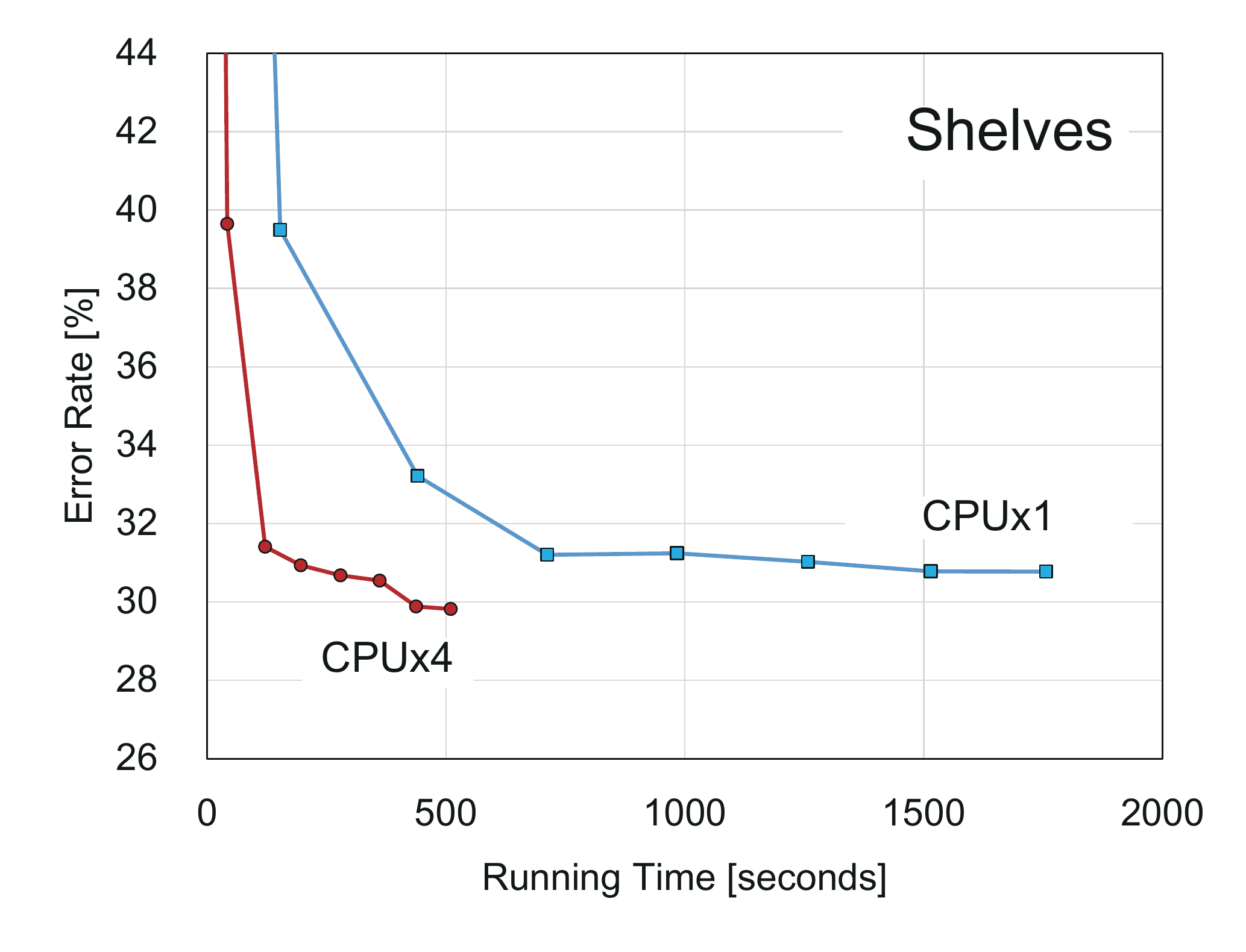}\hfil
		\includegraphics[width=0.3\textwidth]{./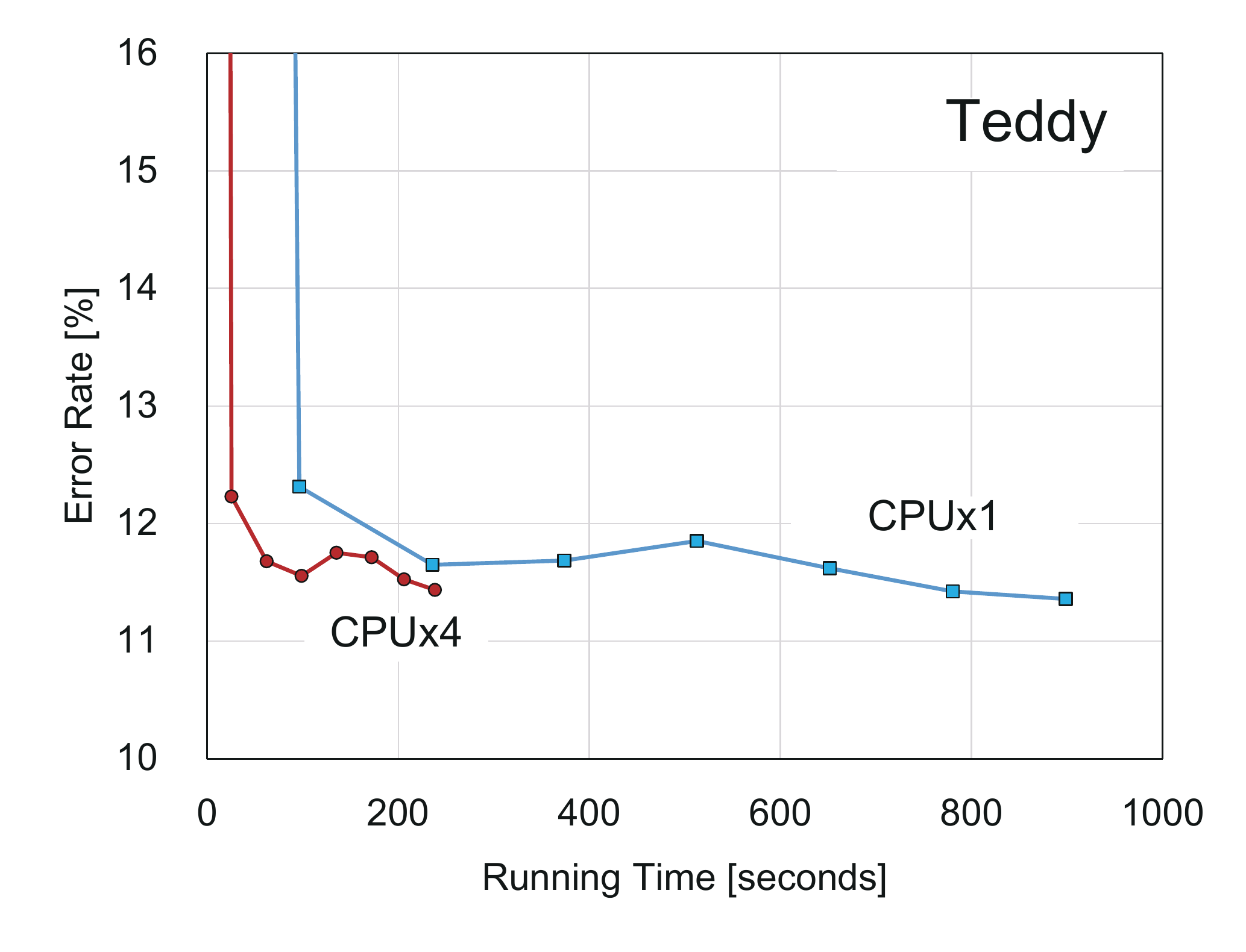}\hfil
		\includegraphics[width=0.3\textwidth]{./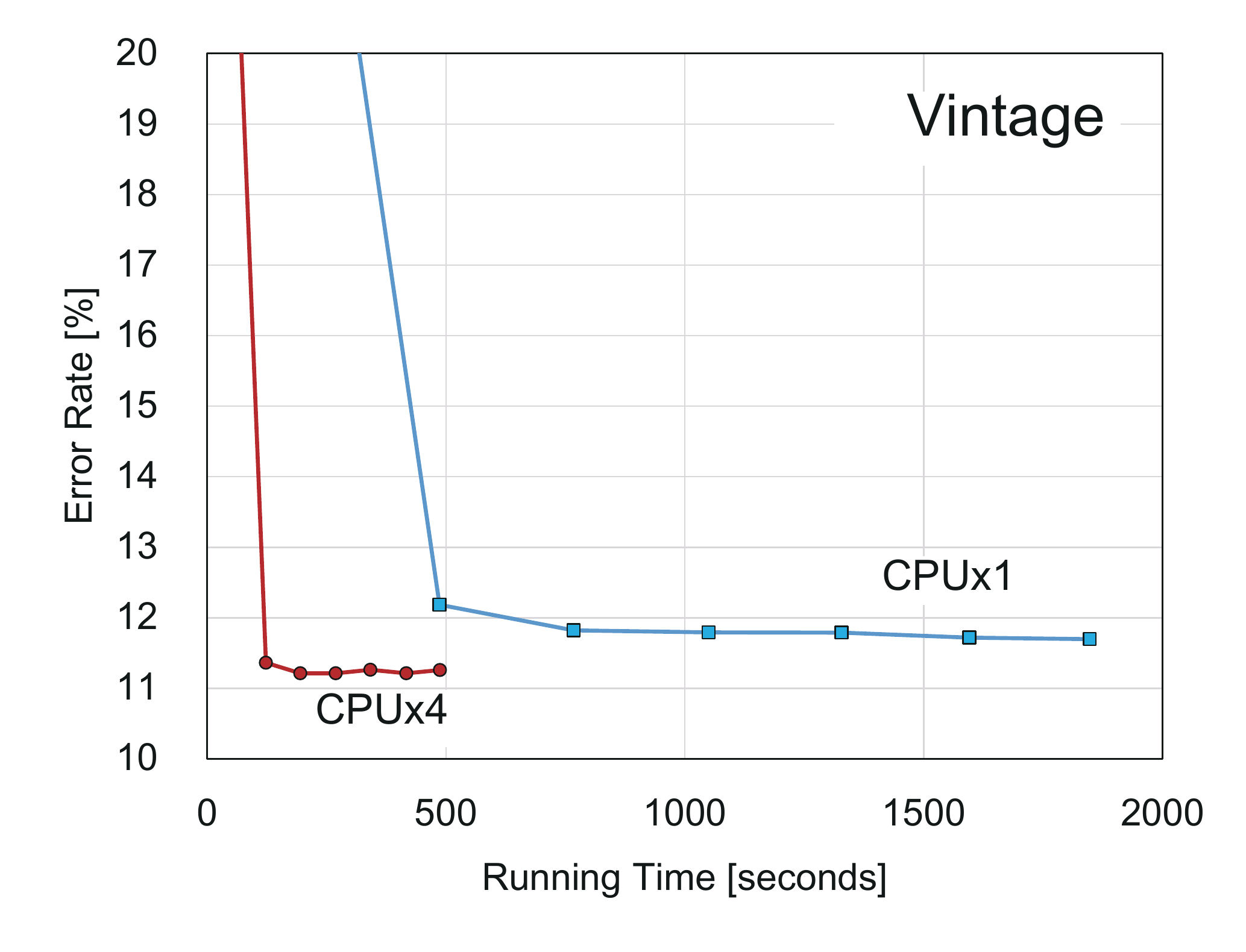}
	\end{center}
	\caption{Convergence comparison for parallelization on 15 image pairs from the Middlebury V3 training dataset. Error rates are evaluated by the \emph{bad 2.0} metric for all regions.
	By performing local expansion moves in parallel on four CPU cores, we observe about 3.8x of average speed-up.  Both methods optimize the same energy function used in Sec~4.2 without post-processing.
	}
	\label{figa:efficiency}
\end{figure}

\begin{figure}
	\begin{center}
		\includegraphics[width=0.3\textwidth]{./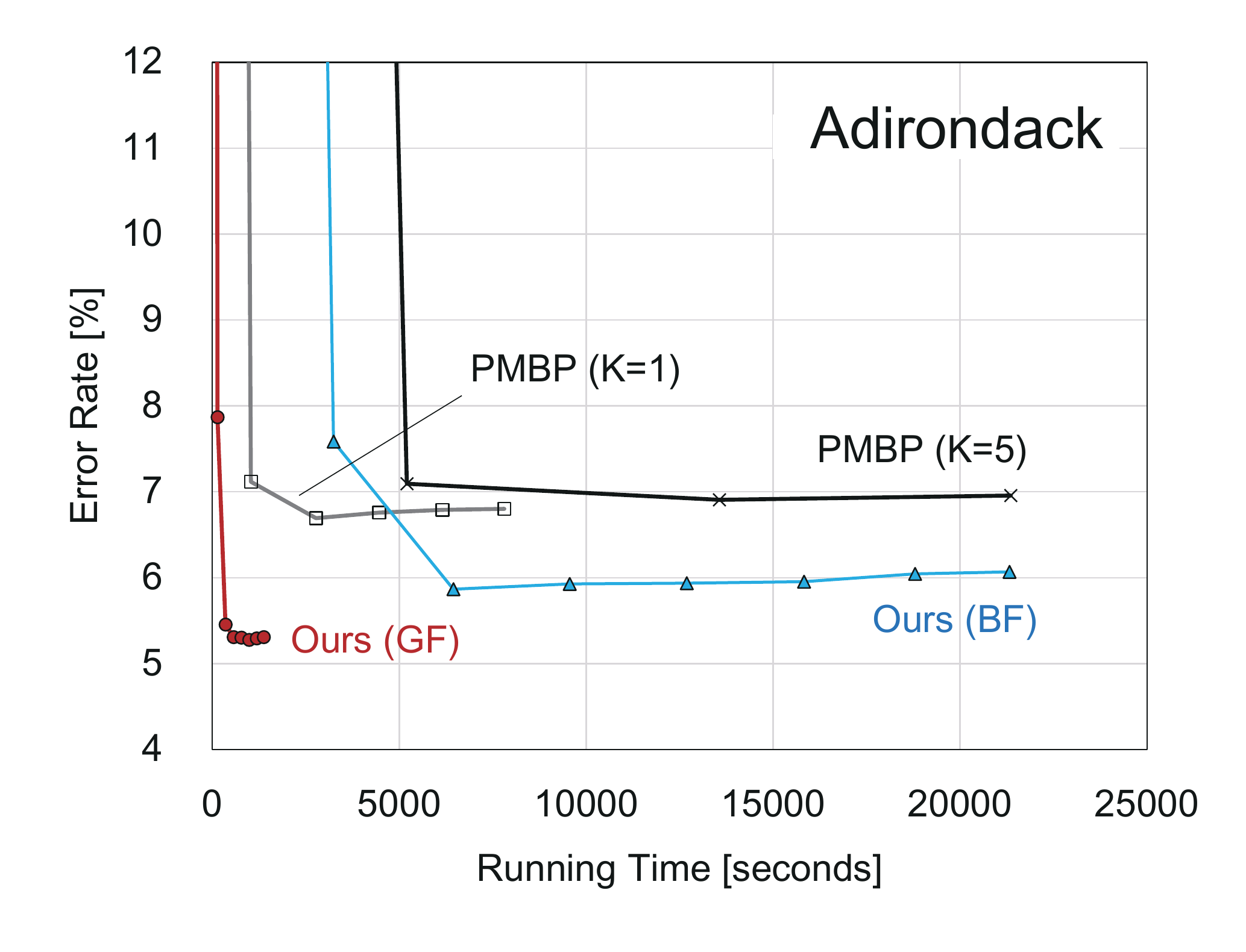}\hfil
		\includegraphics[width=0.3\textwidth]{./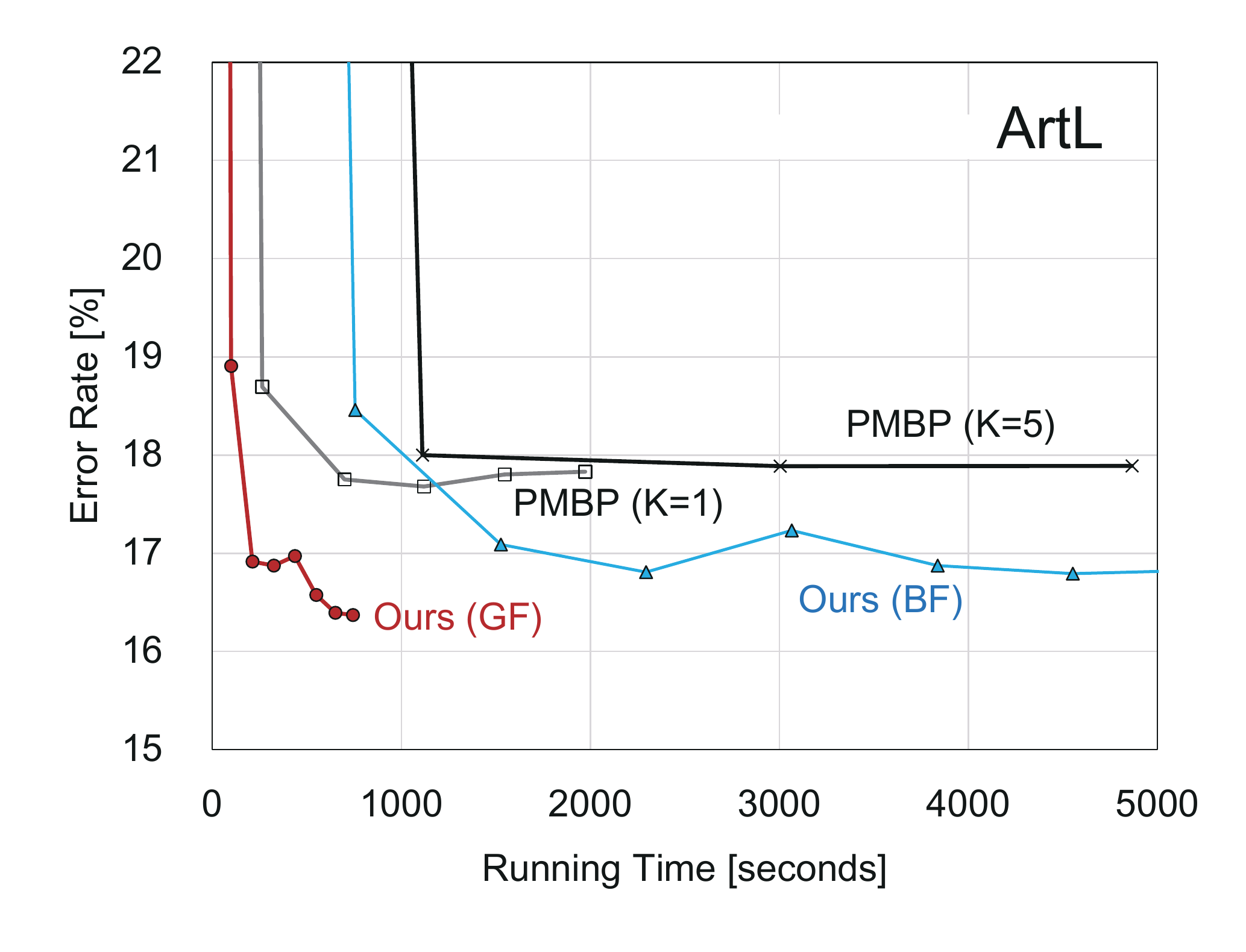}\hfil
		\includegraphics[width=0.3\textwidth]{./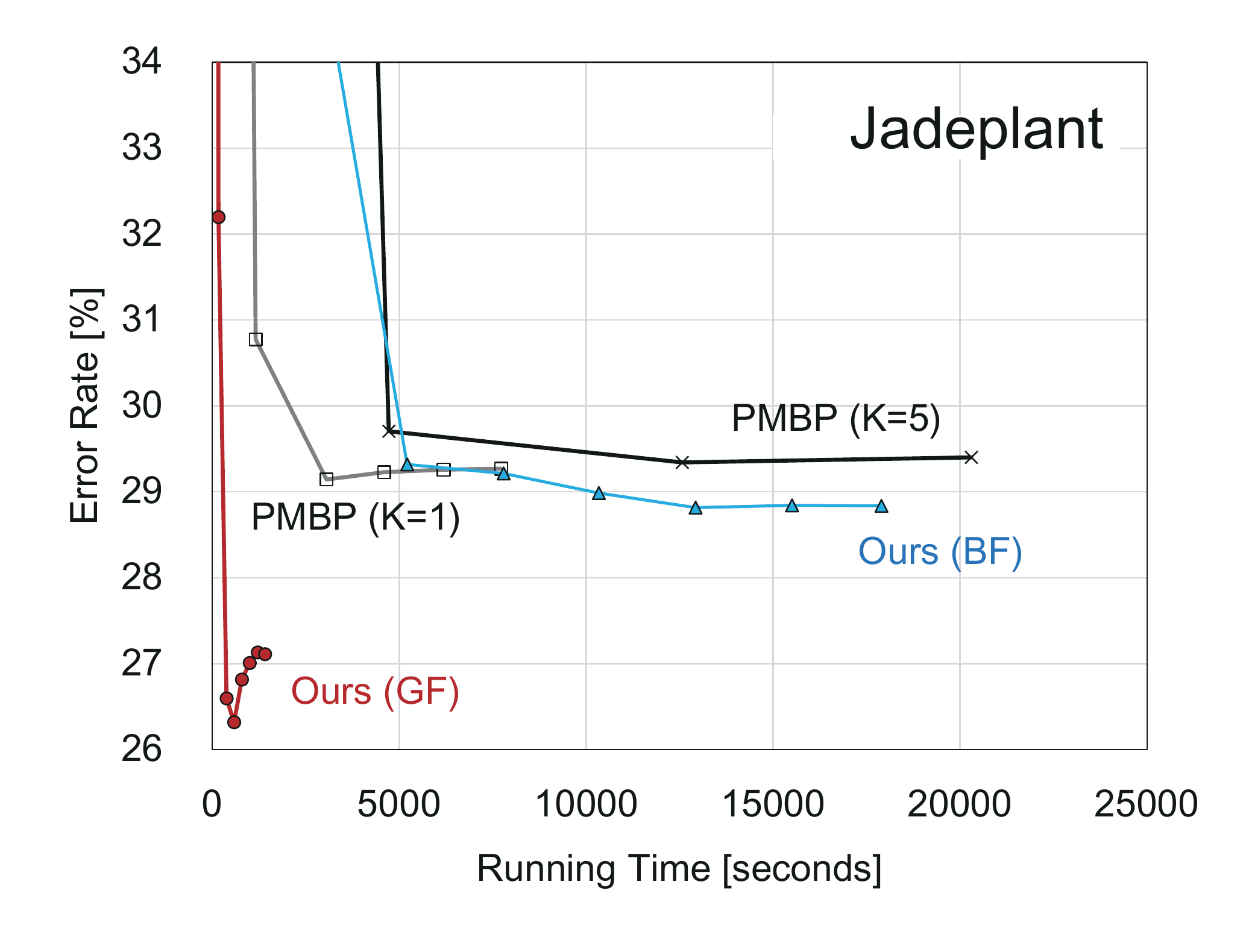}\\
		\includegraphics[width=0.3\textwidth]{./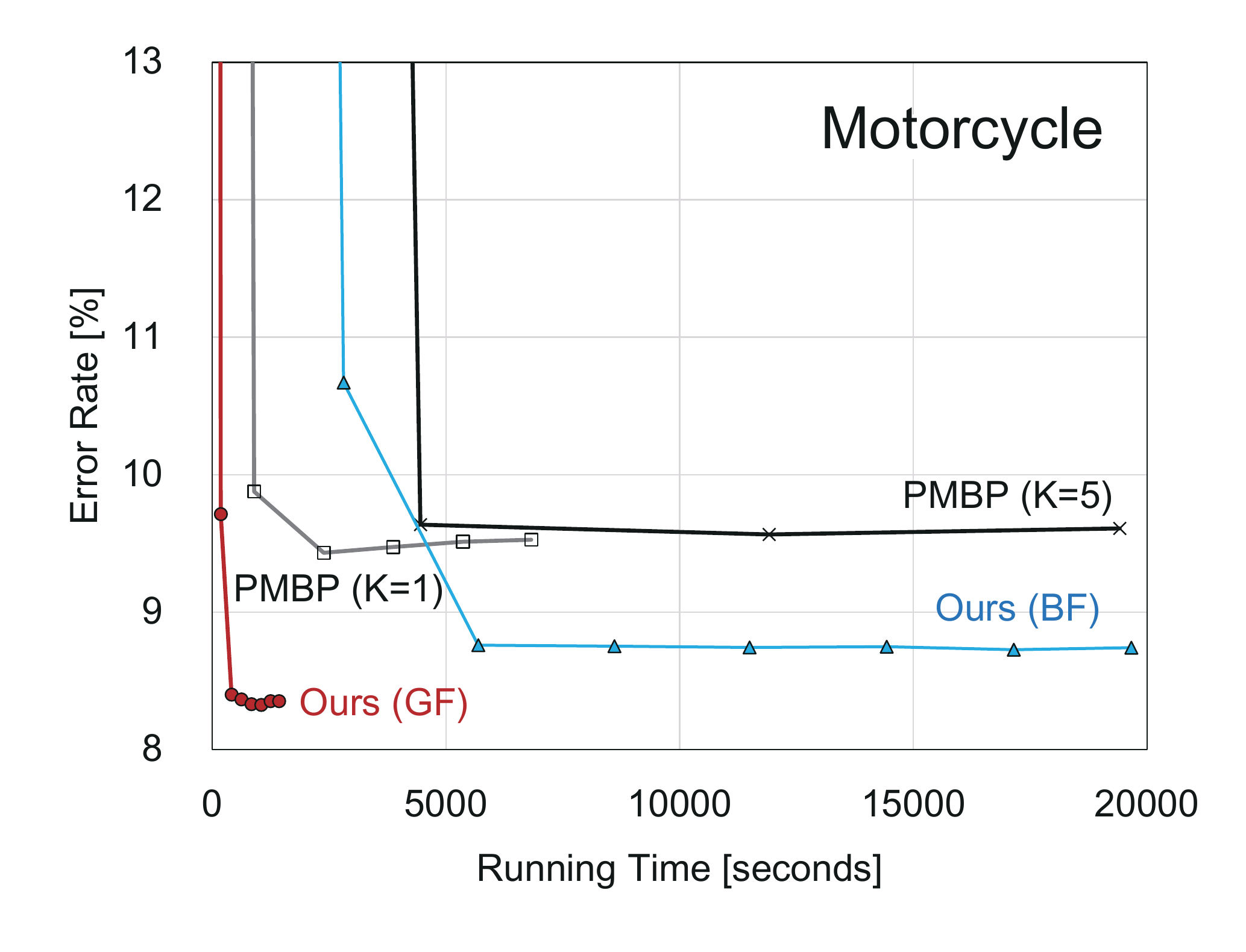}\hfil
		\includegraphics[width=0.3\textwidth]{./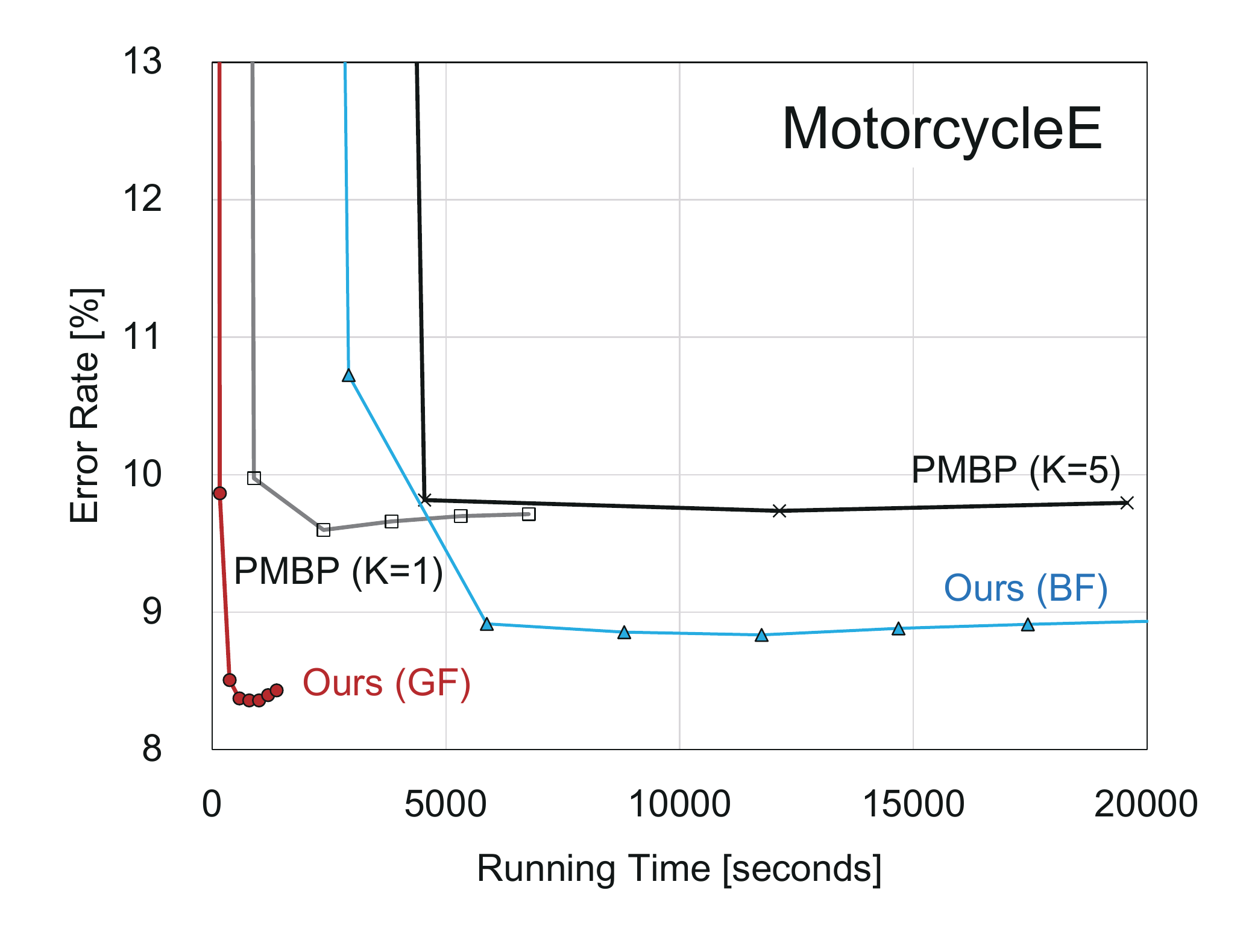}\hfil
		\includegraphics[width=0.3\textwidth]{./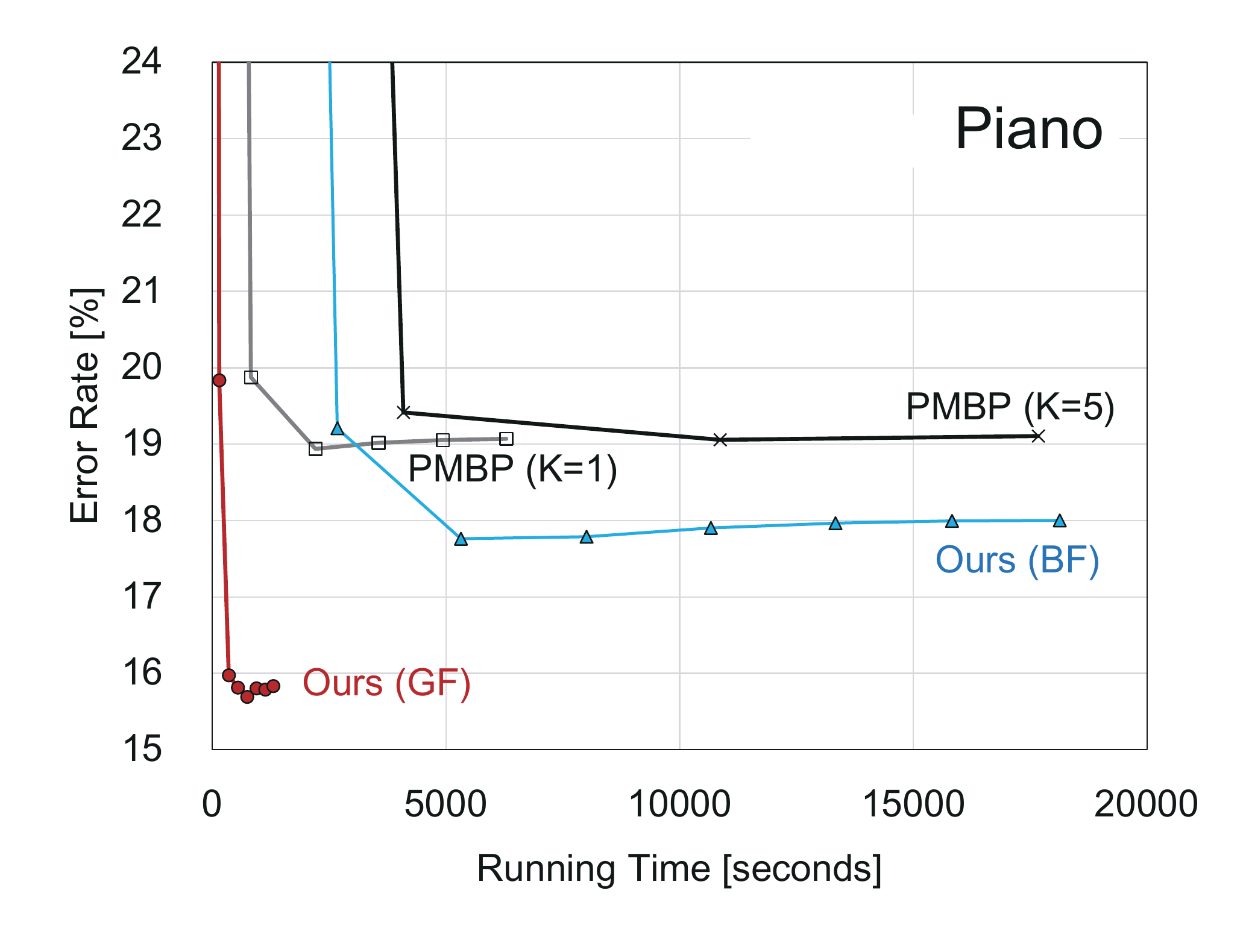}\\
		\includegraphics[width=0.3\textwidth]{./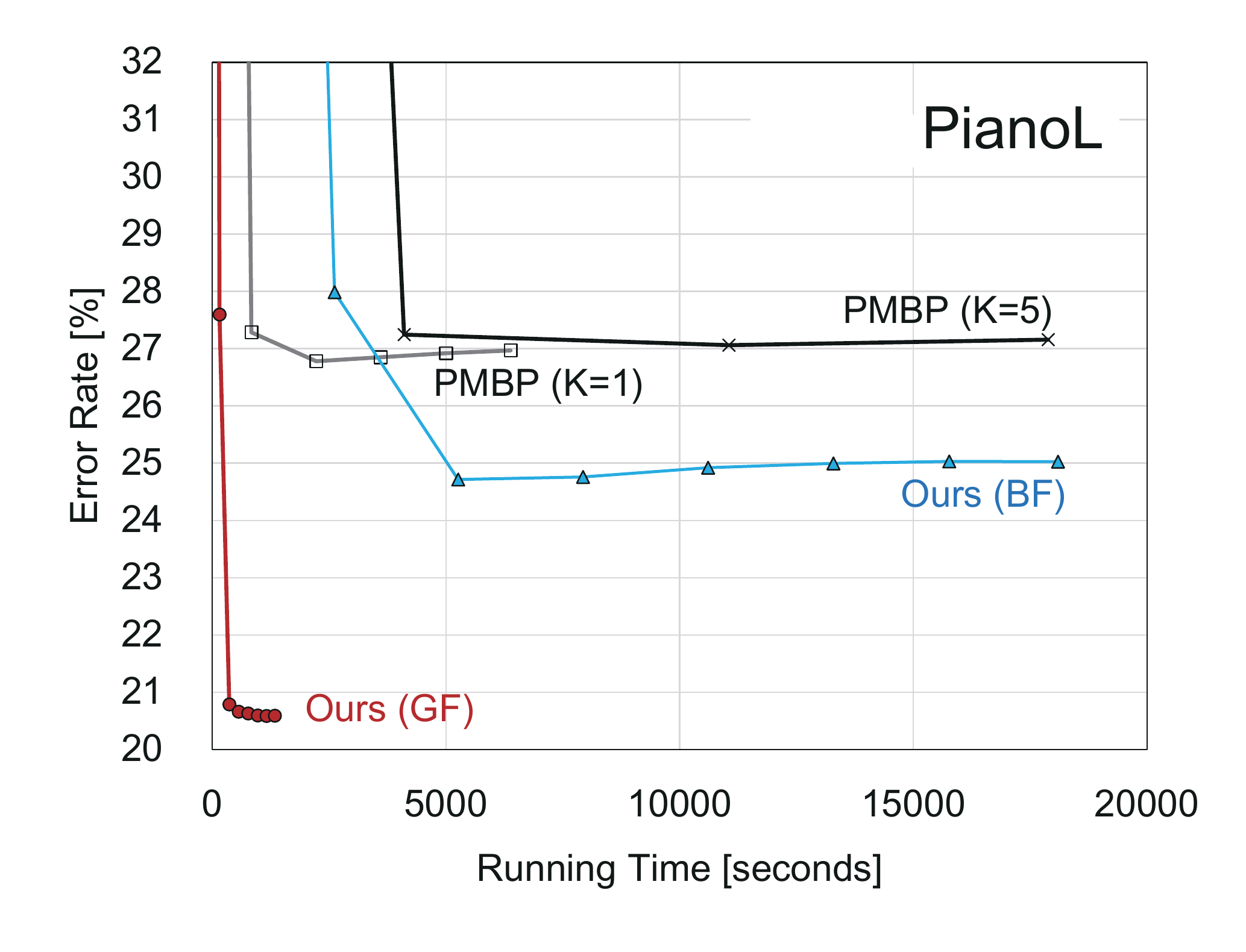}\hfil
		\includegraphics[width=0.3\textwidth]{./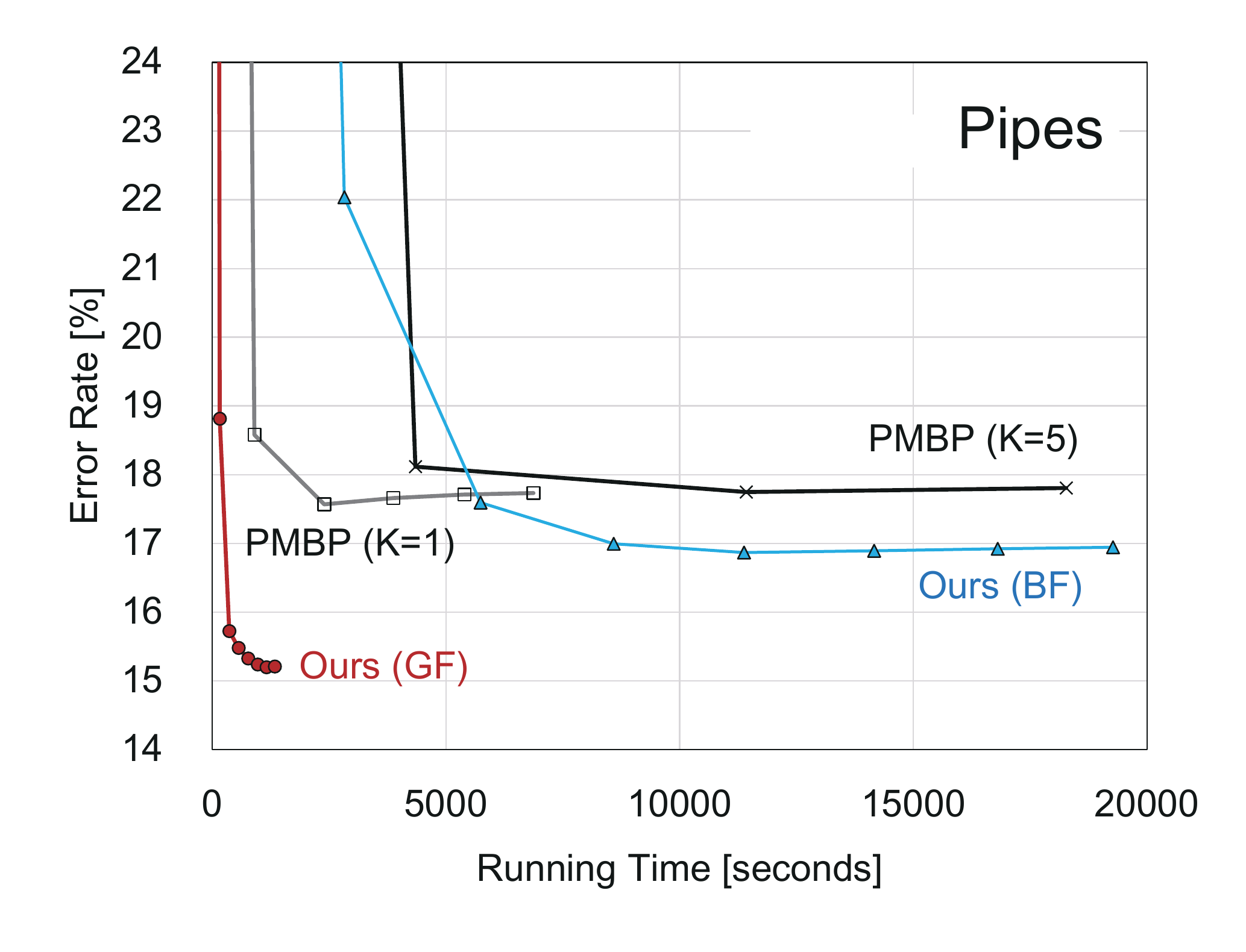}\hfil
		\includegraphics[width=0.3\textwidth]{./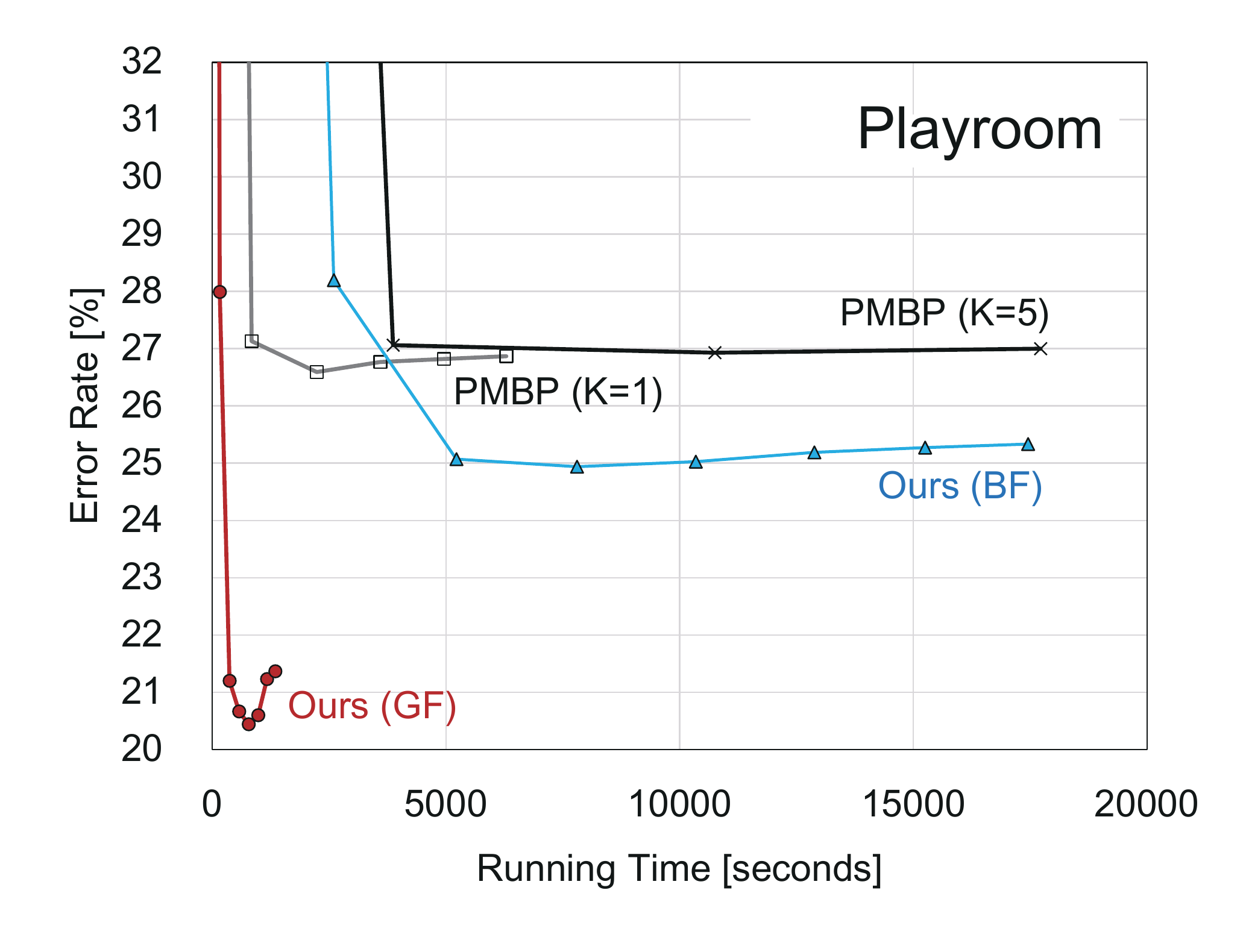}\\
		\includegraphics[width=0.3\textwidth]{./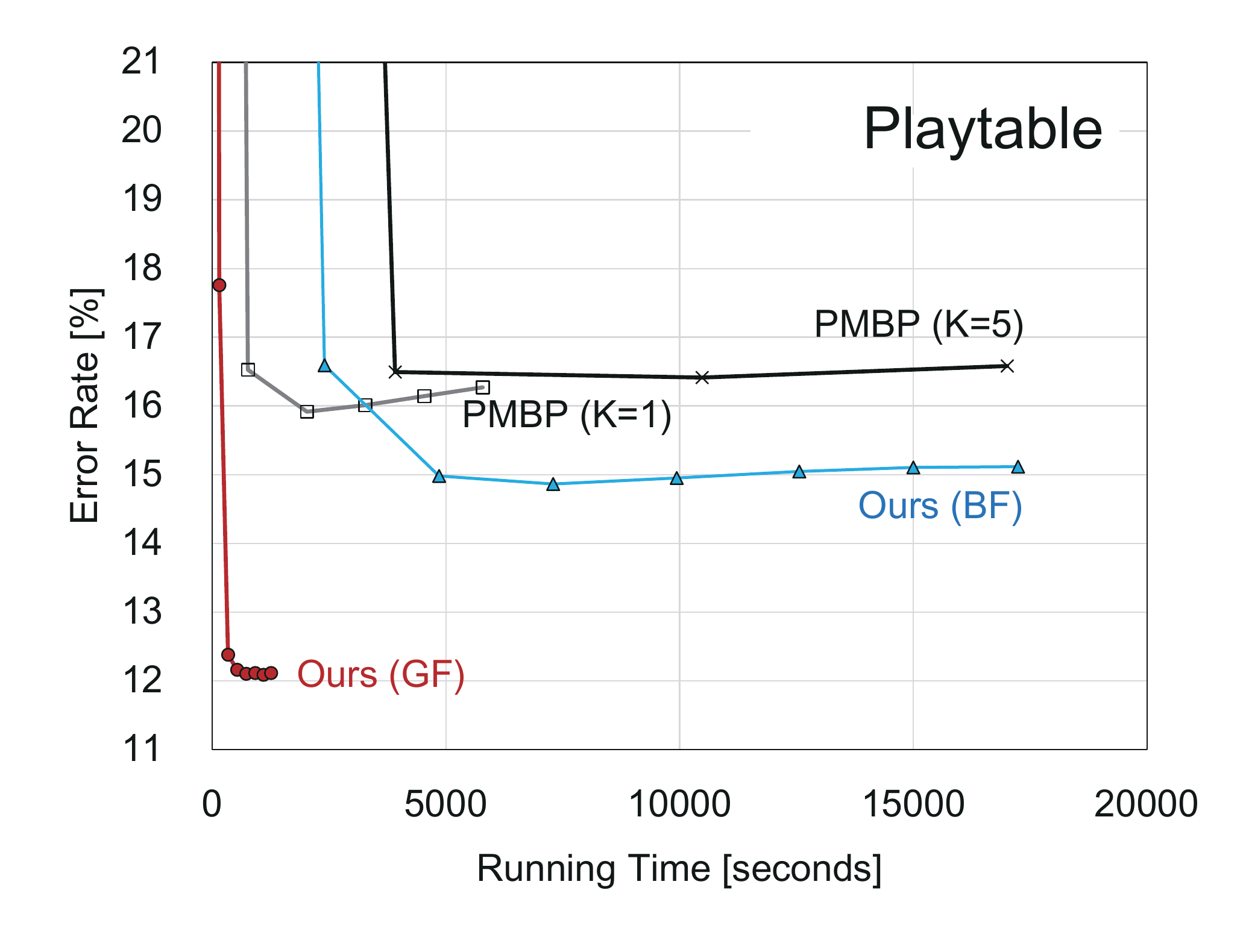}\hfil
		\includegraphics[width=0.3\textwidth]{./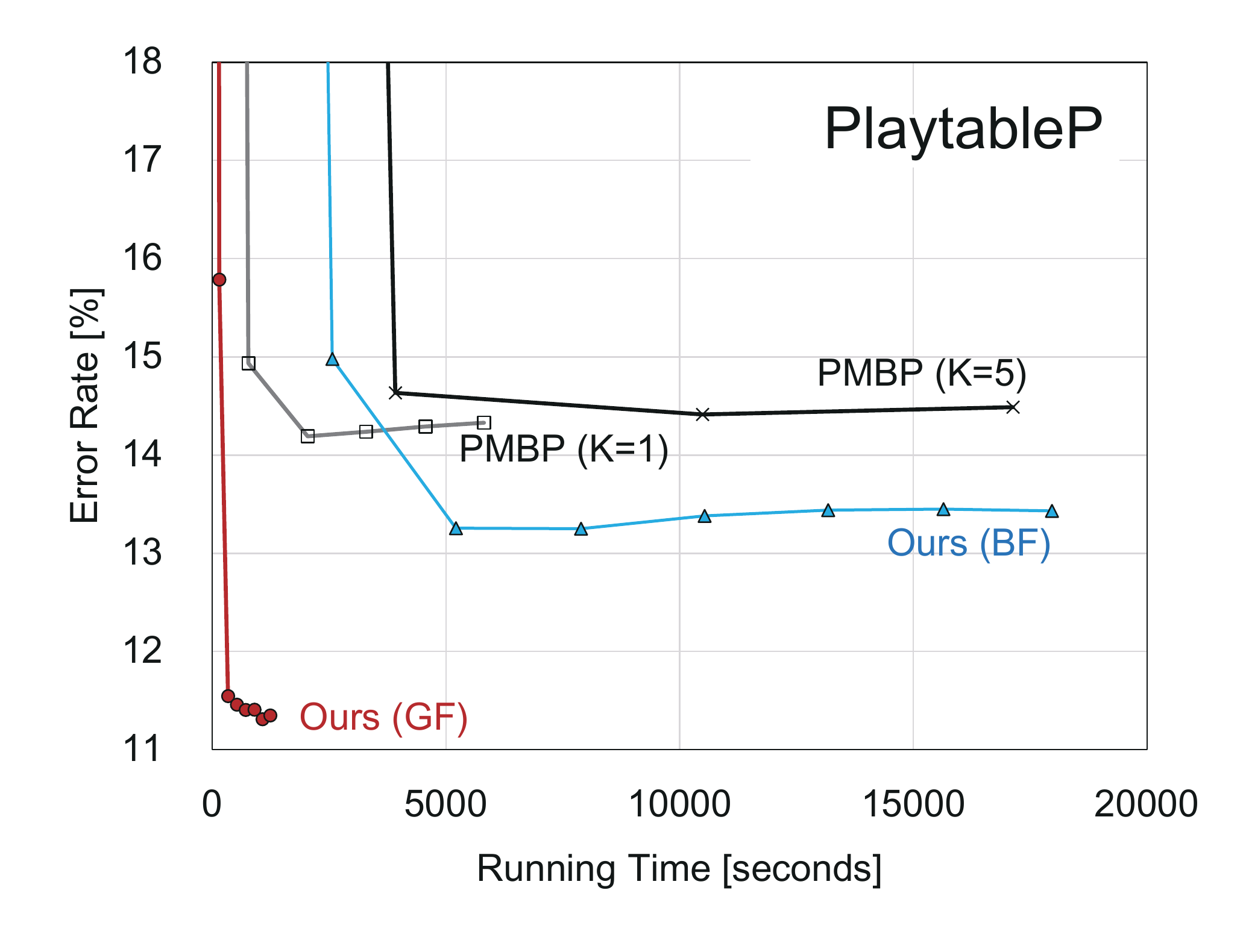}\hfil
		\includegraphics[width=0.3\textwidth]{./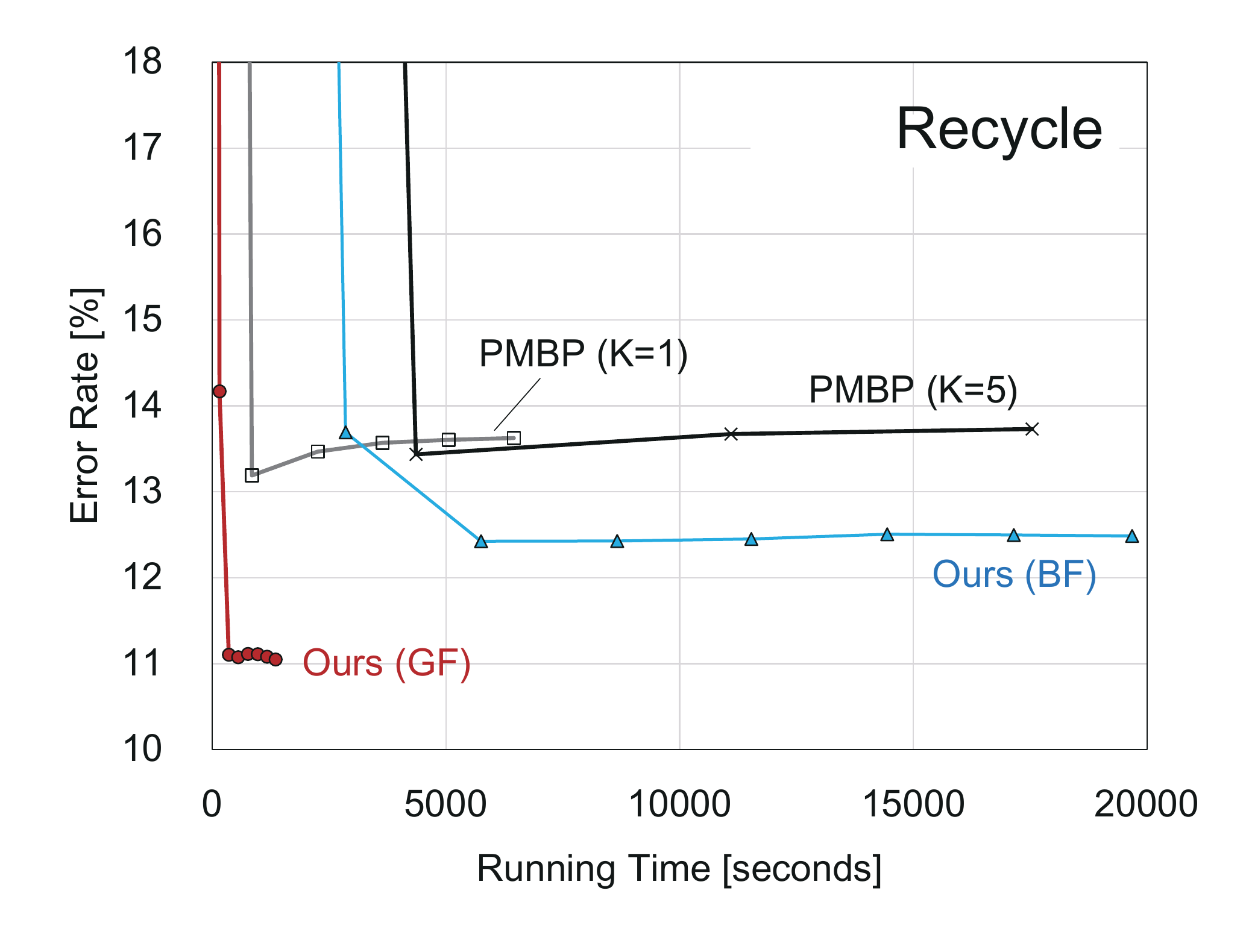}\\
		\includegraphics[width=0.3\textwidth]{./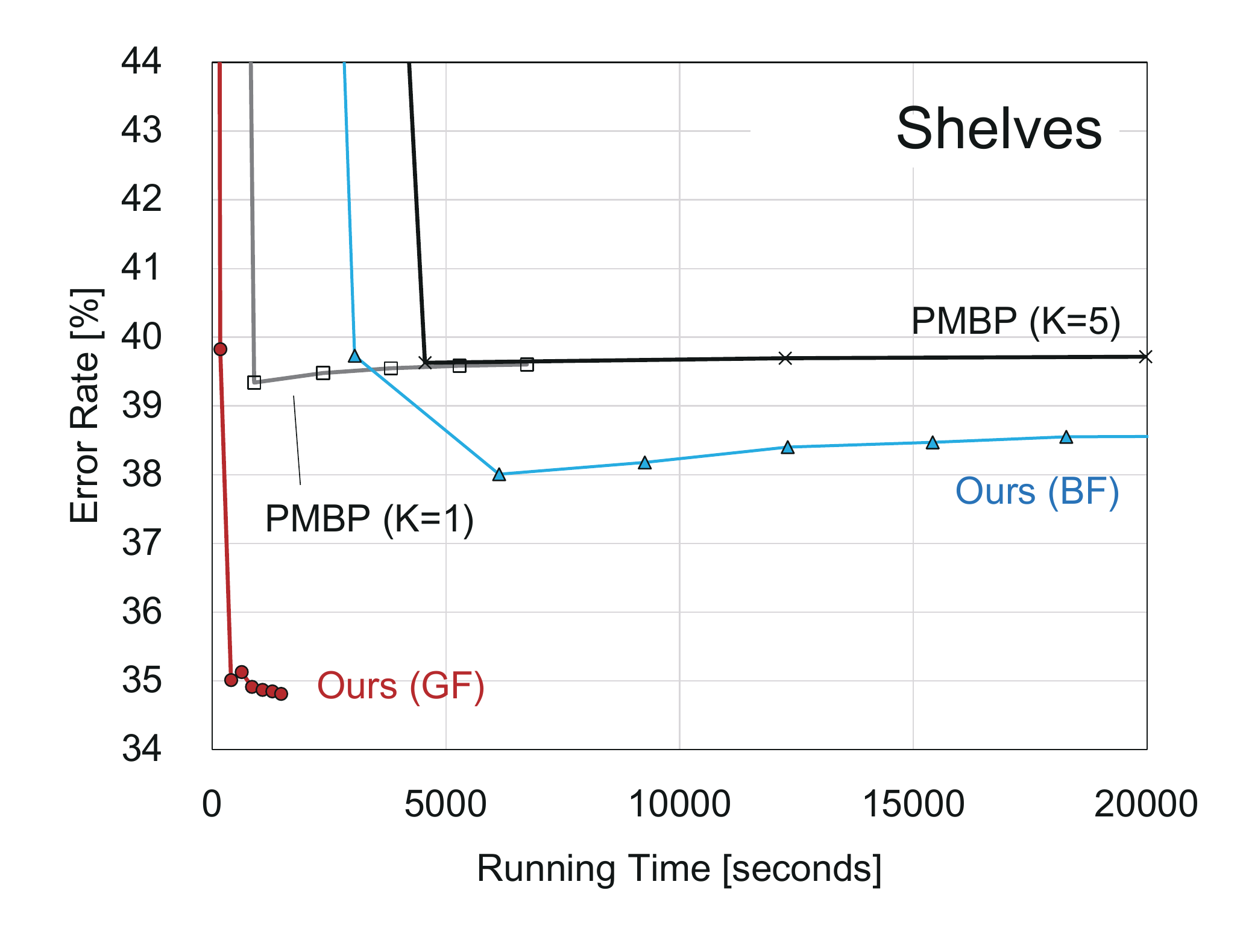}\hfil
		\includegraphics[width=0.3\textwidth]{./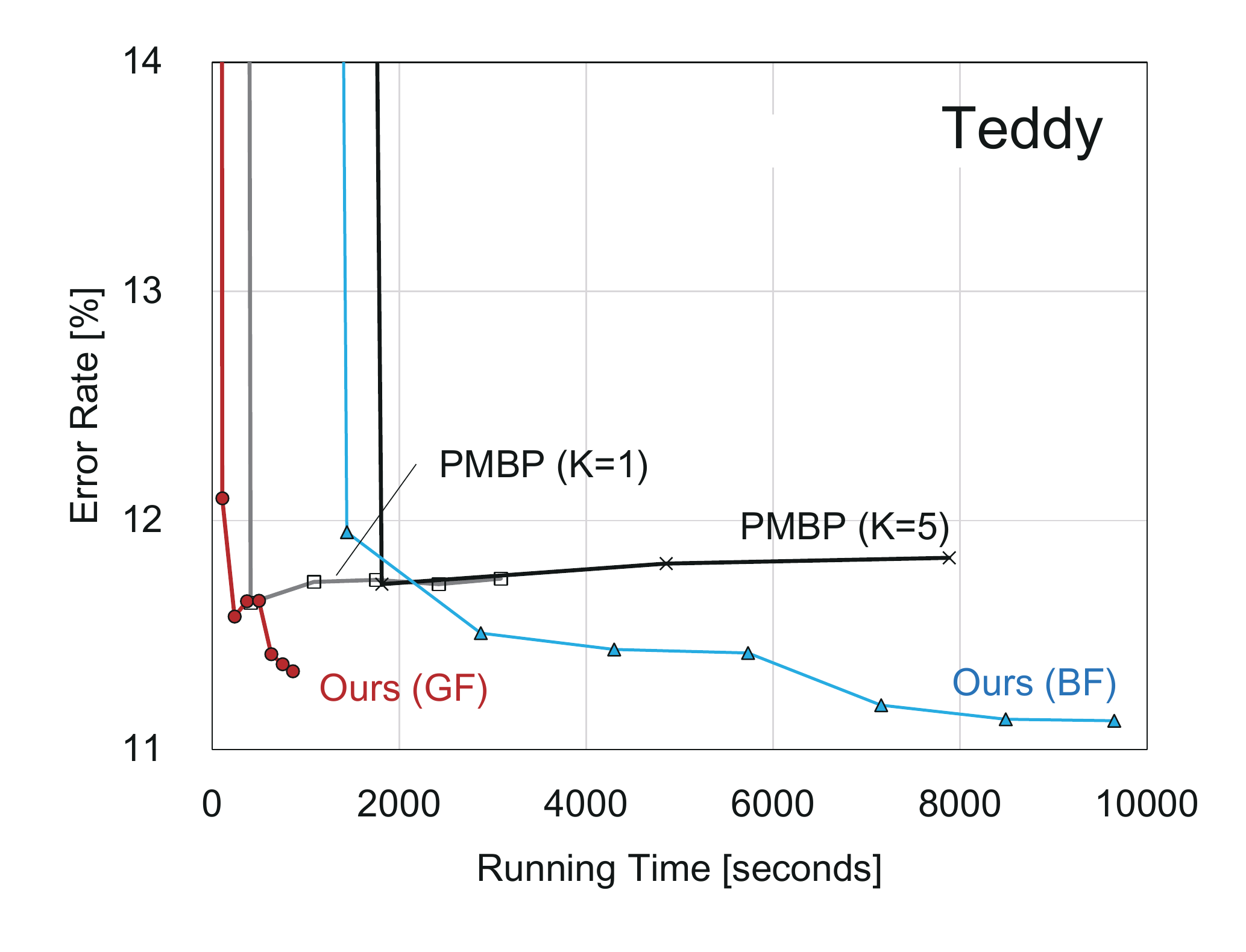}\hfil
		\includegraphics[width=0.3\textwidth]{./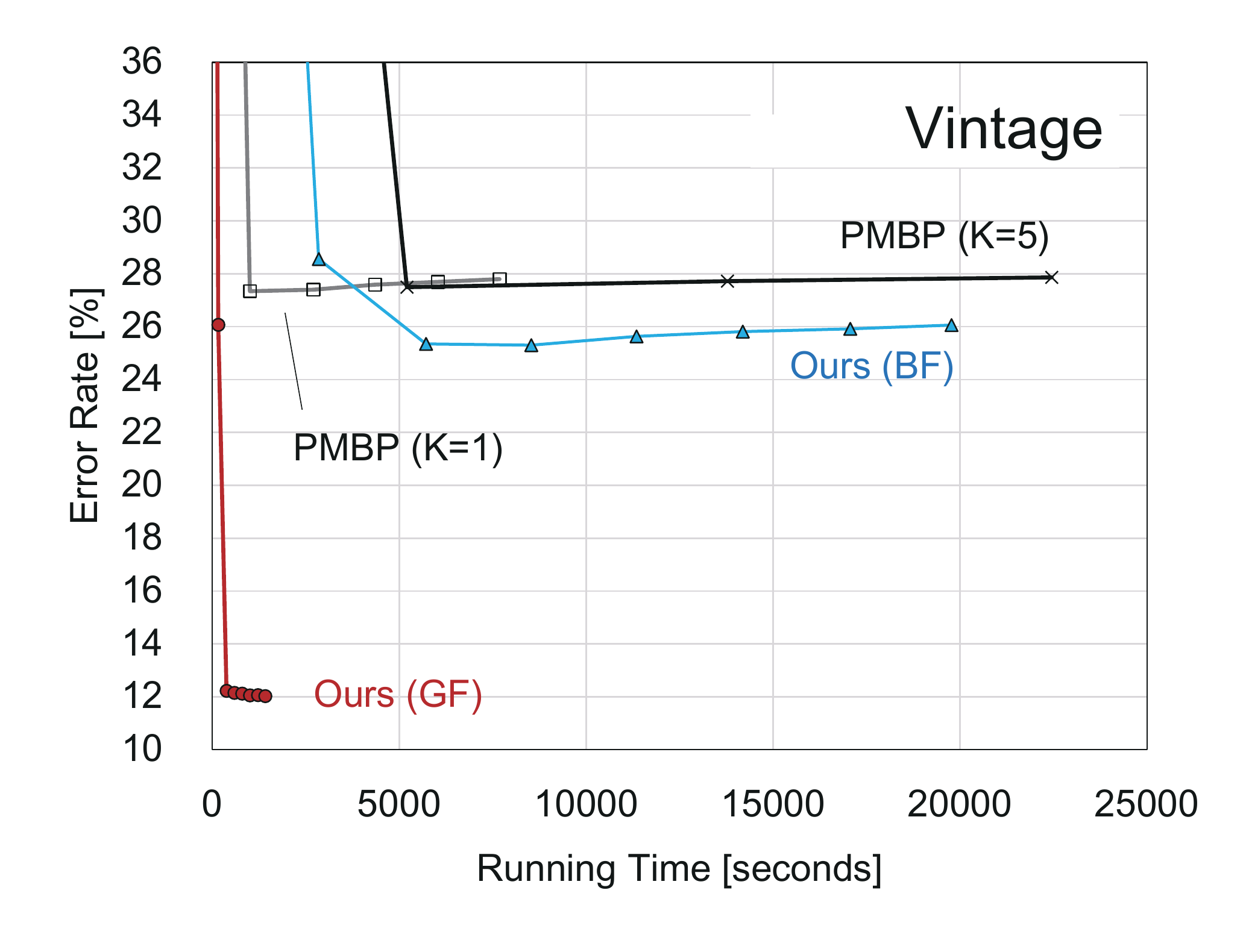}
	\end{center}
	\caption{Convergence comparison with PMBP~\cite{Besse12} on 15 image pairs from the Middlebury V3 training dataset. Error rates are evaluated by the \emph{bad 2.0} metric for all regions. Our methods both using guided image filter (GF) and bilateral filter (BF) consistently outperform PMBP in accuracy at convergence.
	For most of the image pairs, our method using GF performs best, showing much faster convergence than the others.
	Note that both PMBP and ours using BF optimize the same energy function, but ours using GF optimizes a different function. All methods are run on a single CPU core without post-processing.
	Our method can be further accelerated by parallelization as demonstrated in Fig.~\ref{figa:efficiency}.}
	\label{figa:pmbp}
\end{figure}

\begin{figure}
	\begin{center}
		\includegraphics[width=0.3\textwidth]{./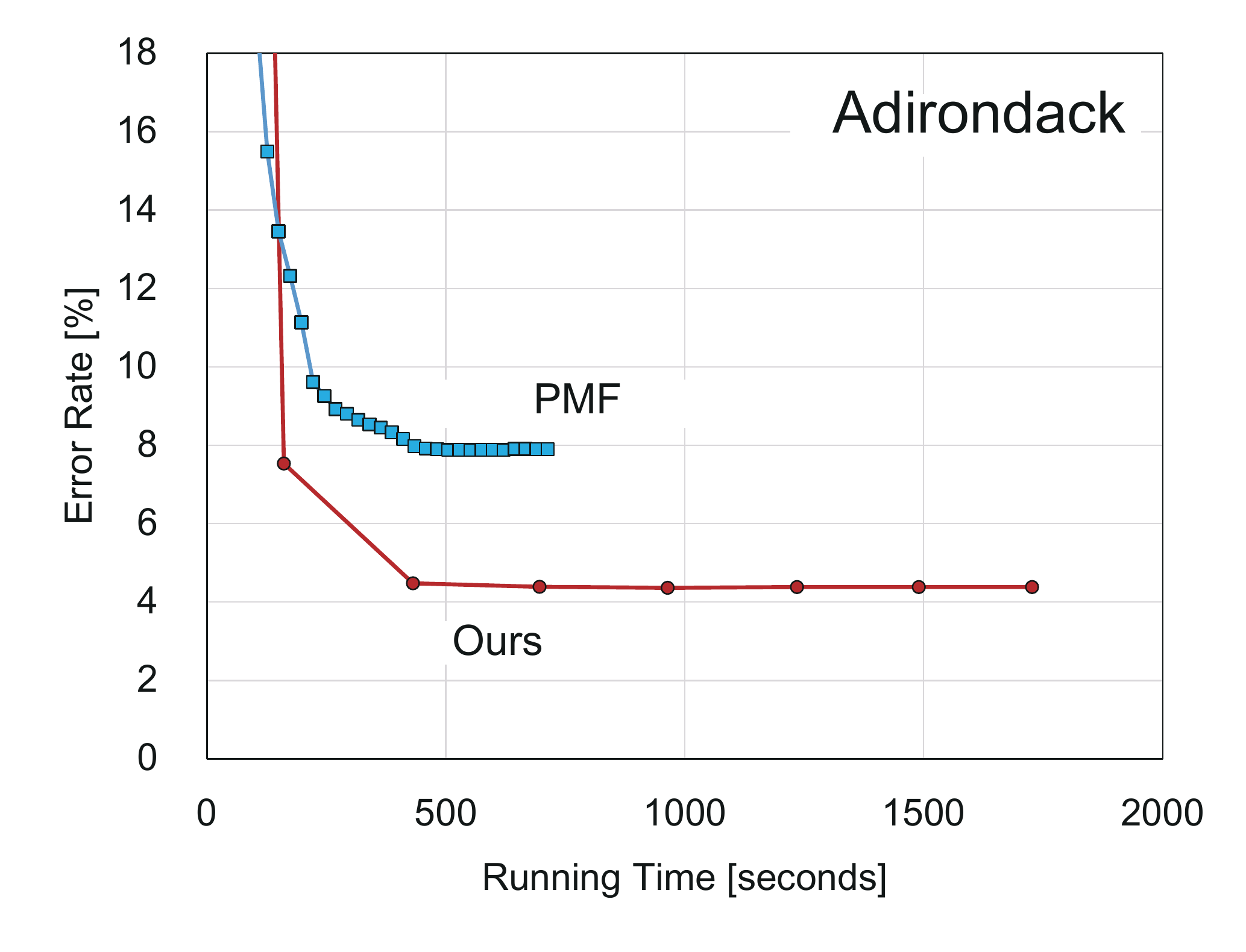}\hfil
		\includegraphics[width=0.3\textwidth]{./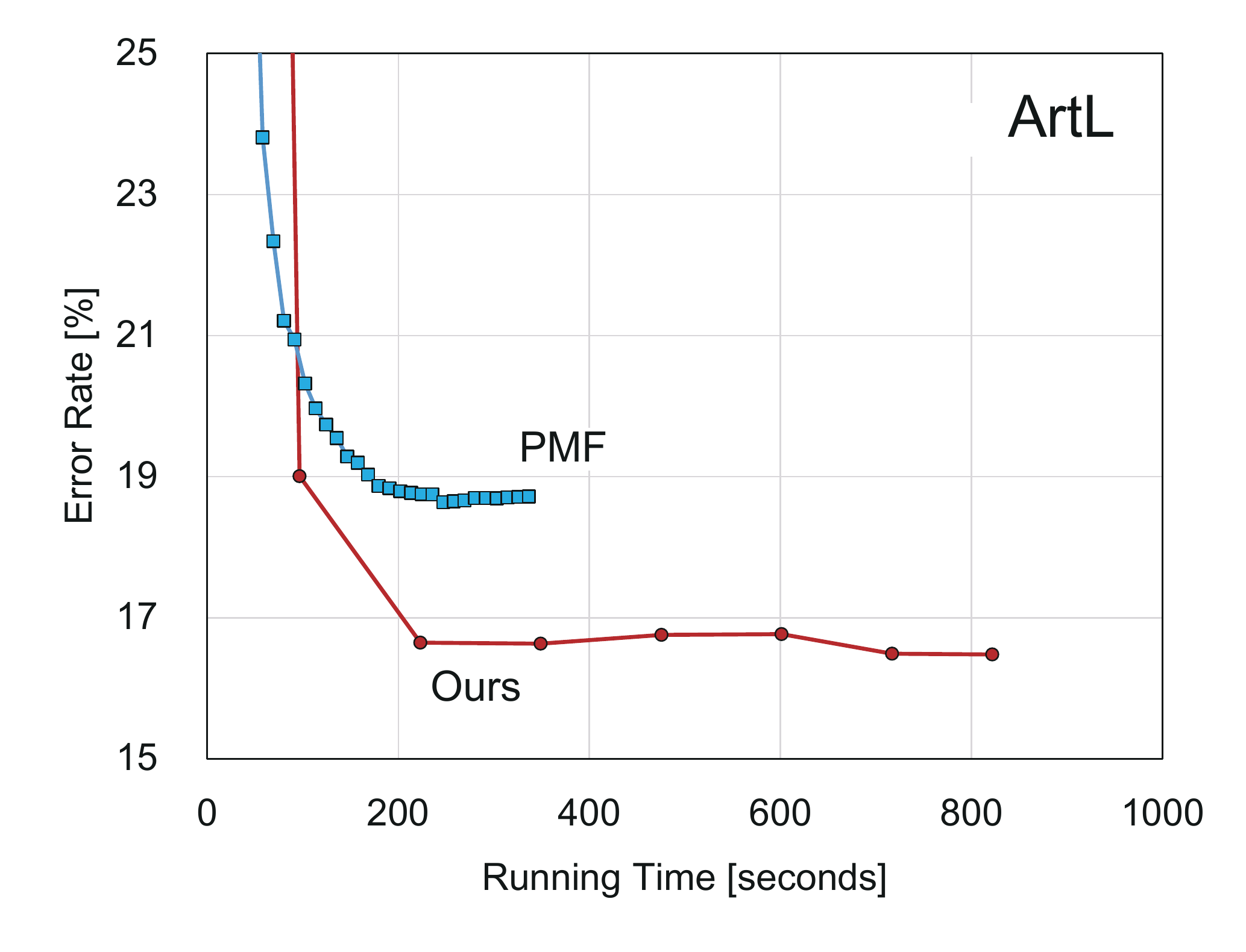}\hfil
		\includegraphics[width=0.3\textwidth]{./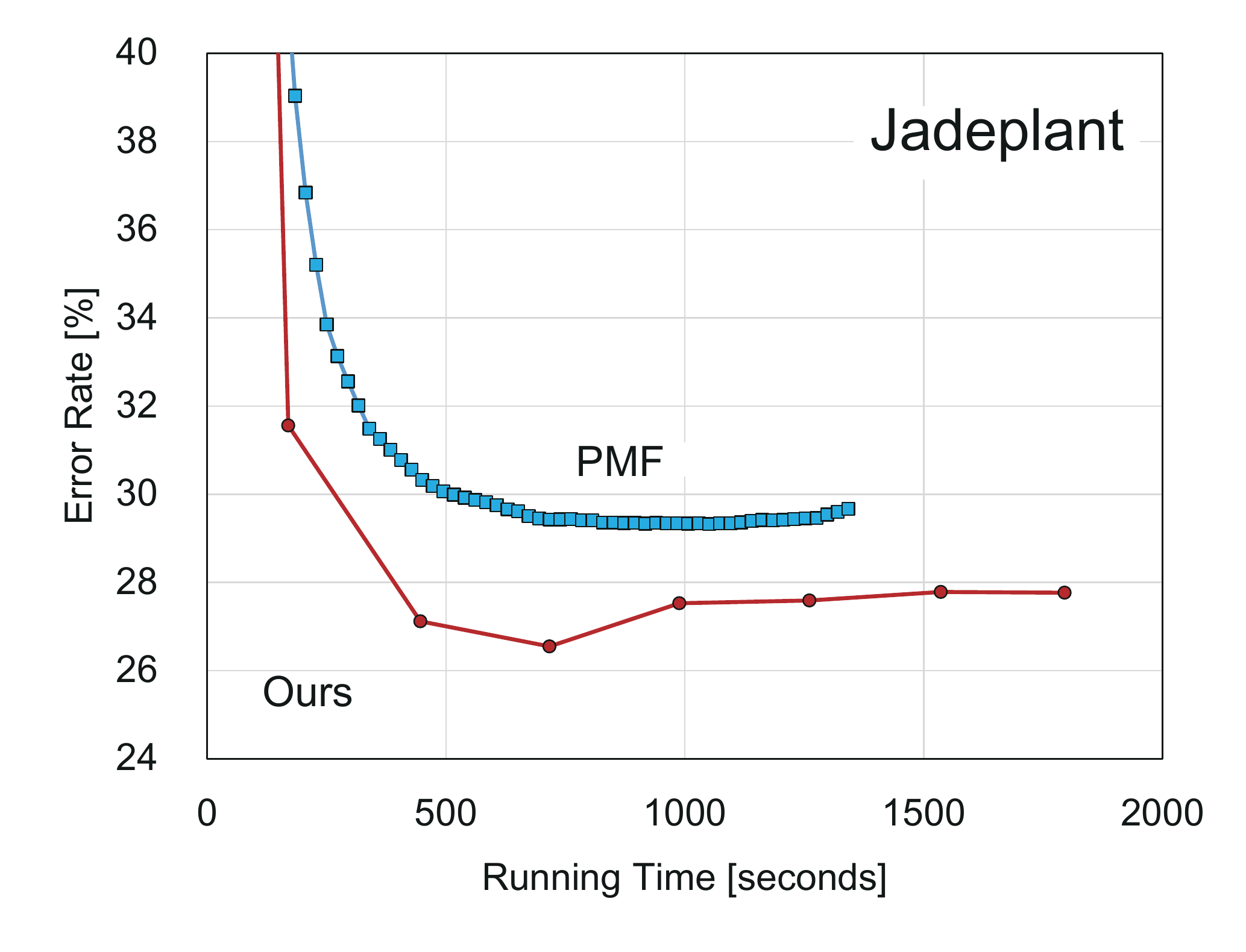}\\
		\includegraphics[width=0.3\textwidth]{./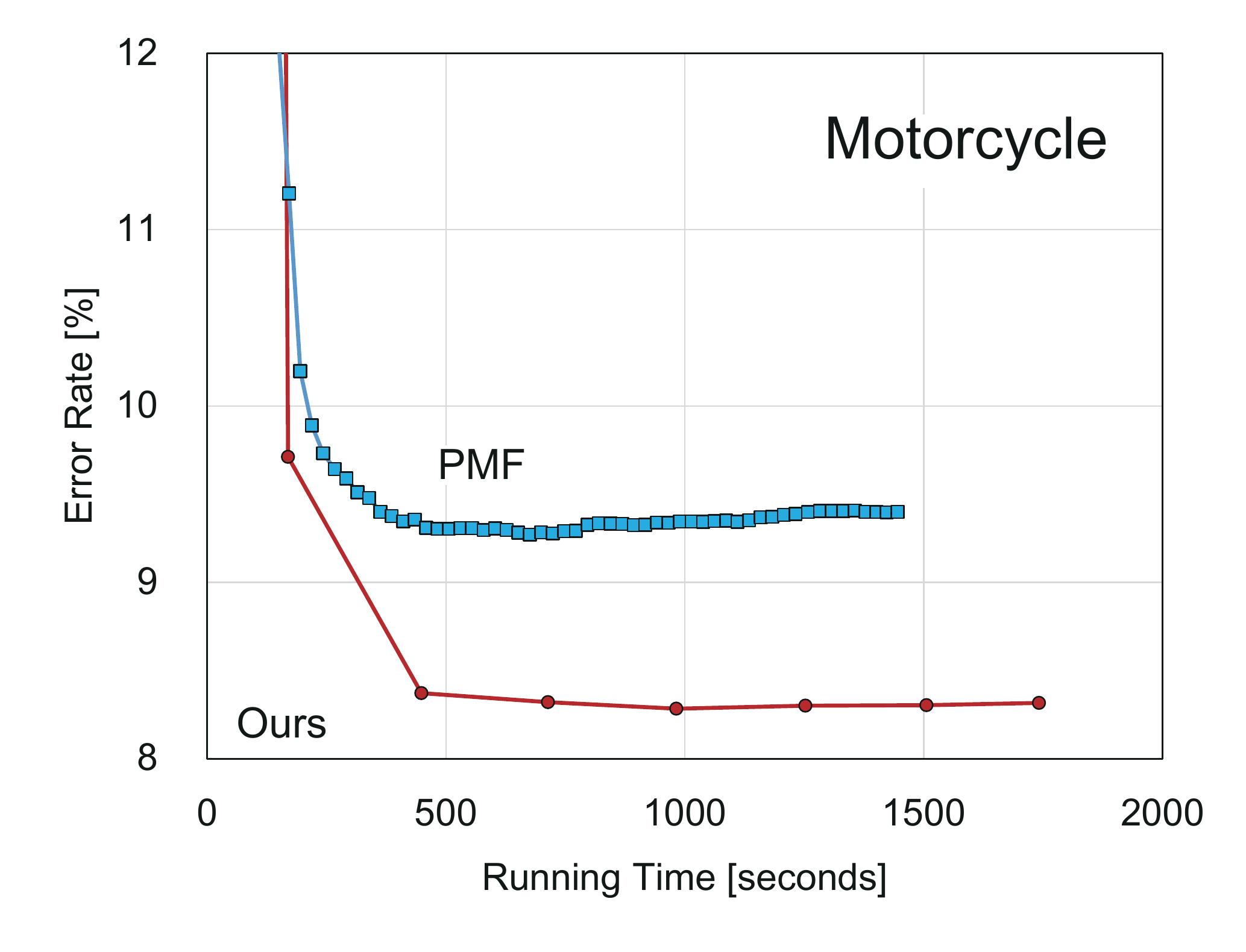}\hfil
		\includegraphics[width=0.3\textwidth]{./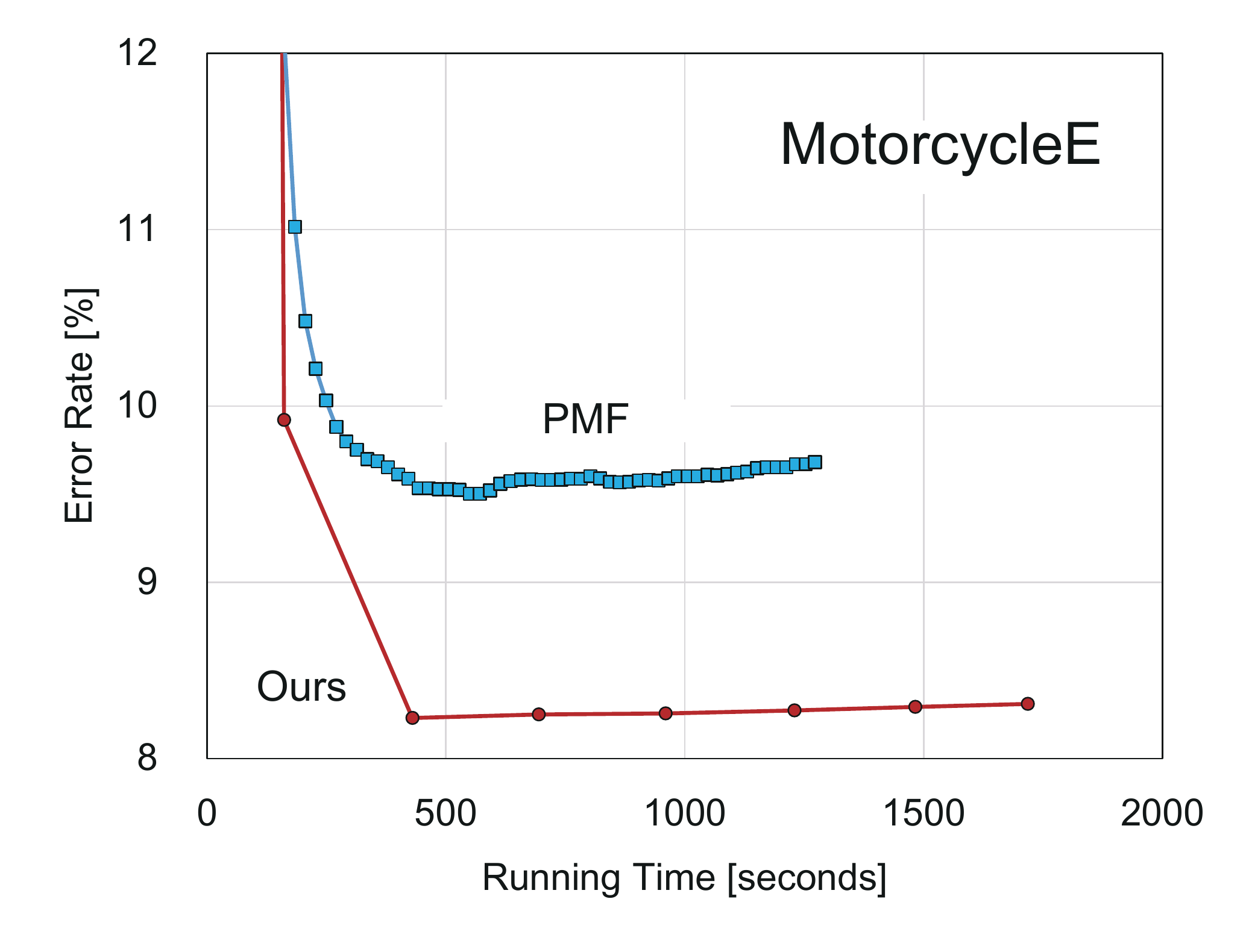}\hfil
		\includegraphics[width=0.3\textwidth]{./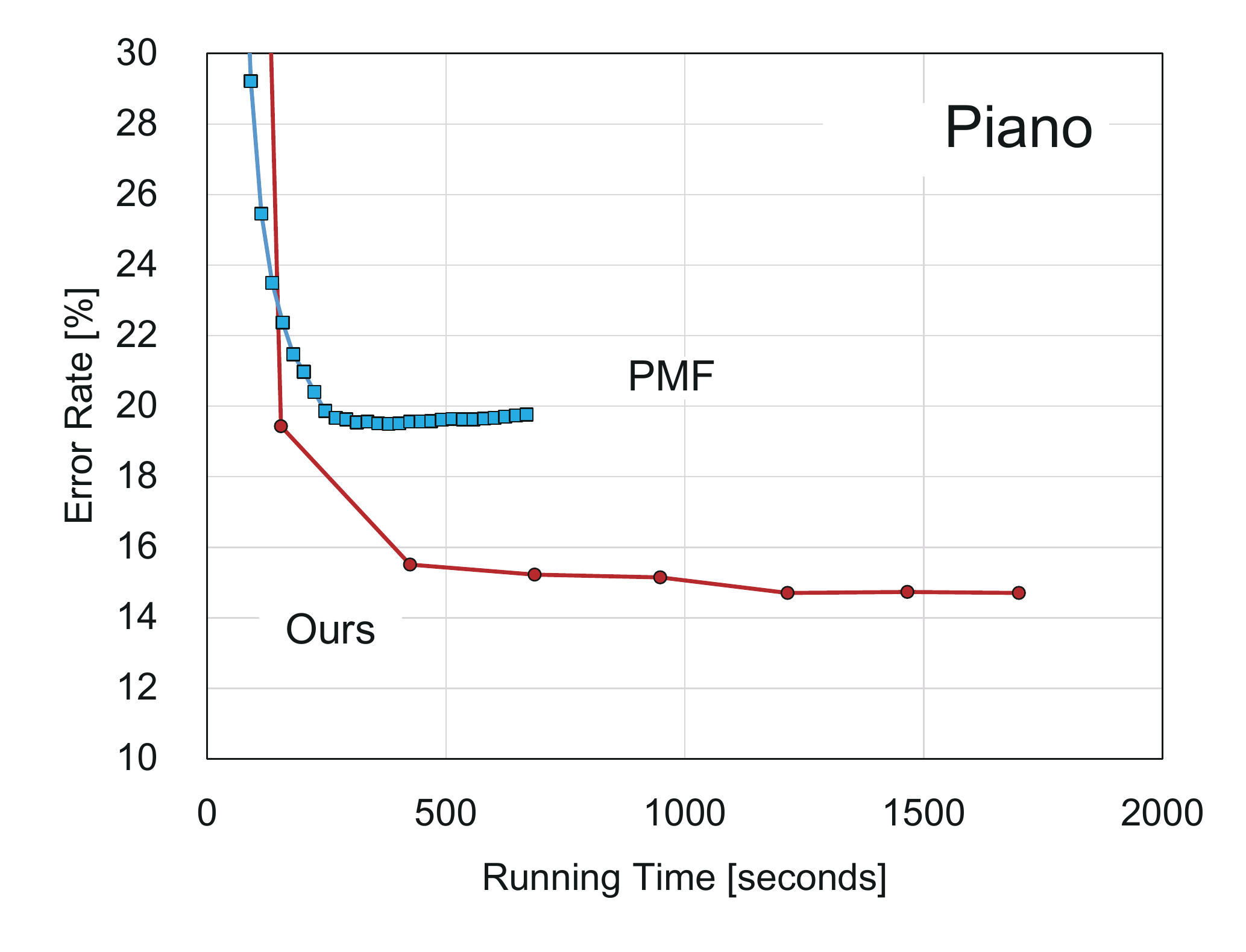}\\
		\includegraphics[width=0.3\textwidth]{./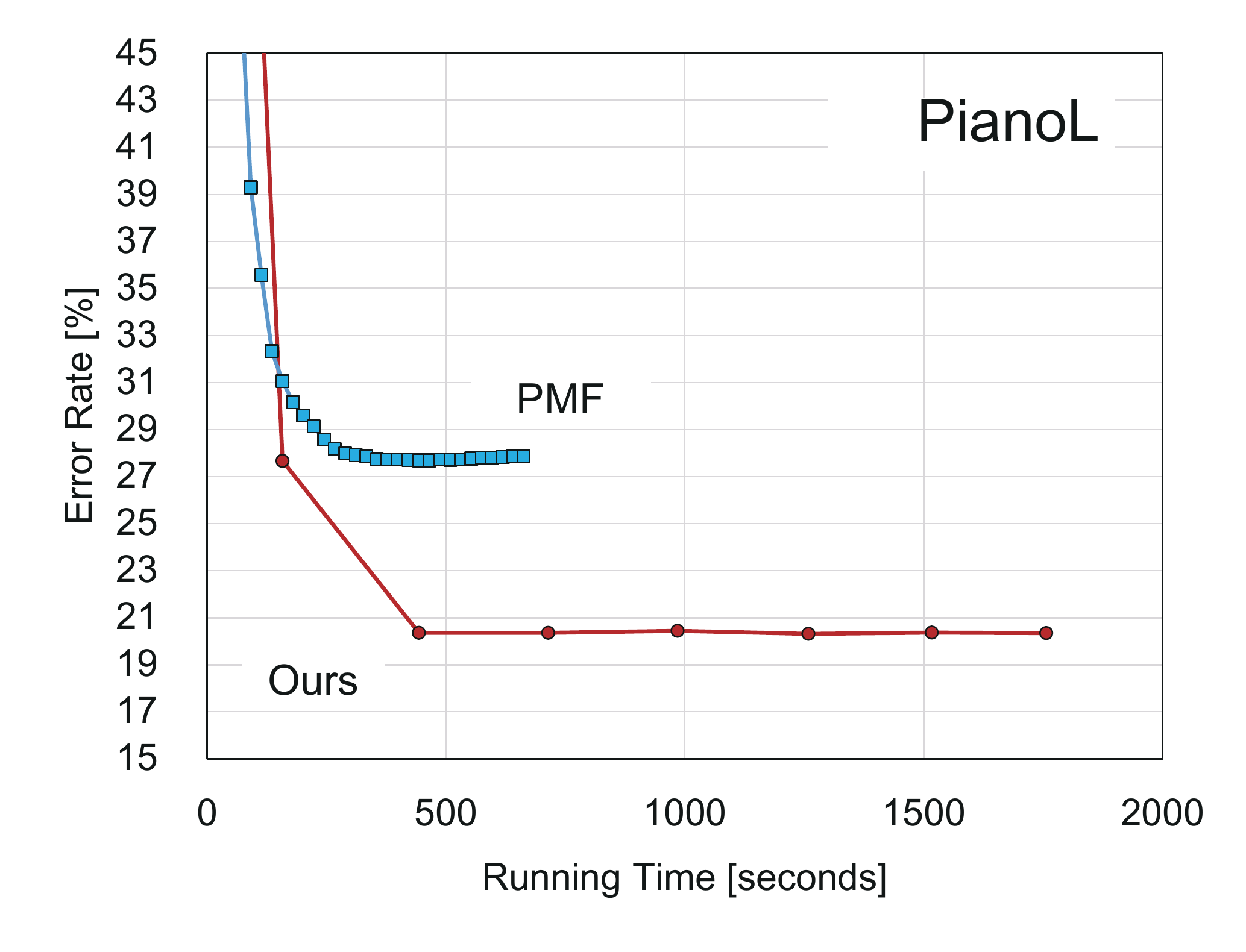}\hfil
		\includegraphics[width=0.3\textwidth]{./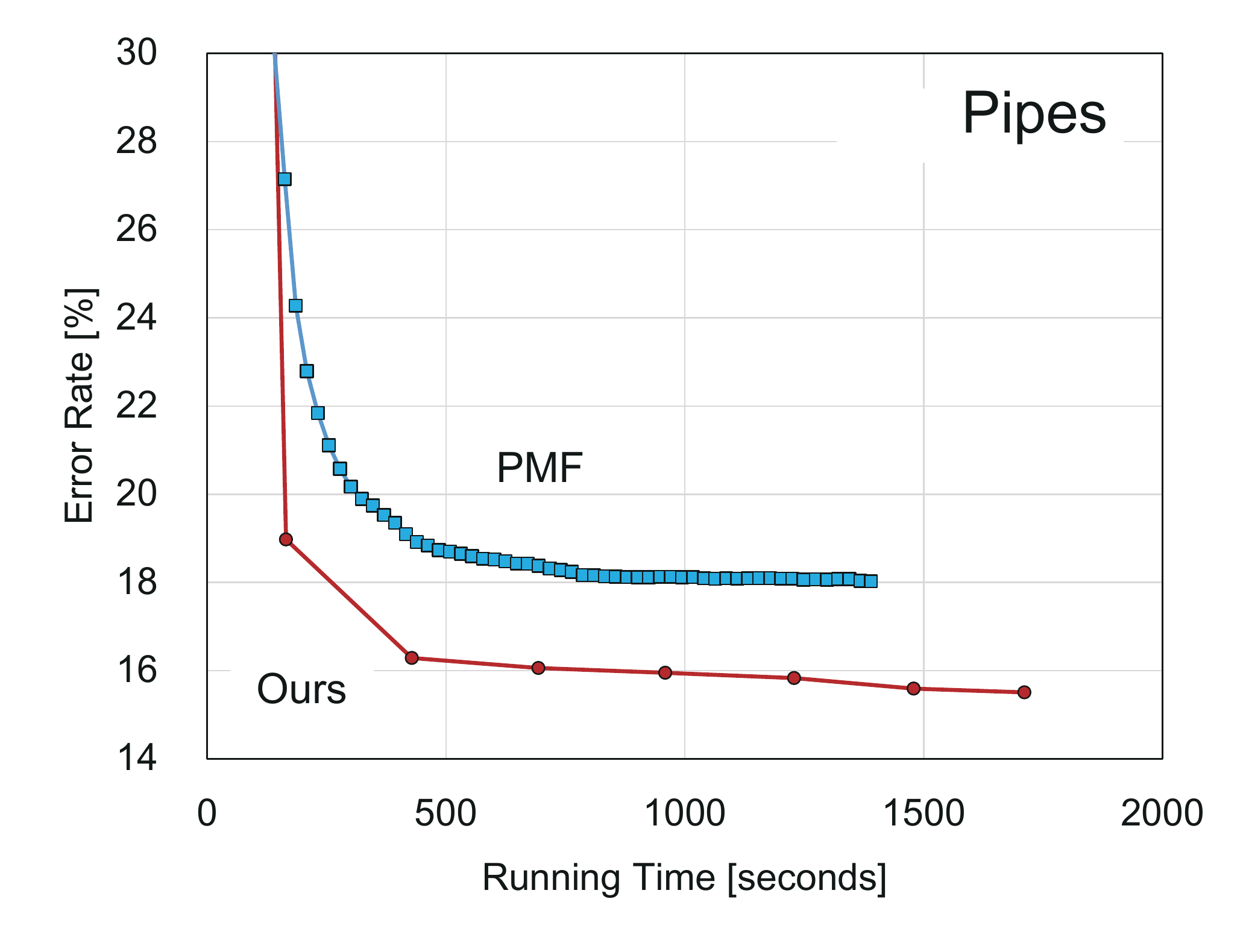}\hfil
		\includegraphics[width=0.3\textwidth]{./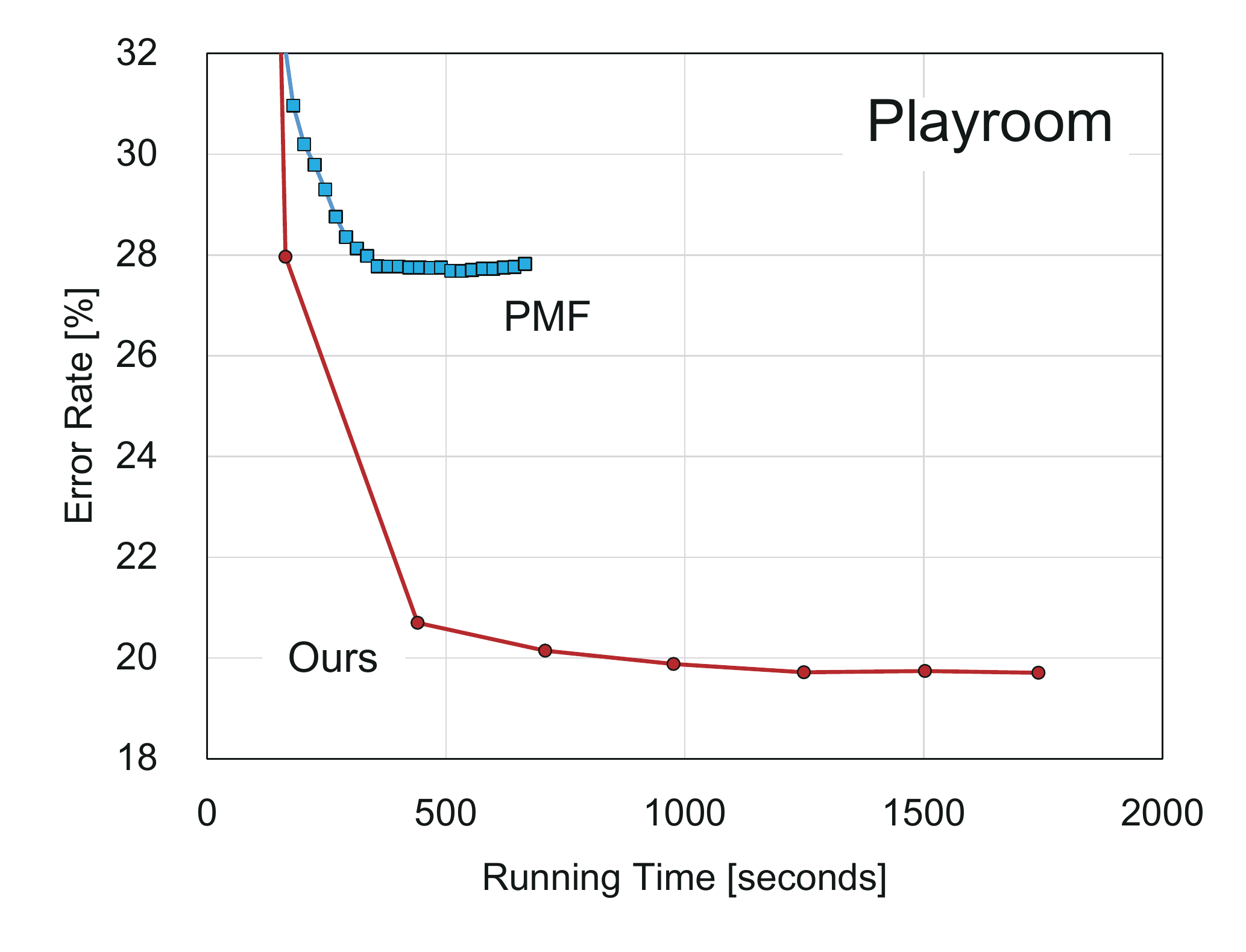}\\
		\includegraphics[width=0.3\textwidth]{./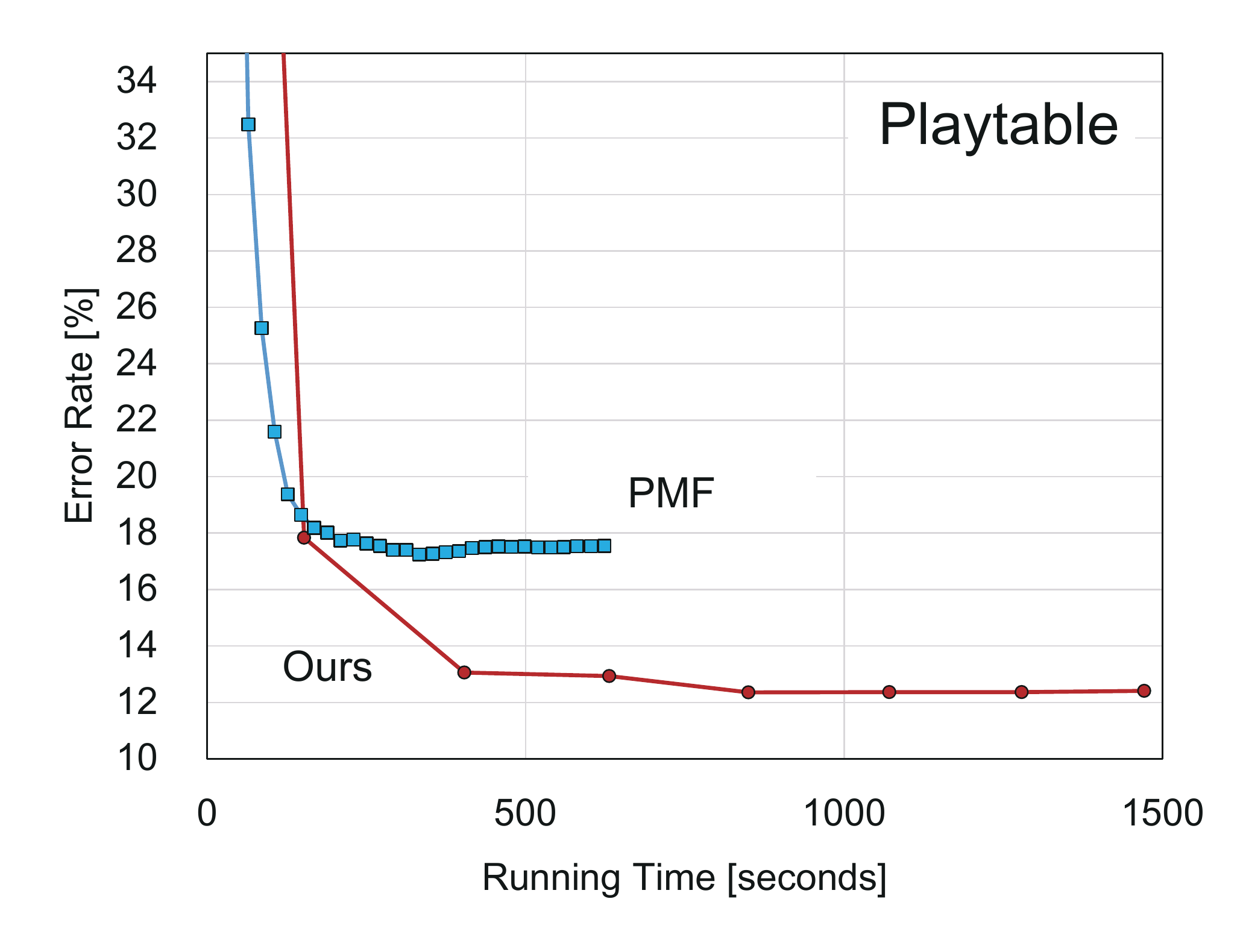}\hfil
		\includegraphics[width=0.3\textwidth]{./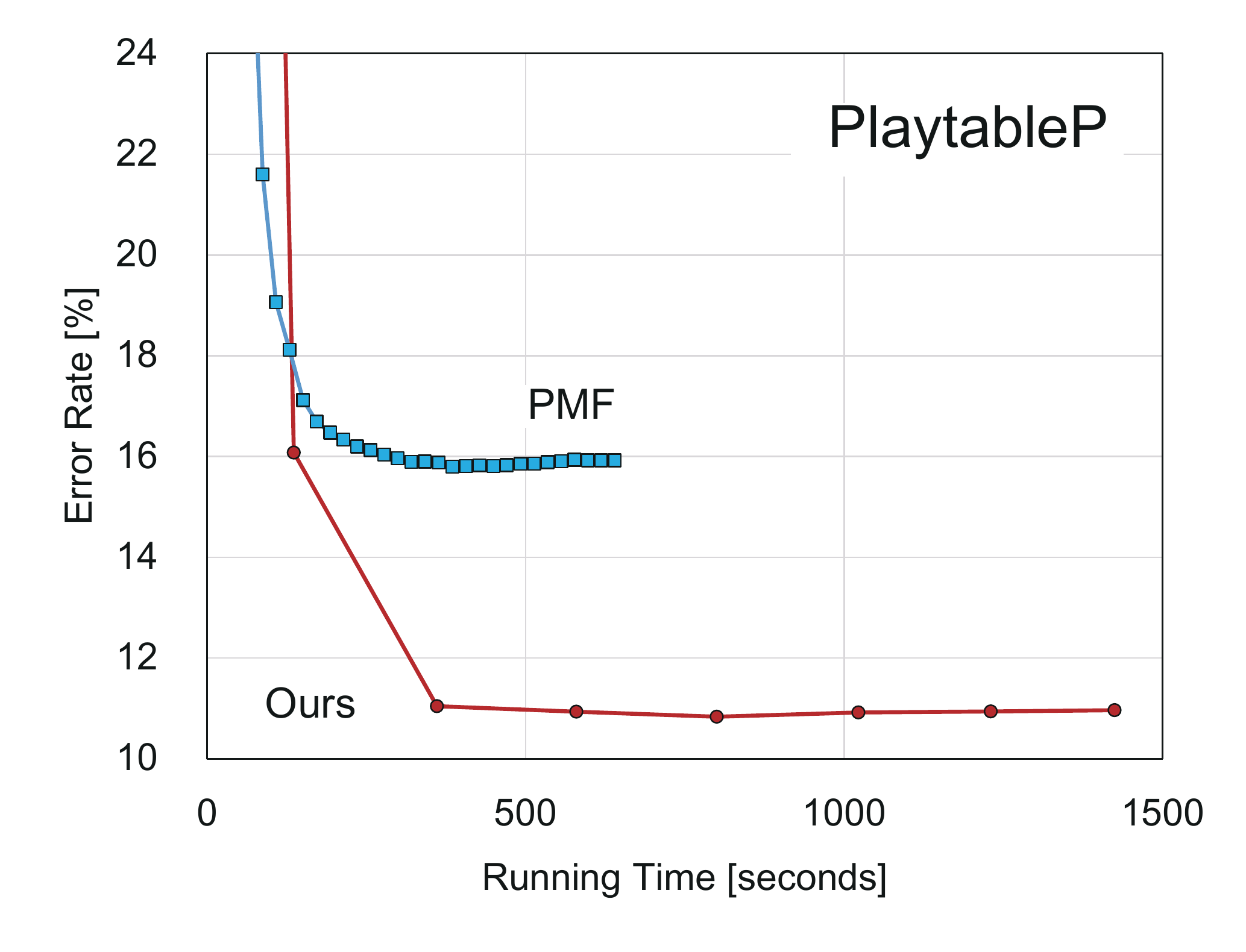}\hfil
		\includegraphics[width=0.3\textwidth]{./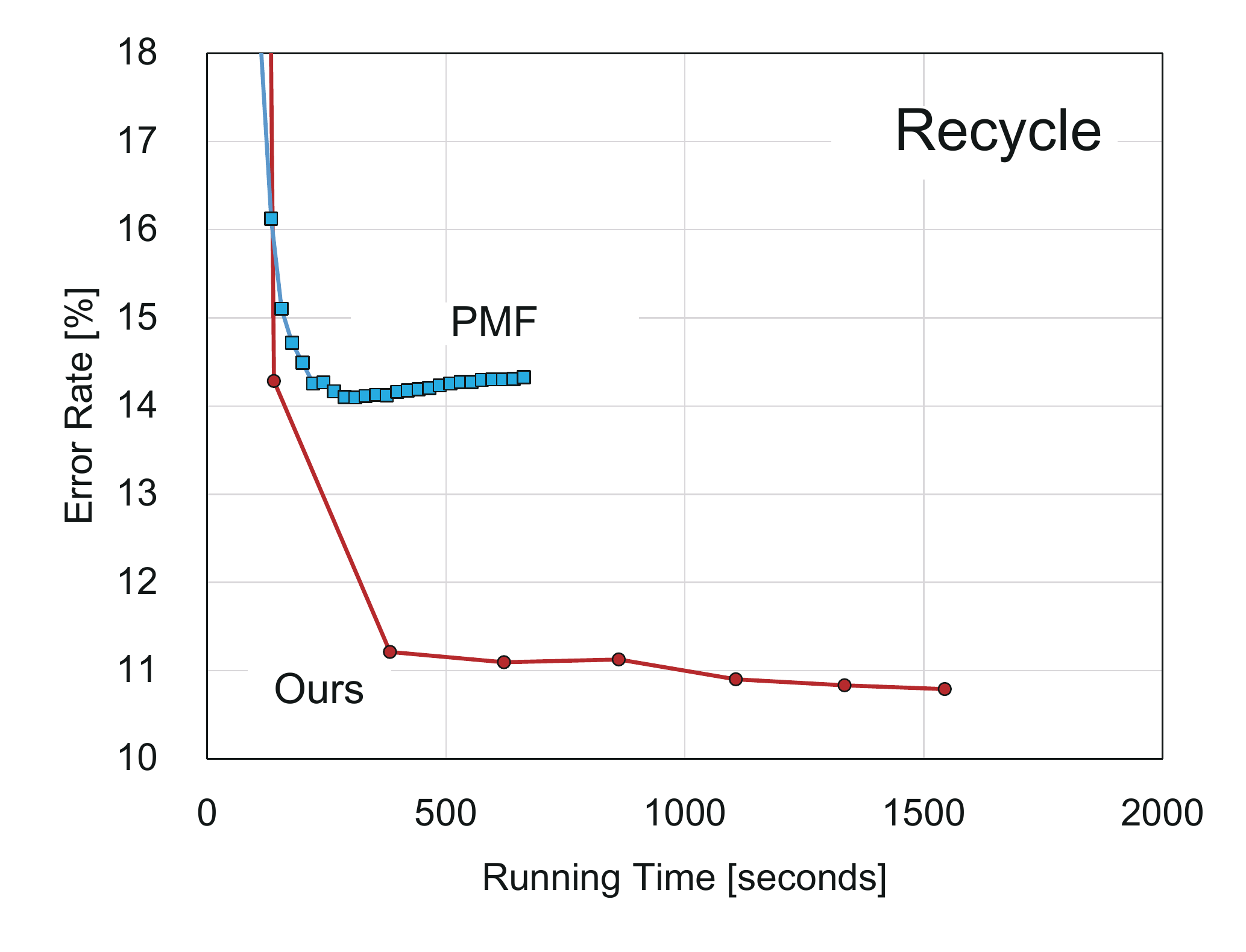}\\
		\includegraphics[width=0.3\textwidth]{./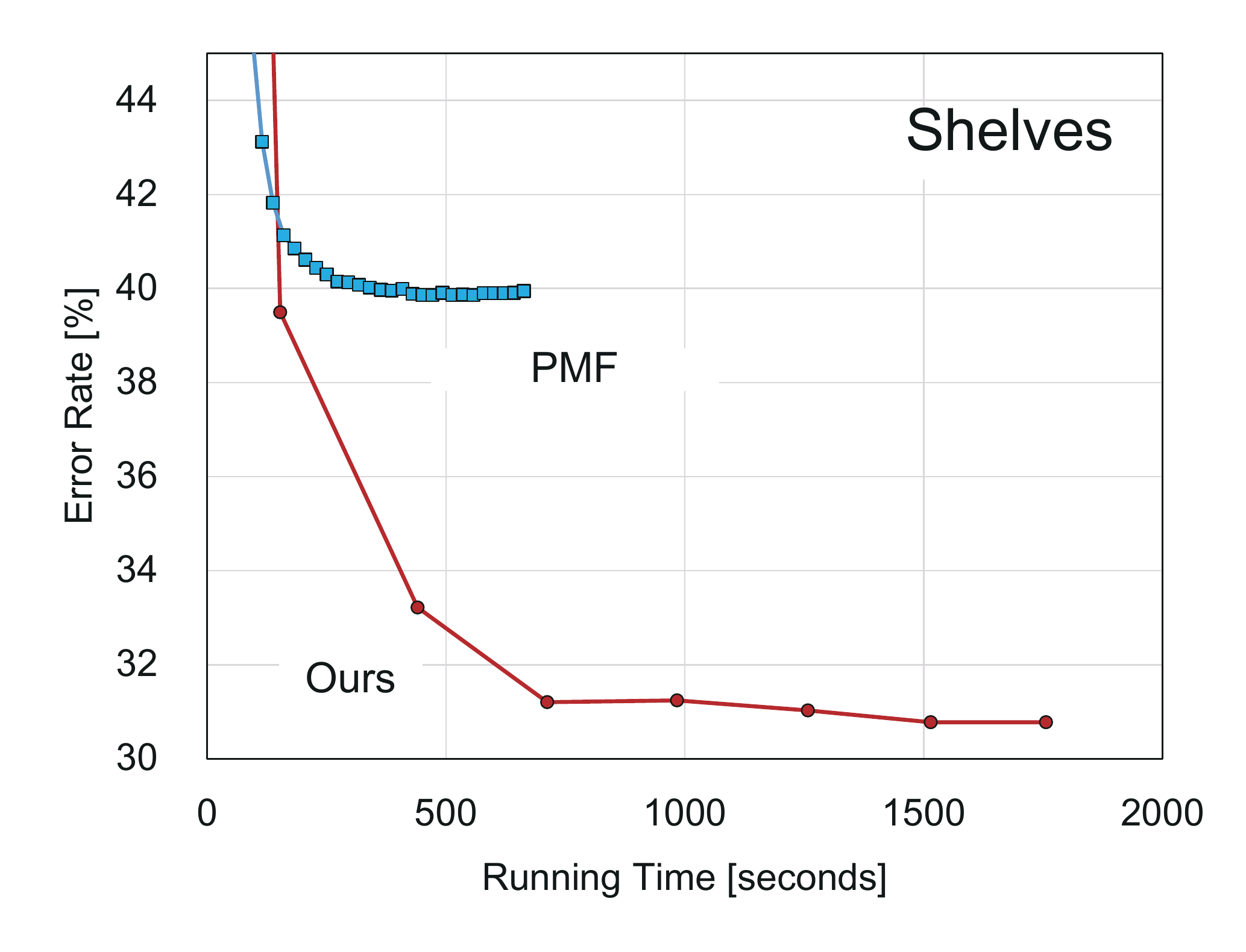}\hfil
		\includegraphics[width=0.3\textwidth]{./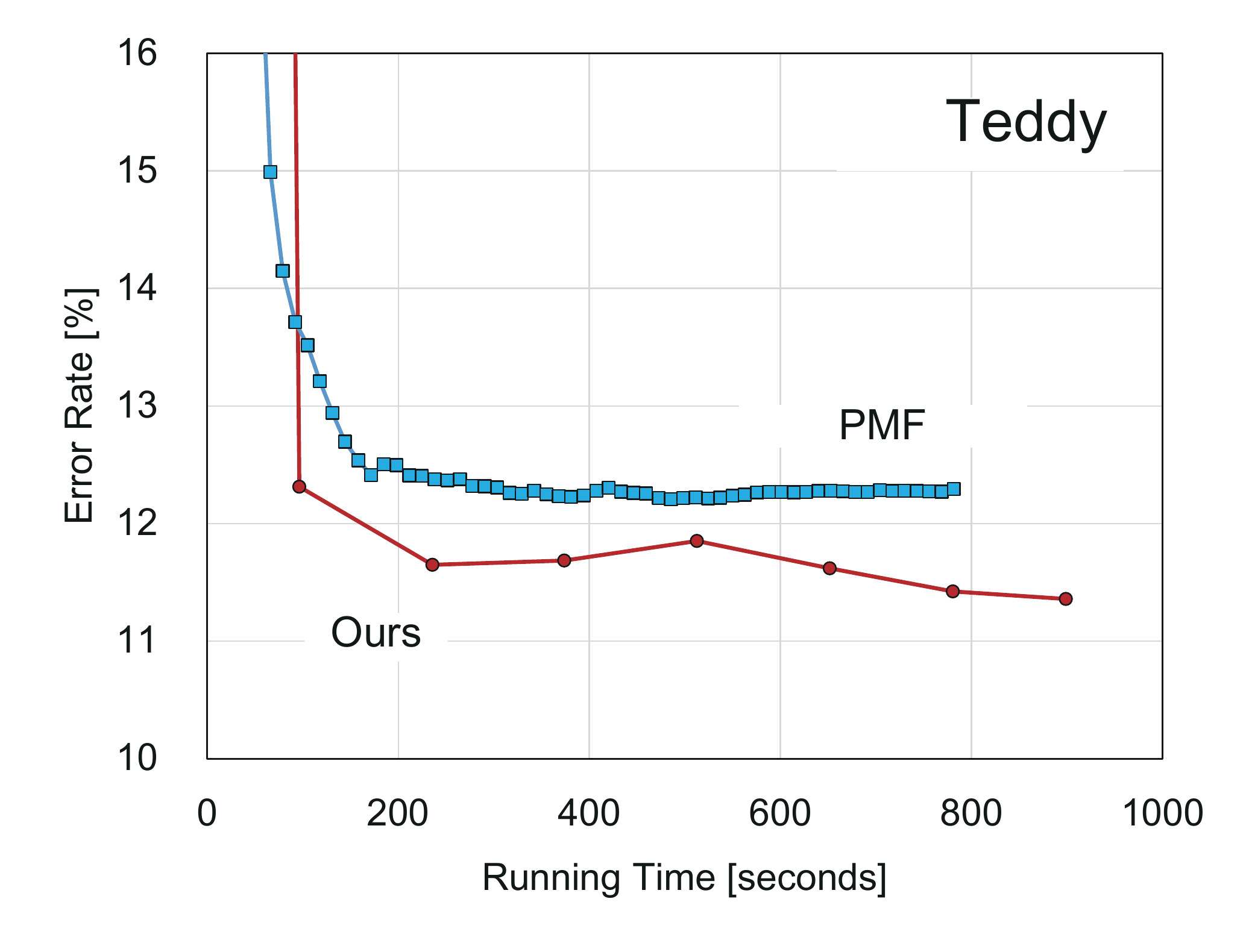}\hfil
		\includegraphics[width=0.3\textwidth]{./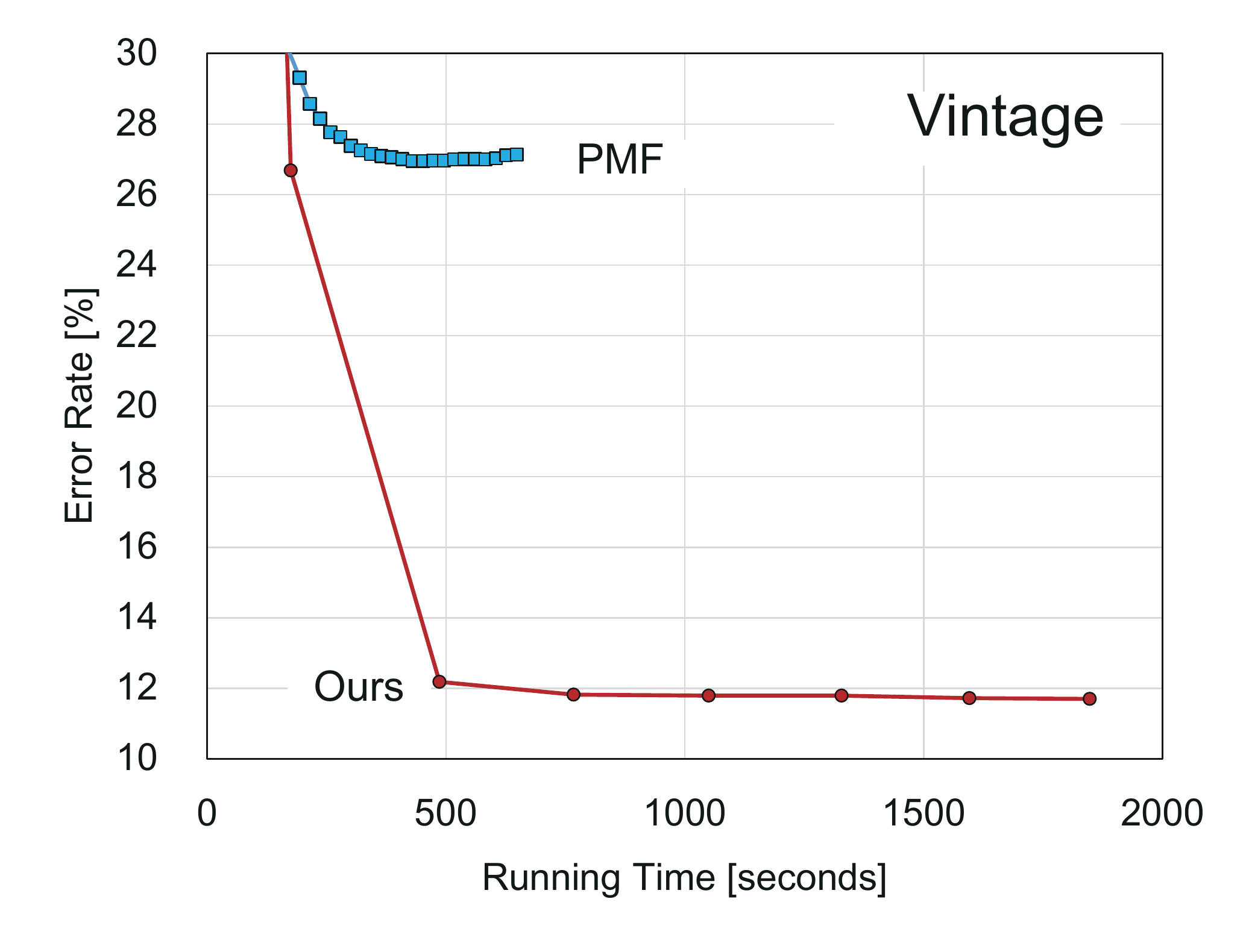}
	\end{center}
	\caption{Convergence comparison with PMF~\cite{Lu13} on 15 image pairs from the Middlebury V3 training dataset. Error rates are evaluated by the \emph{bad 2.0} metric for all regions. Our method consistently outperforms PMF.
	Because performances of PMF by $K=300, 500, 700$ had very similar trends, we here only show results by $K=500$ (default setting) to avoid cluttered profiles.
	Both methods are run on a single CPU core using the same energy function in Sec 4.2 without post-processing, but PMF can optimize only its data term.
	Note that while PMF mostly converged in 30 iterations, we did 60 iterations for some cases (\emph{Jadeplant}, \emph{Motorcycle}, \emph{MotorcycleE}, \emph{Pipes}, and \emph{Teddy}).
	}
	\label{figa:pmf}
\end{figure}

\begin{figure}
	\begin{center}
		\includegraphics[width=0.3\textwidth]{./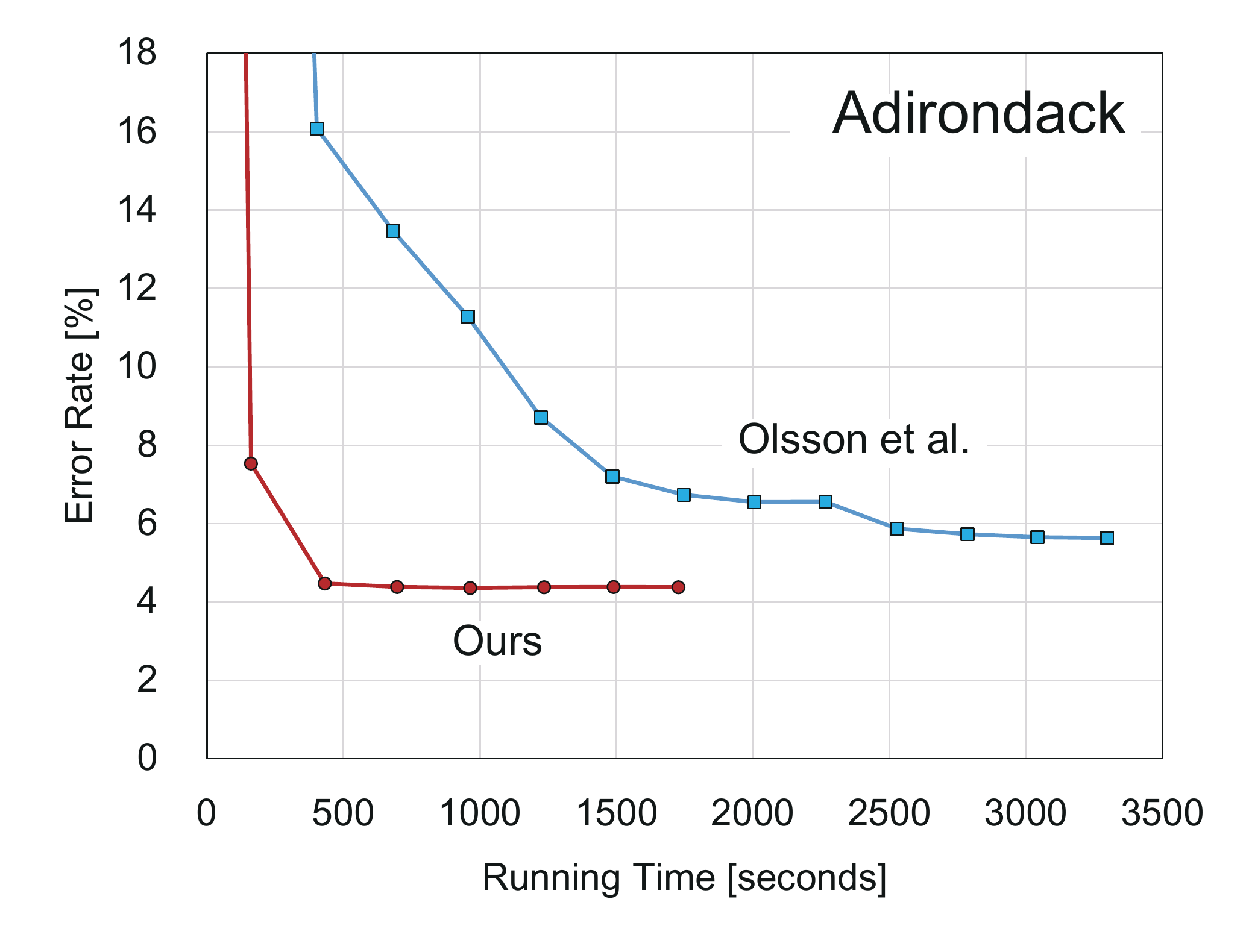}\hfil
		\includegraphics[width=0.3\textwidth]{./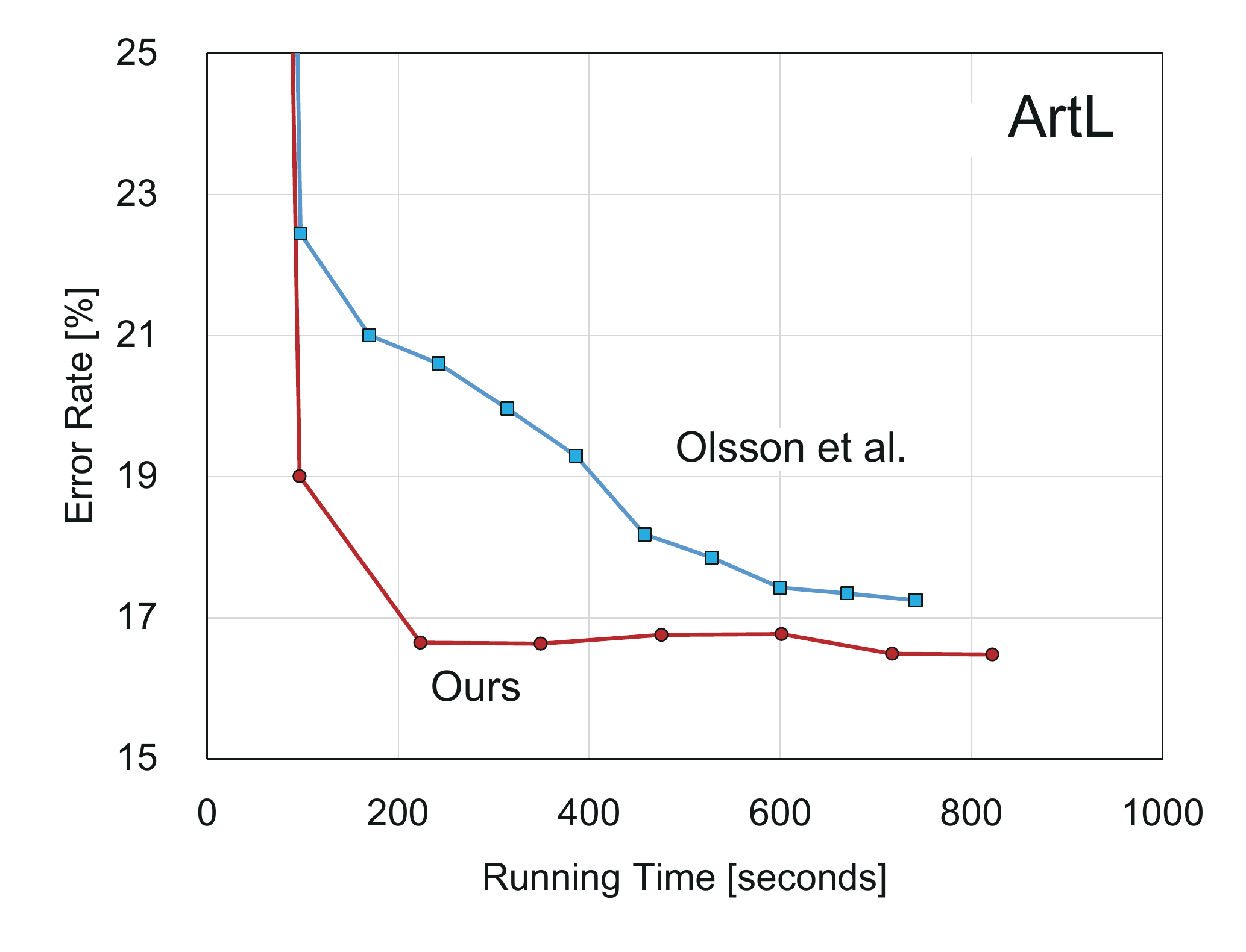}\hfil
		\includegraphics[width=0.3\textwidth]{./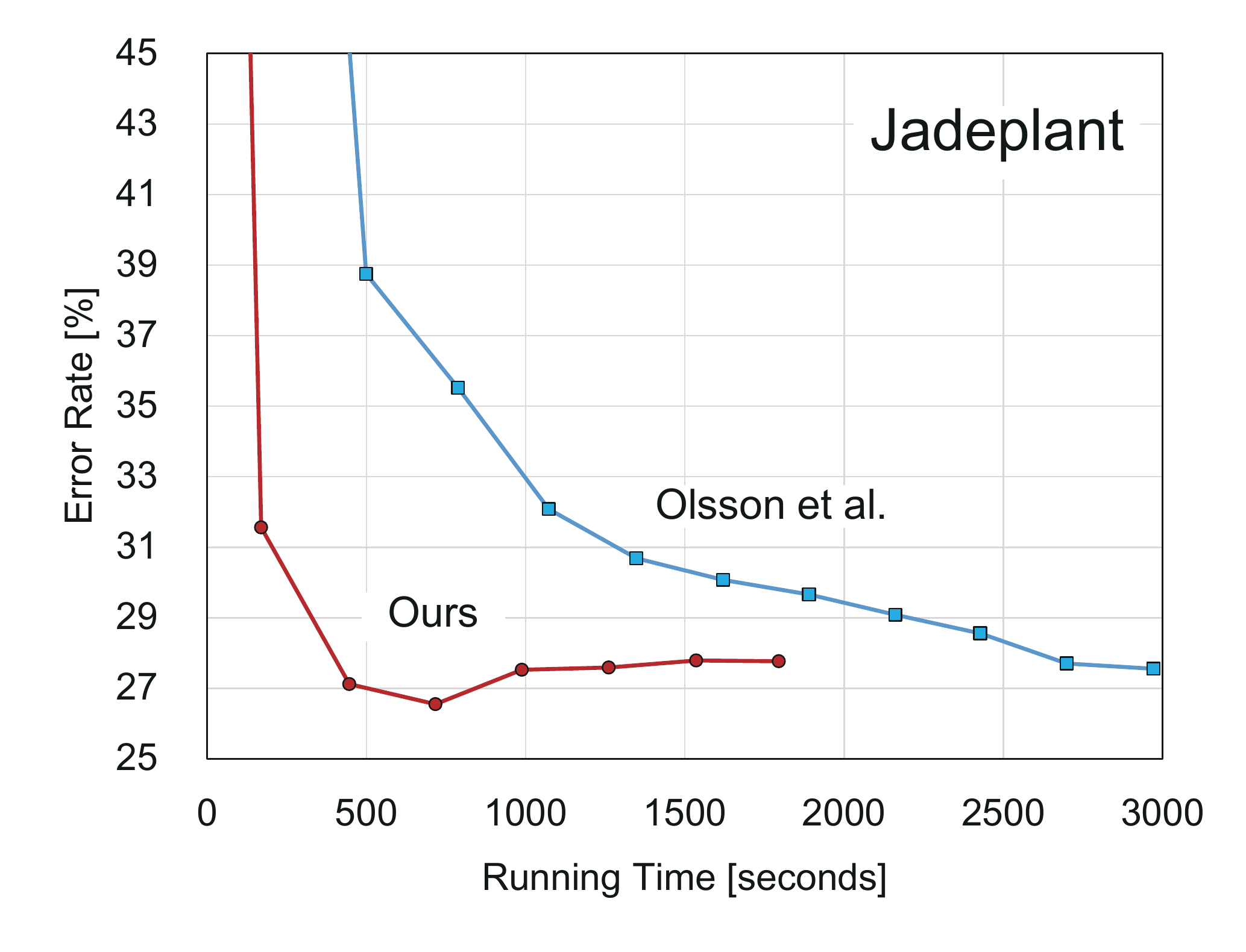}\\
		\includegraphics[width=0.3\textwidth]{./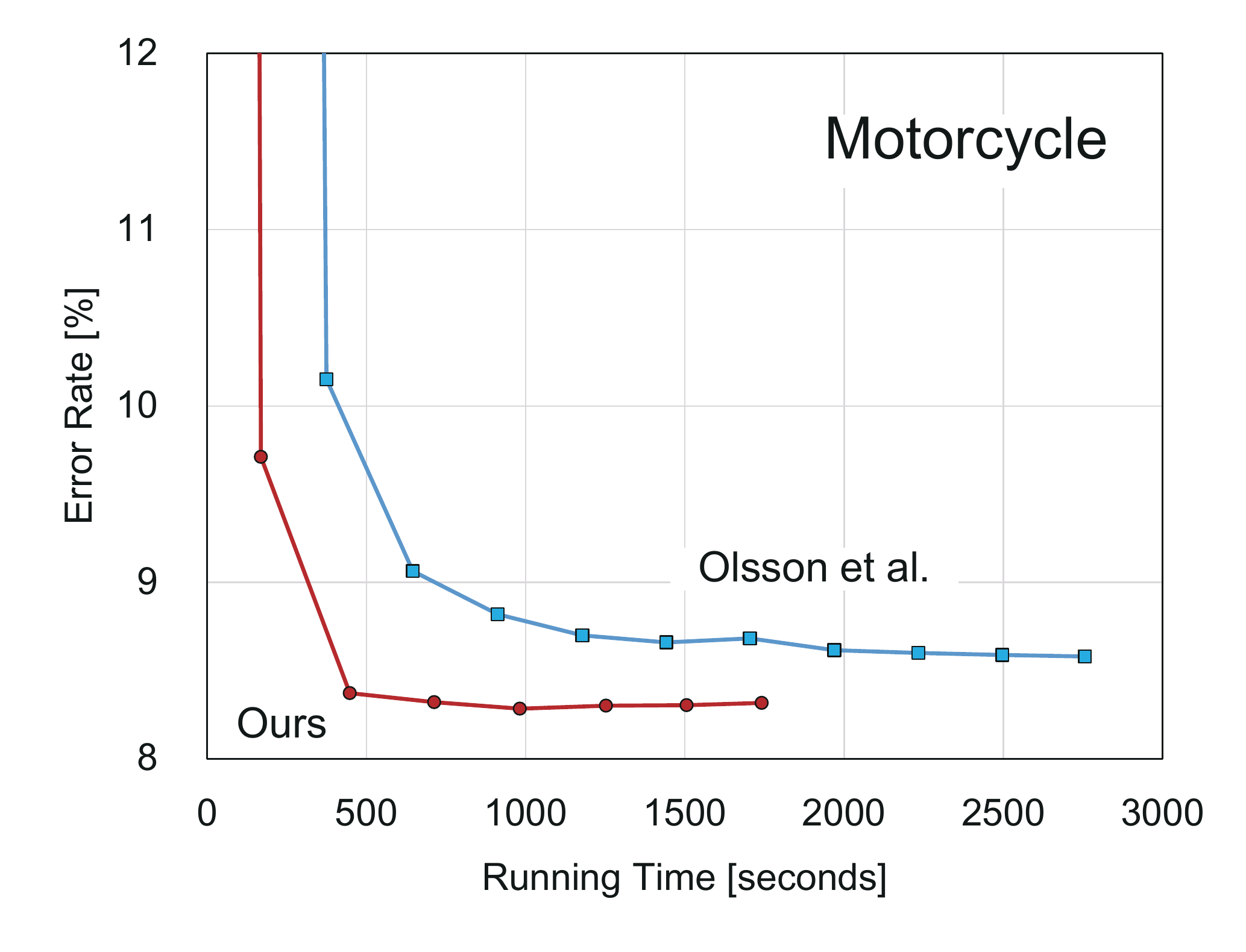}\hfil
		\includegraphics[width=0.3\textwidth]{./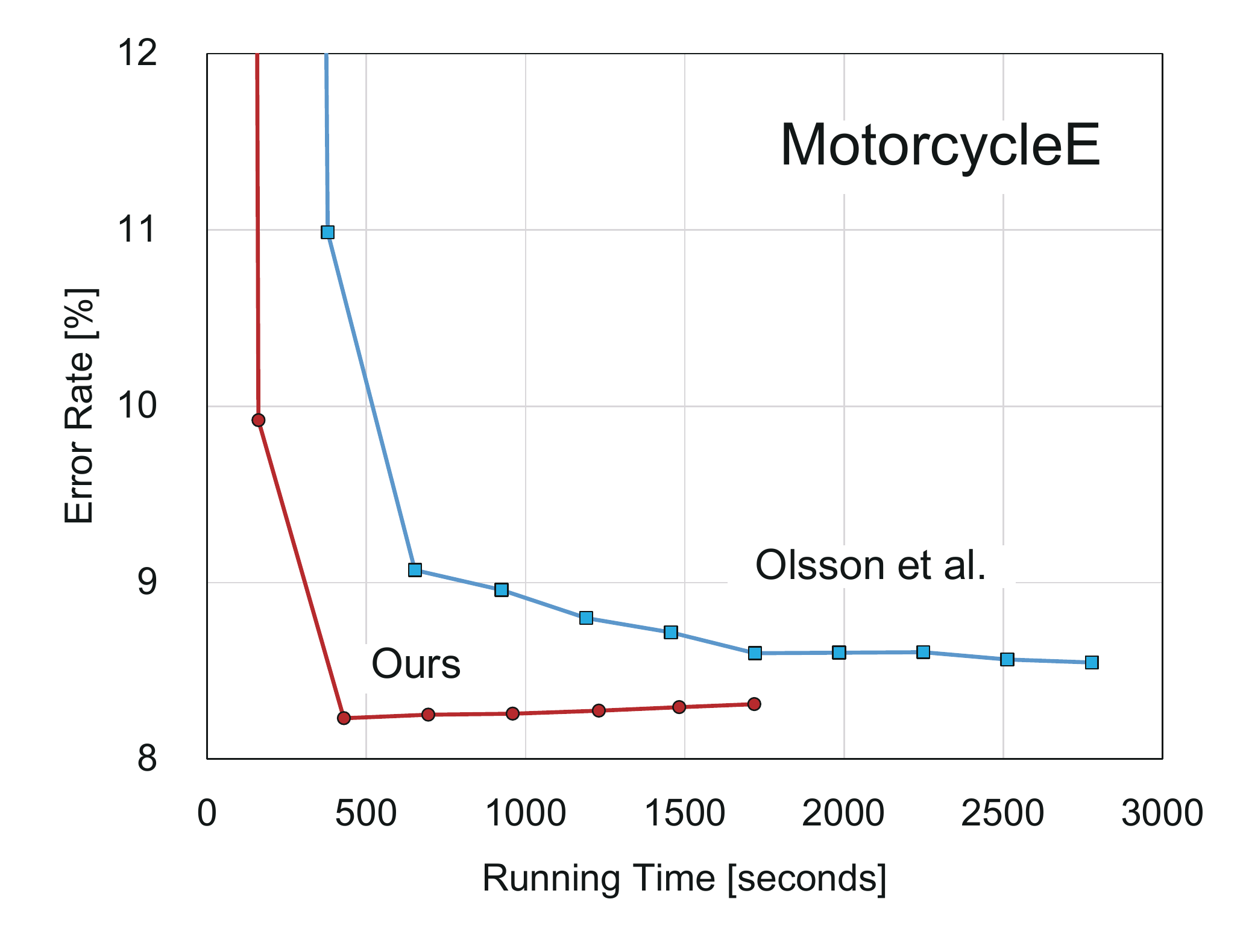}\hfil
		\includegraphics[width=0.3\textwidth]{./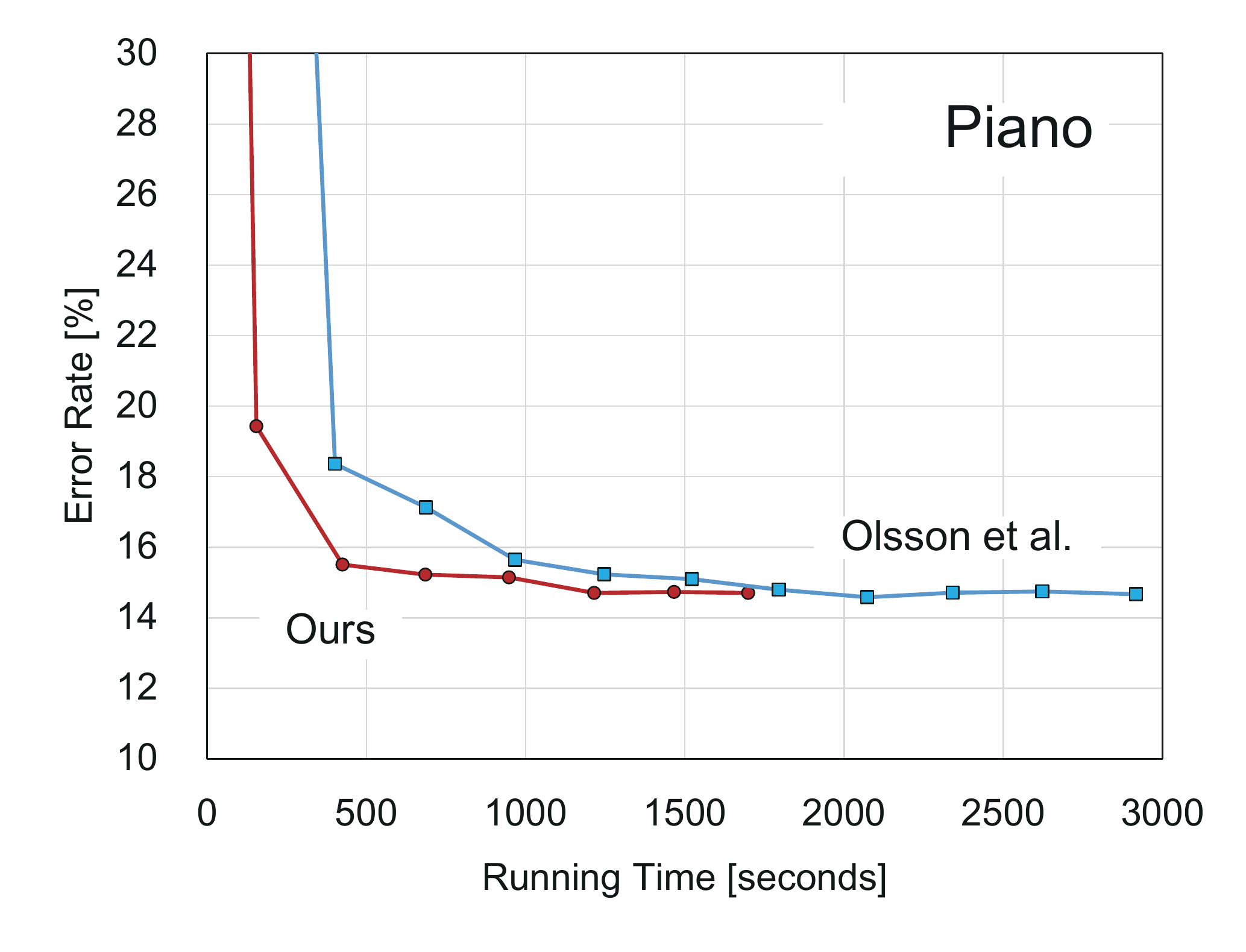}\\
		\includegraphics[width=0.3\textwidth]{./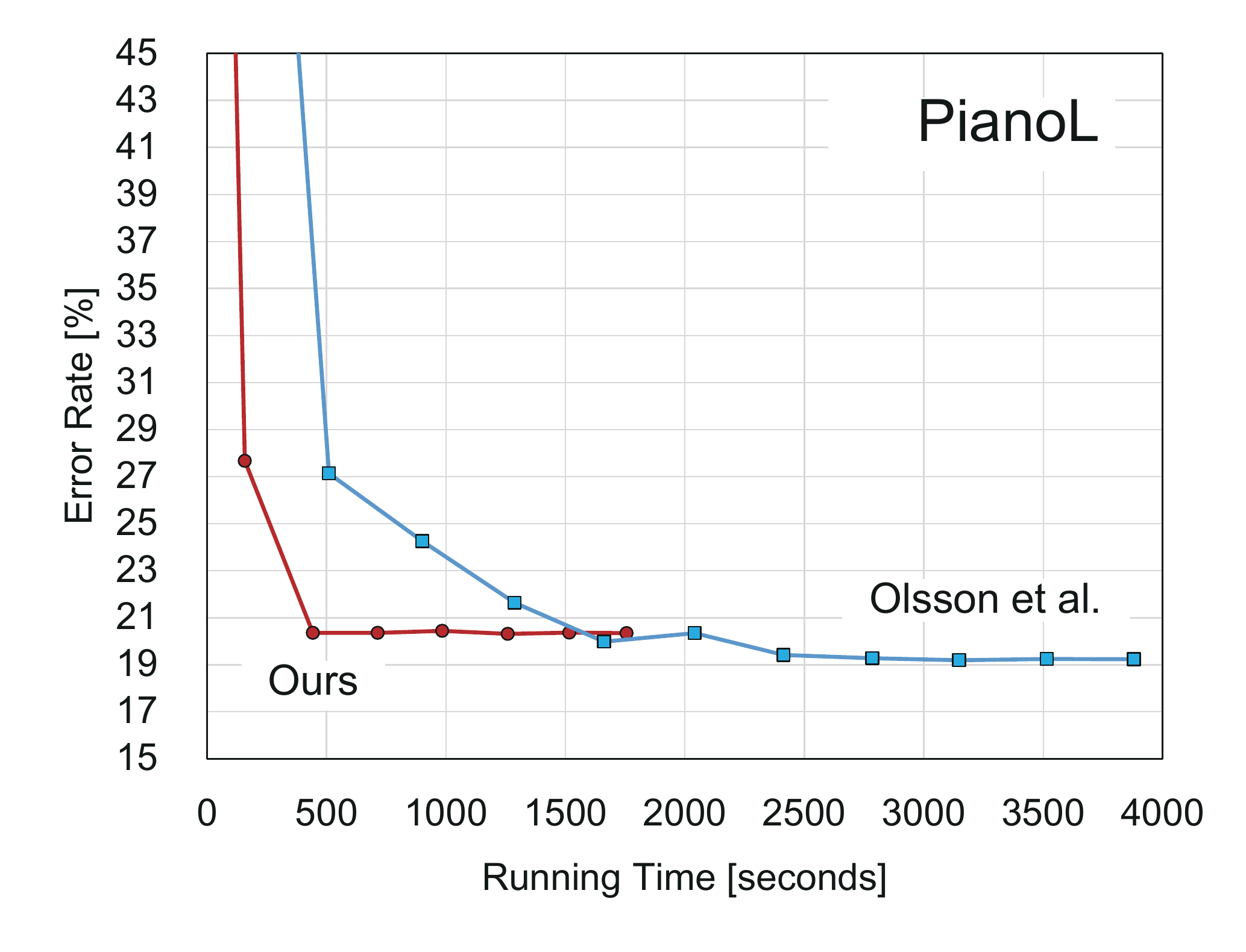}\hfil
		\includegraphics[width=0.3\textwidth]{./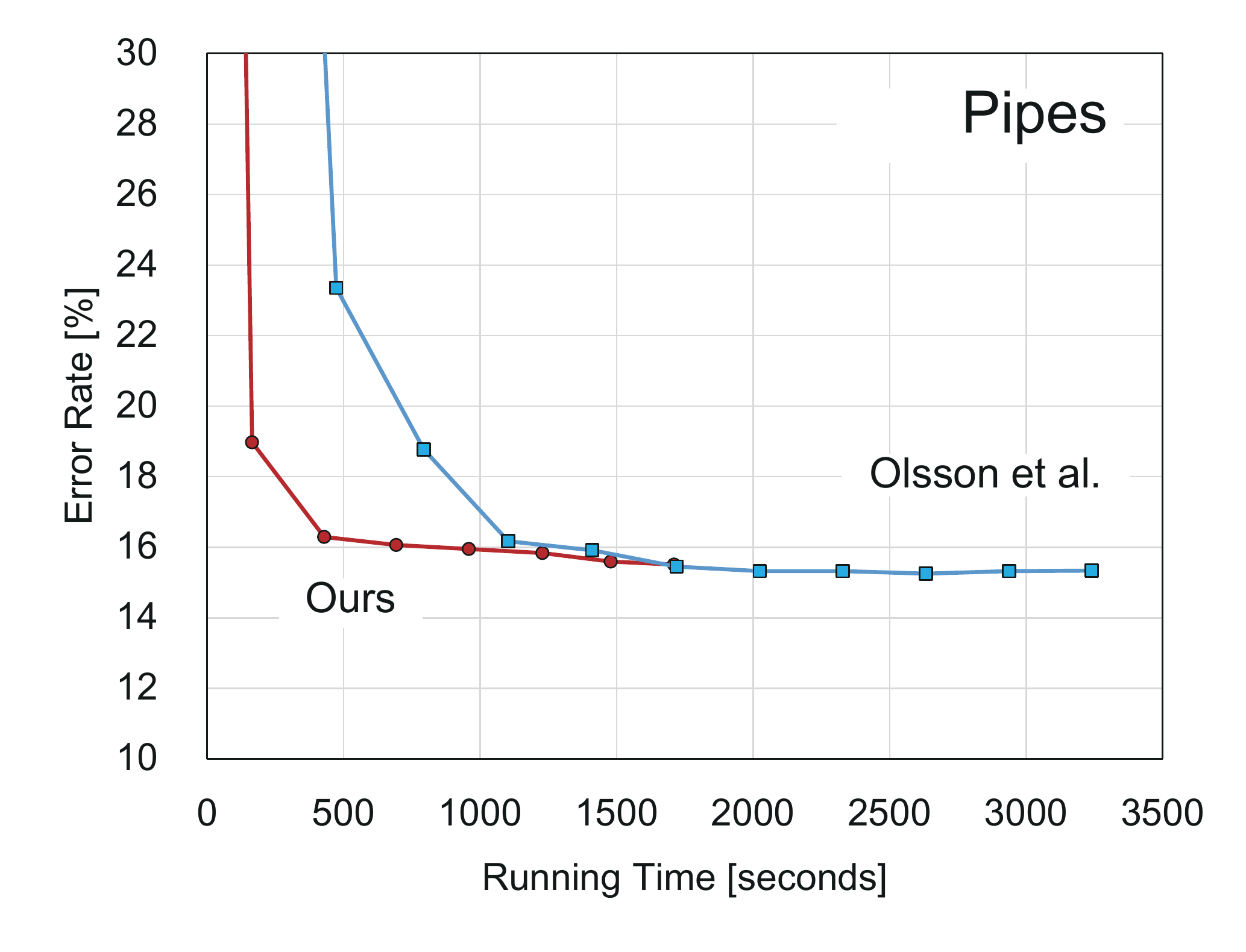}\hfil
		\includegraphics[width=0.3\textwidth]{./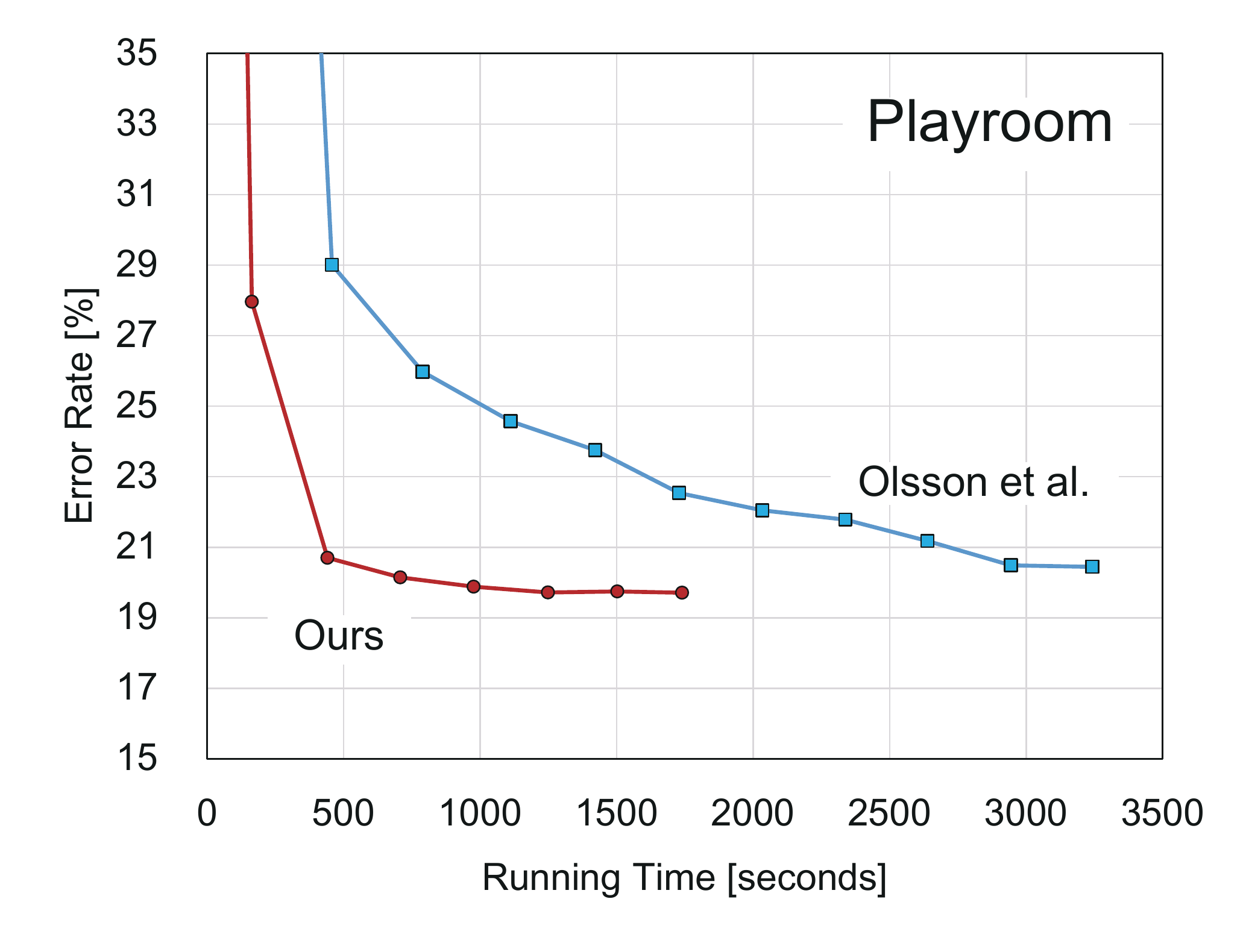}\\
		\includegraphics[width=0.3\textwidth]{./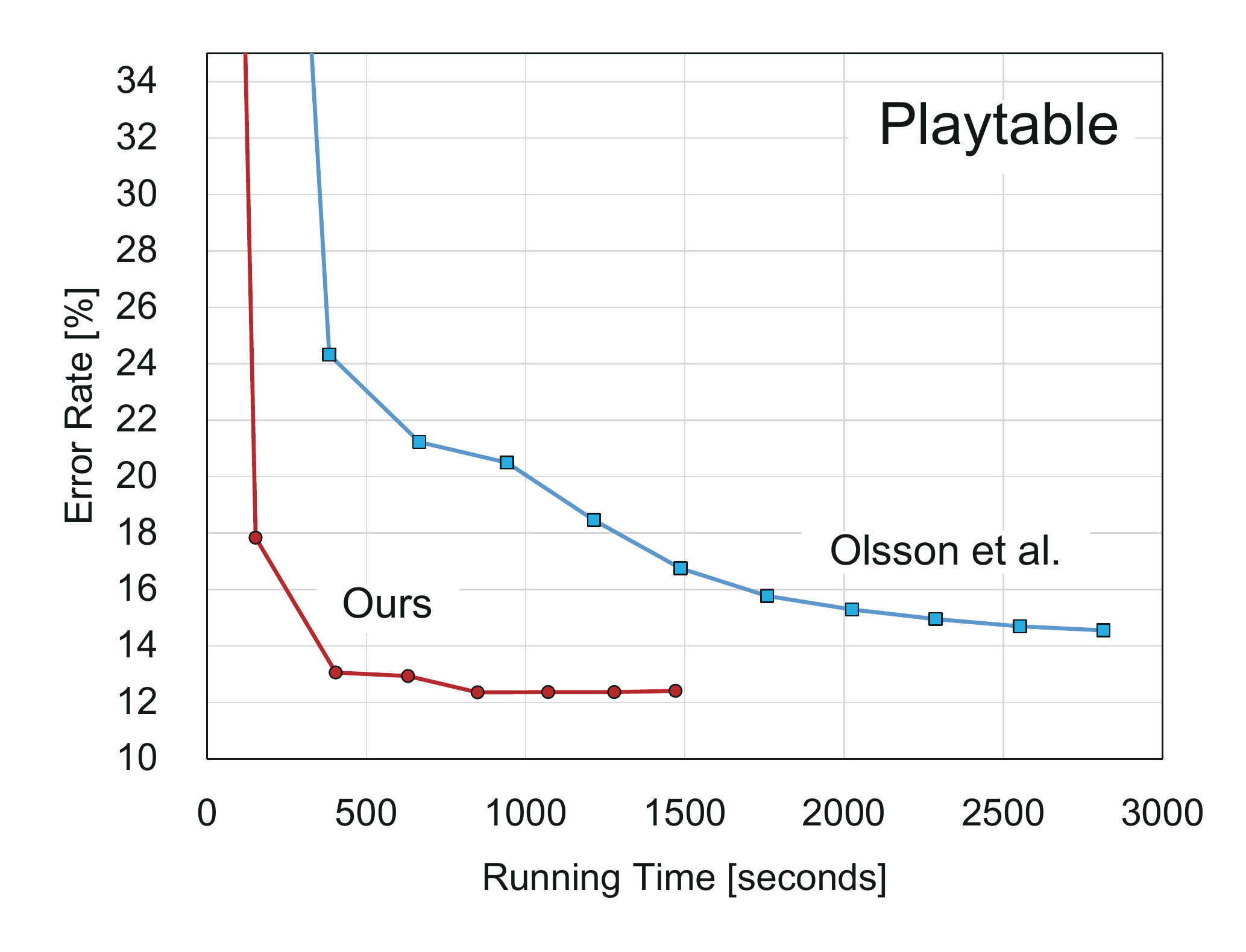}\hfil
		\includegraphics[width=0.3\textwidth]{./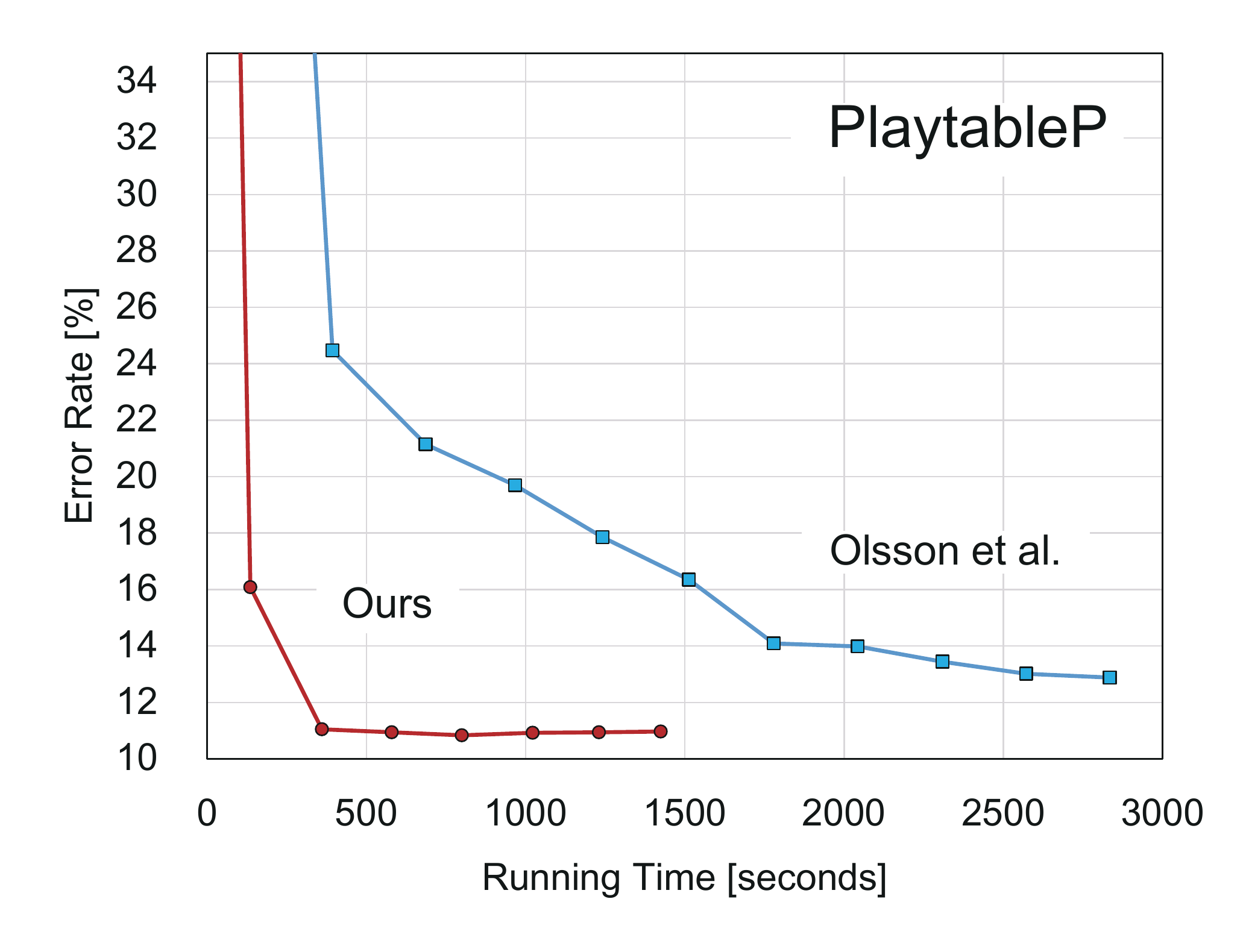}\hfil
		\includegraphics[width=0.3\textwidth]{./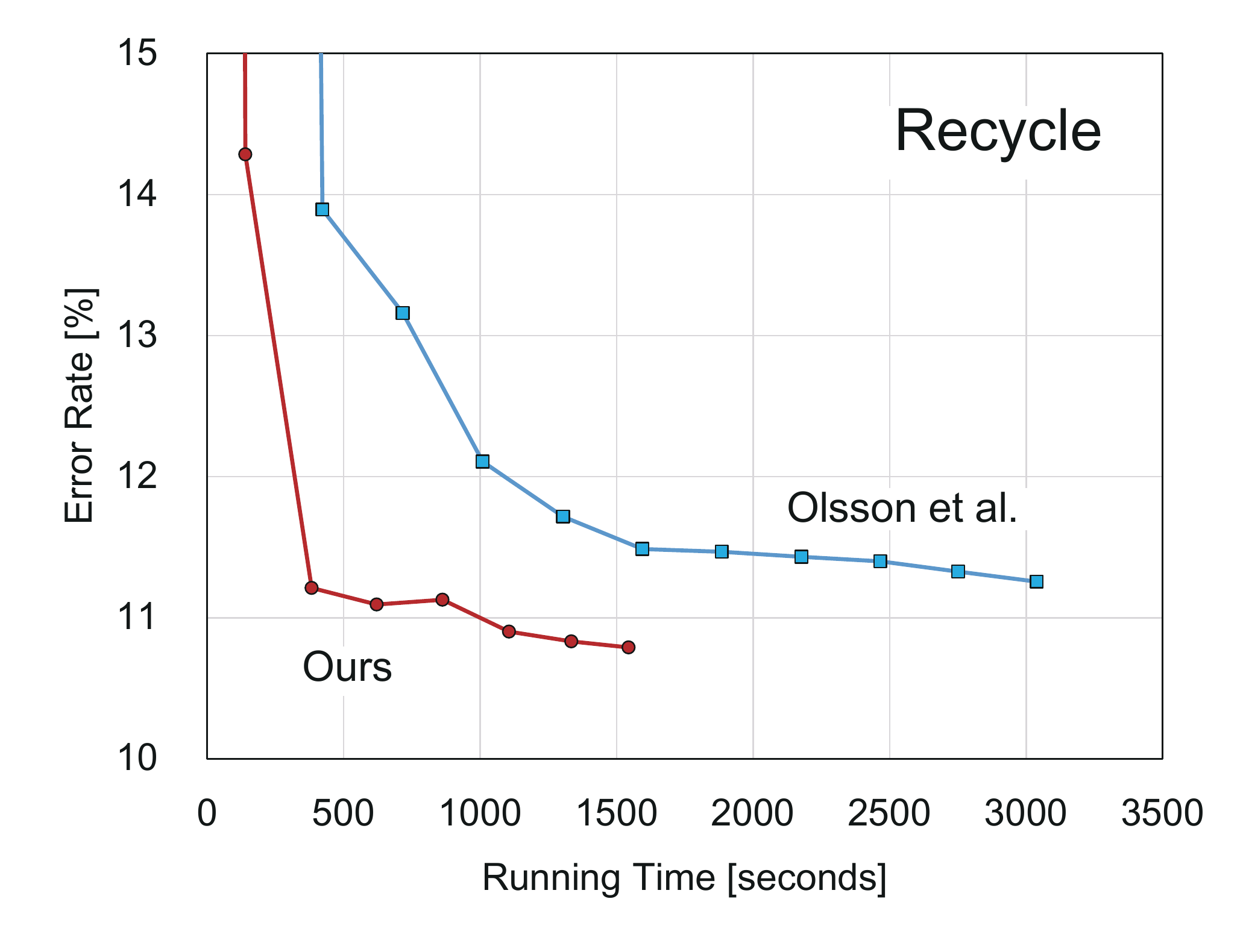}\\
		\includegraphics[width=0.3\textwidth]{./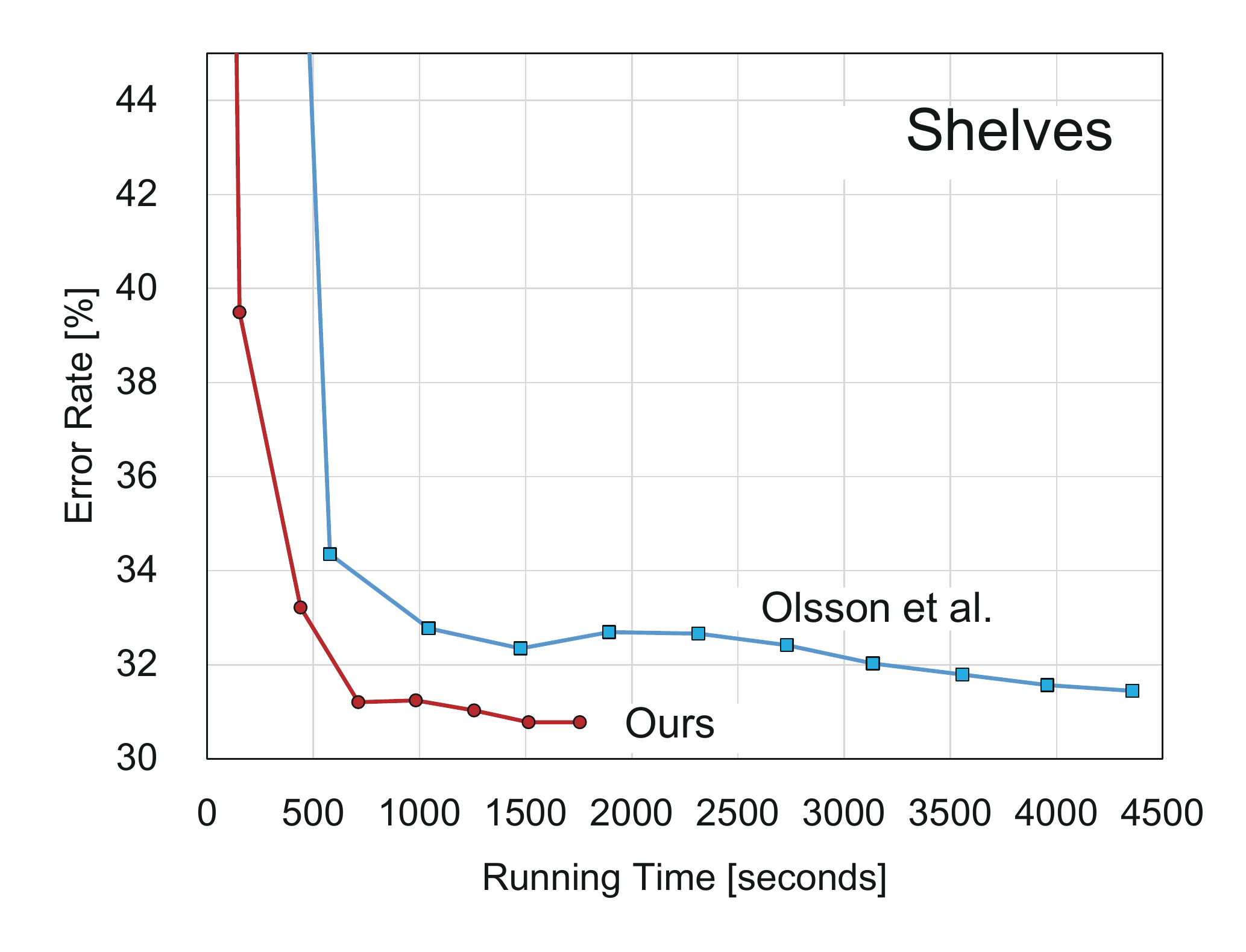}\hfil
		\includegraphics[width=0.3\textwidth]{./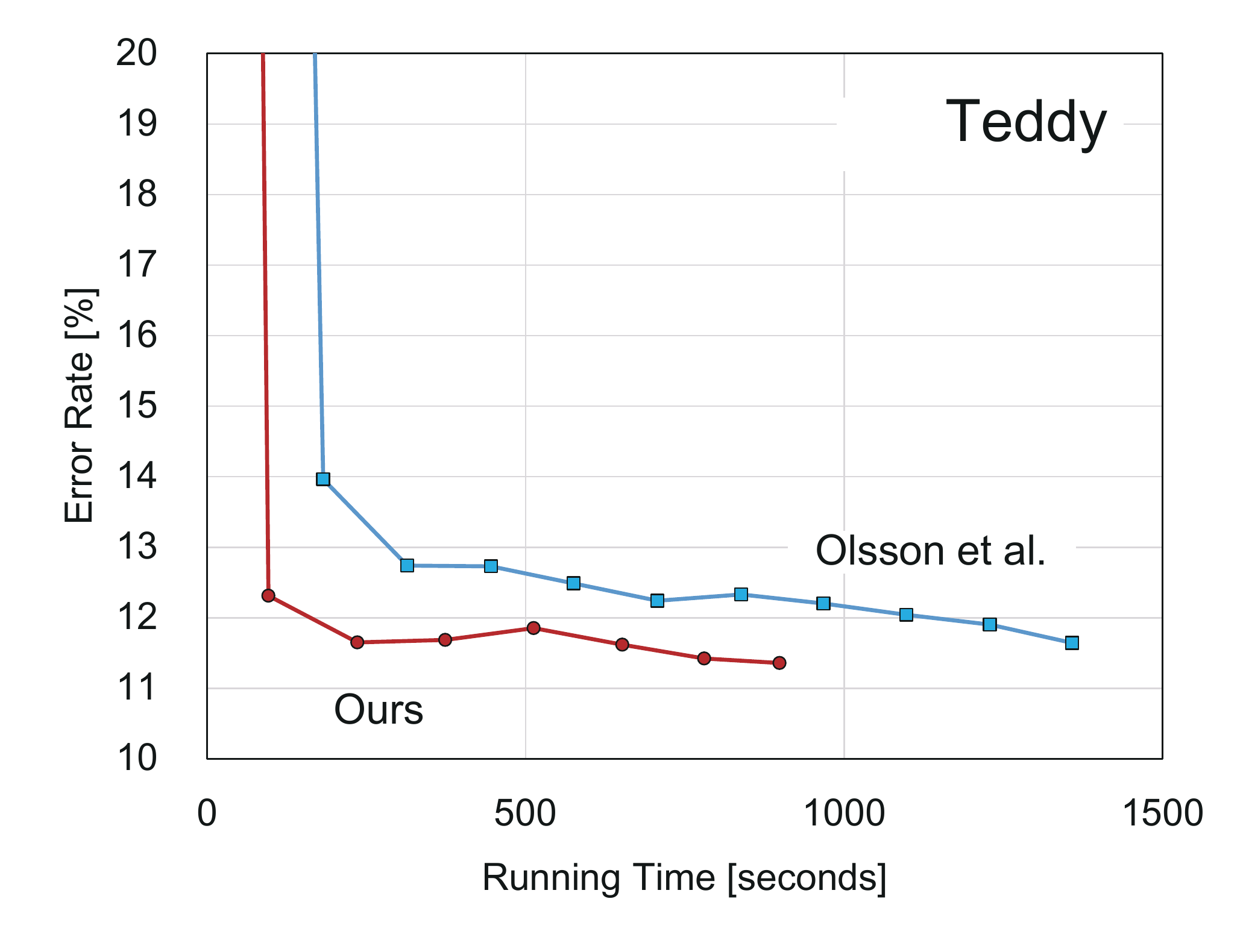}\hfil
		\includegraphics[width=0.3\textwidth]{./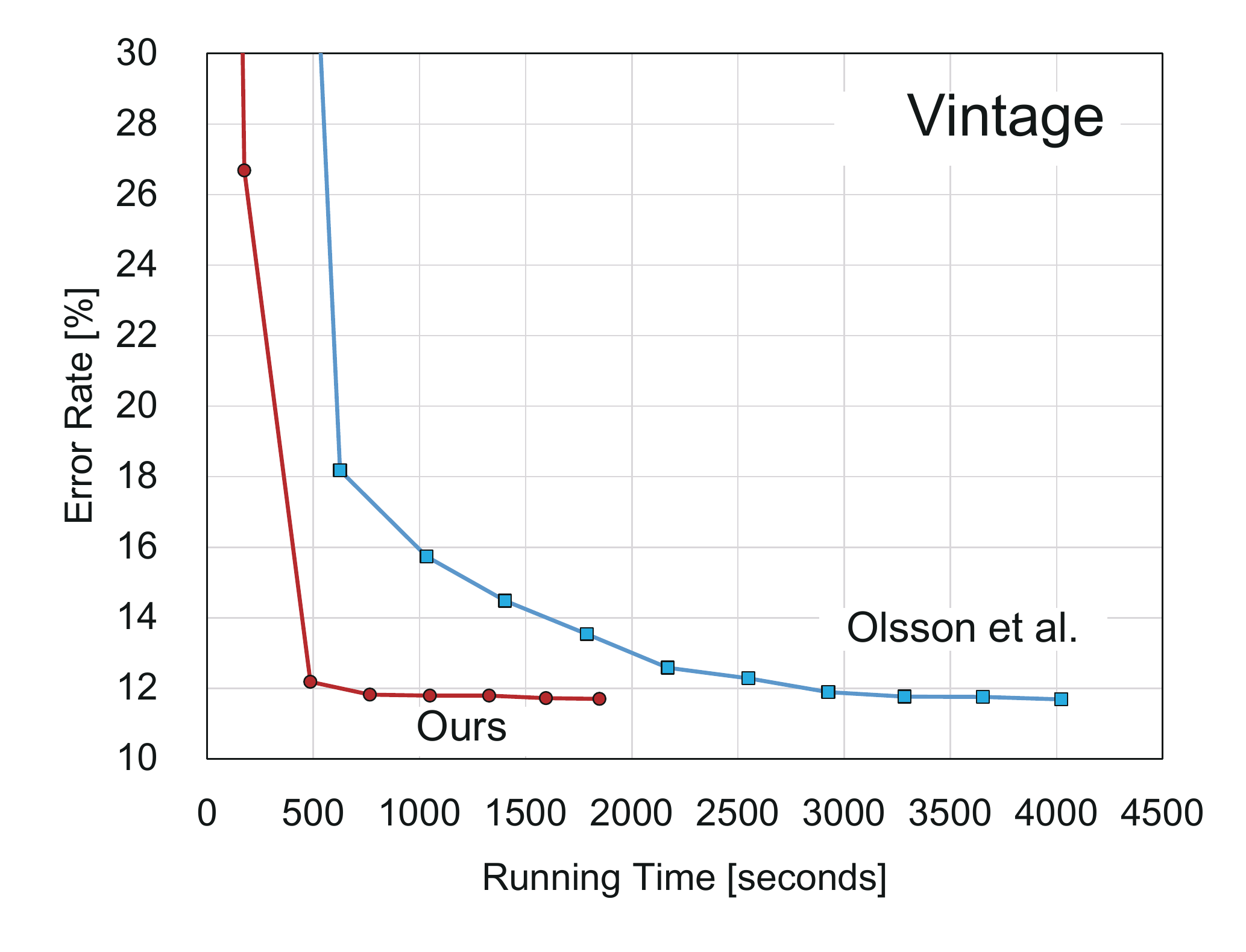}
	\end{center}
\caption{Convergence comparison with the optimization method used in Olsson~\etal~\cite{Olsson13} on 15 image pairs from the Middlebury V3 training dataset. Error rates are evaluated by the \emph{bad 2.0} metric for all regions. For most of the image pairs, our method is about 6x faster to reach comparable or better accuracies than Olsson~\etal~\cite{Olsson13}. Both methods are run on a single CPU core to optimize the same energy function used in Sec~4.2 without post-processing. Our method can be further accelerated by parallelization as demonstrated in Fig.~\ref{figa:efficiency}.}
	\label{figa:olsson}
\end{figure}


\end{document}